\newcommand{\algo}{{\sc\textsf{DPS}}}
\newcommand{\algolong}{\textsc{Dueling Posterior Sampling}}
\newcommand{\rr}{\overline{r}}
\newcommand{\reg}{\textsc{Reg}}
\newcommand{\adv}{\textsc{Advance}}
\newcommand{\feedback}{\textsc{feedback}}
\newtheorem{lemma}{Lemma}
\newtheorem{theorem}{Theorem}
\newtheorem{proposition}{Proposition}
\newtheorem{coro}{Corollary}
\newtheorem{fact}{Fact}
\newtheorem{assumption}{Assumption}
\newtheorem{definition}{Definition}
\newtheorem{remark}{Remark}
\title{Dueling Posterior Sampling for Preference-Based Reinforcement Learning}
\author{ {\bf Ellen R. Novoseller\textsuperscript{1}, Yibing Wei\textsuperscript{1}, Yanan Sui\textsuperscript{2}, Yisong Yue\textsuperscript{1}, Joel W. Burdick\textsuperscript{1}} \\
\textsuperscript{1}Department of Computing and Mathematical Sciences, California Institute of Technology, Pasadena, CA 91125 \\
\textsuperscript{2} School of Aerospace Engineering, Tsinghua University, Beijing, China 100084 \\
\{enovoseller, ywwei, yyue\}@caltech.edu, ysui@tsinghua.edu.cn, jwb@robotics.caltech.edu
}
\begin{document}

\maketitle

% !TEX root =  ../main.tex

\begin{abstract}
In preference-based reinforcement learning (RL), an agent interacts with the environment while receiving preferences instead of absolute feedback. While there is increasing research activity in preference-based RL, the design of formal frameworks that admit tractable theoretical analysis remains an open challenge. Building upon ideas from preference-based bandit learning and posterior sampling in RL, we present \algolong~(\algo), which employs preference-based posterior sampling to learn both the system dynamics and the underlying utility function that governs the preference feedback. As preference feedback is provided on trajectories rather than individual state-action pairs, we develop a Bayesian approach for the credit assignment problem, translating  preferences to a posterior distribution over state-action reward models. We prove an asymptotic Bayesian no-regret rate for \algo~with a Bayesian linear regression credit assignment model. This is the first regret guarantee for preference-based RL to our knowledge. We also discuss possible avenues for extending the proof methodology to  other credit assignment models. Finally, we evaluate the approach empirically, showing competitive performance against existing baselines.
\end{abstract}

% !TEX root =  ../main.tex

\section{INTRODUCTION}\label{sec:introduction}

Reinforcement learning (RL) agents interact with humans in many domains, from clinical trials \citep{sui2018stagewise} to autonomous driving \citep{sadigh2017active} to human-robot interaction \citep{kupcsik2018learning}, and take human preferences as feedback. While many RL algorithms assume the existence of a numerical reward signal, in settings involving humans, it is often unclear how to define a reward signal that accurately reflects optimal system-human interaction. For instance, in autonomous driving \citep{basu2017you} and robotics \citep{argall2009survey, akrour2012april}, users can have difficulty with both specifying numerical reward functions and providing demonstrations of desired behavior. Moreover, a misspecified reward function can result in ``reward hacking'' \citep{amodei2016concrete}, in which undesirable actions achieve high rewards. In such situations, the user's \textit{preferences} could more reliably measure her intentions.

This work studies the problem of preference-based reinforcement learning (PBRL), in which the RL agent executes a pair of trajectories of interaction with the environment, and the user provides (noisy) pairwise preference feedback, revealing which of the two trajectories is preferred. Though the study of PBRL has seen increased interest in recent years \citep{christiano2017deep,wirth2017survey},
it remains an open challenge to design formal frameworks that admit tractable theoretical analysis. While the preference-based bandit setting---in which the agent observes preferences between selected actions---has seen significant theoretical progress (e.g., \cite{yue2012k,zoghi2014relative,ailon2014reducing,szorenyi2015online,dudik2015contextual,zoghi2015copeland,ramamohan2016dueling,wu2016double,sui2017multi,sui2018advancements}), the PBRL setting is more challenging, as the environment's dynamics can stochastically translate the agent's policies for interaction (analogous to actions in the bandit setting) to the observed trajectories.

In this paper, we present the \algolong~(\algo) algorithm, which uses preference-based posterior sampling to tackle  PBRL  in the Bayesian regime.  Posterior sampling \citep{thompson1933likelihood}, also called Thompson sampling, is a Bayesian model-based approach to balancing exploration and exploitation, which enables the algorithm to efficiently learn models of both the environment's state transition dynamics and reward function.  Previous work on posterior sampling in RL \citep{osband2013more,gopalan2015thompson,agrawal2017optimistic,osband2017posterior} is focused on learning from absolute rewards, while we extend posterior sampling to both elicit and learn from trajectory-level preference feedback.  

To elicit preference feedback, at every episode of learning, \algo~draws two independent samples from the posterior to generate two trajectories.  This approach is inspired by the Self-Sparring algorithm proposed for the bandit setting \citep{sui2017multi}, but has a quite different theoretical analysis, as we need to incorporate trajectory-level preference learning and state transition dynamics.

Learning from trajectory-level preferences is in general a very challenging problem, as information about the rewards is sparse (often just one bit), is only relative to the pair of trajectories being compared, and does not explicitly include information about actions within trajectories.
%  We thus develop our approach while restricting to standard Bayesian realizability assumptions inherent to most posterior sampling approaches.
\algo~learns from preference feedback by internally maintaining a Bayesian state-action reward model that explains the preferences; this reward model is a solution to the \textit{temporal credit assignment problem} \citep{akrour2012april,zoghi2014relative,szorenyi2015online, christiano2017deep, wirth2016model, wirth2017survey}, i.e., determining which of the encountered states and actions are responsible for the trajectory-level preference feedback. 

We developed \algo~concurrently with an analysis framework for characterizing regret convergence in the episodic setting, based upon information-theoretic techniques for bounding the Bayesian regret of posterior sampling \citep{russo2016information}. We mathematically integrate Bayesian credit assignment and preference elicitation within the conventional posterior sampling framework, evaluate several credit assignment models, and prove a Bayesian asymptotic no-regret rate for \algo~with a Bayesian linear regression credit assignment model.  To our knowledge, this is the first PBRL approach with theoretical guarantees. We also demonstrate that \algo~delivers competitive performance empirically.
% !TEX root =  ../main.tex

\section{RELATED WORK}\label{sec:related_work}

%\subsection{Posterior sampling}\label{sec:post_sampling}
\textbf{Posterior sampling.}
Balancing exploration and exploitation is a key problem in RL. In the episodic learning setting, the agent typically aims to balance exploration and exploitation to minimize its regret, i.e., the gap between the expected total rewards of the agent and the optimal policy. Posterior sampling, first proposed in \citet{thompson1933likelihood}, is a Bayesian model-based approach toward achieving this goal, which iterates between (1) updating the posterior of a Bayesian environment model and (2) sampling from this posterior to select the next policy. In both the bandit and  RL settings, posterior sampling has been demonstrated to perform competitively in experiments and enjoy favorable theoretical regret guarantees \citep{osband2017posterior, osband2013more, agrawal2017optimistic, chapelle2011empirical}.

Our approach builds upon two existing posterior sampling algorithms: Self-Sparring  \citep{sui2017multi} for preference-based bandit learning (also known as dueling bandits \citep{yue2012k}) and  posterior sampling RL \citep{osband2013more}. Self-Sparring maintains a posterior over each action's reward, and in each iteration, draws multiple samples from this posterior to ``duel'' or ``spar'' via preference elicitation. For each set of sampled rewards, the algorithm executes the action with the highest reward sample, obtaining new preferences to update the model posterior.  \cite{sui2017multi}  prove an asymptotic no-regret guarantee for Self-Sparring with independent Beta-Bernoulli reward models for each action.

Within RL, posterior sampling has been applied to the finite-horizon setting with absolute rewards to learn Bayesian posteriors over both the dynamics and rewards. Each posterior sample yields models of both dynamics and rewards, which are used to compute the optimal policy for the sampled system. This policy is executed to get a roll-out trajectory, used to update the dynamics and reward posteriors. In \cite{osband2013more}, the authors show an expected regret of $O(hS\sqrt{AT \text{log}(SAT)})$ after $T$ time-steps, with finite time horizon $h$ and discrete state and action spaces of sizes $S$ and $A$, respectively.

Our theoretical analysis studies the Bayesian linear regression credit assignment model, which most closely resembles Bayesian reward modeling in the linear bandit setting \citep{abbasi2011improved, agrawal2013thompson, abeille2017linear}. While both the PBRL and linear bandit settings apply Bayesian linear regression to recover model parameters, PBRL additionally requires learning the dynamics, determining policies via value iteration, and receiving feedback as  preferences between trajectory pairs. 

Several regret analyses in the linear bandit domain \citep{abbasi2011improved, agrawal2013thompson, abeille2017linear} rely upon martingale concentration properties introduced in \citet{abbasi2011improved}, and depend upon a bound that is not applicable in the preference-based setting (see Appendix \ref{sec:why_info}). Intuitively, these analyses assume that the agent learns about rewards with respect to every observation's feature vector. In contrast, the preference-based setting assumes that only the \textit{difference} in the total rewards of two trajectories affects human preferences. Thus, while the algorithm incurs regret with respect to every sampled trajectory, only differences between compared trajectory feature vectors yield information about rewards.

Our regret analysis takes inspiration from the information-theoretic perspective on Thompson sampling introduced in \cite{russo2016information}, a framework for quantifying Bayesian regret in terms of the information gained at each step about the optimal action. This analysis focuses upon upper-bounding the \textit{information ratio}, which quantifies the trade-off between exploration (via the information gain) and exploitation (via the instantaneous regret) at each step. Several studies \citep{zanette2017information, nikolov2018information} consider extensions of this work to the RL setting, but to our knowledge, it has not previously been applied toward preference-based learning.

%\subsection{Preference-based learning}\label{sec:pbrl}
\textbf{Preference-based learning.}
Previous work on PBRL has shown successful performance in a number of applications, including Atari games and the Mujoco environment \citep{christiano2017deep}, learning human preferences for autonomous driving \citep{sadigh2017active}, and selecting a robot's controller parameters \citep{kupcsik2018learning, akrour2014programming}. Yet, to our knowledge, the PBRL literature still lacks theoretical guarantees.

Much of the existing work in PBRL handles a distinct setting from ours. While we seek online regret minimization, several existing algorithms minimize the number of preference queries \citep{christiano2017deep, wirth2016model}. Such algorithms, for instance those which apply deep learning, typically assume that many simulations can be cheaply run between preference queries. In contrast, our setting assumes that experimentation is as expensive as preference elicitation; this could include such domains as adaptive experiment design and human-robot interaction without well-understood human dynamics.

Existing approaches for trajectory-level preference-based RL may be broadly divided into three categories \citep{wirth2017efficient}: a) directly optimizing policy parameters \citep{wilson2012bayesian, busa2013preference, kupcsik2018learning}; b) modeling action preferences in each state \citep{furnkranz2012preference}; and c) learning a utility function to characterize the rewards, returns, or values of state-action pairs \citep{wirth2013epmc, wirth2013policy, akrour2012april, wirth2016model, christiano2017deep}. In c), the utility is often modeled as linear in the trajectory features. If those features are defined in terms of visitations to each state-action pair, then utility directly corresponds to the total (undiscounted) reward.

We adopt the third of these paradigms: PBRL with underlying utility functions.
By inferring state-action rewards from preference feedback, one can derive relatively-interpretable reward models and employ such methods as value iteration. In addition, utility-based approaches may be more sample efficient compared to policy search and preference relation methods \citep{wirth2017efficient}, as they extract more information from each observation. Notably, \cite{wilson2012bayesian} learn a Bayesian model over policy parameters, and sample from its posterior to inform actions. From existing PBRL methods, their algorithm perhaps most resembles ours; however, compared to utility-based approaches, policy search methods typically require either more samples or expert knowledge to craft the policy parameters \citep{wirth2017survey, kupcsik2018learning}.

Beyond RL, preference-based learning has been the subject of much research.  The bandit setting \citep{yue2012k,zoghi2014relative,ailon2014reducing,szorenyi2015online,dudik2015contextual,zoghi2015copeland,ramamohan2016dueling,wu2016double,sui2017multi,sui2018advancements} is closest, as it is essentially a single-state variant of RL. Other settings include:  active learning \citep{sadigh2017active,houlsby2011bayesian,eric2008active}, which is focused exclusively on learning an accurate model rather than maximizing utility of decision-making; learning with more structured preference feedback \citep{radlinski2005query,shivaswamy2012online,raman2013stable,shivaswamy2015coactive}, where the learner receives more than one bit of information per preference elicitation; and batch supervised settings such as learning to rank \citep{herbrich1999support,chu2005preference,joachims2005support,burges2005learning,yue2007support,burges2007learning,liu2009learning}.

% !TEX root =  ../main.tex

\section{PROBLEM STATEMENT} \label{sec:problem_statement}

\textbf{Preliminaries.} We consider fixed-horizon Markov Decision Processes (MDPs), in which rewards are replaced by preferences over trajectories. This class of MDPs can be represented as a tuple, $\mathcal{M} = (\mathcal{S}, \mathcal{A}, \phi, p, p_0, h)$, where the state space $\mathcal{S}$ and action space $\mathcal{A}$ are finite sets with cardinalities $S$ and $A$, respectively. The agent episodically interacts with the environment in length-$h$ roll-out trajectories of the form $\tau = \{s_1, \, a_1, \, s_2, \, a_2, \ldots, s_h, \, a_h, \, s_{h + 1} \}$. In the $i$\textsuperscript{th} iteration, the agent executes two trajectory roll-outs $\tau_{i1}$ and $\tau_{i2}$ and observes a preference between them; we use the notation $\tau \succ \tau^\prime$ to indicate a preference for trajectory $\tau$ over $\tau^\prime$. The initial state is sampled from $p_0$, while  $p$ defines the transition dynamics: $s_{t + 1} \sim p(\cdot | s_t, a_t)$. Finally, the function $\phi$ captures the preference feedback generation mechanism: $\phi(\tau,\tau') := P(\tau \succ \tau') \in [0, 1]$.

A \textit{policy}, $\pi: \mathcal{S} \times \{1, \ldots, h\} \longrightarrow \mathcal{A}$, is a (possibly-stochastic) mapping from states and time indices to actions. In each iteration $i$, the agent selects two policies, $\pi_{i1}$ and $\pi_{i2}$, which are rolled out to obtain trajectories $\tau_{i1}$ and $\tau_{i2}$ and preference label $y_i$. We represent each trajectory as a feature vector, where the features record the number of times each state-action pair is visited. In iteration $i$, rolled-out trajectories $\tau_{i1}$ and $\tau_{i2}$ correspond, respectively, to feature vectors $\bm{x}_{i1}, \bm{x}_{i2} \in \mathbb{R}^d$, where $d := SA$ is the total number of state-action pairs, and the $k$\textsuperscript{th} element of $\bm{x}_{ij}$, $j \in \{1, 2\}$, is the number of times that $\tau_{ij}$ visits state-action pair $k$. The preference for iteration $i$ is denoted $y_i := \mathbb{I}_{[\tau_{i2} \succ \tau_{i1}]} - \frac{1}{2} \in \left\{-\frac{1}{2}, \frac{1}{2}\right\}$, where $\mathbb{I}_{[\cdot]}$ denotes the indicator function, so that $P\left(y_i = \frac{1}{2}\right) = 1 - P\left(y_i = -\frac{1}{2}\right) = \phi(\tau_{i2}, \tau_{i1}) - \frac{1}{2}$; there are no ties in any comparisons. Lastly, we define $\bm{x}_i := \bm{x}_{i2} - \bm{x}_{i1}$.

Our analysis builds upon two main assumptions. Firstly, we assume the existence of underlying utilities, quantifying the user's satisfaction with each trajectory:
\begin{assumption}
Each trajectory $\tau$ has utility $\rr(\tau)$, which decomposes additively: $\rr(\tau) \equiv \sum_{t = 1}^h \rr(s_t,a_t)$ for the state-action pairs in $\tau$. Defining $\bm{\rr} \in \mathbb{R}^d$ as the vector of all state-action rewards, $\rr(\tau)$ can also be expressed in terms of $\tau$'s state-action visit counts $\bm{x}$: $\rr(\tau) = \bm{\rr}^T \bm{x}$.
\end{assumption}
Secondly, we assume that the utilities $\rr(\tau)$ are stochastically translated to preferences via the noise model $\phi$, such that the probability of observing $\tau_{i2} \succ \tau_{i1}$ is a function of the \textit{difference} in their utilities. Intuitively, the greater the disparity in two trajectories' utilities, the more accurate the user's preference between them:
\begin{assumption}
$P(\tau_{i2} \succ \tau_{i1}) = \phi(\tau_{i2}, \tau_{i1}) = g(\rr(\tau_{i2}) - \rr(\tau_{i1})) + \frac{1}{2} = g(\bm{\rr}^T \bm{x}_{i2} - \bm{\rr}^T \bm{x}_{i1}) + \frac{1}{2}$, where $g:\mathbb{R} \longrightarrow \left[-\frac{1}{2}, \frac{1}{2}\right]$ is a \emph{link function} such that a)  $g$ is non-decreasing, and b) $g(x) = -g(-x)$ to ensure that $P(\tau \succ \tau^\prime) = 1 - P(\tau^\prime \succ \tau)$. Note that if $\rr(\tau) = \rr(\tau^\prime)$, we have $P(\tau \succ \tau^\prime) = \frac{1}{2}$, and that $P(\tau_{i2} \succ \tau_{i1}) > \frac{1}{2} \Leftrightarrow g(\bm{\rr}^T \bm{x}_i) > 0 \Leftrightarrow \bm{\rr}^T \bm{x}_{i2} > \bm{\rr}^T \bm{x}_{i1}$.
\end{assumption}
For noiseless preferences, $g_{\text{ideal}}(x) := \mathbb{I}_{[x > 0]} - \frac{1}{2}$. Alternatively, the logistic or Bradley-Terry link function is defined as $g_{\text{log}}(x) := [1+\exp(-x/c)]^{-1} - \frac{1}{2}$ with ``temperature'' $c\in(0,\infty)$. Our theoretical analysis assumes the linear link function \citep{ailon2014reducing}: $g_{\text{lin}}(x) := cx$, for $c > 0$ and $x \in [-\frac{1}{2c}, \frac{1}{2c}]$. Then, $\mathbb{E}[y_i] = P(\tau_{i2} \succ \tau_{i1}) - \frac{1}{2} = c\bm{\rr}^T(\bm{x}_{i2} - \bm{x}_{i1})$. Without loss of generality, we set $c = 1$ by subsuming $c$ into $\bm{\rr}$. Denote the observation noise associated with $g_{\text{lin}}$ on iteration $i$ as $\eta_i$, such that $y_i = \bm{\rr}^T(\bm{x}_{i2} - \bm{x}_{i1}) + \eta_i$.

Given a policy $\pi$, we can define the standard RL value function as the expected total utility when starting in state $s$ at step $j$, and following $\pi$:
\begin{equation}
    V_{\pi, j}(s) = \mathbb{E}\left[\sum_{t = j}^h \rr(s_t,\pi(s_t, t)) \,\bigr|\, s_j = s \right].\label{eqn:value}
\end{equation}
The optimal policy $\pi^*$ is then defined as one that maximizes the expected value over all input states: $\pi^* = \sup_\pi \sum_{s \in \mathcal{S}} p_0(s) V_{\pi, 1}(s)$.
%\begin{equation*}
%    \pi^* = \sup_\pi \sum_{s \in \mathcal{S}} p_0(s) V_{\pi, 1}(s).
%\end{equation*}
Note that $\mathbb{E}_{s_1\sim p_0} \left[V_{\pi,1}(s_1)\right]\equiv \mathbb{E}_{\tau \sim (\pi,\mathcal{M})}\left[\rr(\tau)\right]$. Given  fully specified dynamics and rewards, $p$ and $\rr$, it is straightforward to apply standard dynamic programming approaches such as value iteration to arrive at the optimal policy under $p$ and $\rr$. The learning goal, then, is to infer $p$ and $\rr$ to the extent necessary for good decision-making.

\textbf{Learning problem.}
We quantify the learning agent's performance via its cumulative $T$-step Bayesian regret relative to the optimal policy:
\begin{flalign}\label{eqn:regret}
    \mathbb{E}[\reg(T)] &= \mathbb{E}\Bigg\{\sum_{i = 1}^{\lceil T/(2h) \rceil} \sum_{s \in \mathcal{S}} p_0(s) \bigr[2V_{\pi^*, 1}(s) \nonumber \\ & - V_{\pi_{i1}, 1}(s) - V_{\pi_{i2}, 1}(s)\bigr]\Bigg\}.
\end{flalign}
To minimize regret, the agent must balance exploration (collecting new data) with exploitation (behaving optimally given current knowledge). Over-exploration of bad trajectories will incur large regret, and under-exploration can prevent convergence to optimality. In contrast to the standard regret formulation in RL, at each iteration we measure regret of both selected policies.

\textbf{Assumptions.} We make two further assumptions. The first imposes a regularity condition upon the noise $\eta_i$:
\begin{assumption}\label{assump:sub_Gauss}
The label noise $\eta_i = y_i - \bm{\rr}^T\bm{x}_i$ is conditionally $R$-sub-Gaussian, that is, there exists $R \ge 0$ such that $\forall \, \lambda \in \mathbb{R}$:
\begin{equation*}
    \mathbb{E}\left[e^{\lambda \eta_i} \, \big| \, \bm{x}_1, \ldots, \bm{x}_{i - 1}, \eta_1, \ldots, \eta_{i - 1}\right] \le \exp\left(\frac{\lambda^2 R^2}{2}\right).
\end{equation*}
\end{assumption}
Note that bounded, zero-mean noise lying in an interval of length at most $2R$ is $R$-sub-Gaussian, and that sub-Gaussianity requires $\mathbb{E}[\eta_i \,|\,  \bm{x}_1, \ldots, \bm{x}_i, \eta_1, \ldots, \eta_{i - 1}] = 0$ \citep{abbasi2011improved}. Since $y_i \in \left\{-\frac{1}{2}, \frac{1}{2}\right\}$ and $\mathbb{E}[y_i \,|\, \bm{x}_i] = \bm{\rr}^T\bm{x}_i \in \left[-\frac{1}{2}, \frac{1}{2}\right]$, we must have $\eta_i \in [-1, 1]$. Thus, $\eta_i$ is R-sub-Gaussian with $R \le 1$, provided that $\mathbb{E}[\eta_i \,|\,  \bm{x}_1, \ldots, \bm{x}_i, \eta_1, \ldots, \eta_{i - 1}] = 0$. The latter holds by the assumption that $\mathbb{E}[y_i \,|\, \bm{x}_i] = \bm{\rr}^T\bm{x}_i$.

\begin{assumption}
For some known $S_r < \infty$, $||\bm{\rr}||_2 \le S_r$. 
\end{assumption}

\textbf{Additional notation.}
For random variables $X$ and $X_n$, $n \in \mathbb{N}$, $X_n \overset{D}\longrightarrow X$ denotes that $X_n$ converges to $X$ in distribution. For $\bm{x} \in \mathbb{R}^d$ and positive definite matrix $B \in \mathbb{R}^{d \times d}$, we define the norm $||\bm{x}||_B := \sqrt{\bm{x}^T B \bm{x}}$.

% !TEX root =  ../main.tex

\begin{algorithm}[t]
\caption{\algolong~(\algo)}\label{alg:DPS}
\begin{small}
\begin{algorithmic}
\STATE $\mathcal{H}_0 = \emptyset$ \COMMENT{Initialize history}
\STATE Initialize prior for $f_p$ \COMMENT{Initialize state transition model}
\STATE Initialize prior for $f_r$ \COMMENT{Initialize utility model}
\FOR{$i = 1, 2, \ldots$}
\STATE $\pi_{i1} \leftarrow$ \adv($f_p$, $f_r$)
\STATE $\pi_{i2} \leftarrow$ \adv($f_p$, $f_r$)
\STATE Sample trajectories $\tau_{i1}$ and $\tau_{i2}$ from $\pi_{i1}$ and $\pi_{i2}$
\STATE Observe feedback $y_i = \mathbb{I}_{[\tau_{i2} \succ \tau_{i1}]} - \frac{1}{2}$
\STATE $\mathcal{H}_i = \mathcal{H}_{i - 1} \cup (\tau_{i1}, \tau_{i2}, y_i)$
\STATE $f_p$, $f_r$ = \feedback($\mathcal{H}_i$, $f_p$, $f_r$)
\ENDFOR
\end{algorithmic}
\end{small}
\end{algorithm}

\section{ALGORITHM} \label{sec:algorithm}

As outlined in Algorithm \ref{alg:DPS}, \algolong~(\algo) iterates among three steps: (a) sampling two policies $\pi_{i1}, \pi_{i2}$ from the Bayesian posteriors of the dynamics and utility models (\adv~-- Algorithm \ref{alg:advance}); (b) rolling out $\pi_{i1}$ and $\pi_{i2}$ to obtain trajectories $\tau_{i1}$ and $\tau_{i2}$, and receiving a preference $y_i$ between them; and (c) updating the posterior (\feedback~-- Algorithm \ref{alg:feedback}). In contrast to conventional posterior sampling with absolute feedback, \algo~samples two policies rather than one at each iteration and solves a credit assignment problem to learn from feedback.

\adv~(Algorithm \ref{alg:advance}) samples from the Bayesian posteriors of the dynamics and utility models to select a policy to roll out. The sampled dynamics and utilities form an MDP, for which value iteration derives the optimal policy $\pi$ under the sample.  One can also view $\pi$ as a random function whose randomness depends on the sampling of the dynamics and utility models.
In the Bayesian setting, it can be shown that $\pi$ is sampled according to its posterior probability of being the optimal policy $\pi^*$. Intuitively, peaked (i.e., certain) posteriors lead to less variability when  sampling $\pi$, which implies less exploration, while diffuse (i.e., uncertain) posteriors lead to greater variability when sampling $\pi$, implying more exploration.

\feedback~(Algorithm \ref{alg:feedback}) updates the Bayesian posteriors of the dynamics and utility models based on new data. Updating the dynamics posterior is relatively straightforward, as we assume that the dynamics are fully-observed; we model the dynamics prior via a Dirichlet distribution for each state-action pair, with conjugate multinomial observation likelihoods. In contrast, performing Bayesian inference over state-action utilities from trajectory-level feedback is much more challenging.  We consider a range of approaches (see Appendix \ref{sec:credit_assignment_models}), and found Bayesian linear regression (Section \ref{sec:credit_assignment}) to both perform well and admit tractable analysis within our theoretical framework.

\begin{algorithm}[t]
\caption{\adv: Sample policy from dynamics and utility models}\label{alg:advance}
\begin{small}
\begin{algorithmic}
\STATE {\bfseries Input:} $f_p, f_r$
\STATE Sample $\bm{\tilde{p}} \sim f_p(\cdot)$ \COMMENT{Sample MDP transition dynamics parameters from posterior}
\STATE Sample $\bm{\tilde{r}} \sim f_r(\cdot)$ \COMMENT{Sample utilities from posterior}
\STATE Compute $\pi = \mathrm{argmax}_{\pi} V(\bm{\tilde{p}}, \bm{\tilde{r}})$ \COMMENT{Value iteration yields sampled MDP's optimal policy}
\STATE Return $\pi$
\end{algorithmic}
\end{small}
\end{algorithm}

\begin{algorithm}[t]
\caption{\feedback: Update dynamics and utility models based on new user feedback}\label{alg:feedback}
\begin{small}
\begin{algorithmic}
\STATE {\bfseries Input:} history $\mathcal{H}, f_p, f_r$
\STATE Apply Bayesian update to $f_p$, given $\mathcal{H}$ \COMMENT{Update dynamics model given history}
\STATE Apply Bayesian update to $f_r$, given $\mathcal{H}$ \COMMENT{Update utility model given preferences}
\STATE Return $f_p$, $f_r$
\end{algorithmic}
\end{small}
\end{algorithm}

\subsection{BAYESIAN LINEAR REGRESSION FOR UTILITY INFERENCE AND CREDIT ASSIGNMENT}
\label{sec:credit_assignment}

\textit{Credit assignment} is the problem of inferring which state-action pairs are responsible for observed trajectory-level preferences. We detail a Bayesian linear regression approach to addressing this task in our setting.

Let $n$ be the number of iterations, or trajectory pairs, observed so far. Then, the maximum a posteriori (MAP) estimate of the rewards $\bm{\rr}$ is calculated via ridge regression, similarly to algorithms for the linear bandit setting:
\begin{flalign}
\bm{\hat{r}}_n = M_n^{-1} \sum_{i = 1}^{n - 1} y_i \bm{x}_i, \text{ where } \label{eqn:reward_MAP} \\
M_n = \lambda I + \sum_{i = 1}^{n - 1} \bm{x}_i \bm{x}_i^T, \text{ and } \lambda \ge 1. \label{eqn:M}
\end{flalign}
We perform Thompson sampling as in \cite{agrawal2013thompson} and \cite{abeille2017linear}, such that in iteration $n$, rewards are sampled from the distribution:
\begin{flalign}\label{eqn:defn_beta}
    \bm{\tilde{r}}_{n1}, \bm{\tilde{r}}_{n2} &\sim \mathcal{N}(\bm{\hat{r}}_n, \beta_n(\delta)^2M_n^{-1}), \text{ where } \\
    \beta_n(\delta) &= R \sqrt{2 \log\left(\frac{\det(M_n)^{1/2}\lambda^{-d/2}}{\delta}\right)} + \sqrt{\lambda}S_r \nonumber\\ &\le R \sqrt{d \log\left(\frac{1 + \frac{L^2n}{d\lambda}}{\delta}\right)} + \sqrt{\lambda}S_r, \nonumber
\end{flalign}
and where $\delta \in (0, 1)$ is a failure probability and for all $n$, $||\bm{x}_n||_2 \le L$. Note that $L \le 2h$, since $||\bm{x}_n||_2 = ||\bm{x}_{n2} - \bm{x}_{n1}||_2 \le ||\bm{x}_{n2} - \bm{x}_{n1}||_1 \le ||\bm{x}_{n2}||_1 + ||\bm{x}_{n1}||_1 = 2h$.

The factor $\beta_n(\delta)$, introduced in \cite{abbasi2011improved}, is critical to deriving the theoretical guarantees for posterior sampling with linear bandits in \cite{agrawal2013thompson} and \cite{abeille2017linear}, due to their dependence on Theorems 1 and 2 of \cite{abbasi2011improved}. Our analysis invokes these results as well. Both of the theorems require any noise in the labels $y_n$ to be sub-Gaussian; in our case, sub-Gaussianity holds by Assumption \ref{assump:sub_Gauss}, as we adopted the linear preference noise model with link function $g_{\text{lin}}$.

Our theoretical analysis is quite different from that for linear bandits in \cite{agrawal2013thompson} and \cite{abeille2017linear}, because in our setting, observations $\bm{x}_n$ are \textit{differences} of trajectory feature vectors, policies are chosen via value iteration, and trajectories are obtained by rolling out RL policies while subject to the environment's state transition dynamics.

% !TEX root =  ../main.tex

\section{THEORETICAL RESULTS} \label{sec:theory}

This section sketches our analysis of the asymptotic Bayesian regret of \algo~under a Bayesian linear regression credit assignment model. Appendix \ref{sec:proofs} details the full proof, while Appendix \ref{sec:credit_assignment_extend} discusses possible future extensions to additional credit assignment models.

The analysis follows three main steps: 1) we prove that \algo~is asymptotically-consistent, that is, the probability with which \algo~selects the optimal policy approaches 1 over time (Appendix \ref{sec:asy_consistency}); 2) we asymptotically bound the one-sided Bayesian regret for $\pi_{i2}$  under the setting where, at each iteration $i$,  \algo~only selects policy $\pi_{i2}$, while policy $\pi_{i1}$ is sampled from a fixed distribution over policies
(Appendix \ref{sec:bound_one_sided_fixed}); and lastly, 3) we assume \algo~selects policy $\pi_{i2}$, while the $\pi_{i1}$-distribution is drifting but converging, and then we asymptotically bound the one-sided regret for $\pi_{i2}$ (Appendix \ref{sec:bound_one_sided_converging}). Due to the asymptotic consistency shown in 1), the policies are indeed sampled from converging distributions, and so the asymptotic regret rate in 3) holds.

This outline is inspired by the analysis for Self-Sparring \citep{sui2017multi}; however, because their guarantee is for dueling bandits with independent Beta-Bernoulli reward models for each action, the details of our analysis  are completely different from theirs. Below, we give intuition for each of the three portions of the proof.

\textbf{Asymptotic consistency of \algo.} To prove that \algo~is asymptotically consistent, we first prove that samples of the dynamics and reward parameters converge in distribution to their true values:
%\setcounter{proposition}{\ref{prop:dyn_consistency}}
%\addtocounter{proposition}{-1}
\setcounter{proposition}{0}
\begin{proposition}
The sampled dynamics converge in distribution to their true values as the \algo~iteration increases.
\end{proposition}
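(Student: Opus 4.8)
The plan is to exploit the conjugate, factorized structure of the dynamics model. Since the prior places an independent Dirichlet $\mathrm{Dir}(\bm{\alpha}^{(s,a)})$ over each next-state distribution $p(\cdot\,|\,s,a)$ and the transition observations are conditionally independent multinomial draws, the posterior factorizes over state-action pairs and stays Dirichlet, with parameters equal to the prior pseudocounts plus the empirical transition counts. Because the target is the \emph{deterministic} true transition vector, convergence in distribution is equivalent to convergence in probability, so it suffices to show that for each relevant pair $(s,a)$ the posterior collapses almost surely to the point mass at $p(\cdot\,|\,s,a)$, whence the posterior sample $\bm{\tilde{p}}_n(\cdot\,|\,s,a)$ concentrates there. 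I would thus reduce the proposition to a single-pair statement and argue pairwise.

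Fix a state-action pair that is visited infinitely often, and let $N_n(s,a)$ denote its visit count through iteration $n$. Two ingredients give the claim. First, by the Markov property the successive next states observed upon visiting $(s,a)$ are i.i.d.\ draws from $p(\cdot\,|\,s,a)$, so the strong law of large numbers forces the empirical next-state frequencies, and hence the pseudocount-smoothed Dirichlet posterior mean, to converge almost surely to $p(\cdot\,|\,s,a)$ as $N_n(s,a)\to\infty$. Second, the component variances of a Dirichlet distribution scale as $O(1/N_n(s,a))$ in the total concentration, so the posterior variance vanishes. Mean converging to the truth together with vanishing variance yields almost-sure weak convergence of the posterior to the Dirac measure at $p(\cdot\,|\,s,a)$; since a sample drawn from measures converging weakly to a point mass converges in distribution to that point, we obtain $\bm{\tilde{p}}_n(\cdot\,|\,s,a) \overset{D}{\longrightarrow} p(\cdot\,|\,s,a)$.

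The main obstacle is the hypothesis I quietly assumed: that the relevant pairs are visited infinitely often. This is delicate because the visitation process is adaptively coupled to the posterior through the sampled, value-iteration-derived policies, so the events ``$(s,a)$ visited at iteration $n$'' are dependent across $n$. To establish infinitely many visits I would restrict attention to pairs that are reachable under the true dynamics along trajectories generated by the sampled policies, and exploit the full support of the Gaussian reward posterior in \eqref{eqn:defn_beta} together with the Dirichlet dynamics posterior: conditioned on any history, there is positive conditional probability that the sampled MDP's optimal policy chooses actions that, under the positive reachability probabilities of the true dynamics, steer a roll-out through $(s,a)$. A conditional (L\'evy) Borel--Cantelli argument then upgrades non-summable conditional visit probabilities into almost-sure infinitely many visits. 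The genuinely delicate point is that these conditional probabilities can shrink as the posterior concentrates, so one must either control their decay or---as is sufficient for the downstream consistency argument---prove the statement only for the pairs encountered along optimal-under-sample trajectories, which is precisely what the subsequent asymptotic-consistency step consumes.
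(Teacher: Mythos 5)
Your first half---posterior concentration for any state-action pair that is visited infinitely often---is correct and essentially the paper's Lemma 1 (the paper uses Chebyshev on the Dirichlet variance plus an $L^1$ concentration inequality for the empirical transition frequencies rather than the SLLN, but the content is the same). The genuine gap is in the second half, and you have in fact located it yourself without closing it: you concede that the conditional probability of visiting a given pair ``can shrink as the posterior concentrates,'' and neither of your two exits resolves this. The fallback of proving convergence only for pairs encountered along optimal-under-sample trajectories does not suffice, because the problematic case is a state $s$ that \emph{is} visited infinitely often while some action $a$ is taken there only finitely often; such a pair is reachable, its model matters for certifying that the optimal policy has been found, and the downstream consistency argument cannot simply discard it. Appealing to ``full support of the Gaussian reward posterior'' gives positive probability at each iteration but not the non-summability your conditional Borel--Cantelli step requires.

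The missing idea, which is the engine of the paper's Lemma 3, is that the very fact that $(s,a)$ stops being visited \emph{prevents} the relevant conditional probabilities from decaying. After the last visit, every observation $\bm{x}_i$ has a zero in the coordinate of $(s,a)$, so $M_n = \lambda I + \sum_i \bm{x}_i\bm{x}_i^T$ acquires no new information in that direction; an eigenvalue argument shows that any eigenvector of $M_n^{-1}$ whose eigenvalue decays to zero must have vanishing overlap with that coordinate, hence the posterior variance of $\tilde{r}(s,a) - v\,\tilde{r}(\tilde{s}_j)$ stays bounded away from zero. Combined with a uniform bound on $\|\bm{\hat{r}}_n\|_2$ (the paper's Lemma 2, proved by a separate compactness/contradiction argument on the ridge-regression objective) and a compactness argument making the threshold $v$ uniform over all sampled dynamics, this yields a \emph{uniform} $\rho>0$ lower-bounding the probability that the sampled rewards are so dominant in the $(s,a)$ coordinate that value iteration returns the visit-maximizing policy. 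Only then does your Borel--Cantelli step go through, and one still needs the final observation that reaching $s$ infinitely often under that policy is guaranteed either by $p_0(s)>0$ or by the already-converged dynamics along an infinitely-visited path to $s$. Without this non-decaying-variance mechanism your proof establishes the easy half but not the proposition.
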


\textit{Proof sketch.} Applying standard concentration inequalities to the Dirichlet dynamics posterior, one can show that the sampled dynamics converge in distribution to their true values if every state-action pair is visited infinitely-often. The latter condition can be proven via contradiction: assuming that certain state-action pairs are visited finitely-often, \algo~does not receive new information about their rewards. Examining their reward posteriors, we show that \algo~ is guaranteed to eventually sample high enough rewards in the unvisited state-actions that its policies will attempt to reach them.

We also show that with high probability, the sampled rewards exhibit aymptotic consistency:
\setcounter{proposition}{1}
%\addtocounter{proposition}{-1}
\begin{proposition}
With probability $1 - \delta$, where $\delta$ is a parameter of the Bayesian linear regression model, the sampled rewards converge in distribution to the true reward parameters, $\bm{\overline{r}}$, as the \algo~iteration increases.
\end{proposition}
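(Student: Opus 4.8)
The plan is to show that the Thompson sampling distribution in Equation~\eqref{eqn:defn_beta}, namely $\mathcal{N}(\bm{\hat{r}}_n, \beta_n(\delta)^2 M_n^{-1})$, collapses onto the true reward vector $\bm{\rr}$ as $n \to \infty$. This requires controlling two sources of discrepancy between a sample $\bm{\tilde{r}}_n$ and $\bm{\rr}$: the gap between the MAP estimate $\bm{\hat{r}}_n$ and $\bm{\rr}$, and the spread of the sampling distribution about its mean $\bm{\hat{r}}_n$. Concretely, I would write $\bm{\tilde{r}}_n - \bm{\rr} = (\bm{\tilde{r}}_n - \bm{\hat{r}}_n) + (\bm{\hat{r}}_n - \bm{\rr})$ and argue that both terms vanish (in an appropriate probabilistic sense) as the iteration index grows, with the ``$1 - \delta$'' in the statement tracking the confidence level at which the standard self-normalized martingale bound of \cite{abbasi2011improved} (Theorems 1 and 2 there) holds.

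First I would invoke the confidence-ellipsoid result of \cite{abbasi2011improved}: with probability $1 - \delta$, the true parameter satisfies $||\bm{\hat{r}}_n - \bm{\rr}||_{M_n} \le \beta_n(\delta)$, which controls the MAP error in the $M_n$-weighted norm. To convert this into convergence in the ordinary Euclidean sense, I would combine it with the key fact, established in Proposition~1 and its consistency argument, that every state-action pair is visited infinitely often, so that the smallest eigenvalue $\lambda_{\min}(M_n) \to \infty$. Then $||\bm{\hat{r}}_n - \bm{\rr}||_2 \le \beta_n(\delta)/\sqrt{\lambda_{\min}(M_n)}$, and since $\beta_n(\delta)$ grows only polylogarithmically in $n$ (as shown by the explicit upper bound on $\beta_n(\delta)$ in Equation~\eqref{eqn:defn_beta}, which is $O(\sqrt{d \log(n/\delta)})$), while $\lambda_{\min}(M_n)$ grows linearly in the number of informative observations, the ratio tends to $0$. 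This handles the bias term $\bm{\hat{r}}_n - \bm{\rr}$.

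For the sampling-noise term $\bm{\tilde{r}}_n - \bm{\hat{r}}_n$, note that by Equation~\eqref{eqn:defn_beta} this is a zero-mean Gaussian with covariance $\beta_n(\delta)^2 M_n^{-1}$. Its magnitude is governed by the same two quantities: each coordinate has standard deviation at most $\beta_n(\delta)\sqrt{\lambda_{\min}(M_n)^{-1}}$, so the covariance matrix $\beta_n(\delta)^2 M_n^{-1}$ tends to the zero matrix as $n \to \infty$, and hence $\bm{\tilde{r}}_n - \bm{\hat{r}}_n \overset{D}\longrightarrow 0$ (equivalently, converges to the degenerate point mass at $\bm{0}$). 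Combining the two terms and applying Slutsky's theorem (convergence in distribution to a constant plus convergence in probability to a constant) yields $\bm{\tilde{r}}_n \overset{D}\longrightarrow \bm{\rr}$ on the $1 - \delta$ event, which is the claim.

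The main obstacle I anticipate is making the infinitely-often visitation argument interact cleanly with the growth of $\lambda_{\min}(M_n)$. The difficulty is specific to the preference setting: $M_n$ is built from the \emph{difference} vectors $\bm{x}_i = \bm{x}_{i2} - \bm{x}_{i1}$, not from the raw trajectory features, so even if every state-action pair is visited infinitely often, one must verify that these difference vectors span $\mathbb{R}^d$ richly enough that $M_n$ accumulates curvature in \emph{every} direction rather than degenerating along some subspace. Establishing that $\lambda_{\min}(M_n) \to \infty$ (rather than merely $\det(M_n) \to \infty$) is the crux, and I expect it to require leaning on the asymptotic consistency from Proposition~1 to argue that the sampled policies eventually explore a set of trajectory pairs whose differences are non-degenerate in all directions; the regularization term $\lambda I$ with $\lambda \ge 1$ provides a baseline but does not by itself drive the eigenvalue to infinity.
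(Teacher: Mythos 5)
Your first step---splitting $\bm{\tilde{r}}_n - \bm{\rr}$ into the sampling deviation $\bm{\tilde{r}}_n - \bm{\hat{r}}_n$ and the estimation error $\bm{\hat{r}}_n - \bm{\rr}$, controlling the latter with the confidence ellipsoid of \cite{abbasi2011improved} and the former with the Thompson covariance $\beta_n(\delta)^2 M_n^{-1}$, and reducing everything to whether $\beta_n(\delta)^2/\lambda_{\min}(M_n) \to 0$---is exactly the paper's Lemma \ref{lemma:reward_consistency_part1}. The gap is in the second half. The premise $\lambda_{\min}(M_n) \to \infty$ is \emph{false} as stated: since each $\bm{x}_i = \bm{x}_{i2} - \bm{x}_{i1}$ has entries summing to zero, the all-ones vector is orthogonal to every observation, so $M_n$ retains eigenvalue exactly $\lambda$ in that direction forever (and the MDP's dynamics can create further such degenerate directions). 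Infinitely-often visitation of state-action pairs controls the dynamics posterior, which is built from raw visit counts, but says nothing about the difference vectors spanning $\mathbb{R}^d$. The paper has to confront this head-on: it partitions $\mathbb{R}^d$ into ``relevant'' and ``irrelevant'' subspaces via condition \eqref{eqn:relevance_condition}, shows (Lemma \ref{lemma:relevance}, Remark \ref{rmk:relevant_evecs}) that the irrelevant directions provably cannot affect policy selection once the dynamics have converged, and then proves eigenvalue growth only on the relevant subspace. You correctly flag this as the crux in your final paragraph, but flagging it is not resolving it, and your proposed resolution (``the sampled policies eventually explore trajectory pairs whose differences are non-degenerate in all directions'') is unattainable in all directions.

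Even after restricting to the relevant subspace, the assertion that $\lambda_{\min}$ ``grows linearly in the number of informative observations'' is the entire content of the paper's Lemmas \ref{lemma:reward_consistency_part2}--\ref{lemma:reward_consistency_part4} and is proved by contradiction, not by direct counting: assuming the ratio stays bounded away from zero, one shows there is a non-decaying probability that the two reward samples align with $+\bm{v}_d^{(i)}$ and $-\bm{v}_d^{(i)}$ respectively; that under this alignment value iteration selects policies with $\mathbb{E}[|\bm{x}_i^T \bm{v}_d^{(i)}|]$ bounded below (which itself needs the relevance condition, the boundedness of $\bm{\hat{r}}_n$ from Lemma \ref{lemma:MAP_bound}, and the convergence of the sampled dynamics); and finally a covering argument over the unit sphere to handle the fact that the bottom eigenvector $\bm{v}_d^{(i)}$ moves with $i$, so that the per-step increments $(\bm{x}_i^T\bm{v}_d^{(n)})^2$ accumulated at different times actually add up along a common direction. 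None of this machinery appears in your proposal, so as written the argument does not close.
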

\textit{Proof sketch.} We leverage Theorem 2 from \cite{abbasi2011improved} (Lemma \ref{lemma:abbasi_yadkori} in Appendix \ref{sec:asy_regret_rate}): under stated conditions and for any $\delta > 0$, with probability $1 - \delta$ and for all $i > 0$, $||\bm{\hat{r}}_i - \bm{\overline{r}}||_{M_i} \le \beta_i(\delta)$. This result defines a high-confidence ellipsoid, which can be linked to the posterior sampling distribution. We demonstrate that it suffices to show that all eigenvalues of the posterior covariance matrix, $\beta_i(\delta)^2 M_i^{-1}$, converge in distribution to zero. This statement is proven via contradiction: we analyze the behavior of posterior sampling if this does not hold. The probability of failure $\delta$ comes entirely from Theorem 2 in \cite{abbasi2011improved}.

From the asymptotic consistency of the dynamics and reward samples, it is straightforward to show that the sampled policies converge to the optimal policy:
\setcounter{theorem}{0}
%\addtocounter{theorem}{-1}
\begin{theorem}
With probability $1 - \delta$, the sampled policies $\pi_{i1}, \pi_{i2}$ converge in distribution to the optimal policy, $\pi^*$, as $i \longrightarrow \infty$. That is, $P(\pi_{i1} = \pi^*) \longrightarrow 1$ and $P(\pi_{i2} = \pi^*) \longrightarrow 1$ as $i \longrightarrow \infty$.
\end{theorem}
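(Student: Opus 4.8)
The plan is to combine Propositions 1 and 2 with a stability argument for value iteration. Write $(p, \bm{\rr})$ for the true dynamics and rewards and $(\bm{\tilde{p}}_i, \bm{\tilde{r}}_i)$ for the samples drawn by \adv~in iteration $i$. Since the limits $p$ and $\bm{\rr}$ are deterministic, convergence in distribution to them (Propositions 1 and 2) is equivalent to convergence in probability; hence on the probability-$(1-\delta)$ event where both propositions hold, $(\bm{\tilde{p}}_i, \bm{\tilde{r}}_i) \overset{P}{\longrightarrow} (p, \bm{\rr})$. It then remains to show that this forces the value-iteration output $\pi_{ij} = \mathrm{argmax}_\pi V(\bm{\tilde{p}}_i, \bm{\tilde{r}}_i)$ to equal $\pi^*$ with probability tending to $1$.

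First I would exploit the finiteness of $\mathcal{S}$ and $\mathcal{A}$: there are only finitely many deterministic policies $\pi^{(1)}, \ldots, \pi^{(N)}$ (with $N = A^{Sh}$). For each fixed policy $\pi^{(k)}$, the scalar objective $J_{\pi^{(k)}}(p, \bm{\rr}) := \sum_{s} p_0(s) V_{\pi^{(k)}, 1}(s)$ is a continuous function of $(p, \bm{\rr})$: unrolling the recursion $V_{\pi, t}(s) = \rr(s, \pi(s,t)) + \sum_{s'} p(s' \mid s, \pi(s,t)) V_{\pi, t+1}(s')$ over the finite horizon $h$ expresses $J_{\pi^{(k)}}$ as a polynomial in the entries of $p$ and $\bm{\rr}$, hence continuous. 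Value iteration returns $\pi_{ij} = \mathrm{argmax}_k J_{\pi^{(k)}}(\bm{\tilde{p}}_i, \bm{\tilde{r}}_i)$.

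The key step is a gap argument. Assuming $\pi^*$ is the unique maximizer of $J(\,\cdot\,; p, \bm{\rr})$ for the true MDP, define the suboptimality gap $\Delta := \min_{\pi^{(k)} \ne \pi^*} \left[ J_{\pi^*}(p, \bm{\rr}) - J_{\pi^{(k)}}(p, \bm{\rr}) \right] > 0$. Because there are finitely many policies and each $J_{\pi^{(k)}}$ is continuous, there exists $\epsilon > 0$ such that $\|(\bm{\tilde{p}}_i, \bm{\tilde{r}}_i) - (p, \bm{\rr})\| < \epsilon$ implies $|J_{\pi^{(k)}}(\bm{\tilde{p}}_i, \bm{\tilde{r}}_i) - J_{\pi^{(k)}}(p, \bm{\rr})| < \Delta/2$ simultaneously for all $k$, which preserves the ranking and yields $\mathrm{argmax}_k J_{\pi^{(k)}}(\bm{\tilde{p}}_i, \bm{\tilde{r}}_i) = \pi^*$. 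Convergence in probability then gives $P\big(\|(\bm{\tilde{p}}_i, \bm{\tilde{r}}_i) - (p, \bm{\rr})\| < \epsilon\big) \longrightarrow 1$, so $P(\pi_{ij} = \pi^*) \longrightarrow 1$. Because $\pi_{i1}$ and $\pi_{i2}$ are drawn independently from the same posterior, the conclusion holds for both, all on the probability-$(1-\delta)$ event inherited from Proposition 2.

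The main obstacle I anticipate is the non-uniqueness of the optimal policy and the attendant discontinuity of the $\mathrm{argmax}$ operator: if $\pi^*$ is not unique, the gap $\Delta$ collapses and the ranking argument fails. I would handle this either by positing a unique optimal policy (the generic case) or, more carefully, by replacing $\pi^*$ with the set $\Pi^*$ of optimal policies and proving $P(\pi_{ij} \in \Pi^*) \longrightarrow 1$ via the same gap argument applied to the second-best value level. A related subtlety is value iteration's internal tie-breaking, but since $\bm{\tilde{r}}_i$ is drawn from a continuous (Gaussian) distribution, exact ties in the sampled MDP occur with probability zero and hence do not affect the limit.
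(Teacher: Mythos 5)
Your proposal is correct and follows essentially the same route as the paper's proof: both pass from convergence in distribution of the sampled $(\bm{\tilde{p}}, \bm{\tilde{r}})$ to convergence in probability (since the limits are constants), invoke continuity of the value function for each of the finitely many deterministic policies, and then use a gap argument --- the paper chooses $\varepsilon$ smaller than half the optimal-versus-best-suboptimal value gap and applies a union bound, which is the same mechanism as your $\Delta/2$ ranking-preservation step. Your explicit caveat about non-uniqueness of $\pi^*$ is a real subtlety that the paper's proof leaves implicit (its choice of $\varepsilon$ presupposes a positive gap, i.e., a unique optimal policy), so flagging it is a point in your favor rather than a gap.
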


\textbf{Bounding the one-sided regret under a fixed $\pi_{i1}$-distribution.} To analyze the Bayesian regret of \algo, we adapt the information-theoretic posterior sampling analysis in \cite{russo2016information} to the PBRL setting. In comparison to Russo and Van Roy's work, this requires accounting for preference feedback and incorporating state transition dynamics. Their analysis hinges upon defining a quantity called the \textit{information ratio}, which captures the trade-off between exploration and exploitation. In our setting, we define the information ratio corresponding to the one-sided regret of $\pi_{i2}$ as:
\begin{equation*}
    \Gamma_i := \frac{\mathbb{E}_i[y_i^* - y_i]^2}{I_i(\pi^*; (\pi_{i2}, \tau_{i1}, \tau_{i2}, \bm{x}_{i2} - \bm{x}_{i1}, y_i))},
\end{equation*}
where $y_i$ is the label in iteration $i$, $y_i^*$ is the label in iteration $i$ given $\pi_{i2} = \pi^*$, $I(\cdot; \cdot)$ denotes mutual information, and the subscripts $i$ in $\mathbb{E}_i[\cdot]$ and $I_i(\cdot; \cdot)$ indicate conditioning upon the history, as formalized in Appendix \ref{sec:bound_one_sided_fixed}. The ratio $\Gamma_i$ is between the squared instantaneous one-sided regret of $\pi_{i2}$ (exploitation) and the information gained about the optimal policy (exploration).

When $\pi_{i1}$ is drawn from a fixed distribution, we show that analogously to \cite{russo2016information}, the Bayesian one-sided regret $\mathbb{E}[\reg_2(T)]$ for $\pi_{i2}$ can be bounded in terms of an upper bound on $\Gamma_i$:
%\setcounter{lemma}{\ref{lemma:regret_inf_ratio}}
%\addtocounter{lemma}{-1}
\setcounter{lemma}{11}
\begin{lemma}
If $\Gamma_i \le \overline{\Gamma}$ almost surely for each $i \in \{1, \ldots, N\}$, where $N$ is the number of \algo~iterations (over which the policies $\pi_{i2}$ take $T = Nh$ actions), then:
\begin{equation*}
    \mathbb{E}[\reg_2(T)] = \mathbb{E}[\reg_2(Nh)] \le \sqrt{\overline{\Gamma} H(\pi^*) N},
\end{equation*}
where $H(\pi^*)$ is the entropy of the optimal policy $\pi^*$. Because there are at most $A^{Sh}$ deterministic policies, $H(\pi^*) \le \log |A^{Sh}| = Sh \log A$. Substituting this,
\begin{equation*}
    \mathbb{E}[\reg_2(T)] \le \sqrt{\overline{\Gamma}Sh N \log A} = \sqrt{\overline{\Gamma}ST \log A}.
\end{equation*}
\end{lemma}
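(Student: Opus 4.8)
The plan is to follow the information-theoretic template of \cite{russo2016information}, adapted to the preference observation structure. First I would reduce the value-function one-sided regret to a sum of label gaps. Since $\sum_s p_0(s) V_{\pi,1}(s) = \mathbb{E}_{\tau\sim\pi}[\bm{\rr}^T\bm{x}]$ and, by Assumption~2 with $g_{\text{lin}}$, $\mathbb{E}_i[y_i] = \bm{\rr}^T(\bm{x}_{i2}-\bm{x}_{i1})$ while $\mathbb{E}_i[y_i^*] = \bm{\rr}^T(\bm{x}^*-\bm{x}_{i1})$ with $\bm{x}^*$ a feature vector rolled out from $\pi^*$, the common $\bm{x}_{i1}$ contribution cancels and the per-iteration one-sided regret of $\pi_{i2}$ equals $\mathbb{E}_i[y_i^*-y_i]$. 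Summing over the $N$ iterations gives the identity $\mathbb{E}[\reg_2(T)] = \mathbb{E}\bigl[\sum_{i=1}^N \mathbb{E}_i[y_i^*-y_i]\bigr]$, which is nonnegative term-by-term because $\pi^*$ is optimal.

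Next I would rewrite each instantaneous regret through the information ratio. Writing $O_i := (\pi_{i2},\tau_{i1},\tau_{i2},\bm{x}_{i2}-\bm{x}_{i1},y_i)$ for the iteration-$i$ observation tuple, the definition of $\Gamma_i$ gives $\mathbb{E}_i[y_i^*-y_i]^2 = \Gamma_i\, I_i(\pi^*;O_i)$, so the almost-sure hypothesis $\Gamma_i\le\overline{\Gamma}$ yields $\mathbb{E}_i[y_i^*-y_i]\le\sqrt{\overline{\Gamma}\,I_i(\pi^*;O_i)}$. Applying the Cauchy-Schwarz inequality across the $N$ terms inside the expectation produces $\sum_{i=1}^N\sqrt{\overline{\Gamma}\,I_i(\pi^*;O_i)}\le\sqrt{\overline{\Gamma}\,N\sum_{i=1}^N I_i(\pi^*;O_i)}$, and Jensen's inequality (concavity of the square root) then moves the expectation inside the radical, giving $\mathbb{E}[\reg_2(T)]\le\sqrt{\overline{\Gamma}\,N\,\mathbb{E}\bigl[\sum_{i=1}^N I_i(\pi^*;O_i)\bigr]}$.

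The crux is bounding the cumulative expected information gain. By the chain rule of mutual information, $\sum_{i=1}^N \mathbb{E}[I_i(\pi^*;O_i)] = \sum_{i=1}^N I(\pi^*;O_i\mid O_1,\ldots,O_{i-1}) = I(\pi^*;O_1,\ldots,O_N)$, where the last equality uses that conditioning in $\mathbb{E}_i[\cdot]$ and $I_i(\cdot;\cdot)$ is precisely on the history $O_1,\ldots,O_{i-1}$. Since mutual information never exceeds the entropy of either argument, $I(\pi^*;O_1,\ldots,O_N)\le H(\pi^*)$, which yields $\mathbb{E}[\reg_2(T)]\le\sqrt{\overline{\Gamma}\,H(\pi^*)\,N}$. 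Finally, there are at most $A^{Sh}$ deterministic policies, so $H(\pi^*)\le Sh\log A$; substituting this and $T=Nh$ gives $\sqrt{\overline{\Gamma}\,ShN\log A}=\sqrt{\overline{\Gamma}\,ST\log A}$.

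The main obstacle is the information-theoretic bookkeeping rather than any single inequality: one must verify that the conditioning implicit in $\mathbb{E}_i[\cdot]$ and $I_i(\cdot;\cdot)$ is consistently the full history, so that the chain rule telescopes correctly over the observation tuples $O_i$ under the fixed $\pi_{i1}$-distribution assumption, and that $y_i^*$ is defined with the same $\pi_{i1}$ roll-out as $y_i$ so their $\bm{x}_{i1}$ contributions cancel in the reduction step. The three inequalities (Cauchy-Schwarz, Jensen, and $I\le H$) are standard; the care lies entirely in setting up the conditional mutual-information quantities precisely enough that each of these steps is justified in the preference-based, dynamics-driven setting.
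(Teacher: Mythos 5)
Your proposal is correct and follows essentially the same route as the paper's proof: the same reduction of the one-sided regret to $\sum_i \mathbb{E}_i[y_i^* - y_i]$ via cancellation of the $\bm{x}_{i1}$ term, the same substitution $\mathbb{E}_i[y_i^*-y_i] = \sqrt{\Gamma_i I_i(\pi^*;O_i)}$, the same Cauchy--Schwarz and Jensen steps, and the same telescoping of the conditional mutual informations via the chain rule followed by $I(\pi^*;O_1,\ldots,O_N)\le H(\pi^*)$. No substantive differences.
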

We show that $\Gamma_i$ can be asymptotically upper-bounded such that $\lim_{i \longrightarrow \infty} \Gamma_i \le \frac{SA}{2}$, and consequently:
\setcounter{theorem}{1}
%\addtocounter{theorem}{-1}
\begin{theorem}
If the policy $\pi_{i1}$ is drawn from a fixed distribution for all $i$, then for the competing policy $\pi_{i2}$, \algo~achieves a one-sided asymptotic Bayesian regret rate of $S\sqrt{\frac{A T \log A}{2}}$.
\end{theorem}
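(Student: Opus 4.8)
The plan is to reduce Theorem 2 to two ingredients that the text has already separated out: the one-sided regret bound of the preceding lemma, which converts a uniform upper bound $\overline{\Gamma}$ on the information ratio into the bound $\mathbb{E}[\reg_2(T)] \le \sqrt{\overline{\Gamma} S T \log A}$, and an asymptotic bound on the information ratio itself, namely $\lim_{i \to \infty} \Gamma_i \le \frac{SA}{2}$. Granting the latter and substituting $\overline{\Gamma} = \frac{SA}{2}$ gives $\mathbb{E}[\reg_2(T)] \le \sqrt{\frac{SA}{2} \cdot S T \log A} = S\sqrt{\frac{A T \log A}{2}}$, which is precisely the claimed rate. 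So the substance of the proof lies in (i) establishing the information-ratio bound and (ii) reconciling the fact that the preceding lemma asks for a bound holding at every $i$, whereas we only control $\Gamma_i$ in the limit.

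First I would bound the numerator of $\Gamma_i = \frac{\mathbb{E}_i[y_i^* - y_i]^2}{I_i(\pi^*;\,(\pi_{i2},\tau_{i1},\tau_{i2},\bm{x}_{i2}-\bm{x}_{i1},y_i))}$ by exploiting the linear link function, under which $y_i = \bm{\rr}^T\bm{x}_i + \eta_i$ and the noise drops out in expectation, so that $\mathbb{E}_i[y_i^* - y_i] = \mathbb{E}_i[\bm{\rr}^T(\bm{x}_i^* - \bm{x}_i)]$, where $\bm{x}_i^*$ is the feature difference realized when $\pi_{i2} = \pi^*$. This is exactly the instantaneous one-sided regret of $\pi_{i2}$. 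The structural fact I would invoke here is the posterior-sampling identity: $\pi_{i2}$ is drawn from the posterior of $\pi^*$, so the numerator can be re-expressed as a posterior-weighted combination $\sum_{\pi} \alpha_i(\pi)(\cdots)$ with weights $\alpha_i(\pi) := P_i(\pi^* = \pi)$, matching the template of \cite{russo2016information}.

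Next I would lower-bound the denominator $I_i(\pi^*;\cdot)$ by relating the mutual information to a sum of KL divergences between the observation law conditioned on $\{\pi^* = \pi\}$ and its posterior marginal, then combining Pinsker's inequality with Cauchy--Schwarz to produce a quadratic form matching the numerator. In the linear setting the reward vector is $d$-dimensional with $d = SA$, and this is what yields the factor $\frac{SA}{2} = \frac{d}{2}$. Here I expect the main obstacle: in the preference-based setting each label $y_i$ is essentially a single bit, and only the feature difference $\bm{x}_i = \bm{x}_{i2} - \bm{x}_{i1}$ (not the individual trajectories) carries information about $\bm{\rr}$, so the per-step information is weaker than in standard linear bandits, and the map from policies to feature differences is mediated by the stochastic, only-asymptotically-learned dynamics. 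Quantifying this coupling carefully --- and showing the effective number of informative directions is at most $d = SA$ once the dynamics posterior concentrates --- is the technical heart, and it is precisely this dependence on concentration of the dynamics that makes the $\frac{SA}{2}$ bound asymptotic rather than uniform.

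Finally I would close the gap between the uniform hypothesis of the preceding lemma and the merely asymptotic control of $\Gamma_i$. Since $\lim_{i \to \infty} \Gamma_i \le \frac{SA}{2}$, for every $\epsilon > 0$ all but finitely many iterations satisfy $\Gamma_i \le \frac{SA}{2} + \epsilon$; I would split the regret sum into this finite prefix, which contributes a bounded, $T$-independent term, and the tail, to which the preceding lemma applies with $\overline{\Gamma} = \frac{SA}{2} + \epsilon$. Letting $\epsilon \to 0$ then recovers the asymptotic rate $S\sqrt{\frac{A T \log A}{2}}$, completing the argument.
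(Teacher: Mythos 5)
Your proposal is correct and follows essentially the same route as the paper: the paper proves Theorem 2 by combining the regret-vs-information-ratio lemma (extended, exactly as you describe, by a corollary that splits off a finite prefix of iterations where the bound may fail) with the asymptotic bound $\lim_{i\to\infty}\Gamma_i \le \frac{SA}{2}$, which is itself established via the posterior-sampling identity, a Pinsker-type lower bound on the KL terms, and a rank-$d$ factorization of the resulting matrix whose error term vanishes as the dynamics posterior concentrates. Your identification of the dynamics-concentration coupling as the reason the bound is only asymptotic is precisely the paper's reasoning.
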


The bounds in Lemma \ref{lemma:regret_inf_ratio} and Theorem \ref{thm:one_sided_bound} are asymptotic rather than finite-time, due to the convergence in distribution of the dynamics. If the dynamics are known a priori, then these would be finite-time guarantees; in fact, to prove Lemma \ref{lemma:regret_inf_ratio}, we first show that under known dynamics, $\Gamma_i \le \frac{SA}{2}$ for all $i$, and then extend the analysis to prove that under converging dynamics, the result still holds asymptotically. Note that in the PBRL setting, it is significantly more difficult to learn the rewards via credit assignment than to learn the dynamics, which are assumed to be fully-observed. Thus, in practice, we expect that \algo~would learn the dynamics much faster than the rewards, and so it is reasonable to consider convergence of the dynamics model only asymptotically.

\textbf{Bounding the one-sided regret under a converging $\pi_{i1}$-distribution.} Finally, we assume that the distribution of $\pi_{i1}$ is no longer fixed, but rather converges to some fixed distribution over deterministic policies. To asymptotically bound the one-sided regret incurred by $\pi_{i2}$, we leverage that when two discrete random variables converge in distribution, such that $X_n \overset{D}\longrightarrow X$ and $Y_n \overset{D}\longrightarrow Y$, their mutual information also converges: $\lim_{n \longrightarrow \infty} I(X_n, Y_n) = I(X, Y)$. This fact allows us to bound the one-sided regret for $\pi_{i2}$ as follows:
%\setcounter{lemma}{\ref{lemma:drift_converge}}
%\addtocounter{lemma}{-1}
\setcounter{lemma}{16}
\begin{lemma}
Assume that the sampling distribution of $\pi_{i1}$ converges to a fixed probability distribution. Then, the information ratio $\Gamma_i$ corresponding to $\pi_{i2}$'s one-sided regret $\mathbb{E}[\reg_2(T)]$ satisfies $\lim_{i \longrightarrow \infty} \, \Gamma_i \le \frac{SA}{2}$.
\end{lemma}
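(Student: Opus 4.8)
The plan is to transfer the fixed-$\pi_{i1}$-distribution bound $\Gamma_i \le \frac{SA}{2}$ (established in the analysis behind Theorem 2) to the converging case by a continuity argument. Write $\mu_i$ for the sampling distribution of $\pi_{i1}$ at iteration $i$ and $\mu$ for its limit, so that $\mu_i \overset{D}\longrightarrow \mu$ over the finite set of deterministic policies. Both the numerator $N_i := \mathbb{E}_i[(y_i^* - y_i)^2]$ and the denominator $D_i := I_i(\pi^*;(\pi_{i2},\tau_{i1},\tau_{i2},\bm{x}_{i2}-\bm{x}_{i1},y_i))$ of $\Gamma_i$ are functionals of $\mu_i$ together with the reward and dynamics posteriors and the $\pi_{i2}$-sampling distribution. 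The idea is to show that these functionals depend continuously on $\mu_i$, so that passing to the limit $\mu$ — at which the fixed-distribution bound applies — controls $\limsup_{i\to\infty}\Gamma_i$ and gives the asymptotic bound on $\mathbb{E}[\reg_2(T)]$'s information ratio.

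First I would reduce everything to a finite alphabet: with $|\mathcal{S}| = S$, $|\mathcal{A}| = A$, and fixed horizon $h$, there are finitely many trajectories, finitely many integer-valued feature vectors $\bm{x}_i$, finitely many labels $y_i \in \{-\frac12,\frac12\}$, and at most $A^{Sh}$ deterministic policies. On such finite alphabets, convergence in distribution is exactly pointwise convergence of the probability mass function, and both Shannon entropy and mutual information are continuous functions of the joint PMF. I would then argue that the joint law of $(\pi^*,\pi_{i2},\tau_{i1},\tau_{i2},\bm{x}_i,y_i)$ converges in distribution as $i\to\infty$ to its law under the limiting configuration: the $\pi_{i1}$-law converges by hypothesis, the $\pi_{i2}$-law converges by Theorem 1, the sampled dynamics converge by Proposition 1, and the remaining conditional kernels — the rollout dynamics producing each $\tau_{ij}$ from $\pi_{ij}$ and the fixed link-function noise producing $y_i$ from $\bm{x}_i$ — are held fixed. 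Chaining these pointwise-convergent PMFs yields convergence of the full joint law.

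Invoking the stated continuity fact for mutual information of discrete random variables then gives $I_i(\pi^*;(\cdots)) \to D_\infty$, and continuity of the finite expectation gives $\mathbb{E}_i[(y_i^*-y_i)^2] \to N_\infty$, where $N_\infty,D_\infty$ are the corresponding quantities under the limiting fixed distribution $\mu$. Because $\mu$ is itself a fixed distribution over policies, the fixed-distribution analysis of Theorem 2 applies at $\mu$ and yields the inequality $N_i \le \frac{SA}{2}\, D_i$ asymptotically along the sequence; hence $\limsup_{i\to\infty}\Gamma_i \le \frac{SA}{2}$.

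The main obstacle is the degeneracy of the ratio. By Theorem 1 the posterior over $\pi^*$ concentrates, so $\pi_{i2}$ converges to $\pi^*$ and both the one-sided regret numerator $N_i$ and the information-gain denominator $D_i$ tend to $0$, making $\lim_{i\to\infty}\Gamma_i$ a genuine $0/0$ form that cannot be evaluated by taking limits of numerator and denominator separately. The resolution is to keep the estimate at the level of the inequality $N_i \le \frac{SA}{2}\, D_i$, which the fixed-distribution derivation establishes for each individual distribution of $\pi_{i1}$, and to use the mutual-information continuity fact only to certify that the constant $\frac{SA}{2}$ does not degrade as $\mu_i$ drifts to $\mu$ — that is, to pinpoint the single place in the fixed-distribution derivation where fixedness of the $\pi_{i1}$-law was used (a mutual-information term) and show, via the convergence fact, that its contribution is asymptotically the same as under the fixed law $\mu$. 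A secondary technicality is that $\Gamma_i$ is a history-measurable random variable, so all convergence statements must be read as holding on the probability-one event carved out by the asymptotic-consistency results (Propositions 1 and 2 and Theorem 1).
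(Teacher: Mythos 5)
Your proposal follows the same overall route as the paper's proof: retain the fixed-distribution information-ratio bound of Lemma \ref{lemma:bounded_inf_ratio} and transfer it to the drifting case via the continuity of mutual information for discrete random variables on finite alphabets (Lemma \ref{lemma:MI_cts}). Two substantive differences. First, on the numerator: you treat $\mathbb{E}_i[y_i^*-y_i]$ as a functional of the $\pi_{i1}$-law that converges by continuity of finite expectations, whereas the paper observes something stronger and simpler --- the $\bm{x}_{i1}$-dependence cancels exactly, $y_i^*-y_i = \bm{\rr}^T(\bm{x}_i^*-\bm{x}_{i2}) + \eta_i^*-\eta_i$, so the numerator is \emph{identical} under the drifting and limiting laws and needs no limiting argument at all. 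Second, to your credit, you correctly flag the $0/0$ degeneracy that the paper's own one-line conclusion glosses over: since both numerator and denominator tend to zero as the posteriors concentrate, knowing that the mutual-information denominator under the drifting law differs by $o(1)$ from its value under the limiting law does not control the ratio, so one cannot simply pass to the limit in the denominator. Your proposed fix --- re-establish the inequality $N_i \le (\tfrac{SA}{2}+\varepsilon)\,D_i$ directly under the drifting law by locating the one place where fixedness of the $\pi_{i1}$-distribution is used in the fixed-case derivation --- is the right one, but you mislocate that place: it is not a mutual-information term (Lemmas \ref{lemma:TS} and \ref{lemma:inf_ratio} never use the $\pi_{i1}$-law), but the conditional-expectation step in the rank bound for $B^{(i)}$, namely $\mathbb{E}_i[\bm{x}_{i1}\mid \pi_{i2}=\pi_j,\pi^*=\pi_k,\bm{\rr}] = \mathbb{E}_i[\bm{x}_{i1}]$ in \eqref{eqn:assume_dynamics}, which is what allows $B^{(i)}$ to be written as a rank-$\le d$ matrix plus a vanishing perturbation. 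For \algo's own $\pi_{i1}$ this identity holds asymptotically because $\pi_{i1}$ is an independent posterior sample (hence conditionally independent of $\pi_{i2}$, $\pi^*$, and $\bm{\rr}$ given the history) and the residual dependence of the rollout $\bm{x}_{i1}$ on the true dynamics washes out as the dynamics posterior concentrates (Proposition \ref{prop:dyn_consistency}); completing your plan along these lines yields $\Gamma_i \le \tfrac{SA}{2}+\varepsilon$ directly in the drifting case, without ever forming the problematic ratio of limits.
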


Combining Lemma \ref{lemma:drift_converge} with the asymptotic consistency of sampled policies as shown in Theorem \ref{thm:policy_consistency}, $P(\pi_{i1} = \pi^*) \longrightarrow 1$, yields our main theoretical result:
\setcounter{theorem}{2}
%\addtocounter{theorem}{-1}
\begin{theorem}
With probability $1 - \delta$, where $\delta$ is a parameter of the Bayesian linear regression model, the expected Bayesian regret $\mathbb{E}[\reg(T)]$ of \algo~achieves an asymptotic rate of $S\sqrt{2A T \log A}$.
\end{theorem}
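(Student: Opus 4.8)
The plan is to exploit the symmetry between the two sampled policies and reduce the two-sided regret to twice a one-sided regret, each of which is already controlled by the machinery built for $\pi_{i2}$. First I would additively decompose the Bayesian regret of \eqref{eqn:regret} as $\mathbb{E}[\reg(T)] = \mathbb{E}[\reg_1(T)] + \mathbb{E}[\reg_2(T)]$, where $\reg_j(T)$ collects the terms $\sum_{s} p_0(s)[V_{\pi^*,1}(s) - V_{\pi_{ij},1}(s)]$ for $j \in \{1,2\}$. This split is immediate, since the summand $2V_{\pi^*,1}(s) - V_{\pi_{i1},1}(s) - V_{\pi_{i2},1}(s)$ separates into the two one-sided value gaps.

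Next I would invoke Theorem~\ref{thm:policy_consistency}: with probability $1-\delta$, both $P(\pi_{i1}=\pi^*) \to 1$ and $P(\pi_{i2}=\pi^*)\to 1$, so the sampling distributions of \emph{both} policies converge in distribution to the degenerate law placing unit mass on $\pi^*$. In particular the $\pi_{i1}$-distribution converges to a fixed distribution, which is exactly the hypothesis of Lemma~\ref{lemma:drift_converge}; applying that lemma gives $\lim_{i\to\infty}\Gamma_i \le \frac{SA}{2}$ for the information ratio attached to $\pi_{i2}$'s one-sided regret. Feeding this into Lemma~\ref{lemma:regret_inf_ratio} yields the asymptotic one-sided rate $\mathbb{E}[\reg_2(T)] \le S\sqrt{AT\log A/2}$, which is Theorem~\ref{thm:one_sided_bound}. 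By the symmetry of \algo, in which both policies are produced by identical, independent calls to \adv, the same argument applies after relabeling: since the $\pi_{i2}$-distribution also converges to the point mass on $\pi^*$, Lemma~\ref{lemma:drift_converge} bounds the information ratio for $\pi_{i1}$'s one-sided regret, and Lemma~\ref{lemma:regret_inf_ratio} gives $\mathbb{E}[\reg_1(T)] \le S\sqrt{AT\log A/2}$ as well.

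Summing the two one-sided bounds then finishes the proof: $\mathbb{E}[\reg(T)] \le 2\,S\sqrt{AT\log A/2} = S\sqrt{2AT\log A}$, and the entire chain holds on the probability-$(1-\delta)$ event inherited from Theorem~\ref{thm:policy_consistency}, where $\delta$ enters only through Theorem~2 of \citet{abbasi2011improved}.

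The step I expect to be the main obstacle is reconciling the \emph{asymptotic} information-ratio bound of Lemma~\ref{lemma:drift_converge} with the uniform hypothesis $\Gamma_i \le \overline{\Gamma}$ demanded by Lemma~\ref{lemma:regret_inf_ratio}. Because $\lim_i \Gamma_i \le SA/2$ only controls the tail, I would fix $\epsilon > 0$, choose $N_0$ beyond which $\Gamma_i \le SA/2 + \epsilon$, argue that the finitely many early iterations contribute an additive term that is negligible against the $\sqrt{T}$ rate, and then let $\epsilon \downarrow 0$. A secondary subtlety is justifying the role-swap for $\pi_{i1}$: the information ratio and the feedback channel in its denominator are defined through the trajectory pair, so I would verify that the mutual-information argument underlying Lemma~\ref{lemma:drift_converge} is invariant under relabeling the two independently sampled policies, ensuring the bound on $\mathbb{E}[\reg_1(T)]$ is genuinely identical to that on $\mathbb{E}[\reg_2(T)]$.
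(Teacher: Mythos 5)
Your proposal is correct and follows essentially the same route as the paper: the paper's proof of this theorem likewise decomposes $\mathbb{E}[\reg(T)]$ into the two one-sided regrets, invokes Theorem~\ref{thm:policy_consistency} to ensure each policy's sampling distribution converges, applies Lemma~\ref{lemma:drift_converge} and Theorem~\ref{thm:one_sided_bound} symmetrically to each side, and sums to obtain $S\sqrt{2AT\log A}$. The two subtleties you flag are handled in the paper exactly as you suggest---the finitely many early iterations are absorbed via Corollary~\ref{coro:one_sided_bound_in_terms_of_Gamma}, and the role-swap is justified by the symmetry of the two \adv~calls.
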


\textbf{Discussion.} The specific theoretical results presented yield a high-probability asymptotic Bayesian no-regret rate for \algo~under Bayesian linear regression credit assignment. The proof consists of first demonstrating that the algorithm is asymptotically consistent, and then analyzing its information ratio to characterize the Bayesian regret. We adopted this information-theoretic perspective because we found it more amenable to preference-based feedback than other prevalent methods from the linear bandits literature.

In particular, while several existing regret analyses for posterior sampling with linear bandits \citep{agrawal2013thompson, abeille2017linear} are based upon martingale concentration properties derived in \cite{abbasi2011improved}, we found that these techniques cannot readily extend to the preference-feedback setting (Appendix \ref{sec:why_info}). These linear bandit analyses assume that each observation $\bm{x}_i$ that incurs regret contributes fully toward learning the rewards. In contrast, we assume that while regret is incurred with respect to the observations $\bm{x}_{i1}, \bm{x}_{i2}$, learning occurs only with respect to observation \textit{differences}, $\bm{x}_i = \bm{x}_{i2} - \bm{x}_{i1}$. In preference-based learning settings, it is common to make such assumptions as $P(\tau_{i2} \succ \tau_{i1}) = g(\bm{x}_{i2} - \bm{x}_{i1})$, for some function $g$. In comparison to the martingale-based techniques, the information ratio provides a more direct method for quantifying the trade-off between exploration and exploitation.

Theoretically analyzing other credit assignment models, in addition to Bayesian linear regression, is an important direction for future work. We conjecture that our proof methodology could extend toward other asymptotically-consistent credit assignment models. Indeed, recent work \citep{dong2018information} has analyzed the information ratio for more general link functions, including for logistic bandits. It would be interesting to study the information ratio's behavior under general link functions, as well as to characterize its relationship to the dynamics model's convergence.  It would also be interesting to develop methodology for extending the analysis to achieve finite-time convergence guarantees.

% !TEX root =  ../main.tex

\section{EXPERIMENTS} \label{sec:experiments}

\begin{figure*}[ht]
  \centering
\subfloat[][RiverSwim, $c = 0.0001$]{\includegraphics[width = 0.34\linewidth]{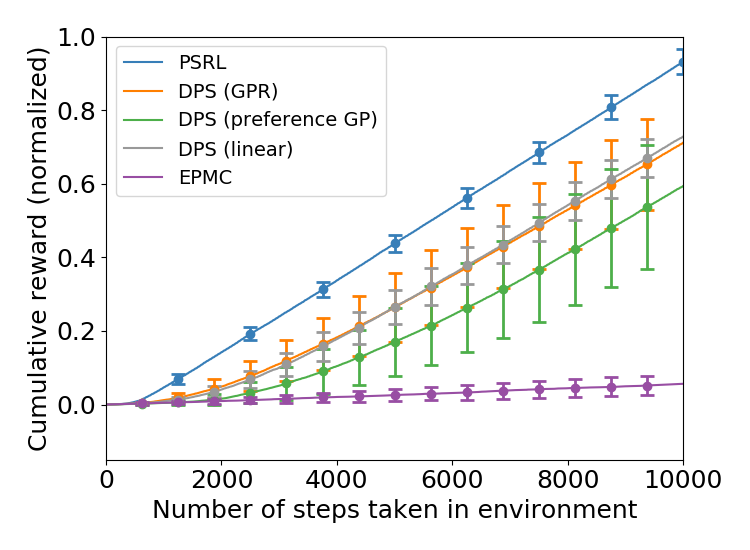}}
\subfloat[][Random MDPs, $c = 0.0001$]{\includegraphics[width = 0.34\linewidth]{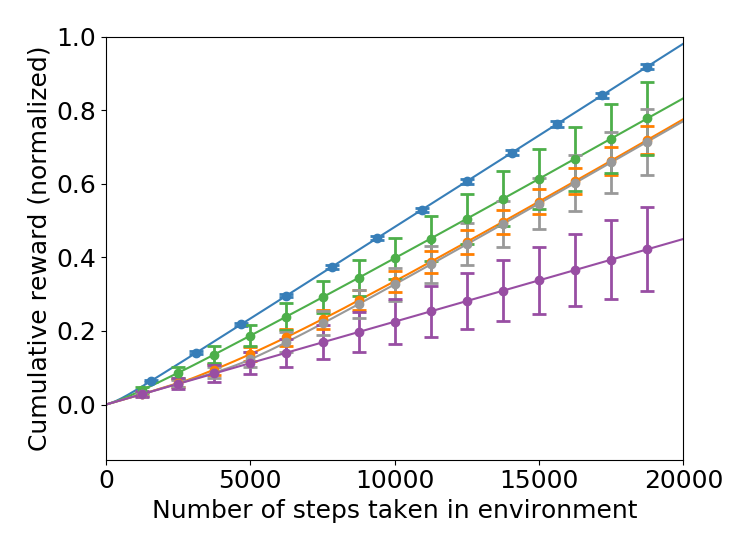}}
\subfloat[][Mountain Car, $c = 0.0001$]{\includegraphics[width = 0.34\linewidth]{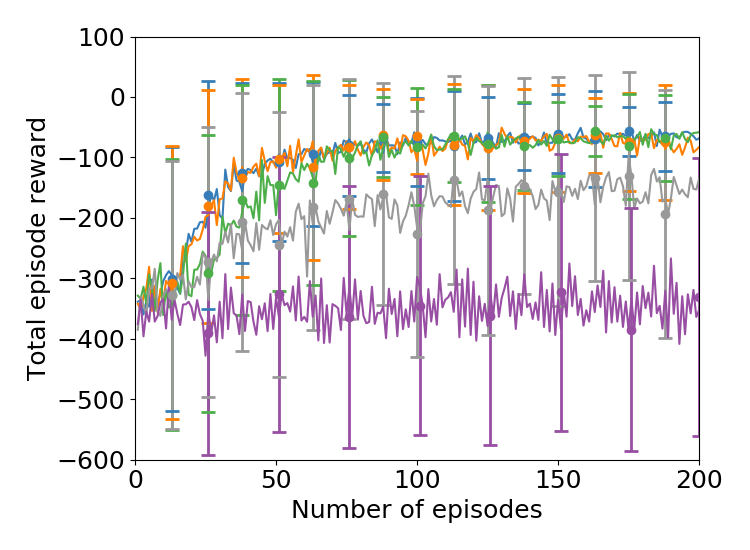}} \\
\subfloat[][RiverSwim, $c = 1$]{\includegraphics[width = 0.34\linewidth]{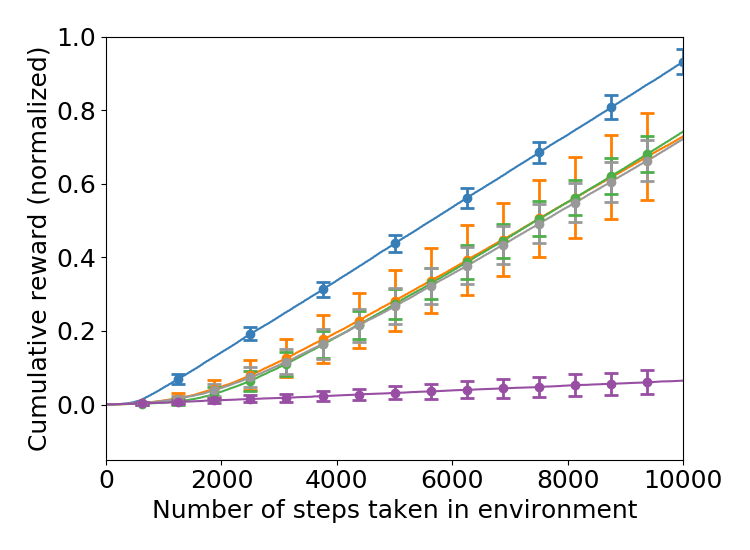}}
\subfloat[][Random MDPs, $c = 1$]{\includegraphics[width = 0.34\linewidth]{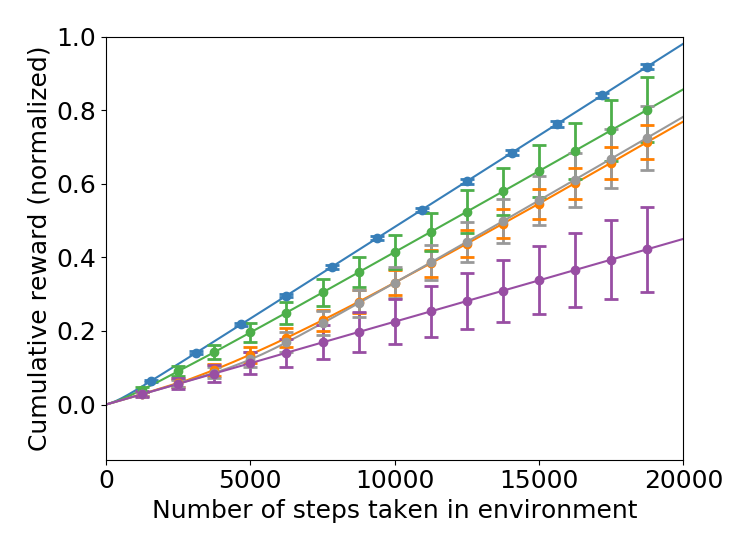}}
\subfloat[][Mountain Car, $c = 0.1$]{\includegraphics[width = 0.34\linewidth]{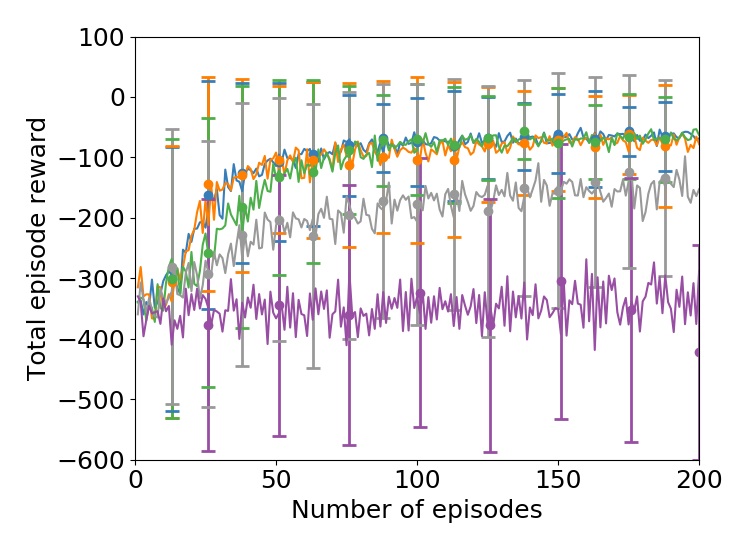}}
  \caption{Empirical performance of \algo; each simulated environment is shown under the two least-noisy user preference models evaluated. The plots show \algo~with three credit assignment models: Gaussian process regression (GPR), Bayesian linear regression, and a Gaussian process preference model. PSRL is an upper bound that receives numerical rewards, while EPMC is a baseline. Plots display the mean +/- one standard deviation over 100 runs of each algorithm tested. The remaining user noise models are plotted in Appendix \ref{sec:additional_experiments}. For RiverSwim and Random MDPs, normalization is with respect to the total reward achieved by the optimal policy. Overall, we see that \algo~performs well and is robust to the choice of credit assignment model.}
  \label{fig:experiments}
\end{figure*}

We validate the empirical performance of \algo~in three simulated domains with varying degrees of preference noise and using three alternative credit assignment models. We find that \algo~generally performs well and compares favorably against standard PBRL baselines.

\textbf{Experimental setup.}
We evaluate on three simulated environments: RiverSwim and random MDPs (described in \cite{osband2013more}) and the Mountain Car problem as detailed in \cite{wirth2017efficient}. The RiverSwim environment has six states and two actions (actions 0 and 1); the optimal policy always chooses action 1, which maximizes the probability of reaching a goal state-action pair. Meanwhile, a suboptimal policy---yielding a small reward compared to the goal---is quickly and easily discovered and incentivizes the agent to always select action 0. The algorithm must demonstrate sufficient exploration to have hope of discovering the optimal policy quickly.

In the second environment, we generate random MDPs with 10 states and 5 actions. The transition dynamics and rewards are respectively generated from Dirichlet (all parameters set to 0.1) and exponential (rate parameter $=$ 5) distributions. These distribution parameters were chosen to generate MDPs with sparse dynamics and rewards. For each random MDP, the sampled reward values were shifted and normalized so that the minimum reward is zero and their mean is one.

Thirdly, in the Mountain Car problem, an under-powered car in a valley must reach the top of a hill by accelerating in both directions to build its momentum. The state space is two-dimensional (position and velocity), while there are three actions (left, right, and neutral). Our implementation begins each episode in a uniformly-random state and has a maximum episode length of 500. We discretize the state space into 10 states in each dimension. Each episode terminates either when the car reaches the goal or after 500 steps, and rewards are -1 in every step.

In each environment, preferences between trajectory pairs were generated by (noisily) comparing their total accrued rewards; this reward information was hidden from the learning algorithm, which observed only the trajectory preferences and state transitions. For trajectories $\tau_i$ and $\tau_j$ with total rewards $\rr(\tau_i)$ and $\rr(\tau_j)$, we consider two models for generating preferences: a) a logistic model, $P(\tau_i \succ \tau_j) = \{1 + \text{exp}[-(\rr(\tau_i) - \rr(\tau_j))/c]\}^{-1}$, and b) a linear model, $P(\tau_i \succ \tau_j) = (\rr(\tau_i) - \rr(\tau_j))/c$, where in both cases, the temperature $c$ controls the degree of noisiness. In the linear case, $c$ is assumed to be large enough that $P(\tau_i \succ \tau_j) \in [0, 1]$. Note that in ties where $\rr(\tau_i) = \rr(\tau_j)$, preferences are uniformly-random.

\textbf{Methods compared.}
We evaluate \algo~under three credit assignment models (Appendix \ref{sec:credit_assignment_models}): 1) Bayesian linear regression, 2) Gaussian process regression, and 3) a Gaussian process preference model. User noise generated via the logistic model has noise levels: $c \in \{10, 2, 1, 0.001\}$ for RiverSwim and random MDPs and $c \in \{100, 20, 10, 0.001\}$ for the Mountain Car. We selected higher values of $c$ for the Mountain Car because $|\rr(\tau_i) - \rr(\tau_j)|$ has a wider range. Additionally, we evaluate the linear preference noise model with $c = 2h\Delta \bm{\rr}$, where $\Delta \bm{\rr}$ is the difference between the maximum and minimum element of $\bm{\rr}$ for each MDP; this choice of $c$ guarantees that $P(\tau_i \succ \tau_j) \in [0, 1]$, but yields noisier preferences than the logistic noise models considered.

As discussed in Section \ref{sec:related_work}, many existing PBRL algorithms handle a somewhat distinct setting from ours, as they assume access to a simulator between preference queries and/or prioritize minimizing preference queries rather than online regret. As a baseline, we evaluate the Every-Visit Preference Monte Carlo (EPMC) algorithm with probabilistic credit assignment \citep{wirth2013policy, wirth2017efficient}. While EPMC does not require simulations between preference queries, it has several limitations, including: 1) the exploration approach always takes uniformly-random actions with some probability, and thus, the authors' plots do not depict online reward accumulation, and 2) EPMC assumes that compared trajectories start in the same state. Lastly, we compare against the posterior sampling RL algorithm (PSRL) from \cite{osband2013more}, which receives the true numerical rewards at each step, and thus upper-bounds the achievable performance of a preference-based algorithm. 

\textbf{Results.}
Figure \ref{fig:experiments} depicts performance curves for the three environments, each with two noise models (Appendix \ref{sec:additional_experiments} contains additional results and details). \algo~performs well in all simulations, and significantly outperforms the EPMC baseline. In RiverSwim, most credit assignment models perform best in the second-to-least-noisy case (logistic noise, $c = 1$), since it is harder to escape the local minimum under the least-noisy preferences. We also see that \algo~is competitive with PSRL, which has access to the full cardinal rewards at each state-action. Additionally, while our theoretical guarantees for \algo~assume fixed-horizon episodes, the Mountain Car results demonstrate that it also succeeds with variable episode lengths. Finally, the performance of \algo~is robust to the choice of credit assignment model, and in fact using Gaussian processes (for which we do not have an end-to-end regret analysis) often leads to the best empirical performance. These results suggest that \algo~is a practically-promising approach that can robustly incorporate many models as subroutines.

% !TEX root =  ../main.tex

\section{CONCLUSION} \label{sec:conclusion}

This work investigates the preference-based reinforcement learning problem, in which an RL agent receives comparative preferences instead of absolute real-valued rewards as feedback. We develop the \algolong~(\algo)~algorithm, which optimizes policies in a highly efficient and flexible way. To our knowledge, \algo~is the first preference-based RL algorithm with a regret guarantee. \algo~also performs well in our simulations, making it both a theoretically-justified and practically-promising algorithm.

There are many directions for future work. Assumptions governing the user's preferences, such as requiring an underlying utility model, could be relaxed. It would also be interesting to extend our theoretical analysis to additional credit assignment approaches and to pursue finite-time guarantees. We expect that \algo~would perform well with any asymptotically-consistent reward model that sufficiently captures users' preference behavior, and hope to develop models that are tractable with larger state and action spaces. For instance, incorporating kernelized input spaces could further improve sample efficiency.

\subsubsection*{Acknowledgments}

This work was supported by NIH grant EB007615 and an Amazon graduate fellowship.

\begin{small}

%\bibliography{References}
%\bibliographystyle{abbrv}
\end{small}

\newpage

\onecolumn
\appendix
\section*{APPENDICES}
\renewcommand{\thesection}{\Alph{section}}

% !TEX root =  ../main.tex

\setcounter{lemma}{0}
\setcounter{theorem}{0}
\setcounter{proposition}{0}

\section{DERIVATION OF THE ASYMPTOTIC REGRET RATE}\label{sec:proofs}

As outlined in Section \ref{sec:theory}, the analysis follows three main steps:
\begin{enumerate}
    \item Prove that \algo~is asymptotically-consistent, that is, over time, the probability that \algo~selects the optimal policy approaches 1 (Appendix \ref{sec:asy_consistency}).
    \item Assume that in each iteration $i$, policy $\pi_{i1}$ is drawn from a fixed distribution while policy $\pi_{i2}$ is selected by \algo. Then, asymptotically bound the one-sided regret rate for $\pi_{i2}$ (Appendix \ref{sec:bound_one_sided_fixed}).
    \item Assume that policy $\pi_{i1}$ is drawn from a drifting but converging distribution while policy $\pi_{i2}$ is selected by \algo. Then, asymptotically bound the one-sided regret rate for $\pi_{i2}$ (Appendix \ref{sec:bound_one_sided_converging}).
\end{enumerate}

Finally, Appendix \ref{sec:asy_regret_rate} combines these results to asymptotically bound the expected regret rate.

\subsection{ASYMPTOTIC CONSISTENCY OF \algo}\label{sec:asy_consistency}

We show asymptotic consistency of \algo~in three parts: 1) samples from the model posterior over transition dynamics parameters converge in distribution to the true transition probabilities; 2) samples from the reward posterior converge in distribution to the true utilities; and 3) consequently, the sampled policies converge in distribution to the optimal policy.

State transition dynamics are modeled independently for each state-action pair. For a given state-action pair, a Dirichlet model estimates the probability of transitioning to each possible subsequent state. The prior and posterior distributions are both Dirichlet; because the Dirichlet and multinomial distributions are conjugate, each state-action pair's posterior can be updated easily using the observed transitions from that state-action. Each time that \algo~draws a sample from the dynamics distribution, values are sampled for all $S^2 A$ transition parameters, $\{P(s_{t + 1} = s^\prime \,|\, s_t = s, a_t = a) \,|\, s, s^\prime \in \mathcal{S}, a \in \mathcal{A}\}$.

\textbf{Notation.} This section uses the following notation. Let $\bm{\overline{p}} \in \mathbb{R}^{S^2 A}$ be the vector containing all true state transition dynamics parameters, $\{P(s_{t + 1} = s^\prime \,|\, s_t = s, a_t = a) \,|\, s, s^\prime \in \mathcal{S}, a \in \mathcal{A}\}$. Let $\bm{\tilde{p}}_{i1}, \bm{\tilde{p}}_{i2} \in \mathbb{R}^{S^2 A}$ be the two posterior samples of the transition dynamics $\bm{\overline{p}}$ in iteration $i$. Similarly, $\bm{\rr} \in \mathbb{R}^{SA}$ is the vector of true reward parameters, while $\bm{\tilde{r}}_{i1}, \bm{\tilde{r}}_{i2} \in \mathbb{R}^{SA}$ are posterior samples of $\bm{\rr}$ in iteration $i$. For a random variable $X$ and a sequence of random variables $(X_n)$, $n \in \mathbb{N}$, $X_n \overset{D}\longrightarrow X$ denotes that $X_n$ converges to $X$ in distribution, while $X_n \overset{P}\longrightarrow X$ denotes that $X_n$ converges to $X$ in probability. Notation for the value function and for policies given by value iteration follows.

\begin{definition}[Value function given transition dynamics, rewards, and a policy]\label{defn:value_function}
Define $V(\bm{p}, \bm{r}, \pi)$ as the value function over a length-$h$ episode---i.e., the expected total reward in the episode---under transition dynamics $\bm{p} \in \mathbb{R}^{S^2 A}$, rewards $\bm{r} \in \mathbb{R}^{SA}$, and policy $\pi$:
\begin{equation*}
    V(\bm{p}, \bm{r}, \pi) = \sum_{s \in \mathcal{S}} p_0(s) \mathbb{E} \left[ \sum_{t = 1}^h \overline{r}(s_t, \pi(s_t, t)) \,\Big|\, s_1 = s, \bm{\overline{p}} = \bm{p}, \bm{\overline{r}} = \bm{r} \right].
\end{equation*}
\end{definition}

\begin{definition}[Optimal deterministic policy given transition dynamics and rewards]\label{defn:value_iteration_policy}
Define $\pi_{vi}(\bm{p}, \bm{r}) := \mathrm{argmax}_{\pi} V(\bm{p}, \bm{r}, \pi)$ as the optimal \textit{deterministic} policy given transition dynamics $\bm{p} \in \mathbb{R}^{S^2 A}$ and rewards $\bm{r} \in \mathbb{R}^{SA}$ (breaking ties randomly if multiple deterministic policies achieve the maximum). Note that $\pi_{vi}(\bm{p}, \bm{r})$ can be found via finite-horizon value iteration: defining $V_{\pi, t}(s)$ as in \eqref{eqn:value}, set $V_{\pi, h + 1}(s) := 0$ for each $s \in \mathcal{S}$ and use the Bellman equation to calculate $V_{\pi, t}(s)$ successively for $t \in \{h, h - 1, \ldots, 1\}$ given $\bm{p}$ and $\bm{r}$:
\begin{flalign*}
    \pi(s, t) &= \mathrm{argmax}_{a \in \mathcal{A}} \left[\rr(s, a) + \sum_{s^\prime \in \mathcal{S}} P(s_{t + 1} = s^\prime \,|\, s_t = s, a_t = a)V_{\pi, t + 1}(s^\prime) \right], \\
    V_{\pi, t}(s) &= \sum_{a \in \mathcal{A}} \mathbb{I}_{[\pi(s, t) = a]} \left[\rr(s, a) + \sum_{s^\prime \in \mathcal{S}} P(s_{t + 1} = s^\prime \,|\, s_t = s, a_t = a)V_{\pi, t + 1}(s^\prime) \right].
\end{flalign*}

As value iteration results in only deterministic policies, of which there are finitely-many (more precisely, there are $A^{Sh}$), the maximum argument $\pi_{vi}(\bm{p}, \bm{r}) := \mathrm{argmax}_{\pi} V(\bm{p}, \bm{r}, \pi)$ is taken over a finite policy class.
\end{definition}

Finally, we define notation for the eigenvectors and eigenvalues of the matrix $M_i := \lambda I + \sum_{k = 1}^{i - 1} \bm{x}_k \bm{x}_k^T$ (see Equation \eqref{eqn:M}).
\begin{definition}[Eigenvalue notation]\label{defn:evecs_evals}
Let $\lambda_j^{(i)}$ refer to the $j$\textsuperscript{th}-largest eigenvalue of $M_i$, and $\bm{v}_j^{(i)}$ denote its corresponding eigenvector. Note that $M_i^{-1}$ also has eigenvectors $\bm{v}_j^{(i)}$, with corresponding eigenvalues $\frac{1}{\lambda_j^{(i)}}$. Because $M_i$ is positive definite, the eigenvectors $\{\bm{v}_j^{(i)}\}$ form an orthonormal basis, and $\lambda_j^{(i)} > 0$ for all $i, j$.
\end{definition}

We demonstrate convergence in distribution of the sampled transition dynamics parameters. First, Lemma \ref{lemma:dyn_concentration} shows that if every state-action pair is visited infinitely-often, the desired result holds. Then, Lemma \ref{lemma:all_states_actions_visited} completes the argument by showing that \algo~indeed visits each state-action pair infinitely-often.

\begin{lemma}\label{lemma:dyn_concentration}
If every state-action pair is visited infinitely-often, then the sampled transition dynamics parameters converge in distribution to their true values: $\bm{\tilde{p}}_{i1}, \bm{\tilde{p}}_{i2} \overset{D}\longrightarrow \bm{\overline{p}}$ as $i \longrightarrow \infty$, where $\overset{D}\longrightarrow$ denotes convergence in distribution.
\end{lemma}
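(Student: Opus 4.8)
The plan is to fix a single state-action pair $(s,a)$ and exploit the Dirichlet--multinomial conjugacy, reducing the claim to two statements: that the posterior mean over $P(\cdot \mid s, a)$ converges to the true transition vector, and that the posterior variance collapses to zero. Since both sampled vectors $\bm{\tilde{p}}_{i1}, \bm{\tilde{p}}_{i2}$ are drawn from the same posterior and the limit $\bm{\overline{p}}$ is deterministic, convergence in distribution is equivalent to convergence in probability; I would therefore aim to show $\bm{\tilde{p}}_{i1}, \bm{\tilde{p}}_{i2} \overset{P}\longrightarrow \bm{\overline{p}}$, which immediately yields the stated $\overset{D}\longrightarrow$ convergence. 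Because there are only $S^2 A$ transition parameters, a union bound over this finite set reduces everything to the per-pair analysis, so I focus on one pair $(s,a)$ throughout.

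First I would control the posterior mean. Let $N_i(s,a)$ denote the number of visits to $(s,a)$ through iteration $i$; the hypothesis that every state-action pair is visited infinitely often (established separately in Lemma~\ref{lemma:all_states_actions_visited}) gives $N_i(s,a) \to \infty$. Each time $(s,a)$ is encountered, the successor state is drawn from $p(\cdot \mid s, a)$ independently of the history by the Markov property, so the sequence of observed successors from $(s,a)$ is i.i.d.\ with law $P(\cdot \mid s, a)$, regardless of the adaptive, data-dependent schedule determining \emph{when} $(s,a)$ is visited. The strong law of large numbers then forces the empirical successor frequencies to converge almost surely to $P(\cdot \mid s, a)$. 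Since the fixed prior pseudo-counts contribute only an $O(1/N_i(s,a))$ correction to the Dirichlet posterior mean, the posterior mean $\hat{\bm{p}}_i$ for $P(\cdot \mid s, a)$ converges almost surely to the true transition vector.

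Next I would establish concentration of the sample around its mean. The posterior over $P(\cdot \mid s, a)$ is $\mathrm{Dir}(\bm{\alpha}^0 + \bm{n}_i)$, whose total concentration is at least $N_i(s,a)$; consequently each coordinate's posterior variance is $O(1/N_i(s,a)) \to 0$. Conditioning on the history $\mathcal{H}_i$ and applying Chebyshev's inequality gives, for every $\epsilon > 0$, that $P(\|\bm{\tilde{p}}_{i1} - \hat{\bm{p}}_i\| > \epsilon \mid \mathcal{H}_i) \to 0$, and taking expectations over histories preserves this. Combining with the almost-sure convergence of $\hat{\bm{p}}_i$ via the triangle inequality $\|\bm{\tilde{p}}_{i1} - \bm{\overline{p}}\| \le \|\bm{\tilde{p}}_{i1} - \hat{\bm{p}}_i\| + \|\hat{\bm{p}}_i - \bm{\overline{p}}\|$ yields $\bm{\tilde{p}}_{i1} \overset{P}\longrightarrow \bm{\overline{p}}$, and identically for $\bm{\tilde{p}}_{i2}$.

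The main obstacle is making the i.i.d.\ reduction fully rigorous: the times at which $(s,a)$ is visited are random and depend on all past samples and preferences, so one cannot naively invoke the SLLN on a fixed index set. The clean way around this is to note that the successor draws form an i.i.d.\ (martingale-difference) structure conditional on the visit events---the $k$-th successor observed at $(s,a)$ is independent of everything preceding that visit---so the SLLN applies to the reindexed subsequence. A secondary subtlety is the two layers of randomness: the history determines a random posterior, from which a sample is then drawn. Keeping the conditional-on-$\mathcal{H}_i$ Chebyshev step separate from the almost-sure SLLN step, and combining them only at the end, keeps these two sources cleanly disentangled.
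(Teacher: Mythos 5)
Your proposal follows essentially the same route as the paper's proof: the same decomposition of $\|\bm{\tilde{p}} - \bm{\overline{p}}\|$ into (sample minus posterior mean), (posterior mean minus empirical frequencies), and (empirical frequencies minus truth), with Chebyshev applied to the $O(1/N_i)$ Dirichlet posterior variance for the first piece and the vanishing prior pseudo-count correction for the second. The only difference is the third piece, where you invoke the SLLN on the reindexed i.i.d.\ successor sequence while the paper uses a finite-sample $\ell_1$ concentration inequality for the empirical transition frequencies; both suffice for the asymptotic claim, and your explicit handling of the adaptive visit times is a point the paper leaves implicit.
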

\begin{proof}
Denote the $d = SA$ state-action pairs as $\tilde{s}_1, \ldots, \tilde{s}_d$. At a particular \algo~episode, let $n_j$ be the number of visits to $\tilde{s}_j$ and $n_{jk}$ be the number of observed transitions from $\tilde{s}_j$ to the $k$\textsuperscript{th} subsequent state. For the $j$\textsuperscript{th} state-action pair at iteration $i$, let $\bm{\overline{p}}^{(j)}$, $\bm{\tilde{p}}^{(j)}$, $\bm{\hat{p}}^{(j)}$, $\bm{\hat{p}}^{\prime(j)} \in \mathbb{R}^S$ be the true, sampled, MAP, and maximum likelihood dynamics parameters, respectively (hiding the dependency on the DPS episode $i1$ or $i2$ for the latter three quantities); thus, $[\bm{\overline{p}}^{(j)}]_k$ denotes the true probability of transitioning from state-action pair $\tilde{s}_j$ to the $k$\textsuperscript{th} state, and analogously for the $k$\textsuperscript{th} elements of $\bm{\tilde{p}}^{(j)}$, $\bm{\hat{p}}^{(j)}$, and $\bm{\hat{p}}^{\prime(j)}$. Then, from the Dirichlet model, 
    \[[\bm{\hat{p}}^{(j)}]_k = \frac{n_{jk} + \alpha_{jk, 0}}{n_j + \sum_{m = 1}^S \alpha_{jm, 0}},\]
where the prior for $\bm{\overline{p}}^{(j)}$ is $\frac{1}{\sum_{m = 1}^S \alpha_{jm, 0}}[\alpha_{j1, 0}, \ldots, \alpha_{jS, 0}]^T$ for user-defined hyperparameters $\alpha_{jk, 0} > 0$. Meanwhile, the maximum likelihood is given by $[\bm{\hat{p}}^{\prime(j)}]_k = \frac{n_{jk}}{\max(n_j, 1)}$ (this is equivalent to $[\bm{\hat{p}}^{(j)}]_k$, except with the prior parameters set to zero). Consider the sampled dynamics at state-action pair $\tilde{s}_j$. For any $\varepsilon > 0$,
\begin{flalign}
P\bigr(||\bm{\tilde{p}}^{(j)} - \bm{\overline{p}}^{(j)}||_1 \ge \varepsilon\bigr) &= P\bigr(||\bm{\tilde{p}}^{(j)} - \bm{\hat{p}}^{(j)} + \bm{\hat{p}}^{(j)} - \bm{\hat{p}}^{\prime(j)} + \bm{\hat{p}}^{\prime(j)} - \bm{\overline{p}}^{(j)}||_1 \ge \varepsilon\bigr) \nonumber \\ &\overset{(a)}{\le} P\bigr(||\bm{\tilde{p}}^{(j)} - \bm{\hat{p}}^{(j)}||_1 + ||\bm{\hat{p}}^{(j)} - \bm{\hat{p}}^{\prime(j)}||_1 + ||\bm{\hat{p}}^{\prime(j)} - \bm{\overline{p}}^{(j)}||_1 \ge \varepsilon\bigr) \nonumber \\ &\le P\Big(||\bm{\tilde{p}}^{(j)} - \bm{\hat{p}}^{(j)}||_1 \ge \frac{\varepsilon}{3} \,\bigcup\, ||\bm{\hat{p}}^{(j)} - \bm{\hat{p}}^{\prime(j)}||_1  \ge \frac{\varepsilon}{3} \,\bigcup\, ||\bm{\hat{p}}^{\prime(j)} - \bm{\overline{p}}^{(j)}||_1  \ge \frac{\varepsilon}{3}\Big) \nonumber \\ &\overset{(b)}{\le} P\Big(||\bm{\tilde{p}}^{(j)} - \bm{\hat{p}}^{(j)}||_1 \ge \frac{\varepsilon}{3}\Big) + P\Big(||\bm{\hat{p}}^{(j)} - \bm{\hat{p}}^{\prime(j)}||_1  \ge \frac{\varepsilon}{3}\Big) + P\Big(||\bm{\hat{p}}^{\prime(j)} - \bm{\overline{p}}^{(j)}||_1  \ge \frac{\varepsilon}{3}\Big), \label{eqn:dyn_concentration}
\end{flalign}
where (a) holds due to the triangle inequality and (b) follows from the union bound. For each term in \eqref{eqn:dyn_concentration}, we will upper-bound the quantity in terms of $n_j$ and show that it decays as $n_j \longrightarrow \infty$, that is, as $\tilde{s}_j$ is visited infinitely-often. For the first term, we will achieve this bound via Chebyshev's inequality:
\begin{flalign*}
P\left(||\bm{\tilde{p}}^{(j)} - \bm{\hat{p}}^{(j)}||_1  \ge \frac{\varepsilon}{3}\right) &\le P\left(\bigcup_{k = 1}^S\left\{\left|[\bm{\tilde{p}}^{(j)}]_k - [\bm{\hat{p}}^{(j)}]_k\right| \ge \frac{\varepsilon}{3S}\right\}\right) \overset{(a)}{\le} \sum_{k = 1}^S P\left(\left|[\bm{\tilde{p}}^{(j)}]_k - [\bm{\hat{p}}^{(j)}]_k\right| \ge \frac{\varepsilon}{3S} \right) \\ &\overset{(b)}{\le} \sum_{k = 1}^S\frac{9S^2}{\varepsilon^2}\mathrm{Var}\left[[\bm{\tilde{p}}^{(j)}]_k\right],
\end{flalign*}
where (a) follows from the union bound and (b) is an application of Chebyshev's inequality. For a Dirichlet variable $X$ with parameters $(\alpha_1, \ldots, \alpha_S)$, $\alpha_k > 0$ for each $k$, the variance of the $k$\textsuperscript{th} component $X_k$ is given by:
\begin{equation*}
    \mathrm{Var}[X_k] = \frac{\tilde{\alpha}_k(1 - \tilde{\alpha}_k)}{1 + \sum_{m = 1}^S \alpha_m} \le \frac{1}{2} * \frac{1}{1 + \sum_{m = 1}^S \alpha_m},
\end{equation*}
where $\tilde{\alpha}_k := \frac{\alpha_k}{\sum_{m = 1}^S \alpha_m}$. In the \algo~algorithm, $\bm{\tilde{p}}^{(j)}$ is drawn from a Dirichlet distribution with parameters $(\alpha_{j1}, \ldots, \alpha_{jS}) = (\alpha_{j1, 0} + n_{j1}, \ldots, \alpha_{jS, 0} + n_{jS})$, and so,
\begin{equation*}
\mathrm{Var}\left[[\bm{\tilde{p}}^{(j)}]_k\right] \le \frac{1}{2} * \frac{1}{1 + \sum_{m = 1}^S \alpha_{jm}} = \frac{1}{2} * \frac{1}{1 + \sum_{m = 1}^S (\alpha_{jm, 0} + n_{jm})} \le \frac{1}{2} * \frac{1}{1 + \sum_{m = 1}^S n_{jm}} = \frac{1}{2(1 + n_j)},
\end{equation*}
and so,
\begin{equation*}
    P\Big(||\bm{\tilde{p}}^{(j)} - \bm{\hat{p}}^{(j)}||_1  \ge \frac{\varepsilon}{3}\Big) \le \sum_{k = 1}^S \frac{9S^2}{\varepsilon^2} \frac{1}{2(1 + n_j)} = \frac{9S^3}{2\varepsilon^2 (1 + n_j)}.
\end{equation*}

Considering the second term in \eqref{eqn:dyn_concentration},
\begin{flalign*}
P\left(||\bm{\hat{p}}^{(j)} - \bm{\hat{p}}^{\prime(j)}||_1  \ge \frac{\varepsilon}{3}\right) &\le P\left(\bigcup_{k = 1}^S\left\{\left|[\bm{\hat{p}}^{(j)} - \bm{\hat{p}}^{\prime(j)}]_k\right| \ge \frac{\varepsilon}{3S}\right\}\right) \overset{(a)}{\le} \sum_{k = 1}^S P\Big(\left|[\bm{\hat{p}}^{(j)}]_k - [\bm{\hat{p}}^{\prime(j)}]_k\right| \ge \frac{\varepsilon}{3S} \Big) \\ &\overset{(b)}{\le} \sum_{k = 1}^S P\left(\frac{\alpha_{jk, 0} + \sum_{m = 1}^S \alpha_{jm, 0}}{n_j + \sum_{m = 1}^S \alpha_{jm, 0}} \ge \frac{\varepsilon}{3S} \right),
\end{flalign*}

where (a) holds via the union bound and (b) follows for $n_j \ge 1$ because when $n_j \ge 1$:
\begin{flalign*}
\left|[\bm{\hat{p}}^{(j)}]_k - [\bm{\hat{p}}^{\prime(j)}]_k\right| &= \left|\frac{n_{jk} + \alpha_{jk, 0}}{n_j + \sum_{m = 1}^S \alpha_{jm, 0}} - \frac{n_{jk}}{n_j}\right| = \left|\frac{\alpha_{jk, 0}}{n_j + \sum_{m = 1}^S \alpha_{jm, 0}} - \frac{n_{jk}\sum_{m = 1}^S \alpha_{jm, 0}}{n_j(n_j + \sum_{m = 1}^S \alpha_{jm, 0})}\right| \\ &\le \frac{\alpha_{jk, 0}}{n_j + \sum_{m = 1}^S \alpha_{jm, 0}} + \frac{n_{jk}}{n_j}\frac{\sum_{m = 1}^S \alpha_{jm, 0}}{n_j + \sum_{m = 1}^S \alpha_{jm, 0}} \le \frac{\alpha_{jk, 0} + \sum_{m = 1}^S \alpha_{jm, 0}}{n_j + \sum_{m = 1}^S \alpha_{jm, 0}}.
\end{flalign*}

For the third term in \eqref{eqn:dyn_concentration}, we apply the following concentration inequality for Dirichlet variables (see Appendix C.1 in \cite{jaksch2010near}):
\begin{equation*}
    P(||\bm{\hat{p}}^{\prime(j)} - \bm{\overline{p}}^{(j)}||_1 \ge \varepsilon) \le (2^S - 2)\exp\left( \frac{-n_j\varepsilon^2}{2} \right).
\end{equation*}
Therefore:
\begin{equation*}
    P\left(||\bm{\hat{p}}^{\prime(j)} - \bm{\overline{p}}^{(j)}||_1 \ge \frac{\varepsilon}{3}\right) \le (2^S - 2)\exp\left( \frac{-n_j\varepsilon^2}{18} \right).    
\end{equation*}

Thus, to upper-bound \eqref{eqn:dyn_concentration}, for any $\varepsilon > 0$:
\begin{equation*}
    P\bigr(||\bm{\tilde{p}}^{(j)} - \bm{\overline{p}}^{(j)}||_1 \ge \varepsilon\bigr) \le \frac{9S^3}{2\varepsilon^2(n_j + 1)} + \sum_{k = 1}^S P\left(\frac{\alpha_{jk, 0} + \sum_{m = 1}^S \alpha_{jm, 0}}{n_j + \sum_{m = 1}^S \alpha_{jm, 0}} \ge \frac{\varepsilon}{3S} \right) + (2^S - 2)\exp\left(\frac{-n_j\varepsilon^2}{18} \right).
\end{equation*}

On the right hand side, the first and third terms clearly decay as $n_j \longrightarrow \infty$. The middle term is identically zero for $n_j$ large enough, since the $\alpha_{jk, 0}$ values are user-defined constants. Given this inequality, it is clear that for any $\varepsilon > 0$, as $n_j \longrightarrow \infty$, $P\bigr(||\bm{\tilde{p}}^{(j)} - \bm{\overline{p}}^{(j)}||_1 \ge \varepsilon\bigr) \longrightarrow 0$. If every state-action pair is visited infinitely-often, then $n_j \longrightarrow \infty$ for each $j$, and so $\bm{\tilde{p}}^{(j)}$ converges in probability to $\bm{\overline{p}}^{(j)}$: $\bm{\tilde{p}}^{(j)} \overset{P}\longrightarrow \bm{\overline{p}}^{(j)}$. Convergence in probability implies convergence in distribution, the desired result.
\end{proof}

To continue proving that \algo's model of the transition dynamics converges, we next  prove the intermediate result that the magnitude of the reward MAP estimate, $||\bm{\hat{r}}_n||_2$, is uniformly upper-bounded:
\vskip 0.1 true in
\begin{lemma}\label{lemma:MAP_bound}
Across all $n\ge 1$, there exists some $b < \infty$ such that estimated reward at \algo~trial $n$ is bounded by $b$: $||\bm{\hat{r}}_n||_2 \le b$.
\end{lemma}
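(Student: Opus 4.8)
The plan is to isolate the one genuinely hard piece by substituting the noise model $y_i = \bm{\rr}^T\bm{x}_i + \eta_i$ into \eqref{eqn:reward_MAP} and exploiting the definition of $M_n$ in \eqref{eqn:M}. Writing $\bm{w}_n := \sum_{i=1}^{n-1}\eta_i\bm{x}_i$ and using $\sum_{i=1}^{n-1}(\bm{\rr}^T\bm{x}_i)\bm{x}_i = (M_n - \lambda I)\bm{\rr}$, I would obtain the decomposition $\bm{\hat{r}}_n = \bm{\rr} - \lambda M_n^{-1}\bm{\rr} + M_n^{-1}\bm{w}_n$, so that $||\bm{\hat{r}}_n||_2 \le ||\bm{\rr}||_2 + \lambda||M_n^{-1}\bm{\rr}||_2 + ||M_n^{-1}\bm{w}_n||_2$. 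The first two (regularization) terms are controlled deterministically: $||\bm{\rr}||_2 \le S_r$ by assumption, and since every eigenvalue of $M_n$ is at least $\lambda$ (Definition \ref{defn:evecs_evals}), $\lambda||M_n^{-1}\bm{\rr}||_2 \le \lambda\cdot\lambda^{-1}\cdot S_r = S_r$. Thus the bias and shrinkage contributions together are at most $2S_r$, reducing the entire lemma to a uniform bound on the stochastic term $||M_n^{-1}\bm{w}_n||_2$.

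The crux, then, is to bound $||M_n^{-1}\bm{w}_n||_2$ uniformly in $n$. The plan is to expand it in the orthonormal eigenbasis of $M_n$ from Definition \ref{defn:evecs_evals}, giving $||M_n^{-1}\bm{w}_n||_2^2 = \sum_{j=1}^d (\bm{v}_j^{(n)T}\bm{w}_n)^2 / (\lambda_j^{(n)})^2$, and to argue direction-by-direction. The key observation is that the projection of $\bm{w}_n$ onto $\bm{v}_j^{(n)}$ can only be large in directions that have themselves been heavily excited: using the conditional $R$-sub-Gaussianity and zero-mean property of $\eta_i$ (Assumption \ref{assump:sub_Gauss}), the second moment of $\bm{v}_j^{(n)T}\bm{w}_n$ scales like $R^2\sum_{i=1}^{n-1}(\bm{v}_j^{(n)T}\bm{x}_i)^2 = R^2(\lambda_j^{(n)} - \lambda)$. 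Each summand is therefore on the order of $R^2(\lambda_j^{(n)}-\lambda)/(\lambda_j^{(n)})^2 \le R^2/\lambda_j^{(n)} \le R^2/\lambda$, and summing over the $d$ directions yields the uniform bound $||M_n^{-1}\bm{w}_n||_2^2 \lesssim R^2\,\mathrm{tr}(M_n^{-1}) \le R^2 d/\lambda$. Combining with the deterministic part gives a constant $b$ of order $2S_r + R\sqrt{d/\lambda}$, independent of $n$.

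The hard part is that this clean per-direction scaling is an in-expectation statement, and making it a genuine uniform-in-$n$ bound faces two obstacles. First, the eigenbasis $\{\bm{v}_j^{(n)}\}$ drifts with $n$ and the design is adaptive—each $\bm{x}_i$ depends on the past noise $\eta_1,\ldots,\eta_{i-1}$ through the sampled policies—so $M_n^{-1}$ is correlated with the $\eta_i$ inside $\bm{w}_n$, and one cannot naively factor the expectation or treat the cross terms as vanishing; the martingale-difference structure of $(\eta_i)$ must be invoked carefully, which is precisely the regime handled by the self-normalized bound of \cite{abbasi2011improved} (Lemma \ref{lemma:abbasi_yadkori}). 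Second, the tempting shortcut of controlling $||\bm{w}_n||_{M_n^{-1}}$ via that self-normalized bound and then converting to Euclidean norm is lossy: it introduces a factor $1/\sqrt{\lambda_{\min}(M_n)}$ that need not vanish in under-explored directions, producing a spurious $\sqrt{\log n}$ growth and destroying uniformity. The resolution is to avoid this conversion entirely by keeping the $\lambda_j^{-2}$ weighting above—so that the noise contribution is damped exactly in the directions where it could be large—and, if a pathwise rather than $L^2$ statement is required downstream, to upgrade the second-moment control to a uniform one through a martingale maximal inequality or the almost-sure convergence of the regularized least-squares estimate.
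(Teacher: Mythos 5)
Your decomposition $\bm{\hat{r}}_n = \bm{\rr} - \lambda M_n^{-1}\bm{\rr} + M_n^{-1}\bm{w}_n$ is correct, and the $2S_r$ bound on the two deterministic terms is fine. The gap is in the only part that matters: the treatment of $||M_n^{-1}\bm{w}_n||_2$. Your per-direction claim that the second moment of $\bm{v}_j^{(n)T}\bm{w}_n$ scales like $R^2\sum_{i}(\bm{v}_j^{(n)T}\bm{x}_i)^2 = R^2(\lambda_j^{(n)}-\lambda)$ is not established and does not follow from Assumption \ref{assump:sub_Gauss}: the cross terms $\mathbb{E}[\eta_i\eta_k(\bm{v}_j^{(n)T}\bm{x}_i)(\bm{v}_j^{(n)T}\bm{x}_k)]$ vanish only when the test vector is fixed (or $F_{i-1}$-measurable), whereas $\bm{v}_j^{(n)}$ is an eigenvector of $M_n$ and hence a function of $\bm{x}_1,\ldots,\bm{x}_{n-1}$, which in \algo~depend on $\eta_1,\ldots,\eta_{n-2}$ through the sampled policies. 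You name this obstacle yourself, but the ``resolution'' you offer --- keep the $\lambda_j^{-2}$ weighting and later upgrade via a maximal inequality or almost-sure convergence --- is a gesture rather than an argument; no inequality is actually proved for the adaptive, data-dependent eigenbasis. Moreover, even if the second-moment computation were repaired, it would yield an in-expectation (or at best high-probability) statement, while the lemma asserts a bound holding for every $n$ on every sample path; that strength is used downstream in Lemma \ref{lemma:all_states_actions_visited}, where $b$ must uniformly bound the posterior mean $\bm{\mu}^{(n)}$ at \emph{every} iteration in order to lower-bound the sampling probability $\rho$ away from zero. An $L^2$ or $\sqrt{\log n}$-growing bound would break that argument.

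For comparison, the paper proves the lemma by an entirely different and purely deterministic route: it observes that $\bm{\hat{r}}_n$ minimizes the ridge objective \eqref{eqn:ridge_objective}, that the observations $\bm{x}_i$ range over a \emph{finite} set (finitely many length-$h$ trajectories) and $y_i \in \{-\tfrac12,\tfrac12\}$, and then argues by contradiction: if $||\bm{\hat{r}}_{n}||_2$ were unbounded, compactness of the unit sphere yields a subsequence whose directions converge to a fixed unit vector, and along that subsequence the objective value at $\bm{\hat{r}}_{n_{i_j}}$ eventually exceeds the value at $\bm{r}=0$, contradicting optimality. This sidesteps the noise process entirely and delivers the sure, uniform-in-$n$ bound the rest of the analysis needs. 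If you want to salvage your route, you would have to either (i) prove a self-normalized bound that is valid for the data-dependent eigendirections and then show the $\lambda^{-1}$-damping kills the resulting $\sqrt{\log\det M_n}$ factor uniformly (it does not in general, as you note), or (ii) import the finiteness of the observation set as the paper does --- at which point you have essentially reproduced the paper's argument.
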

\begin{proof}
Recall that the MAP reward estimate $\bm{\hat{r}}_n$ is the solution to a ridge regression problem:
\begin{equation}\label{eqn:ridge_objective}
    \bm{\hat{r}}_n = \mathrm{arg\,inf}_{\bm{r}}\left\{\sum_{i = 1}^{n - 1} (\bm{x}_i^T\bm{r} - y_i)^2 + \lambda||\bm{r}||_2^2 \right\} = \mathrm{arg\,inf}_{\bm{r}}\left\{\sum_{i = 1}^{n - 1} \left[(\bm{x}_i^T\bm{r} - y_i)^2 + \frac{1}{n - 1}\lambda||\bm{r}||_2^2\right] \right\}.
\end{equation}

We will prove the desired result by contradiction. Assuming that there exists no upper bound $b$, we will identify a subsequence $(\bm{\hat{r}}_{n_i})$ of MAP estimates whose lengths increase unboundedly, but whose directions converge. Then, we will show that such vectors fail to minimize the objective in \eqref{eqn:ridge_objective}, achieving a contradiction.

Because there exist finitely-many state-action pairs, there are finitely-many possible length-$h$ trajectories.  Hence, the vector $\bm{x}_i$ in Equation (\ref{eqn:ridge_objective}) can take finitely-many possible values. The binary labels $y_i$ take values in $\left\{-\frac{1}{2}, \frac{1}{2} \right\}$. Note that for $\bm{r} = 0$, $(\bm{x}_i^T\bm{r} - y_i)^2 + \frac{1}{n - 1}\lambda||\bm{r}||_2^2 = \frac{1}{4}$. We prove the desired statement by contradiction: assume that there is no $b < \infty$ such that $||\bm{\hat{r}}_n||_2 \le b$ for all $n$. Then, the sequence $\bm{\hat{r}}_1, \bm{\hat{r}}_2, \ldots$ must have a subsequence indexed by $(n_i)$ such that $\lim_{i \longrightarrow \infty} ||\bm{\hat{r}}_{n_i}||_2 = \infty$. Consider the sequence of unit vectors $\frac{\bm{\hat{r}}_{n_i}}{||\bm{\hat{r}}_{n_i}||_2}$. This sequence lies within the compact set of unit vectors in $\mathbb{R}^d$, and so it must have a convergent subsequence; we index this subsequence of the sequence $(n_i)$ by $(n_{i_j})$. Then, the sequence $(\bm{\hat{r}}_{i_j})$ is such that $\lim_{j \longrightarrow \infty}||\bm{\hat{r}}_{i_j}||_2 = \infty$ and $\lim_{j \longrightarrow \infty}\frac{\bm{\hat{r}}_{i_j}}{||\bm{\hat{r}}_{i_j}||_2} = \bm{\hat{r}}_{unit}$, where $\bm{\hat{r}}_{unit} \in \mathbb{R}^d$ is a fixed unit vector. 

For any $\bm{x}_i$ such that $|\bm{x}_i^T\bm{\hat{r}}_{unit}| \neq 0$, $\lim_{n_{i_j} \longrightarrow \infty}(\bm{x}_i^T \bm{\hat{r}}_{n_{i_j}} - y_i)^2 = \infty$, and so the corresponding terms in \eqref{eqn:ridge_objective} approach infinity.  However, a lower value of the optimization objective in (\ref{eqn:ridge_objective}) can be realized by replacing $\bm{\hat{r}}_{n_{i_j}}$ with the assignment $\bm{r} = 0$. Meanwhile, for any $\bm{x}_i$ such that $|\bm{x}_i^T\bm{\hat{r}}| = 0$, replacing $\bm{\hat{r}}_{n_{i_j}}$ with $\bm{r} = 0$ would also decrease the value of the optimization objective in \eqref{eqn:ridge_objective}. Therefore, for large $j$, $\bm{r} = 0$ results in a smaller objective function value than $\bm{\hat{r}}_{n_{i_j}}$. This is a contradiction, and so the elements of the sequence $\bm{\hat{r}}_{n_{i_j}}$ cannot have arbitrarily-large magnitudes. Thus, the elements of the original sequence $\bm{\hat{r}}_i$ also cannot become arbitrarily large, and $||\bm{\hat{r}}_i|| \le b$ for some $b < \infty$.
\end{proof}

To finish proving convergence of the transition dynamics Bayesian model, we show that every state-action pair is visited infinitely-often.

\begin{lemma}\label{lemma:all_states_actions_visited}
Under \algo, every state-action pair is visited infinitely-often.
\end{lemma}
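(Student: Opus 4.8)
Suppose, for contradiction, that the set $U$ of state-action pairs that \algo~visits only finitely-often is nonempty. Since $U$ is finite and each of its pairs appears in finitely-many iterations, there is a finite iteration $N_0$ after which no rolled-out trajectory $\tau_{i1}$ or $\tau_{i2}$ touches any pair in $U$. The plan is to exploit this to show that the reward posterior stays permanently diffuse in the $U$-coordinates, so that \algo~keeps sampling arbitrarily large rewards there, which in turn forces its value-iteration policy to seek out a pair in $U$ infinitely-often---the contradiction. Concretely, for every $i > N_0$ the feature vectors $\bm{x}_{i1}, \bm{x}_{i2}$, and hence the regression input $\bm{x}_i = \bm{x}_{i2}-\bm{x}_{i1}$ entering \eqref{eqn:M}, vanish in the coordinates indexing $U$, so the model gains no further information about those rewards.

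To make ``diffuse'' precise, fix an undervisited pair $(s,a) \in U$ and let $\bm{e}$ be its coordinate basis vector. For all $i > N_0$,
\[
\bm{e}^T M_i \bm{e} = \lambda + \sum_{k=1}^{i-1}(\bm{e}^T\bm{x}_k)^2 = \lambda + \sum_{k \le N_0}(\bm{e}^T\bm{x}_k)^2 =: C < \infty,
\]
because $\bm{e}^T\bm{x}_k = [\bm{x}_k]_{(s,a)} = 0$ for $k > N_0$. By Cauchy--Schwarz, $\bm{e}^T M_i^{-1}\bm{e} \ge (\bm{e}^T\bm{e})^2 / (\bm{e}^T M_i \bm{e}) = 1/C$, so the variance of the sampled reward at $(s,a)$ under $\mathcal{N}(\bm{\hat r}_i, \beta_i(\delta)^2 M_i^{-1})$ is at least $\beta_i(\delta)^2/C \ge \lambda S_r^2/C > 0$ (using $\beta_i(\delta) \ge \sqrt{\lambda}S_r$ from \eqref{eqn:defn_beta}). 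Combining this uniform lower bound on the variance with the uniform bound $\|\bm{\hat r}_i\|_2 \le b$ on the posterior mean from Lemma \ref{lemma:MAP_bound}, I would conclude that for any prescribed threshold $B$ there is a constant $q = q(B) > 0$ such that, at each iteration $i > N_0$ and independently of the past, both $\bm{\tilde r}_{i1}$ and $\bm{\tilde r}_{i2}$ assign a reward exceeding $B$ to $(s,a)$ with probability at least $q$.

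The final step converts such a high-reward sample into an actual visit and then applies the second Borel--Cantelli lemma to the independent per-iteration sampling events. I would choose $B$ large enough to dominate the total utility achievable by any length-$h$ trajectory avoiding $U$ (bounded via $\|\bm{\hat r}_i\|_2 \le b$ and control of the posterior in the visited directions); then the value-iteration policy $\pi_{vi}(\bm{\tilde p}, \bm{\tilde r})$ of Definition \ref{defn:value_iteration_policy} is driven to route toward $s$ and select $a$ there. The \textbf{main obstacle} is precisely this reachability argument: one must guarantee that the targeted pair is attained, which requires (i) handling that the policy must first \emph{reach} state $s$, and (ii) that both the planning dynamics $\bm{\tilde p}$ and the roll-out dynamics $\bm{\overline p}$ assign positive probability to the connecting transitions. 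I would resolve (i) by an induction on reachability distance from the states visited infinitely-often---reducing to the case where $s$ itself is visited infinitely-often, so that in episodes with a high reward sample the policy takes $a$ at $s$ whenever $s$ is entered---and (ii) by noting that the Dirichlet dynamics posterior retains full support on the simplex (its parameters stay $\ge \alpha_{jk,0} > 0$), so every transition with positive true probability is realizable with positive probability under the sample. Summing the resulting per-iteration lower bounds on the probability of visiting $U$ then forces infinitely-many such visits, contradicting the definition of $U$ and establishing the lemma.
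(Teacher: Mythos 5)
Your overall strategy coincides with the paper's: argue by contradiction, observe that for $i>N_0$ the regression inputs $\bm{x}_i$ vanish on the unvisited coordinates so the reward posterior stays permanently diffuse there, conclude that large rewards are sampled at the unvisited pair with probability bounded away from zero, and hence that value iteration is driven to target it infinitely often. Your Cauchy--Schwarz bound $\bm{e}^T M_i^{-1}\bm{e} \ge (\bm{e}^T\bm{e})^2/(\bm{e}^T M_i\bm{e}) = 1/C$ is a correct and pleasantly clean way to lower-bound the \emph{marginal} posterior variance at the unvisited coordinate, and your handling of reachability (reduce to a pair $(s,a)$ whose state $s$ is itself visited infinitely often) and of Borel--Cantelli (which must be the conditional version, since the per-iteration events are only independent given the history) matches the paper in substance.

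The gap is in converting a large sample at $(s,a)$ into a visit. You propose a fixed threshold $B$ that ``dominates the total utility achievable by any length-$h$ trajectory avoiding $U$,'' but that utility is $h\max_{j\notin U}\tilde{r}_j$ under the \emph{sampled} rewards, not the posterior mean. Each visited coordinate is sampled with standard deviation $\beta_i(\delta)\sqrt{[M_i^{-1}]_{jj}}$, and while $[M_i^{-1}]_{jj}\le 1/\lambda$, the factor $\beta_i(\delta)$ grows without bound in $i$, so the sampled rewards on the visited coordinates are not uniformly bounded and no fixed $B$ dominates them with probability bounded away from zero uniformly in $i$. What is actually needed is the \emph{relative} event $\tilde{r}_{(s,a)} \ge v\max_{j}\tilde{r}_j$, and lower-bounding its probability requires controlling the joint law of $\tilde{r}_{(s,a)}$ and the other coordinates, not just the marginal at $(s,a)$. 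This is precisely where the paper's proof does its real work: it analyzes the contrasts $\tilde{r}_1 - v\tilde{r}_j$, whose variance is $\bm{w}_j^T\Sigma^{(n)}\bm{w}_j$ with $\bm{w}_j=[1,0,\ldots,-v,\ldots,0]^T$. Note that your Cauchy--Schwarz trick fails for $\bm{w}_j$, since $\bm{w}_j^T M_i\bm{w}_j$ grows without bound through the visited coordinate $j$; the paper instead lower-bounds $\bm{w}_j^T\Sigma^{(n)}\bm{w}_j$ by showing that any eigenvector of $M_i$ whose eigenvalue diverges must have vanishing component along the unvisited coordinate, so $\bm{w}_j$ retains a non-vanishing component along some direction of non-vanishing posterior variance. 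Your sketch is missing this step, and the argument does not go through without it or some substitute for it.
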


\begin{proof}
The proof proceeds by assuming that there exists a state-action pair that is visited only finitely-many times.  This assumption will lead to a contradiction\footnote{Note that in finite-horizon MDPs, the concept of visiting a state finitely-many times is not the same as that of a transient state in an infinite Markov chain, because: 1)  due to a finite horizon, the state is resampled from the initial state distribution $p_0(s)$ every $h$ timesteps, and 2) the policy---which determines which state-action pairs can be reached in an episode---is also resampled every $h$ timesteps.}: once this state-action pair is no longer visited, the reward model posterior is no longer updated with respect to it. Then, \algo~is guaranteed to eventually sample a high enough reward for this state-action that the resultant policy will prioritize visiting it.   

First we note that \algo~is guaranteed to reach at least one state-action pair infinitely often: given our problem's finite state and action spaces, at least one state-action pair must be visited infinitely-often during DPS execution. If all state-actions are \textit{not} visited infinitely-often, there must exist a state-action pair $(s, a)$ such that $s$ is visited infinitely-often, while $(s, a)$ is not. Otherwise, if all actions are selected infinitely-often in all infinitely-visited states, the finitely-visited states are unreachable (in which case these states are irrelevant to the learning process and regret minimization, and can be ignored). Without loss of generality, we label this state-action pair $(s, a)$ as $\tilde{s}_1$. To reach a contradiction, it suffices to show that $\tilde{s}_1$ is visited infinitely-often.

Let $\bm{r}_1$ be the reward vector with a reward of $1$ in state-action pair $\tilde{s}_1$ and rewards of zero elsewhere. From Definition \ref{defn:value_iteration_policy}, $\pi_{vi}(\bm{\tilde{p}}, \bm{r}_1)$ is the policy that maximizes the expected number of visits to $\tilde{s}_1$ under dynamics $\bm{\tilde{p}}$ and reward vector $\bm{r}_1$:
\begin{equation*}
    \pi_{vi}(\bm{\tilde{p}}, \bm{r}_1) = \text{argmax}_\pi V(\bm{\tilde{p}}, \bm{r}_1, \pi),
\end{equation*}
where $V(\bm{\tilde{p}}, \bm{r}_1, \pi)$ is the expected total reward of a length-$h$ trajectory under $\bm{\tilde{p}}, \bm{r}_1$, and $\pi$, or equivalently (by definition of $\bm{r}_1$), the expected number of visits to state-action $\tilde{s}_1$.

We next show that there exists a $\rho > 0$ such that $P(\pi = \pi_{vi}(\bm{\tilde{p}}, \bm{r}_1)) > \rho$ for all possible values of $\bm{\tilde{p}}$. That is, for any sampled parameters $\bm{\tilde{p}}$, the probability of selecting policy $\pi_{vi}(\bm{\tilde{p}}, \bm{r}_1)$ is uniformly lower-bounded, implying that \algo ~must eventually select $\pi_{vi}(\bm{\tilde{p}}, \bm{r}_1)$.

Let $\tilde{r}_j$ be the sampled reward associated with state-action pair $\tilde{s}_j$ in a particular \algo~episode, for each state-action $j \in \{1, \ldots, d\}$, with $d=SA$. We show that conditioned on $\bm{\tilde{p}}$, there exists $v > 0$ such that if $\tilde{r}_1$ exceeds $\text{max}\{v \tilde{r}_2, v \tilde{r}_3, \ldots, v \tilde{r}_d\}$, then value iteration returns the policy $\pi_{vi}(\bm{\tilde{p}}, \bm{r}_1)$, which is the policy maximizing the expected amount of time spent in $\tilde{s}_1$. This can be seen by setting $v := \frac{h}{\rho_1}$, where $h$ is the time horizon and $\rho_1$ is the expected number of visits to $\tilde{s}_1$ under $\pi_{vi}(\bm{\tilde{p}}, \bm{r}_1)$. Under this definition of $v$, the event $\left\{\tilde{r}_{1} \ge \text{max}\{v \tilde{r}_2, v \tilde{r}_3, \ldots, v \tilde{r}_d\}\right\}$ is equivalent to $\{\tilde{r}_{1}\rho_1 \ge h \, \text{max}\{\tilde{r}_2, \tilde{r}_3, \ldots, \tilde{r}_d\}\}$; the latter inequality implies that given $\bm{\tilde{p}}$ and $\bm{\tilde{r}}$, the expected reward accumulated solely in state-action $\tilde{s}_1$ exceeds the reward gained by repeatedly (during all $h$ time-steps) visiting the state-action pair in the set $\{\tilde{s}_2, \ldots, \tilde{s}_d\}$ having the highest sampled reward. Clearly, in this situation, value iteration results in the policy $\pi_{vi}(\bm{\tilde{p}}, \bm{r}_1)$.

Next we show that $v = \frac{h}{\rho_1}$ is continuous in the sampled dynamics $\bm{\tilde{p}}$ by showing that $\rho_1$ is continuous in $\bm{\tilde{p}}$. Recall that $\rho_1$ is defined as expected number of visits to $\tilde{s}_1$ under $\pi_{vi}(\bm{\tilde{p}}, \bm{r}_1)$. This is equivalent to the expected reward for following $\pi_{vi}(\bm{\tilde{p}}, \bm{r}_1)$ under dynamics $\bm{\tilde{p}}$ and rewards $\bm{r}_1$:
\begin{flalign}\label{eqn:defn_p_1}
\rho_1 = V(\bm{\tilde{p}}, \bm{r}_1, \pi_{vi}(\bm{\tilde{p}}, \bm{r}_1)) = \max_\pi V(\bm{\tilde{p}}, \bm{r}_1, \pi).
\end{flalign}
The value of any policy $\pi$ is continuous in the transition dynamics parameters, and so $V(\bm{\tilde{p}}, \bm{r}_1, \pi)$ is continuous in $\bm{\tilde{p}}$. The maximum in \eqref{eqn:defn_p_1} is taken over the finite set of deterministic policies; because a maximum over a finite number of continuous functions is also continuous, $\rho_1$ is continuous in $\bm{\tilde{p}}$.

Next, recall that a continuous function on a compact set achieves its maximum and minimum values on that set. The set of all possible dynamics parameters $\bm{\tilde{p}}$ is such that for each state-action pair $j$, $\sum_{k = 1}^S p_{jk} = 1$ and $p_{jk} \ge 0 \, \forall \, k$; the set of all possible vectors $\bm{\tilde{p}}$ is clearly closed and bounded, and hence compact. Therefore, $v$ achieves its maximum and minimum values on this set, and so for any $\bm{\tilde{p}}$, $v \in [v_{\text{min}}, v_{\text{max}}]$, where $v_{\text{min}} > 0$ ($v$ is nonnegative by definition, and $v = 0$ is impossible, as it would imply that $\tilde{s}_1$ is unreachable).

Then, $P(\pi = \pi_{vi}(\bm{\tilde{p}}, \bm{r}_1))$ can then be expressed in terms of $v$ and the parameters of the reward posterior. Firstly,
\begin{flalign*}
P(\pi = \pi_{vi}(\bm{\tilde{p}}, \bm{r}_1)) \ge P(\tilde{r}_1 > \text{max}\{v \tilde{r}_2, v \tilde{r}_3, \ldots, v \tilde{r}_d\}) \ge \prod_{j = 2}^d P(\tilde{r}_1 > v \tilde{r}_j) = \prod_{j = 2}^d [1 - P(\tilde{r}_1 - v \tilde{r}_j \le 0)].
\end{flalign*}

In the $n^{th}$ \algo~iteration, the sampled rewards are drawn from a jointly Gaussian posterior: $\bm{\tilde{r}} \sim \mathcal{N}(\bm{\mu}^{(n)}, \Sigma^{(n)})$ for some $\bm{\mu}^{(n)}$ and $\Sigma^{(n)}$, where $[\bm{\mu}^{(n)}]_j = \mu_j^{(n)}$ and $[\Sigma^{(n)}]_{jk} = \Sigma_{jk}^{(n)}$. Then, $(\tilde{r}_1 - v \tilde{r}_j) \sim \mathcal{N}(\mu_1^{(n)} - v \mu_j^{(n)}, \, \Sigma_{11}^{(n)} + v^2\Sigma_{jj}^{(n)} - 2v\Sigma_{1j}^{(n)})$, so that:
\begin{equation}\label{eqn:prob_in_terms_of_v}
P(\pi_{n1} = \pi_{vi}(\bm{\tilde{p}}, \bm{r}_1)) \ge \prod_{j = 2}^d   \left[1 - \Phi\left(\frac{-\mu_1^{(n)} +  v\mu_j^{(n)}}{\sqrt{\Sigma_{11}^{(n)} + v^2 \Sigma_{jj}^{(n)} - 2 v  \Sigma_{1j}^{(n)}}} \right)\right] 
= \prod_{j = 2}^d   \Phi\left(\frac{\mu_1^{(n)} - v\mu_j^{(n)}}{\sqrt{\Sigma_{11}^{(n)} + v^2 
 \Sigma_{jj}^{(n)} - 2 v \Sigma_{1j}^{(n)}}} \right),
\end{equation}
where $\Phi$ is the standard Gaussian cumulative distribution function. For the right-hand expression in \eqref{eqn:prob_in_terms_of_v} to have a lower bound greater than zero, the argument of $\Phi(\cdot)$ must be lower-bounded. It suffices to upper-bound the numerator's magnitude and to lower-bound the denominator above zero for each product factor $j$ and over all iterations $n$.

The numerator can be upper-bounded using Lemma \ref{lemma:MAP_bound}. Since $\bm{\mu}^{(n)}$ is equal to the MAP reward estimate at iteration $n$, $||\bm{\mu}^{(n)}||_2 \le b$, and so $|\mu_1^{(n)}|, |\mu_j^{(n)}| \le b$. Because $0 < v \le v_{\text{max}}$, $|\mu_1 - v\mu_j| \le |\mu_1^{(n)}| + v|\mu_j^{(n)}| \le (1 + v_{\text{max}})b$.

To lower-bound the denominator, first note that the reward model's posterior covariance is equal to $\beta_n(\delta)^2 M_n^{-1}$, with $M_n$ and $\beta_n(\delta)$ as defined in Equations \eqref{eqn:M} and \eqref{eqn:defn_beta}, respectively; however, because $\beta_n(\delta)^2$ is non-decreasing in $n$, it suffices to prove the statement while ignoring the $\beta_n(\delta)^2$ factor. Thus, to prove this lemma, we can set $\Sigma^{(n)} := M_n^{-1}$.

Let $\bm{w}_j \in \mathbb{R}^d$ be a vector with $1$ in the first position, $-v$ in the $j$\textsuperscript{th} position for some $j \in \{2, \ldots, d\}$, and zero elsewhere:
\begin{equation}\label{eqn:defn_w}
    \bm{w}_j = [1, 0, \ldots, 0, -v, 0, \ldots, 0]^T.
\end{equation}
The denominator in \eqref{eqn:prob_in_terms_of_v} can be expressed in terms of $\bm{w}_j$: $\Sigma_{11}^{(n)} + v^2\Sigma_{jj}^{(n)} - 2v\Sigma_{1j}^{(n)} = \bm{w}_j^T \Sigma^{(n)} \bm{w}_j$.  Recall from Definition \ref{defn:evecs_evals} that the eigenvectors of $\Sigma^{(n)}$ are $\bm{v}_1^{(n)}, \ldots, \bm{v}_d^{(n)}$, with corresponding eigenvalues $\left(\lambda_1^{(n)}\right)^{-1}, \ldots, \left(\lambda_d^{(n)}\right)^{-1}$. We can write $\bm{w}_j$ in terms of the orthonormal basis formed by the eigenvectors $\{\bm{v}_k^{(n)}\}$:
\begin{equation}\label{eqn:w_in_terms_of_evecs}
    \bm{w}_j = \sum_{k = 1}^d \alpha_k^{(n)}\bm{v}_k^{(n)},
\end{equation}
for some coefficients $\alpha_k^{(n)} \in \mathbb{R}$. Using \eqref{eqn:w_in_terms_of_evecs}, the square of the denominator in \eqref{eqn:prob_in_terms_of_v} can now be written as:
\begin{flalign}\label{eqn:lower_bound_denom}
\Sigma_{11}^{(n)} + v^2\Sigma_{jj}^{(n)} - 2v\Sigma_{1j}^{(n)} &= \bm{w}_j^T \Sigma^{(n)} \bm{w}_j = \left(\sum_{k = 1}^d \alpha_k^{(n)} \bm{v}_k^{(n) T}\right)\left(\sum_{l = 1}^d \frac{1}{\lambda_l^{(n)}} \bm{v}_l^{(n)} \bm{v}_l^{(n) T}\right)\left(\sum_{m = 1}^d \alpha_m^{(n)} \bm{v}_m^{(n)}\right) \nonumber \\ &\overset{(a)}= \sum_{k = 1}^d \left(\alpha_k^{(n)}\right)^2 \frac{1}{\lambda_k^{(n)}} \overset{(b)}\ge \left(\alpha_{k_0}^{(n)}\right)^2 \frac{1}{\lambda_{k_0}^{(n)}},
\end{flalign}
where equality (a) follows by orthonormality of the eigenvector basis, and (b) holds for any $k_0 \in \{1, \ldots, d\}$ due to positivity of the eigenvalues $(\lambda_k)^{-1}$. Therefore, to show that the denominator is bounded away from zero, it suffices to show that for every $n$, there exists some $k_0$ such that $\left(\alpha_{k_0}^{(n)}\right)^2 \left(\lambda_{k_0}^{(n)}\right)^{-1}$ is bounded away from zero.

To prove the previous statement, note that by definition of $M_n$, the eigenvalues $(\lambda_k^{(n)})^{-1}$ are non-increasing in $n$. Below, we will show that for any eigenvalue $(\lambda_k^{(n)})^{-1}$ such that $\lim_{n \longrightarrow \infty} (\lambda_k^{(n)})^{-1} = 0$, the first element of its corresponding eigenvector, $\left[\bm{v}_k^{(n)} \right]_1$, also converges to zero. Since the first element of $\bm{w}_j$ equals $1$, \eqref{eqn:defn_w} implies that there must exist some $k_0$ such that $\left[\bm{v}_{k_0}^{(n)} \right]_1 \centernot\longrightarrow 0$ and $\alpha_{k_0}^{(n)}$ is bounded away from 0.  If these implications did not hold, then $\bm{w}_j$ would not have a value of 1 in its first element, contradicting its definition. These observations imply that for every $n$, there must be some $k_0$ such that as $n \longrightarrow \infty$, $(\lambda_{k_0}^{(n)})^{-1} \centernot\longrightarrow 0$ and $\alpha_{k_0}^{(n)}$ is bounded away from zero.

Let $X_n$ denote the observation matrix after $n - 1$ observations: $X_n := \begin{bmatrix}\bm{x}_1 & \ldots & \bm{x}_{n - 1}\end{bmatrix}^T$.  Then, $\Sigma^{(n)} = M_n^{-1} = (X_n^T X_n + \lambda I)^{-1}$. The matrices $M_n^{-1}$ and $X_n^T X_n$ have the same eigenvectors. Meanwhile, for each eigenvalue $(\lambda_i^{(n)})^{-1}$ of $M_n^{-1}$, $X_n^T X_n$ has an eigenvalue $\nu_i^{(n)} := \lambda_i^{(n)} - \lambda \ge 0$ corresponding to the same eigenvector.  We aim to characterize the eigenvectors of $M_n^{-1}$ whose eigenvalues approach zero.  Since these eigenvectors are identical to those of $X_n^T X_n$ whose eigenvalues approach infinity, we consider the latter instead.

We assume that all finitely-visited state-action pairs (including $\tilde{s}_1$) occur in the first $m < n - 1$ iterations. Without loss of generality, we index these finitely-visited state-action pairs from 1 to $r \ge 1$, so that the finitely-visited state-actions are: $\{\tilde{s}_1, \tilde{s}_2, \cdots, \tilde{s}_r\}$. Let $X_{1:m} \in \mathbb{R}^{m \times d}$ denote the matrix containing the first $m$ rows of $X_n$, while $X_{m+1:n} \in \mathbb{R}^{n - m \times d}$ denotes the remaining rows of $X_n$.
With this notation, 
\begin{equation*}
    X_n^T X_n = \sum_{i = 1}^{n - 1} \bm{x}_i \bm{x}_i^T = X_{1:m}^T X_{1:m} + X_{m+1:n}^T X_{m+1:n}.
\end{equation*}

Because the first $r$ state-action pairs, $\{ \tilde{s}_1, \tilde{s}_2, \cdots, \tilde{s}_r \}$, are unvisited after iteration $m$, the first $r$ elements of $\bm{x}_i$ are zero for all $i > m$. Therefore, $X_{m+1:n}^T X_{m+1:n}$ can be written in the following block matrix form:
\begin{flalign*}
    X_{m+1:n}^T X_{m+1:n} = \begin{bmatrix} O_{r \times r} & O_{r \times (d - r)} \\ O_{(d - r) \times r} & A_n \end{bmatrix},
\end{flalign*}
\noindent where $O_{a \times b}$ denotes the all-zero matrix with dimensions $a \times b$. The matrix $A_n$ includes elements that are unbounded as $n \longrightarrow \infty$. In particular, the diagonal elements of $A_n$ approach infinity as $n \longrightarrow \infty$. We can write $X_n^T X_n$ in the following block matrix form:
\begin{flalign*}
    X_n^T X_n = X_{1:m}^T X_{1:m} + X_{m+1:n}^T X_{m+1:n} = \begin{bmatrix} [X_{1:m}^T X_{1:m}]_{(1:r, 1:r)} & [X_{1:m}^T X_{1:m}]_{(1:r, r + 1:d)} \\ [X_{1:m}^T X_{1:m}]_{(r + 1:d, 1:r)} & [X_{1:m}^T X_{1:m}]_{(r + 1:d, r + 1:d)} + A_n \end{bmatrix} := \begin{bmatrix} B & C \\ C^T & D_n\end{bmatrix},
\end{flalign*}
\noindent where $M_{(a:b, c:d)}$ denotes the submatrix of $M$ obtained by extracting rows $a$ through $b$ and columns $c$ through $d$. Matrices $B$ and $C$ only depend upon $X_{1:m}$, and so are fixed as $n$ increases, while matrix $D_n$ contains values that grow towards infinity with increasing $n$. In particular, all elements along $D_n$'s diagonal are unbounded. Intuitively, in the limit, $B$ and $C$ are close to zero compared to $D_n$, and $X_n^T X_n$ (when normalized) increasingly resembles a matrix in which only the bottom-right block is nonzero. This intuitive notion is formalized next.

Consider an eigenpair $(\bm{v}_i^{(n)}, \nu_i^{(n)})$ of $X_n^T X_n$ such that $\lim_{n \longrightarrow \infty}\nu_i^{(n)} = \infty$. We show that the first element of $\bm{v}_i^{(n)}$ must approach 0. Let $\bm{v}_i^{(n)} = \begin{bmatrix}\bm{z}_i^{(n)T} & \bm{q}_i^{(n)T}\end{bmatrix}^T$, where $\bm{z}_i^{(n)} \in \mathbb{R}^m$ and $\bm{q}_i^{(n)} \in \mathbb{R}^{n - 1 - m}$. We see that:
\begin{flalign*}
    (X_n^T X_n) \bm{v}_i^{(n)} = X_n^T X_n \begin{bmatrix}\bm{z}_i^{(n)} \\ \bm{q}_i^{(n)}\end{bmatrix} = \begin{bmatrix} B & C \\ C^T & D_n\end{bmatrix}\begin{bmatrix}\bm{z}_i^{(n)} \\ \bm{q}_i^{(n)}\end{bmatrix} = \begin{bmatrix}B\bm{z}_i^{(n)} + C\bm{q}_i^{(n)} \\ C^T\bm{z}_i^{(n)} + D_n\bm{q}_i^{(n)}\end{bmatrix} = \lambda_i^{(n)} \begin{bmatrix}\bm{z}_i^{(n)} \\ \bm{q}_i^{(n)}\end{bmatrix}.
\end{flalign*}

Dividing both sides by $\nu_i^{(n)}$,
\begin{flalign*}
    \frac{1}{\nu_i^{(n)}}X_n^T X_n \begin{bmatrix}\bm{z}_i^{(n)} \\ \bm{q}_i^{(n)}\end{bmatrix} = \begin{bmatrix}\frac{1}{\nu_i^{(n)}}\left(B\bm{z}_i^{(n)} + C\bm{q}_i^{(n)}\right) \\ \frac{1}{\nu_i^{(n)}}\left(C^T\bm{z}_i^{(n)} + D_n\bm{q}_i^{(n)}\right)\end{bmatrix} = \begin{bmatrix}\bm{z}_i^{(n)} \\ \bm{q}_i^{(n)}\end{bmatrix}.
\end{flalign*}

In the upper matrix block: $\lim_{n \longrightarrow \infty} \nu_i^{(n)} = \infty$, $B$ and $C$ are fixed as $n$ increases, and $\bm{z}_i^{(n)}$ and $\bm{q}_i^{(n)}$ have upper-bounded elements because $\bm{v}_i^{(n)}$ is a unit vector. Thus, $\lim_{n\longrightarrow \infty}\bm{z}_i^{(n)} = \lim_{n\longrightarrow \infty}\frac{1}{\nu_i^{(n)}}\left(B\bm{z}_i^{(n)} + C\bm{q}_i^{(n)}\right) = \bm{0}$. In particular, the first element of $\bm{z}_i^{(n)}$ converges to zero, and so the same is true of $\bm{v}_i^{(n)}$.

As justified above, this result implies that for each iteration $n$, there exists an index $k_0 \in \{1, \ldots, d\}$ such that the right-hand side of \eqref{eqn:lower_bound_denom} has a lower bound above zero. This completes the proof that the denominator of the fraction in \eqref{eqn:prob_in_terms_of_v} does not decay to zero. As a result, there exists some $\rho > 0$ such that $P(\pi = \pi_{vi}(\bm{\tilde{p}}, \bm{r}_1)) \ge \rho > 0$.

In consequence, \algo~is guaranteed to infinitely-often sample pairs $(\bm{\tilde{p}}, \pi)$ such that $\pi = \pi_{vi}(\bm{\tilde{p}}, \bm{r}_1)$. As a result, \algo~infinitely-often samples policies that prioritize reaching $\tilde{s}_1$ as quickly as possible. Such a policy always takes action $a$ in state $s$. Furthermore, because $s$ is visited infinitely-often, either a) $p_0(s) > 0$ or b) the infinitely-visited state-action pairs include a path with a nonzero probability of reaching $s$. In case a), since the initial state distribution is fixed, the MDP will infinitely-often begin in state $s$ under the policy $\pi = \pi_{vi}(\bm{\tilde{p}}, \bm{r}_1)$, and so $\tilde{s}_1$ will be visited infinitely-often. In case b), due to Lemma \ref{lemma:dyn_concentration}, the transition dynamics parameters for state-actions along the path to $s$ converge to their true values (intuitively, the algorithm knows how to reach $s$). In episodes with the policy $\pi = \pi_{vi}(\bm{\tilde{p}}, \bm{r}_1)$, \algo~is thus guaranteed to reach $\tilde{s}_1$ infinitely-often. Since \algo~selects $\pi_{vi}(\bm{\tilde{p}}, \bm{r}_1)$ infinitely-often, it must reach $\tilde{s}_1$ infinitely-often. This presents a contradiction, and so every state-action pair must be visited infinitely-often.  
\end{proof}

Thus, by the direct combination of Lemmas \ref{lemma:dyn_concentration} and \ref{lemma:all_states_actions_visited}, we arrive at the following result.

\begin{proposition}\label{prop:dyn_consistency}
Under \algo, the sampled dynamics $\bm{\tilde{p}}_{i1}, \bm{\tilde{p}}_{i2}$ converge in distribution to the true dynamics: $\bm{\tilde{p}}_{i1}, \bm{\tilde{p}}_{i2} \overset{D}{\longrightarrow} \bm{\overline{p}}$, where $\overset{D}\longrightarrow$ denotes convergence in distribution.
\end{proposition}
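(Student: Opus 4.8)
The plan is to prove the proposition as a direct corollary of the two preceding lemmas, with essentially no additional work beyond chaining their hypotheses and conclusions. Lemma \ref{lemma:dyn_concentration} already establishes the desired convergence in distribution \emph{conditional on} the hypothesis that every state-action pair is visited infinitely-often, while Lemma \ref{lemma:all_states_actions_visited} establishes precisely that hypothesis unconditionally under \algo. So the entire content of the argument is to observe that Lemma \ref{lemma:all_states_actions_visited} discharges the precondition required by Lemma \ref{lemma:dyn_concentration}.

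Concretely, I would first invoke Lemma \ref{lemma:all_states_actions_visited} to conclude that under \algo, every state-action pair $\tilde{s}_j$ is visited infinitely-often, so that the per-state-action visit count satisfies $n_j \longrightarrow \infty$ for each $j \in \{1, \ldots, d\}$. Next, for each such $j$, I would apply the tail bound derived inside Lemma \ref{lemma:dyn_concentration} (the upper bound on $P(||\bm{\tilde{p}}^{(j)} - \bm{\overline{p}}^{(j)}||_1 \ge \varepsilon)$ that decays as $n_j \longrightarrow \infty$) to obtain $\bm{\tilde{p}}^{(j)} \overset{P}\longrightarrow \bm{\overline{p}}^{(j)}$ for every $j$, and hence blockwise convergence in probability of the full vector $\bm{\tilde{p}}_{i1}, \bm{\tilde{p}}_{i2} \overset{P}\longrightarrow \bm{\overline{p}}$ (a finite collection of blocks, so joint convergence in probability follows). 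Since convergence in probability implies convergence in distribution, the claimed result $\bm{\tilde{p}}_{i1}, \bm{\tilde{p}}_{i2} \overset{D}\longrightarrow \bm{\overline{p}}$ follows immediately for both posterior samples drawn in iteration $i$.

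The genuine difficulty of the argument does not reside in the proposition itself but entirely within Lemma \ref{lemma:all_states_actions_visited}, whose proof is the technical heart of the consistency argument: one must rule out, by contradiction, the possibility that some reachable state-action pair is visited only finitely-often. The subtle part there is showing that if $\tilde{s}_1$ were visited finitely-often its reward posterior would cease updating, so that \algo~would infinitely-often sample a reward large enough for value iteration to prioritize reaching $\tilde{s}_1$; this in turn requires the eigenvector analysis bounding the sampling-probability denominator in \eqref{eqn:prob_in_terms_of_v} away from zero. By contrast, once that lemma is in hand, the present proposition is a one-line combination, and I do not anticipate any obstacle in assembling it.
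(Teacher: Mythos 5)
Your proposal matches the paper exactly: the paper proves this proposition as a one-line corollary, stating that it follows "by the direct combination of Lemmas \ref{lemma:dyn_concentration} and \ref{lemma:all_states_actions_visited}," which is precisely your chaining of the two lemmas (with the infinite-visitation lemma discharging the hypothesis of the concentration lemma, and convergence in probability implying convergence in distribution). Your observation that the technical weight lives in Lemma \ref{lemma:all_states_actions_visited} rather than in the proposition itself is also consistent with how the paper organizes the argument.
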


%%% ------------------------------------------------------------------ %%%Next, 
Next, we show that the sampled rewards converge in distribution to their true values. Our analysis will use Theorem 2 from \cite{abbasi2011improved}, which is repeated below. Recall that Eq.s \eqref{eqn:reward_MAP}-\eqref{eqn:M} define the MAP reward estimate.

\begin{lemma}[Theorem 2 from \cite{abbasi2011improved}]\label{lemma:abbasi_yadkori}
Let $\{F_i\}_{i = 0}^\infty$ be a filtration. Let $\{\eta_i\}_{i = 1}^\infty$ be a real-valued stochastic process such that $\eta_i$ is $F_i$-measurable and $\eta_i$ is conditionally $R$-sub-Gaussian for some $R \ge 0$. Let $\{\bm{x}_i\}$ be an $\mathbb{R}^d$-valued stochastic process such that $\bm{x}_i$ is $F_{i - 1}$-measurable. Define $y_i := \bm{x}_i^T \bm{\overline{r}} + \eta_i$, and assume that $||\bm{\overline{r}}||_2 \le S_r$ and $||\bm{x}_i||_2 \le L$. Then, for any $\delta > 0$, with probability at least $1 - \delta$, for all $i > 0$, $||\bm{\hat{r}}_i - \bm{\overline{r}}||_{M_i} \le \beta_i(\delta)$, where:
\begin{equation*}
    \beta_i(\delta) = R \sqrt{2 \log\left(\frac{\det(M_i)^{1/2}\lambda^{-d/2}}{\delta}\right)} + \sqrt{\lambda}S_r \le R \sqrt{d \log\left(\frac{1 + \frac{L^2i}{d\lambda}}{\delta}\right)} + \sqrt{\lambda}S_r.
\end{equation*}
\end{lemma}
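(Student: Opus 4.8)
Since this lemma is a verbatim restatement of Theorem 2 of \cite{abbasi2011improved}, the plan is to reconstruct its standard proof, which reduces the confidence-ellipsoid claim to a self-normalized tail bound on the noise martingale. First I would introduce the vector-valued process $\bm{S}_i := \sum_{k=1}^{i-1}\eta_k\bm{x}_k$ and record the elementary algebraic decomposition of the ridge estimator's error. Substituting $y_k = \bm{x}_k^T\bm{\overline{r}} + \eta_k$ into the definition \eqref{eqn:reward_MAP} of $\bm{\hat{r}}_i$ and using $\sum_{k=1}^{i-1}\bm{x}_k\bm{x}_k^T = M_i - \lambda I$ gives $\bm{\hat{r}}_i - \bm{\overline{r}} = M_i^{-1}\bm{S}_i - \lambda M_i^{-1}\bm{\overline{r}}$.

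Next I would take the $M_i$-norm of both sides and apply the triangle inequality, splitting the error into a stochastic (noise) term and a deterministic (regularization-bias) term: using $||M_i^{-1}\bm{z}||_{M_i} = ||\bm{z}||_{M_i^{-1}}$, this yields $||\bm{\hat{r}}_i - \bm{\overline{r}}||_{M_i} \le ||\bm{S}_i||_{M_i^{-1}} + \lambda||M_i^{-1}\bm{\overline{r}}||_{M_i}$. The bias term is easy: since $M_i = \lambda I + \sum_k \bm{x}_k\bm{x}_k^T \succeq \lambda I$, every eigenvalue of $M_i^{-1}$ is at most $\lambda^{-1}$, so $\lambda||M_i^{-1}\bm{\overline{r}}||_{M_i} = \lambda\sqrt{\bm{\overline{r}}^T M_i^{-1}\bm{\overline{r}}} \le \sqrt{\lambda}\,||\bm{\overline{r}}||_2 \le \sqrt{\lambda}S_r$, which supplies the $\sqrt{\lambda}S_r$ summand of $\beta_i(\delta)$.

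The main obstacle is bounding the noise term $||\bm{S}_i||_{M_i^{-1}}$ uniformly over all $i$ with probability $1-\delta$; this is the genuinely nontrivial self-normalized martingale inequality. The plan is the method of mixtures. For fixed $\bm{\theta}\in\mathbb{R}^d$, sub-Gaussianity of $\eta_k$ (Assumption \ref{assump:sub_Gauss}), applied with $s = \bm{\theta}^T\bm{x}_k$, makes $\exp\left(\sum_{k=1}^{i-1}\left[\bm{\theta}^T\bm{x}_k\eta_k - \tfrac{R^2}{2}(\bm{\theta}^T\bm{x}_k)^2\right]\right)$ a nonnegative supermartingale of expectation at most $1$. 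Integrating this family against a Gaussian mixing density $\bm{\theta}\sim\mathcal{N}(\bm{0},(R^2\lambda)^{-1}I)$ and evaluating the resulting Gaussian integral in closed form produces the pivot $\left(\det(\lambda I)/\det(M_i)\right)^{1/2}\exp\left(\tfrac{1}{2R^2}||\bm{S}_i||_{M_i^{-1}}^2\right)$, still a supermartingale with expectation at most $1$ (the factor $R$ cancels since $\sum_k\bm{x}_k\bm{x}_k^T = M_i-\lambda I$). Applying a maximal inequality for nonnegative supermartingales (via optional stopping on a stopped version) and rearranging yields, with probability $1-\delta$ and simultaneously for all $i$, $||\bm{S}_i||_{M_i^{-1}} \le R\sqrt{2\log\left(\det(M_i)^{1/2}\lambda^{-d/2}/\delta\right)}$. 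Combining this with the bias bound gives exactly the closed form $\beta_i(\delta)$ and hence $||\bm{\hat{r}}_i - \bm{\overline{r}}||_{M_i}\le\beta_i(\delta)$.

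Finally I would discharge the stated upper bound $\beta_i(\delta)\le R\sqrt{d\log\left((1+L^2i/(d\lambda))/\delta\right)}+\sqrt{\lambda}S_r$ by a determinant--trace argument: by AM--GM on the eigenvalues, $\det(M_i)\le\left(\tfrac{1}{d}\mathrm{trace}(M_i)\right)^d$, and since $\mathrm{trace}(M_i)=\lambda d+\sum_{k=1}^{i-1}||\bm{x}_k||_2^2\le\lambda d+iL^2$ under $||\bm{x}_k||_2\le L$, we obtain $\det(M_i)^{1/2}\lambda^{-d/2}\le(1+L^2i/(d\lambda))^{d/2}$; substituting into the logarithm completes the bound. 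The only delicate step is the method-of-mixtures construction and its uniform-in-time maximal inequality; the remaining steps are routine linear-algebra estimates.
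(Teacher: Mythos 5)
Your proposal is a correct reconstruction of the standard proof of this result (ridge-error decomposition, method-of-mixtures self-normalized bound, determinant--trace inequality); note that the paper itself gives no proof here, since it imports the statement verbatim as Theorem 2 of \cite{abbasi2011improved}. The only step you gloss over is that the final relaxation replaces $2\log(1/\delta)$ by $d\log(1/\delta)$ inside the square root, which is valid for $d\ge 2$ exactly as in the cited source.
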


Note that in the present case, $L \le 2h$, since:
\begin{equation*}
    ||\bm{x}_i||_2 = ||\bm{x}_{i2} - \bm{x}_{i1}||_2 \overset{(a)}{\le} ||\bm{x}_{i2} - \bm{x}_{i1}||_1 \le ||\bm{x}_{i2}||_1 + ||\bm{x}_{i1}||_1 \overset{(b)}= 2h,
\end{equation*}
where (a) holds because $||\bm{x}||_2 \le ||\bm{x}||_1$ for any $\bm{x} \in \mathbb{R}^d$, and (b) holds because $\bm{x}_{i1}$ and $\bm{x}_{i2}$ each count an episode's visits to every state-action pair, and so their elements are non-negative integers summing to $h$.

\begin{proposition}\label{prop:reward_consistency}
With probability $1 - \delta$, where $\delta$ is a parameter of the Bayesian linear regression model, the sampled rewards $\bm{\tilde{r}}_{i1}, \bm{\tilde{r}}_{i2}$ converge in distribution to the true reward parameters, $\bm{\tilde{r}}_{i1}, \bm{\tilde{r}}_{i2} \overset{D}\longrightarrow \bm{\overline{r}}$, as $i \longrightarrow \infty$.
\end{proposition}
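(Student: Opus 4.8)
The plan is to exploit the explicit Gaussian form of the posterior sampling distribution in \eqref{eqn:defn_beta}, $\bm{\tilde{r}}_{i1}, \bm{\tilde{r}}_{i2} \sim \mathcal{N}(\bm{\hat{r}}_i, \beta_i(\delta)^2 M_i^{-1})$, and reduce the entire claim to a single statement about eigenvalues. Since the target $\bm{\overline{r}}$ is deterministic, convergence in distribution is equivalent to convergence in probability, so it suffices to show each sample concentrates at $\bm{\overline{r}}$. Writing $\bm{\tilde{r}}_{ij} = \bm{\hat{r}}_i + \beta_i(\delta) M_i^{-1/2} \bm{z}$ with $\bm{z} \sim \mathcal{N}(\bm{0}, I)$, this holds as soon as (i) the posterior mean satisfies $\bm{\hat{r}}_i \longrightarrow \bm{\overline{r}}$ and (ii) the posterior covariance $\beta_i(\delta)^2 M_i^{-1}$ vanishes. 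I would reduce both to the single condition that every eigenvalue of $\beta_i(\delta)^2 M_i^{-1}$ converges to zero.

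The link between (i) and this eigenvalue condition comes from Lemma~\ref{lemma:abbasi_yadkori}. Applying it with the sub-Gaussian noise guaranteed by Assumption~\ref{assump:sub_Gauss} and the bound $L \le 2h$ yields that, with probability $1 - \delta$ and for all $i$, $||\bm{\hat{r}}_i - \bm{\overline{r}}||_{M_i} \le \beta_i(\delta)$; this is the sole source of the $1 - \delta$ failure probability. Then, using the notation of Definition~\ref{defn:evecs_evals}, I would bound $||\bm{\hat{r}}_i - \bm{\overline{r}}||_2^2 \le \lambda_{\max}(M_i^{-1}) \, ||\bm{\hat{r}}_i - \bm{\overline{r}}||_{M_i}^2 \le \beta_i(\delta)^2 / \lambda_d^{(i)}$, so the largest eigenvalue of $\beta_i(\delta)^2 M_i^{-1}$ controls the mean error. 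Condition (ii) is immediate, since a symmetric positive-definite matrix all of whose eigenvalues tend to zero tends to the zero matrix. Thus the proposition reduces to showing $\beta_i(\delta)^2 / \lambda_k^{(i)} \longrightarrow 0$ for every $k$.

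To establish the eigenvalue condition I would argue by contradiction, analyzing the sampling behavior when it fails. Suppose there is an index $k$ and a subsequence along which $\beta_i(\delta)^2 / \lambda_k^{(i)}$ stays bounded away from zero. Because $\beta_i(\delta)$ grows only as $O(\sqrt{\log i})$ while $M_i$ accumulates the rank-one terms in \eqref{eqn:M}, this forces $\lambda_k^{(i)} = \lambda + \sum_{m} (\bm{v}_k^{(i)T} \bm{x}_m)^2$ to grow no faster than logarithmically in the eigendirection $\bm{v}_k^{(i)}$. Hence the posterior variance of the sampled utility along $\bm{v}_k^{(i)}$, namely $\beta_i(\delta)^2 / \lambda_k^{(i)}$, never collapses, so the samples $\bm{\tilde{r}}_{ij}$ retain persistent uncertainty in that reward direction. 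I would then invoke Lemma~\ref{lemma:all_states_actions_visited} (every state-action pair is visited infinitely often) together with Proposition~\ref{prop:dyn_consistency} (the sampled dynamics converge) to argue that this persistent uncertainty forces value iteration to infinitely often produce policies differing in how much weight they place on the state-actions spanning $\bm{v}_k^{(i)}$, thereby generating observation differences $\bm{x}_i = \bm{x}_{i2} - \bm{x}_{i1}$ whose component along $\bm{v}_k^{(i)}$ is bounded away from zero infinitely often. Such observations make $\sum_{m} (\bm{v}_k^{(i)T}\bm{x}_m)^2$, and hence $\lambda_k^{(i)}$, grow linearly rather than logarithmically, contradicting the supposed bound.

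The main obstacle is precisely this last step: translating persistent posterior uncertainty in a reward direction into accumulation of observation \emph{differences} along that direction. Unlike the linear bandit setting, the vectors $\bm{x}_i$ are constrained to be differences of trajectory visit-count vectors produced by rolling out value-iteration policies under the only asymptotically known dynamics, so one cannot directly select an observation aligned with $\bm{v}_k^{(i)}$. Carefully handling this geometric coupling—showing that diverse sampled rewards in the uncertain direction indeed induce trajectory pairs whose feature-vector difference excites that direction—while simultaneously leaning on the infinitely-often visitation and dynamics-convergence results, is where the argument is most delicate.
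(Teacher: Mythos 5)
Your overall architecture matches the paper's: reduce convergence of the samples to the single condition that $\beta_i(\delta)^2/\lambda_d^{(i)} \to 0$ via the Abbasi-Yadkori confidence ellipsoid (this is the paper's Lemma \ref{lemma:reward_consistency_part1}), then prove that condition by contradiction, arguing that persistent posterior variance in an eigendirection forces the sampled policies to generate observation differences that excite that direction and make the eigenvalue grow linearly while $\beta_i(\delta)^2$ grows only logarithmically (the paper's Lemmas \ref{lemma:reward_consistency_part2}--\ref{lemma:reward_consistency_part4}). However, there are two genuine gaps. First, your stated reduction --- ``the proposition reduces to showing $\beta_i(\delta)^2/\lambda_k^{(i)} \to 0$ for \emph{every} $k$'' --- is a reduction to a false statement. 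The observations are differences $\bm{x}_i = \bm{x}_{i2} - \bm{x}_{i1}$ of two visit-count vectors each summing to $h$, so every $\bm{x}_i$ is orthogonal to $[1,\dots,1]^T$; along that direction (and possibly others constrained by the dynamics) the eigenvalue of $M_i$ stays at $\lambda$ forever and the ratio diverges. No amount of clever policy selection can fix this, so your contradiction argument cannot go through for all $k$. The paper handles this with the ``irrelevant dimensions'' machinery (Lemma \ref{lemma:relevance} and Remark \ref{rmk:relevant_evecs}): it shows that directions violating \eqref{eqn:relevance_condition} do not affect policy selection once the dynamics have converged, projects the samples onto the relevant subspace, and only proves eigenvalue growth there. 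Your proposal needs this restriction, and consequently also needs to argue that non-convergence in the irrelevant directions is harmless.

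Second, you explicitly flag the key step --- translating persistent uncertainty along $\bm{v}_d^{(i)}$ into observation differences with a non-vanishing component along $\bm{v}_d^{(i)}$ --- as an unresolved obstacle rather than supplying an argument. This is where most of the paper's work lies: Lemma \ref{lemma:reward_consistency_part2} shows the two independent samples $\bm{\tilde{r}}_{i1},\bm{\tilde{r}}_{i2}$ align with $+\bm{v}_d^{(i)}$ and $-\bm{v}_d^{(i)}$ respectively with probability bounded below (using the boundedness of $\|\bm{\hat{r}}_i\|_2$ from Lemma \ref{lemma:MAP_bound}); Lemma \ref{lemma:reward_consistency_part3} then uses uniform continuity of the value function, compactness of the set of unit vectors, and convergence of the dynamics to show the resulting policy pair nearly maximizes $|\mathbb{E}[\bm{x}_i^T\bm{v}_d^{(i)}]|$, which is bounded below by the relevance condition. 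Finally, Lemma \ref{lemma:reward_consistency_part4} needs a covering argument over the unit sphere because the eigenvector $\bm{v}_d^{(i)}$ drifts with $i$, so one cannot directly sum $(\bm{x}_m^T\bm{v}_d^{(n)})^2$ using bounds on $(\bm{x}_m^T\bm{v}_d^{(m)})^2$. None of these pieces is routine, and your proposal does not contain a substitute for any of them.
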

\begin{proof}
This is a direct implication of Lemmas \ref{lemma:reward_consistency_part1} and \ref{lemma:reward_consistency_part4}, both proven below.
\end{proof}

\begin{lemma}\label{lemma:reward_consistency_part1}
If $\frac{\beta_i(\delta)^2}{\lambda_d^{(i)}} \overset{D}\longrightarrow 0$ as $i \longrightarrow \infty$, where $\lambda_d^{(i)}$ is the minimum eigenvalue of $M_i$ and $\overset{D}\longrightarrow$ denotes convergence in distribution, then $\bm{\tilde{r}}_{i1}, \bm{\tilde{r}}_{i2} \overset{D}\longrightarrow \bm{\overline{r}}$ with probability $1 - \delta$.
\end{lemma}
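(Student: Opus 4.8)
The plan is to reduce the claimed convergence in distribution to a statement about convergence in probability and then to control two independent sources of error separately. Since $\bm{\overline{r}}$ is a constant, the assertion $\bm{\tilde{r}}_{i1}, \bm{\tilde{r}}_{i2} \overset{D}\longrightarrow \bm{\overline{r}}$ is equivalent to $\bm{\tilde{r}}_{i1}, \bm{\tilde{r}}_{i2} \overset{P}\longrightarrow \bm{\overline{r}}$, so I would aim to establish the latter on the probability-$(1-\delta)$ event furnished by Lemma \ref{lemma:abbasi_yadkori}. Writing $\bm{\tilde{r}}_{ij} - \bm{\overline{r}} = (\bm{\tilde{r}}_{ij} - \bm{\hat{r}}_i) + (\bm{\hat{r}}_i - \bm{\overline{r}})$, the triangle inequality and a union bound let me treat a deterministic ``bias'' term $\bm{\hat{r}}_i - \bm{\overline{r}}$ (whose only randomness is through the data) and a Gaussian ``sampling'' term $\bm{\tilde{r}}_{ij} - \bm{\hat{r}}_i$ (whose randomness is the Thompson draw given the data) one at a time.

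For the bias term, I would condition on the event $E$ of Lemma \ref{lemma:abbasi_yadkori}, which has probability at least $1-\delta$ and on which $||\bm{\hat{r}}_i - \bm{\overline{r}}||_{M_i} \le \beta_i(\delta)$ holds for all $i$. Using that $\lambda_d^{(i)}$ is the smallest eigenvalue of $M_i$ (Definition \ref{defn:evecs_evals}), I obtain $||\bm{\hat{r}}_i - \bm{\overline{r}}||_2^2 \le (\lambda_d^{(i)})^{-1} ||\bm{\hat{r}}_i - \bm{\overline{r}}||_{M_i}^2 \le \beta_i(\delta)^2/\lambda_d^{(i)}$. By hypothesis $\beta_i(\delta)^2/\lambda_d^{(i)} \overset{D}\longrightarrow 0$, and convergence in distribution to a constant is convergence in probability, so $||\bm{\hat{r}}_i - \bm{\overline{r}}||_2 \overset{P}\longrightarrow 0$ on $E$.

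For the sampling term, recall from \eqref{eqn:defn_beta} that, conditioned on the data, $\bm{\tilde{r}}_{ij} - \bm{\hat{r}}_i \sim \mathcal{N}(\bm{0}, \beta_i(\delta)^2 M_i^{-1})$; since $M_i^{-1}$ has largest eigenvalue $(\lambda_d^{(i)})^{-1}$, its conditional second moment satisfies $\mathbb{E}[||\bm{\tilde{r}}_{ij} - \bm{\hat{r}}_i||_2^2 \mid \mathrm{data}] = \mathrm{tr}(\beta_i(\delta)^2 M_i^{-1}) \le d\,\beta_i(\delta)^2/\lambda_d^{(i)}$. A conditional Markov inequality then gives, for any $\varepsilon > 0$, $P(||\bm{\tilde{r}}_{ij} - \bm{\hat{r}}_i||_2 > \varepsilon \mid \mathrm{data}) \le \min(1, d\,\beta_i(\delta)^2/(\varepsilon^2 \lambda_d^{(i)}))$. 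The right-hand side is a bounded, continuous function of $W_i := \beta_i(\delta)^2/\lambda_d^{(i)}$ that vanishes at $W_i = 0$, so taking expectations and invoking $W_i \overset{D}\longrightarrow 0$ yields $P(||\bm{\tilde{r}}_{ij} - \bm{\hat{r}}_i||_2 > \varepsilon) \to 0$, i.e.\ $\bm{\tilde{r}}_{ij} - \bm{\hat{r}}_i \overset{P}\longrightarrow \bm{0}$. Combining the two terms establishes $\bm{\tilde{r}}_{ij} \overset{P}\longrightarrow \bm{\overline{r}}$, and hence the desired convergence in distribution, for $j \in \{1, 2\}$.

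The routine parts (the eigenvalue bounds, Markov's inequality, and the triangle inequality) are straightforward; I expect the only delicate point to be this final passage from the conditional bound to an unconditional one. Because $W_i$ is itself random and the hypothesis supplies only convergence in distribution rather than, say, $L^1$ convergence, one cannot simply take the expectation of $W_i$ and let it tend to zero. The resolution is to observe that the conditional failure probability is dominated by a bounded continuous function of $W_i$, so that weak convergence of $W_i$ alone suffices (via bounded convergence / the definition of weak convergence). Keeping the two randomness sources---the data governing $M_i, \bm{\hat{r}}_i$ versus the Gaussian draw---cleanly separated throughout is what makes the argument go through.
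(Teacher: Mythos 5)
Your proof is correct and follows essentially the same route as the paper's: both rest on the confidence ellipsoid of Lemma~\ref{lemma:abbasi_yadkori}, the triangle-inequality split through $\bm{\hat{r}}_i$, and the observation that the hypothesis $\beta_i(\delta)^2/\lambda_d^{(i)} \overset{D}\longrightarrow 0$ forces both the ellipsoid and the posterior covariance to shrink. The only substantive difference is in how the Gaussian draw is controlled---you use a conditional second moment plus Markov's inequality and pass to the unconditional statement via a bounded continuous function of $W_i := \beta_i(\delta)^2/\lambda_d^{(i)}$, whereas the paper bounds $||\bm{z}_i||_2$ by a fixed radius with probability $1-\delta$ and argues that the resulting $M_i$-norm bound would otherwise be violated---and your handling of that final conditional-to-unconditional passage is, if anything, the more careful of the two.
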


\begin{proof}
From Lemma \ref{lemma:abbasi_yadkori}, with probability at least $1 - \delta$, $\bm{\hat{r}}_i$ belongs to a confidence ellipsoid centered at $\bm{\overline{r}}$: $||\bm{\hat{r}}_i - \bm{\overline{r}}||_{M_i} \le \beta_i(\delta)$. We show that under this high-probability event, $\bm{\tilde{r}}_{i1}, \bm{\tilde{r}}_{i2} \overset{D}\longrightarrow \bm{\overline{r}}$. Similarly to the high-probability confidence ellipsoid from Lemma \ref{lemma:abbasi_yadkori}, the Thompson sampling covariance matrix is also defined by $\beta_i(\delta)$ and $M_i$:
\begin{equation}\label{eqn:TS_dist}
    \bm{\tilde{r}}_{i1}, \bm{\tilde{r}}_{i2} \sim \mathcal{N}(\bm{\hat{r}}_i, \beta_i(\delta)^2M_i^{-1}).
\end{equation}

Letting $\bm{z}_i \sim \mathcal{N}(\bm{0}, I)$ be independent for each $i$, we can equivalently express $\bm{\tilde{r}}_{i1}$ (and similarly, $\bm{\tilde{r}}_{i2}$) as:
\begin{equation}\label{eqn:reward_with_z}
    \bm{\tilde{r}}_{i1} = \bm{\hat{r}}_i + \beta_i(\delta)M_i^{-\frac{1}{2}}\bm{z}_i,
\end{equation}
since the random variable in \eqref{eqn:reward_with_z} has the same distribution  as \eqref{eqn:TS_dist}. The quantity $||\bm{\tilde{r}}_{i1} - \bm{\hat{r}}_i||_{M_i}$ can be rewritten as:
\begin{flalign*}
||\bm{\tilde{r}}_{i1} - \bm{\hat{r}}_i||_{M_i} = \left|\left|\beta_i(\delta)M_i^{-\frac{1}{2}}\bm{z}_i\right|\right|_{M_i} = \beta_i(\delta) \sqrt{\bm{z}_i^T M_i^{-\frac{1}{2}}M_i M_i^{-\frac{1}{2} }\bm{z}_i} = \beta_i(\delta)||\bm{z}_i||_2.
\end{flalign*}

Because the probability distribution of $||\bm{z}_i||_2$ is fixed, there exists some fixed $a > 0$ such that with probability at least $1 - \delta$, $||\bm{z}_i||_2 \le a$. So, for each $i$, with probability at least $1 - \delta$,
\begin{equation*}
    ||\bm{\tilde{r}}_{i1} - \bm{\hat{r}}_i||_{M_i} = \beta_i(\delta)||\bm{z}_i||_2 \le \beta_i(\delta)a.
\end{equation*}

Assuming that the high-probability event in Lemma \ref{lemma:abbasi_yadkori} occurs, we combine the previous inequality with $||\bm{\hat{r}}_i - \bm{\overline{r}}||_{M_i} \le \beta_i(\delta)$ to obtain that for each $i$, with probability at least $1 - \delta$,
\begin{equation*}
    ||\bm{\tilde{r}}_{i1} - \bm{\overline{r}}||_{M_i} \le ||\bm{\tilde{r}}_{i1} - \bm{\hat{r}}_i||_{M_i} + ||\bm{\hat{r}}_i - \bm{\overline{r}}||_{M_i} \le (a + 1)\beta_i(\delta).
\end{equation*}

Squaring both sides and dividing by $\beta_i(\delta)$ yields that for each $i$, with probability at least $1 - \delta$,
\begin{equation*}
    \frac{1}{\beta_i(\delta)^2}(\bm{\tilde{r}}_{i1} - \bm{\overline{r}})^T M_i (\bm{\tilde{r}}_{i1} - \bm{\overline{r}}) \le (a + 1)^2.
\end{equation*}

By assumption, $\frac{\lambda_d^{(i)}}{\beta_i(\delta)^2} \overset{D}\longrightarrow \infty$ as $i \longrightarrow \infty$. Recall from Definition \ref{defn:evecs_evals} that $\bm{v}_j^{(i)}, j \in \{1, \ldots, d\}$, represent the eigenvectors of $M_i$ corresponding to the eigenvalues $\lambda_j^{(i)}$. Then, with probability at least $1 - \delta$ for each $i$:
\begin{flalign}
    \frac{1}{\beta_i(\delta)^2}(\bm{\tilde{r}}_{i1} - \bm{\overline{r}})^T M_i (\bm{\tilde{r}}_{i1} - \bm{\overline{r}}) &= \frac{1}{\beta_i(\delta)^2}(\bm{\tilde{r}}_{i1} - \bm{\overline{r}})^T \left(\sum_{j = 1}^d \lambda_j^{(i)} \bm{v}_j^{(i)}\bm{v}_j^{(i)T} \right) (\bm{\tilde{r}}_{i1} - \bm{\overline{r}}) \nonumber \\ &= \frac{1}{\beta_i(\delta)^2}\sum_{j = 1}^d \lambda_j^{(i)} \left( (\bm{\tilde{r}}_{i1} - \bm{\overline{r}})^T \bm{v}_j^{(i)}\right)^2 \le (a + 1)^2. \label{eqn:reward_high_prob_bound}
\end{flalign}

Since for each $j$, $\frac{\lambda_j^{(i)}}{\beta_i(\delta)^2} \longrightarrow \infty$ as $i \longrightarrow \infty$, and $\bm{v}_j^{(i)}$ is an orthonormal basis, the constant bound of $(a + 1)^2$ in \eqref{eqn:reward_high_prob_bound} is violated if we do not have $\bm{\tilde{r}}_{i1} - \bm{\overline{r}} \overset{D}\longrightarrow \bm{0}$. Equation \eqref{eqn:reward_high_prob_bound} must hold with probability at least $1 - \delta$ independently for each iteration $i$, with the $(1 - \delta)$-probability due entirely to randomness in the Thompson sampling distribution, \eqref{eqn:TS_dist}. Therefore, it follows that $\bm{\tilde{r}}_{i1} \overset{D}\longrightarrow \bm{\overline{r}}$. The proof that $\bm{\tilde{r}}_{i2} \overset{D}\longrightarrow \bm{\overline{r}}$ with high probability is identical.

\end{proof}

The next result enables us to leverage convergence in distribution of the dynamics samples, $\bm{\tilde{p}}_{i1}, \bm{\tilde{p}}_{i2} \overset{D}\longrightarrow \bm{\overline{p}}$ (as guaranteed by Proposition \ref{prop:dyn_consistency}), in characterizing the impact of sampled policies upon convergence of the reward model.

\begin{lemma}\label{lemma:f_cont}
Let $f: \mathbb{R}^{S^2 A} \times \mathbb{R}^{SA} \longrightarrow \mathbb{R}$ be a function of transition dynamics $\bm{p} \in \mathbb{R}^{S^2 A}$ and reward vector $\bm{r} \in \mathbb{R}^{SA}$, $f(\bm{p}, \bm{r})$, where $f$ is continuous in $\bm{p}$ and uniformly-continuous in $\bm{r}$. Assume that Proposition \ref{prop:dyn_consistency} holds: $\bm{\tilde{p}}_{i1}, \bm{\tilde{p}}_{i2} \overset{D}\longrightarrow \bm{\overline{p}}$. Then, for any $\delta, \varepsilon > 0$, there exists $i^\prime$ such that for $i > i^\prime$, $|f(\bm{\overline{p}}, \bm{r}) - f(\bm{\tilde{p}}_{ij}, \bm{r})| < \varepsilon\,$ for any \emph{unit vector} $\bm{r}$ and $j \in \{1, 2\}$ with probability at least $1 - \delta$.
\end{lemma}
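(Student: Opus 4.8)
The plan is to decouple the claim into a deterministic statement about $f$ near $\bm{\overline{p}}$ and a probabilistic statement about the dynamics samples, then glue them together. Since $\bm{\overline{p}}$ is a fixed (deterministic) vector, Proposition \ref{prop:dyn_consistency} says that $\bm{\tilde{p}}_{ij}$ converges in distribution to a constant, which is equivalent to convergence in probability: $\bm{\tilde{p}}_{i1}, \bm{\tilde{p}}_{i2} \overset{P}\longrightarrow \bm{\overline{p}}$. Thus for any $\gamma > 0$ and the given $\delta$, there exists $i'$ such that for all $i > i'$ and each $j \in \{1,2\}$, $P(\|\bm{\tilde{p}}_{ij} - \bm{\overline{p}}\|_2 < \gamma) \ge 1 - \delta/2$; a union bound then guarantees that both samples lie within distance $\gamma$ of $\bm{\overline{p}}$ simultaneously with probability at least $1 - \delta$.

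The core of the argument is the deterministic claim that for every $\varepsilon > 0$ there is a $\gamma > 0$ such that $\|\bm{p} - \bm{\overline{p}}\|_2 < \gamma$ implies $\sup_{\|\bm{r}\|_2 = 1} |f(\bm{p}, \bm{r}) - f(\bm{\overline{p}}, \bm{r})| < \varepsilon$. I would prove this by a covering argument over the unit sphere $\mathcal{R} := \{\bm{r} \in \mathbb{R}^{SA} : \|\bm{r}\|_2 = 1\}$, which is compact. Using uniform continuity of $f$ in $\bm{r}$ (whose modulus does not depend on $\bm{p}$), pick $\eta > 0$ so that $\|\bm{r} - \bm{r}'\|_2 < \eta$ forces $|f(\bm{p}, \bm{r}) - f(\bm{p}, \bm{r}')| < \varepsilon/3$ for every $\bm{p}$. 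Cover $\mathcal{R}$ by finitely many $\eta$-balls centered at $\bm{r}_1, \ldots, \bm{r}_K$. For each center, continuity of $f(\cdot, \bm{r}_k)$ in $\bm{p}$ at $\bm{\overline{p}}$ yields $\gamma_k > 0$ with $|f(\bm{p}, \bm{r}_k) - f(\bm{\overline{p}}, \bm{r}_k)| < \varepsilon/3$ whenever $\|\bm{p} - \bm{\overline{p}}\|_2 < \gamma_k$; set $\gamma := \min_{1 \le k \le K} \gamma_k > 0$, which is positive since $K$ is finite. For an arbitrary unit $\bm{r}$, choosing $\bm{r}_k$ within $\eta$ of $\bm{r}$ and applying the triangle inequality across the three estimates (within-cell variation at $\bm{p}$, center displacement in $\bm{p}$, within-cell variation at $\bm{\overline{p}}$) gives a bound of $\varepsilon/3 + \varepsilon/3 + \varepsilon/3 = \varepsilon$, uniformly in $\bm{r}$.

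Finally, I would combine the two ingredients: fix $\varepsilon$, obtain $\gamma$ from the deterministic claim, and then choose $i'$ from the probabilistic step for this $\gamma$. On the event $\{\|\bm{\tilde{p}}_{i1} - \bm{\overline{p}}\|_2 < \gamma\} \cap \{\|\bm{\tilde{p}}_{i2} - \bm{\overline{p}}\|_2 < \gamma\}$, which holds with probability at least $1 - \delta$ for $i > i'$, the deterministic bound applies to both samples, so $|f(\bm{\overline{p}}, \bm{r}) - f(\bm{\tilde{p}}_{ij}, \bm{r})| < \varepsilon$ for every unit vector $\bm{r}$ and each $j \in \{1,2\}$, as required.

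I expect the main obstacle to be the uniformity over all directions $\bm{r}$: pointwise continuity of $f$ in $\bm{p}$ for each fixed $\bm{r}$ would not suffice to control the supremum over the whole sphere, so the argument must exploit both the compactness of $\mathcal{R}$ and the fact that the modulus of continuity in $\bm{r}$ is independent of $\bm{p}$, which is exactly what lets a finite cover do the job. A secondary point to verify is that the relevant $\bm{p}$ values (both the random $\bm{\tilde{p}}_{ij}$ and the limit $\bm{\overline{p}}$) lie in a common region where the $\bm{r}$-modulus is valid; this is immediate because the uniform continuity in $\bm{r}$ holds for all $\bm{p}$ in the compact space of dynamics parameters.
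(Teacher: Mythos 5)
Your proposal is correct and follows essentially the same route as the paper: a finite cover of the compact unit sphere, uniform continuity in $\bm{r}$ to control within-cell variation, continuity in $\bm{p}$ at each cover center, and a three-term triangle inequality, with the only (cosmetic) difference being that you place the high-probability event on $\|\bm{\tilde{p}}_{ij}-\bm{\overline{p}}\|_2<\gamma$ itself rather than on the values $f(\bm{\tilde{p}}_{ij},\bm{r}_k)$ at the centers. One small caution: the paper does not take the $\bm{r}$-modulus to be independent of $\bm{p}$ as given, but derives a uniform modulus $\delta_{\min}>0$ by arguing that $\delta_{\bm{p}}$ is continuous in $\bm{p}$ and minimizing over the compact set of dynamics parameters, so that step deserves to be spelled out rather than asserted as immediate.
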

\begin{proof}
The proof is identical for $j = 1$ and $j = 2$, and so without loss of generality, we set $j = 1$. Applying Proposition \ref{prop:dyn_consistency}, $\bm{\tilde{p}}_{i1} \overset{D}\longrightarrow \bm{\overline{p}}$. By continuity of $f$, we can apply Fact \ref{fact:conv_dist_1} from Appendix \ref{sec:appendix_prob_facts} to obtain that $f(\bm{\tilde{p}}_{i1}, \bm{r}) \overset{D}\longrightarrow f(\bm{\overline{p}}, \bm{r})$ for any $\bm{r}$. Further applying Fact \ref{fact:conv_dist_2} from Appendix \ref{sec:appendix_prob_facts}, $f(\bm{\tilde{p}}_{i1}, \bm{r}) \overset{P}\longrightarrow f(\bm{\overline{p}}, \bm{r})$ for any $\bm{r}$. By definition of convergence in probability, given $\delta$, there exists $i_{\bm{r}}$ such that for $i \ge i_{\bm{r}}$:
\begin{equation}\label{eqn:cont_f_1}
    |f(\bm{\tilde{p}}_{i1}, \bm{r}) - f(\bm{\overline{p}}, \bm{r})| < \frac{1}{3}\varepsilon \text{ with probability at least } 1 - \delta.
\end{equation}

To obtain a high-probability bound that applies over all unit vectors $\bm{r}$, we use compactness of the set of unit vectors. Any infinite cover of a compact set has a finite subcover; in particular, for any $\delta^{\prime} > 0$, the set of unit vectors in $\mathbb{R}^d$ has a finite cover of the form $\{\mathcal{B}(\bm{r}_1, \delta^\prime), \ldots, \mathcal{B}(\bm{r}_K, \delta^\prime)\}$, where $\{\bm{r}_1, \ldots, \bm{r}_K\}$ are unit vectors, and $\mathcal{B}(\bm{r}, \delta^\prime) := \{\bm{r}^\prime \in \mathbb{R}^d \,|\, ||\bm{r}^\prime - \bm{r}||_2 < \delta^\prime\}$ is the $d$-dimensional sphere of radius $\delta^\prime$ centered at $\bm{r}$. Thus, there exists a finite set of unit vectors $\mathcal{U} = \{\bm{r}_1, \ldots, \bm{r}_K\}$ such that for any unit vector $\bm{r}^\prime$, $||\bm{r}_i - \bm{r}^\prime||_2 < \delta^{\prime}$ for some $i \in \{1, \ldots, K\}$. Because $f$ is uniformly-continuous in $\bm{r}$, for any transition dynamics $\bm{p}$, there exists $\delta_{\bm{p}} > 0$ such that for any two unit vectors $\bm{r}, \bm{r}^\prime$ such that $||\bm{r} - \bm{r}^\prime||_2 < \delta_{\bm{p}}$:
\begin{equation}\label{eqn:cont_f_2}
    |f(\bm{p}, \bm{r}) - f(\bm{p}, \bm{r}^\prime)| < \frac{1}{3}\varepsilon.
\end{equation}

Without loss of generality, for each $\bm{p}$, define $\delta_{\bm{p}} := \sup x$ such that $||\bm{r} - \bm{r}^\prime||_2 < x$ implies $|f(\bm{p}, \bm{r}) - f(\bm{p}, \bm{r}^\prime)| \le \frac{1}{6}\varepsilon$. Then, because $f$ is continuous in $\bm{p}$, $\delta_{\bm{p}}$ is also continuous in $\bm{p}$. Because the set of all possible transition probability vectors $\bm{p}$ is compact, and a continuous function over a compact set achieves its minimum value, there exists $\delta_{\text{min}} > 0$ such that $\delta_{\bm{p}} \ge \delta_{\text{min}} > 0$ over all $\bm{p}$. We can define $\mathcal{U}$ such that $\delta^{\prime} \le \delta_{\text{min}}$; then, for any unit vector $\bm{r}^\prime$, there exists $\bm{r} \in \mathcal{U}$ such that $||\bm{r} - \bm{r}^\prime||_2 < \delta_{\text{min}}$, and thus \eqref{eqn:cont_f_2} holds for any $\bm{p}$.

By \eqref{eqn:cont_f_1}, for each $\bm{r}_j \in \mathcal{U}$, there exists there exists $i_{\bm{r}_j}$ such that for $i \ge i_{\bm{r}_j}$: $||f(\bm{\tilde{p}}_{i1}, \bm{r}) - f(\bm{\overline{p}}, \bm{r})||_2 < \frac{1}{3}\varepsilon$ with probability at least $1 - \delta$. Because $\mathcal{U}$ is a finite set, there exists $i^\prime > \max\{i_{\bm{r}_1}, \ldots, i_{\bm{r}_K}\}$ such that for $\bm{r} \in \mathcal{U}$ and $i > i^\prime$:
\begin{equation}\label{eqn:cont_f_3}
    |f(\bm{\tilde{p}}_{i1}, \bm{r}) - f(\bm{\overline{p}}, \bm{r})| < \frac{1}{3}\varepsilon \text{ for each } \bm{r} \in \mathcal{U} \text{ with probability at least } 1 - \delta.
\end{equation}

Therefore, for any unit vector $\bm{r}^\prime$, there exists $\bm{r} \in \mathcal{U}$ such that $||\bm{r} - \bm{r}^\prime||_2 < \delta^\prime \le \delta_{\text{min}}$, and with probability at least $1 - \delta$ for $i > i^\prime$:
\begin{flalign}
|f(\bm{\overline{p}}, \bm{r}^\prime) - f(\bm{\tilde{p}}_{i1}, \bm{r}^\prime)| &= |f(\bm{\overline{p}}, \bm{r}^\prime) - f(\bm{\overline{p}}, \bm{r}) + f(\bm{\overline{p}}, \bm{r}) - f(\bm{\tilde{p}}_{i1}, \bm{r}) + f(\bm{\tilde{p}}_{i1}, \bm{r}) - f(\bm{\tilde{p}}_{i1}, \bm{r}^\prime)| \\ &\overset{(a)}\le |f(\bm{\overline{p}}, \bm{r}^\prime) - f(\bm{\overline{p}}, \bm{r})| + |f(\bm{\overline{p}}, \bm{r}) - f(\bm{\tilde{p}}_{i1}, \bm{r})| + |f(\bm{\tilde{p}}_{i1}, \bm{r}) - f(\bm{\tilde{p}}_{i1}, \bm{r}^\prime)| \\ &\overset{(b)}\le \frac{1}{3}\varepsilon + \frac{1}{3}\varepsilon + \frac{1}{3}\varepsilon = \varepsilon,
\end{flalign}
where (a) holds due to the triangle inequality, and (b) holds via \eqref{eqn:cont_f_2} and \eqref{eqn:cont_f_3}, where we showed that there exists $\delta_{\text{min}}$ such that $0 < \delta_{\text{min}} \le \delta_{\bm{p}}$ for all possible transition dynamics parameters $\bm{p}$.
\end{proof}

Applying Lemma \ref{lemma:f_cont} to the two functions $V(\bm{p}, \bm{r}, \pi_{vi}(\bm{p}, \bm{r})) = \max_{\pi^\prime}V(\bm{p}, \bm{r}, \pi^\prime)$ and $V(\bm{p}, \bm{r}, \pi)$, for any fixed policy $\pi$, yields the following result.

\begin{lemma}\label{lemma:V_conv_in_p}
For any $\varepsilon, \delta > 0$, any policy $\pi$, and any unit reward vector $\bm{r}$, both of the following hold with probability at least $1 - \delta$ for sufficiently-large $i$ and $j \in \{1, 2\}$:
\begin{flalign*}
&|V(\bm{\overline{p}}, \bm{r}, \pi) - V(\bm{\tilde{p}}_{ij}, \bm{r}, \pi)| < \varepsilon \\
&|V(\bm{\overline{p}}, \bm{r}, \pi_{vi}(\bm{\overline{p}}, \bm{r})) - V(\bm{\tilde{p}}_{ij}, \bm{r}, \pi_{vi}(\bm{\tilde{p}}_{ij}, \bm{r}))| < \varepsilon.
\end{flalign*}
\end{lemma}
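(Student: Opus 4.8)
The plan is to specialize Lemma \ref{lemma:f_cont} to two particular choices of the function $f$, namely $f_1(\bm{p}, \bm{r}) := V(\bm{p}, \bm{r}, \pi)$ for a fixed policy $\pi$, and $f_2(\bm{p}, \bm{r}) := V(\bm{p}, \bm{r}, \pi_{vi}(\bm{p}, \bm{r})) = \max_{\pi^\prime} V(\bm{p}, \bm{r}, \pi^\prime)$. Lemma \ref{lemma:f_cont} requires only two regularity properties of its argument $f$: continuity in $\bm{p}$ and uniform continuity in $\bm{r}$. Once these are verified for $f_1$ and $f_2$, the two displayed inequalities follow immediately, since the second inequality is exactly $|f_2(\bm{\overline{p}}, \bm{r}) - f_2(\bm{\tilde{p}}_{ij}, \bm{r})| < \varepsilon$, and Proposition \ref{prop:dyn_consistency} supplies the hypothesis $\bm{\tilde{p}}_{i1}, \bm{\tilde{p}}_{i2} \overset{D}{\longrightarrow} \bm{\overline{p}}$.

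The key structural observation I would use is that, for fixed dynamics and policy, the value function is \emph{linear} in the reward vector. By Definition \ref{defn:value_function} together with the additive-utility assumption, $V(\bm{p}, \bm{r}, \pi) = \bm{r}^T \bm{\mu}(\bm{p}, \pi)$, where $\bm{\mu}(\bm{p}, \pi) \in \mathbb{R}^{SA}$ is the vector of expected state-action visitation counts over the length-$h$ episode under dynamics $\bm{p}$ and policy $\pi$. Because every length-$h$ trajectory visits exactly $h$ state-action pairs, $\bm{\mu}$ has nonnegative entries summing to $h$, so $\|\bm{\mu}(\bm{p}, \pi)\|_2 \le \|\bm{\mu}(\bm{p}, \pi)\|_1 = h$ uniformly over all $\bm{p}$ and $\pi$. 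Linearity in $\bm{r}$ together with this uniform bound gives $|V(\bm{p}, \bm{r}, \pi) - V(\bm{p}, \bm{r}^\prime, \pi)| \le h\|\bm{r} - \bm{r}^\prime\|_2$, so $f_1$ is $h$-Lipschitz — hence uniformly continuous — in $\bm{r}$, with a Lipschitz constant independent of $\bm{p}$. Continuity of $f_1$ in $\bm{p}$ follows because, expanding the finite-horizon Bellman recursion, $\bm{\mu}(\bm{p}, \pi)$ is a polynomial in the entries of $\bm{p}$, hence continuous. Thus $f_1$ satisfies the hypotheses of Lemma \ref{lemma:f_cont}, yielding the first inequality.

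For $f_2$, I would invoke the fact from Definition \ref{defn:value_iteration_policy} that value iteration optimizes over the finite class of $A^{Sh}$ deterministic policies, so $f_2(\bm{p}, \bm{r}) = \max_{\pi^\prime} V(\bm{p}, \bm{r}, \pi^\prime)$ is a maximum of finitely many functions. A finite maximum of functions continuous in $\bm{p}$ is continuous in $\bm{p}$, giving the first regularity condition. For uniform continuity in $\bm{r}$, I would use the elementary bound $|\max_k a_k - \max_k b_k| \le \max_k |a_k - b_k|$: since each $V(\bm{p}, \cdot, \pi^\prime)$ is $h$-Lipschitz in $\bm{r}$ by the previous paragraph, their pointwise maximum is also $h$-Lipschitz in $\bm{r}$, hence uniformly continuous. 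Applying Lemma \ref{lemma:f_cont} to $f_2$ then delivers the second inequality.

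No single step is a serious obstacle here; the work is entirely in confirming the two regularity conditions. The part requiring the most care is the uniform-continuity-in-$\bm{r}$ claim: one must ensure the Lipschitz constant $h$ does not blow up as $\bm{p}$ ranges over the compact set of valid transition-probability vectors. This is precisely why the visitation-count bound $\|\bm{\mu}\|_2 \le h$ being \emph{independent of} $\bm{p}$ is essential, and it dovetails with the internal argument in Lemma \ref{lemma:f_cont} that takes a minimum of the modulus of continuity over the compact dynamics set. Finally, since Lemma \ref{lemma:f_cont} already folds in both $j = 1$ and $j = 2$ as well as the high-probability qualifier, no further work is needed to handle the two sampled policies.
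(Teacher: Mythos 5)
Your proposal is correct and follows essentially the same route as the paper: both apply Lemma \ref{lemma:f_cont} to $f_1(\bm{p},\bm{r}) = V(\bm{p},\bm{r},\pi)$ and $f_2(\bm{p},\bm{r}) = \max_{\pi'} V(\bm{p},\bm{r},\pi')$, verifying continuity in $\bm{p}$ and uniform continuity in $\bm{r}$ via linearity of the value function in the rewards and the finite-maximum argument. Your explicit visitation-count representation $V = \bm{r}^T\bm{\mu}(\bm{p},\pi)$ with $\|\bm{\mu}\|_1 = h$ makes the $\bm{p}$-independent Lipschitz constant more transparent than the paper's version, but this is a refinement rather than a different argument.
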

\begin{proof}
Both statements follow by applying Lemma \ref{lemma:f_cont}. First, consider the function $f_1(\bm{p}, \bm{r}) := V(\bm{p}, \bm{r}, \pi)$ for a fixed policy $\pi$. The value function $V(\bm{p}, \bm{r}, \pi)$ is continuous in both $\bm{p}$ and $\bm{r}$. Furthermore, it is linear in $\bm{r}$ and therefore uniformly-continuous in $\bm{r}$: for a linear function $g(\bm{z}) = \bm{a}^T\bm{z}$ and for any $\varepsilon^\prime > 0$, if $\delta^\prime := \frac{\varepsilon^\prime}{||\bm{a}||}$, then for any $\bm{z}_1, \bm{z}_2$ such that $||\bm{z}_1 - \bm{z}_2|| < \delta^\prime$:
\begin{equation*}
    |g(\bm{z}_1) - g(\bm{z}_2)| = |\bm{a}^T(\bm{z}_1 - \bm{z}_2)| \le ||\bm{a}||_2 ||\bm{z}_1 - \bm{z}_2||_2 < ||\bm{a}||_2\delta^\prime = \varepsilon^\prime.
\end{equation*}

Thus, $f_1$ satisfies the conditions of Lemma \ref{lemma:f_cont} for any fixed $\pi$, and so for $i > i_{\pi}$, $|V(\bm{\overline{p}}, \bm{r}, \pi) - V(\bm{\tilde{p}}_{ij}, \bm{r}, \pi)| < \varepsilon$ with probability at least $1 - \delta$. Because there are finitely-many deterministic policies $\pi$, we can set $i > \max_{\pi} i_{\pi}$, so that the statement holds jointly over all $\pi$.

Next, let $f_2(\bm{p}, \bm{r}) = \max_{\pi}V(\bm{p}, \bm{r}, \pi) = V(\bm{p}, \bm{r}, \pi_{vi}(\bm{p}, \bm{r}))$. A maximum over finitely-many continuous functions is continuous, and a maximum over finitely-many uniformly-continuous functions is uniformly-continuous. Therefore, $f_2$ also satisfies the conditions of Lemma \ref{lemma:f_cont}.
\end{proof}

We will show convergence in distribution of the reward samples, $\bm{\tilde{r}}_{i1}, \bm{\tilde{r}}_{i2} \overset{D}\longrightarrow \bm{\overline{r}}$, by applying Lemma \ref{lemma:reward_consistency_part1} and demonstrating that $\frac{1}{\beta_i(\delta)^2} \lambda_d^{(i)} \longrightarrow \infty$ as $i \longrightarrow \infty$. This result is proven by contradiction: intuitively, if $\frac{1}{\beta_i(\delta)^2} \lambda_d^{(i)}$ is upper-bounded, then \algo~has a lower-bounded probability of selecting policies that increase $\lambda_d^{(i)}$. Importantly, abbreviating $\lambda_d^{(i)}
$'s eigenvector $\bm{v}_d^{(i)}$ as $\bm{v}$, this proof is contingent upon there existing a pair of policies $\pi_1, \pi_2$ such that:
\begin{equation*}
    \left|\mathbb{E}[\bm{x}_i^T \bm{v} \,|\, \pi_{i1} = \pi_1, \pi_{i2} = \pi_2]\right| = \left|\mathbb{E}[(\bm{x}_{i2} - \bm{x}_{i1})^T \bm{v} \,|\, \pi_{i1} = \pi_1, \pi_{i2} = \pi_2]\right| \overset{(a)}= \left|V(\bm{\overline{p}}, \bm{v}, \pi_1) - V(\bm{\overline{p}}, \bm{v}, \pi_2)\right| > 0,
\end{equation*}
where (a) holds because the value function $V(\bm{\overline{p}}, \bm{v}, \pi)$ gives the expected total reward of $\pi$ under the reward vector $\bm{v}$. In other words, the proof will require,
\begin{equation}\label{eqn:relevance_condition}
    \max_{\pi_1, \pi_2}\left|\mathbb{E}[\bm{x}_i^T \bm{v} \,|\, \pi_{i1} = \pi_1, \pi_{i2} = \pi_2]\right| = \max_{\pi_1, \pi_2}\left|V(\bm{\overline{p}}, \bm{v}, \pi_1) - V(\bm{\overline{p}}, \bm{v}, \pi_2)\right| > 0.
\end{equation}
If this does not hold, then it is impossible to select a pair of policies under which the observation $\bm{x}_i$ is not expected to be orthogonal to the eigenvector $\bm{v}$. 

We argue that without loss of generality, \eqref{eqn:relevance_condition} can be assumed to hold for all eigenvectors of $M_i$. Note that if \eqref{eqn:relevance_condition} does not hold, then $\mathbb{E}[\bm{x}_{i1}^T\bm{v} \,|\, \pi_{i1} = \pi] = V(\bm{\overline{p}}, \bm{v}, \pi)$ is fixed for all $\pi$. Given $\bm{\overline{p}}$, by linearity of the value function $V$ in the rewards, any $\bm{v}$-directed component of $\bm{r}$ does not affect policy selection: $V(\bm{\overline{p}}, \bm{r}, \pi) = V(\bm{\overline{p}}, \bm{r}^{\bm{v}} + \bm{r}^{\bm{v}\perp}, \pi) = V(\bm{\overline{p}}, \bm{r}^{\bm{v}}, \pi) + V(\bm{\overline{p}}, \bm{r}^{\bm{v}\perp}, \pi)$, where $\bm{r}^{\bm{v}}$ is the projection of $\bm{r}$ onto the $\bm{v}$-direction and $\bm{r}^{\bm{v}\perp}$ is its orthogonal complement in $\mathbb{R}^d$. Because $V(\bm{\overline{p}}, \bm{r}^{\bm{v}}, \pi)$ does not depend on $\pi$, $\pi_{vi}(\bm{\overline{p}}, \bm{r}) = \text{argmax}_{\pi} V(\bm{\overline{p}}, \bm{r}, \pi) = \text{argmax}_{\pi} V(\bm{\overline{p}}, \bm{r}^{\bm{v}\perp}, \pi)$.

We call any vector $\bm{v}$ which does \textit{not} satisfy \eqref{eqn:relevance_condition} an \textit{irrelevant dimension} of the rewards: given $\bm{\overline{p}}$, removing the $\bm{v}$-directed component of $\bm{r}$ does not influence policy selection. The following lemma demonstrates that such vectors remain irrelevant towards policy selection when $\bm{\overline{p}}$ is unknown, but once $\bm{\tilde{p}}_{i1}, \bm{\tilde{p}}_{i2}$ have sufficiently converged to $\bm{\overline{p}}$ in distribution. 

\begin{lemma}\label{lemma:relevance}
For any reward vector $\bm{r} \in \mathbb{R}^d$, let $\bm{r}^{\mathrm{rel}}$ be the projection of $\bm{r}$ onto the relevant subspace (for which \eqref{eqn:relevance_condition} holds), and $\bm{r}^\perp$ be its orthogonal complement in $\mathbb{R}^d$, such that $\bm{r} = \bm{r}^{\mathrm{rel}} + \bm{r}^\perp$, and $\bm{r}^\perp$ belongs to the subspace of irrelevant dimensions (where \eqref{eqn:relevance_condition} does \emph{not} hold). The reward samples on iteration $i$ are $\bm{\tilde{r}}_{ij}$, $j \in \{1, 2\}$. Then, for any $\varepsilon, \delta > 0$, there exists $i_0$ such that for $i > i_0$, with probability at least $1 - \delta$:
\begin{equation*}
    |V(\bm{\overline{p}}, \bm{\tilde{r}}_{ij}, \pi_{ij}) - V(\bm{\overline{p}}, \bm{\tilde{r}}_{ij}, \pi_{vi}(\bm{\tilde{p}}_{ij}, \bm{\tilde{r}}_{ij}^{\mathrm{rel}}))|  < \varepsilon.
\end{equation*}
In other words, with respect to the sampled rewards $\bm{\tilde{r}}_{ij}$, the expected reward of the selected policy $\pi_{ij} = \pi_{vi}(\bm{\tilde{p}}_{ij}, \bm{\tilde{r}}_{ij})$ is close to the expected reward of the policy that \textit{would have been} selected were $\bm{\tilde{r}}_{ij}$ replaced by $\bm{\tilde{r}}_{ij}^{\mathrm{rel}}$.
\end{lemma}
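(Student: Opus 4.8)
The plan is to build directly on the defining property of the irrelevant subspace established just before the statement: for any $\bm{v}$ that fails \eqref{eqn:relevance_condition}, $V(\bm{\overline{p}}, \bm{v}, \pi)$ is constant over $\pi$, so by linearity of the value function in the reward, $V(\bm{\overline{p}}, \bm{\tilde{r}}_{ij}^{\perp}, \pi) = c_{ij}$ is the same constant for every policy $\pi$. Writing $\bm{\tilde{r}}_{ij} = \bm{\tilde{r}}_{ij}^{\mathrm{rel}} + \bm{\tilde{r}}_{ij}^{\perp}$ and using linearity once more, $V(\bm{\overline{p}}, \bm{\tilde{r}}_{ij}, \pi) = V(\bm{\overline{p}}, \bm{\tilde{r}}_{ij}^{\mathrm{rel}}, \pi) + c_{ij}$ for every $\pi$. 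Since the shared constant $c_{ij}$ cancels between the two policies appearing in the target, the claim reduces to bounding $|V(\bm{\overline{p}}, \bm{\tilde{r}}_{ij}^{\mathrm{rel}}, \pi_{ij}) - V(\bm{\overline{p}}, \bm{\tilde{r}}_{ij}^{\mathrm{rel}}, \pi_{vi}(\bm{\tilde{p}}_{ij}, \bm{\tilde{r}}_{ij}^{\mathrm{rel}}))|$, i.e.\ a difference of the values of two policies evaluated under the \emph{same} model $(\bm{\overline{p}}, \bm{\tilde{r}}_{ij}^{\mathrm{rel}})$.

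The next step is to show that both policies are nearly optimal for $(\bm{\overline{p}}, \bm{\tilde{r}}_{ij}^{\mathrm{rel}})$, so that their values must be close. Abbreviate $\pi_A := \pi_{ij} = \pi_{vi}(\bm{\tilde{p}}_{ij}, \bm{\tilde{r}}_{ij})$ and $\pi_B := \pi_{vi}(\bm{\tilde{p}}_{ij}, \bm{\tilde{r}}_{ij}^{\mathrm{rel}})$. For $\pi_B$, which is exactly optimal for $(\bm{\tilde{p}}_{ij}, \bm{\tilde{r}}_{ij}^{\mathrm{rel}})$, I would apply Lemma \ref{lemma:V_conv_in_p} (whose hypotheses hold by Proposition \ref{prop:dyn_consistency}) to transfer its optimality from $\bm{\tilde{p}}_{ij}$ to $\bm{\overline{p}}$, obtaining that $V(\bm{\overline{p}}, \bm{\tilde{r}}_{ij}^{\mathrm{rel}}, \pi_B)$ is within a small gap of $\max_{\pi} V(\bm{\overline{p}}, \bm{\tilde{r}}_{ij}^{\mathrm{rel}}, \pi)$. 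For $\pi_A$, note that it maximizes $V(\bm{\tilde{p}}_{ij}, \bm{\tilde{r}}_{ij}^{\mathrm{rel}}, \cdot) + V(\bm{\tilde{p}}_{ij}, \bm{\tilde{r}}_{ij}^{\perp}, \cdot)$, whereas $\pi_B$ maximizes only the first term; an argmax-perturbation inequality then gives $0 \le V(\bm{\tilde{p}}_{ij}, \bm{\tilde{r}}_{ij}^{\mathrm{rel}}, \pi_B) - V(\bm{\tilde{p}}_{ij}, \bm{\tilde{r}}_{ij}^{\mathrm{rel}}, \pi_A)$ bounded by the range of $V(\bm{\tilde{p}}_{ij}, \bm{\tilde{r}}_{ij}^{\perp}, \cdot)$ over policies. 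This range is exactly zero under $\bm{\overline{p}}$ (the irrelevant value is policy-independent) and is controlled under $\bm{\tilde{p}}_{ij}$ by a second application of Lemma \ref{lemma:V_conv_in_p}. A final transfer of $\pi_A$'s near-optimality from $\bm{\tilde{p}}_{ij}$ back to $\bm{\overline{p}}$, together with the triangle inequality, would then show that both $V(\bm{\overline{p}}, \bm{\tilde{r}}_{ij}^{\mathrm{rel}}, \pi_A)$ and $V(\bm{\overline{p}}, \bm{\tilde{r}}_{ij}^{\mathrm{rel}}, \pi_B)$ lie within $\varepsilon/2$ of the common optimum, yielding the claim.

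The main obstacle is that the reward samples $\bm{\tilde{r}}_{ij}$ are \emph{not} unit vectors, whereas Lemma \ref{lemma:V_conv_in_p} is stated for unit reward vectors and delivers only qualitative convergence (no rate). Because the posterior standard deviation scales with $\beta_i(\delta) = O(\sqrt{\log i})$ in directions where $M_i$ is poorly conditioned, $\|\bm{\tilde{r}}_{ij}^{\mathrm{rel}}\|$ and $\|\bm{\tilde{r}}_{ij}^{\perp}\|$ may grow, and each invocation of Lemma \ref{lemma:V_conv_in_p} on the non-normalized reward incurs an error of order (sample norm)$\times$(dynamics-value discrepancy). The delicate point is ensuring these products stay below $\varepsilon$ uniformly for all large $i$. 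The cleanest route is to use homogeneity of $V$ in $\bm{r}$ to reduce to the unit direction $\bm{\tilde{r}}_{ij}/\|\bm{\tilde{r}}_{ij}\|$, where Lemma \ref{lemma:V_conv_in_p} (via the uniform-over-unit-vectors bound of Lemma \ref{lemma:f_cont}) applies simultaneously to every direction; then condition on the high-probability event $\|\bm{z}_i\|_2 \le a$ so that, by Lemma \ref{lemma:MAP_bound}, $\|\bm{\tilde{r}}_{ij}\|_2 \le b + a\,\beta_i(\delta)/\sqrt{\lambda}$, and absorb the slow logarithmic growth of this bound against the faster concentration of the dynamics guaranteed by Lemma \ref{lemma:dyn_concentration}. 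Making this trade-off quantitatively airtight — i.e.\ pairing the growing sample norm with an adequately fast dynamics-value convergence — is the step I expect to require the most care.
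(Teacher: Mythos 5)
Your proposal is correct in substance but takes a genuinely different route from the paper's. Both proofs share the same opening move: since $V(\bm{\overline{p}}, \bm{\tilde{r}}_{ij}^{\perp}, \pi)$ is constant over $\pi$, the irrelevant component contributes only an additive constant under the true dynamics (the paper records this as \eqref{eqn:relevance}). From there the paper proceeds by inserting $V(\bm{\tilde{p}}_{i1}, \bm{\tilde{r}}_{i1}, \pi_{vi}(\bm{\tilde{p}}_{i1}, \bm{\tilde{r}}_{i1}))$ and $V(\bm{\tilde{p}}_{i1}, \bm{\tilde{r}}_{i1}^{\mathrm{rel}}, \pi_{vi}(\bm{\tilde{p}}_{i1}, \bm{\tilde{r}}_{i1}^{\mathrm{rel}}))$ as intermediate quantities and telescoping into the four differences of \eqref{eqn:line_relevance_2}, each a ``same reward, swap $\bm{\overline{p}} \leftrightarrow \bm{\tilde{p}}_{i1}$'' term bounded by $\tfrac{1}{4}\varepsilon$ via Lemma \ref{lemma:V_conv_in_p}. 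You instead cancel the constant first, reduce to comparing $\pi_A$ and $\pi_B$ under the single model $(\bm{\overline{p}}, \bm{\tilde{r}}_{ij}^{\mathrm{rel}})$, and argue both are near-optimal there — using an argmax-perturbation inequality ($f(\pi_B) - f(\pi_A) \le \max_\pi g - \min_\pi g$ when $\pi_A$ maximizes $f+g$ and $\pi_B$ maximizes $f$) to control the gap between the two argmaxes by the policy-dependence of the irrelevant value under the \emph{sampled} dynamics, which vanishes as $\bm{\tilde{p}}_{ij} \to \bm{\overline{p}}$. This is arguably more transparent: it isolates exactly where the approximation enters (the irrelevant component is only \emph{exactly} policy-independent under $\bm{\overline{p}}$, not under $\bm{\tilde{p}}_{ij}$), whereas the paper's telescoping obscures this. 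Your closing concern about normalization is also well taken and is not an artifact of your route: the paper's own proof invokes Lemma \ref{lemma:V_conv_in_p} — which is stated only for \emph{unit} reward vectors — directly on the unnormalized samples $\bm{\tilde{r}}_{i1}$ and $\bm{\tilde{r}}_{i1}^{\mathrm{rel}}$, whose norms can grow like $\beta_i(\delta) = O(\sqrt{\log i})$ on the high-probability event; rescaling by homogeneity of $V$ then requires the per-iteration dynamics error to shrink faster than $1/\sqrt{\log i}$, which the purely asymptotic statements of Lemmas \ref{lemma:dyn_concentration} and \ref{lemma:f_cont} do not by themselves deliver. So your proposal does not introduce a new gap — it makes explicit a quantitative step that the paper's proof passes over silently, and your suggested repair (normalize, use the uniform-over-unit-vectors bound of Lemma \ref{lemma:f_cont}, then trade the logarithmic norm growth against the dynamics concentration rate) is the natural way to close it for both arguments.
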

\begin{proof}
We prove the result for $j = 1$ (the proof is identical for $j = 2$). Because $V(\bm{\overline{p}}, \bm{\tilde{r}}_{i1}^{\perp}, \pi)$ is constant for all $\pi$, we define $w := V(\bm{\overline{p}}, \bm{\tilde{r}}_{i1}^{\perp}, \pi)$ for convenience. First, we show that under the true transition dynamics $\bm{\overline{p}}$, the irrelevant dimensions of $\bm{\tilde{r}}_{i1}$ do not affect policy selection. For any $\pi$,
\begin{flalign}
    V(\bm{\overline{p}}, \bm{\tilde{r}}_{i1}, \pi_{vi}(\bm{\overline{p}}, \bm{\tilde{r}}_{i1})) &= \max_\pi V(\bm{\overline{p}}, \bm{\tilde{r}}_{i1}, \pi) \overset{(a)}= \max_\pi \left[V(\bm{\overline{p}}, \bm{\tilde{r}}_{i1}^{\text{rel}}, \pi) + V(\bm{\overline{p}}, \bm{\tilde{r}}_{i1}^{\perp}, \pi)\right] \nonumber \\ &\overset{(b)}= V(\bm{\overline{p}}, \bm{\tilde{r}}_{i1}^{\text{rel}}, \pi_{vi}(\bm{\overline{p}}, \bm{\tilde{r}}_{i1}^{\text{rel}})) + V(\bm{\overline{p}}, \bm{\tilde{r}}_{i1}^{\perp}, \pi_{vi}(\bm{\overline{p}}, \bm{\tilde{r}}_{i1}^{\text{rel}})) \overset{(c)}= V(\bm{\overline{p}}, \bm{\tilde{r}}_{i1}, \pi_{vi}(\bm{\overline{p}}, \label{eqn:relevance} \bm{\tilde{r}}_{i1}^{\text{rel}})),
\end{flalign}
where (a) and (c) hold because $\bm{\tilde{r}}_{i1} = \bm{\tilde{r}}_{i1}^{\text{rel}} + \bm{\tilde{r}}_{i1}^{\perp}$ and the value function is linear in the rewards, and (b) holds because $V(\bm{\overline{p}}, \bm{\tilde{r}}_{i1}^{\perp}, \pi) = w$ is constant across all policies $\pi$.

To upper-bound $|V(\bm{\overline{p}}, \bm{\tilde{r}}_{i1}, \pi_{i1}) - V(\bm{\overline{p}}, \bm{\tilde{r}}_{i1}, \pi_{vi}(\bm{\tilde{p}}_{i1}, \bm{\tilde{r}}_{i1}^{\mathrm{rel}}))|$, we write:
\begin{flalign}
    |V(\bm{\overline{p}}, &\bm{\tilde{r}}_{i1}, \pi_{i1}) - V(\bm{\overline{p}}, \bm{\tilde{r}}_{i1}, \pi_{vi}(\bm{\tilde{p}}_{i1}, \bm{\tilde{r}}_{i1}^{\mathrm{rel}}))| \overset{(a)}= |V(\bm{\overline{p}}, \bm{\tilde{r}}_{i1}, \pi_{vi}(\bm{\tilde{p}}_{i1}, \bm{\tilde{r}}_{i1})) - (V(\bm{\overline{p}}, \bm{\tilde{r}}_{i1}^{\mathrm{rel}}, \pi_{vi}(\bm{\tilde{p}}_{i1}, \bm{\tilde{r}}_{i1}^{\mathrm{rel}})) + w)| \nonumber \\ &= |V(\bm{\overline{p}}, \bm{\tilde{r}}_{i1}, \pi_{vi}(\bm{\tilde{p}}_{i1}, \bm{\tilde{r}}_{i1})) - (V(\bm{\overline{p}}, \bm{\tilde{r}}_{i1}^{\mathrm{rel}}, \pi_{vi}(\bm{\tilde{p}}_{i1}, \bm{\tilde{r}}_{i1}^{\mathrm{rel}})) + w) \nonumber \\ &\hspace{5mm}- V(\bm{\tilde{p}}_{i1}, \bm{\tilde{r}}_{i1}, \pi_{vi}(\bm{\tilde{p}}_{i1}, \bm{\tilde{r}}_{i1})) + V(\bm{\tilde{p}}_{i1}, \bm{\tilde{r}}_{i1}, \pi_{vi}(\bm{\tilde{p}}_{i1}, \bm{\tilde{r}}_{i1}))| - V(\bm{\overline{p}}, \bm{\tilde{r}}_{i1}, \pi_{vi}(\bm{\overline{p}}, \bm{\tilde{r}}_{i1})) + V(\bm{\overline{p}}, \bm{\tilde{r}}_{i1}, \pi_{vi}(\bm{\overline{p}}, \bm{\tilde{r}}_{i1}))| \nonumber \\ &\hspace{5mm}- (V(\bm{\tilde{p}}_{i1}, \bm{\tilde{r}}_{i1}^{\text{rel}}, \pi_{vi}(\bm{\tilde{p}}_{i1}, \bm{\tilde{r}}_{i1}^{\text{rel}})) + w) + (V(\bm{\tilde{p}}_{i1}, \bm{\tilde{r}}_{i1}^{\text{rel}}, \pi_{vi}(\bm{\tilde{p}}_{i1}, \bm{\tilde{r}}_{i1}^{\text{rel}})) + w)| \nonumber \\ &\overset{(b)}\le |V(\bm{\overline{p}}, \bm{\tilde{r}}_{i1}, \pi_{vi}(\bm{\tilde{p}}_{i1}, \bm{\tilde{r}}_{i1})) - V(\bm{\tilde{p}}_{i1}, \bm{\tilde{r}}_{i1}, \pi_{vi}(\bm{\tilde{p}}_{i1}, \bm{\tilde{r}}_{i1})| + |V(\bm{\tilde{p}}_{i1}, \bm{\tilde{r}}_{i1}, \pi_{vi}(\bm{\tilde{p}}_{i1}, \bm{\tilde{r}}_{i1})) - V(\bm{\overline{p}}, \bm{\tilde{r}}_{i1}, \pi_{vi}(\bm{\overline{p}}, \bm{\tilde{r}}_{i1})|  \nonumber \\ &\hspace{5mm}+ |V(\bm{\overline{p}}, \bm{\tilde{r}}_{i1}, \pi_{vi}(\bm{\overline{p}}, \bm{\tilde{r}}_{i1}) - (V(\bm{\tilde{p}}_{i1}, \bm{\tilde{r}}_{i1}^{\text{rel}}, \pi_{vi}(\bm{\tilde{p}}_{i1}, \bm{\tilde{r}}_{i1}^{\text{rel}})) + w)|\label{eqn:line_relevance} \\ &\hspace{5mm}+ |(V(\bm{\tilde{p}}_{i1}, \bm{\tilde{r}}_{i1}^{\text{rel}}, \pi_{vi}(\bm{\tilde{p}}_{i1}, \bm{\tilde{r}}_{i1}^{\text{rel}})) + w) - (V(\bm{\overline{p}}, \bm{\tilde{r}}_{i1}^{\mathrm{rel}}, \pi_{vi}(\bm{\tilde{p}}_{i1}, \bm{\tilde{r}}_{i1}^{\mathrm{rel}})) + w)| \nonumber \\ &\overset{(c)}\le |V(\bm{\overline{p}}, \bm{\tilde{r}}_{i1}, \pi_{vi}(\bm{\tilde{p}}_{i1}, \bm{\tilde{r}}_{i1})) - V(\bm{\tilde{p}}_{i1}, \bm{\tilde{r}}_{i1}, \pi_{vi}(\bm{\tilde{p}}_{i1}, \bm{\tilde{r}}_{i1})|  + |V(\bm{\tilde{p}}_{i1}, \bm{\tilde{r}}_{i1}, \pi_{vi}(\bm{\tilde{p}}_{i1}, \bm{\tilde{r}}_{i1})) - V(\bm{\overline{p}}, \bm{\tilde{r}}_{i1}, \pi_{vi}(\bm{\overline{p}}, \bm{\tilde{r}}_{i1})|  \nonumber \\ &\hspace{5mm}+ |V(\bm{\overline{p}}, \bm{\tilde{r}}_{i1}^{\text{rel}}, \pi_{vi}(\bm{\overline{p}}, \bm{\tilde{r}}_{i1}^{\text{rel}})) - V(\bm{\tilde{p}}_{i1}, \bm{\tilde{r}}_{i1}^{\text{rel}}, \pi_{vi}(\bm{\tilde{p}}_{i1}, \bm{\tilde{r}}_{i1}^{\text{rel}}))| \nonumber \\ &\hspace{5mm}+ |V(\bm{\tilde{p}}_{i1}, \bm{\tilde{r}}_{i1}^{\text{rel}}, \pi_{vi}(\bm{\tilde{p}}_{i1}, \bm{\tilde{r}}_{i1}^{\text{rel}})) - V(\bm{\overline{p}}, \bm{\tilde{r}}_{i1}^{\mathrm{rel}}, \pi_{vi}(\bm{\tilde{p}}_{i1}, \bm{\tilde{r}}_{i1}^{\mathrm{rel}}))|, \label{eqn:line_relevance_2}
\end{flalign}
where (a) applies $\bm{\tilde{r}}_{i1} = \bm{\tilde{r}}_{i1}^{\text{rel}} + \bm{\tilde{r}}_{i1}^{\perp}$, linearity of the value function in the rewards, and the definition of $w$; (b) rearranges terms and uses the triangle inequality; and (c) applies \eqref{eqn:relevance} to line \eqref{eqn:line_relevance}, that is,  $V(\bm{\overline{p}}, \bm{\tilde{r}}_{i1}, \pi_{vi}(\bm{\overline{p}}, \bm{\tilde{r}}_{i1})) = V(\bm{\overline{p}}, \bm{\tilde{r}}_{i1}^{\text{rel}}, \pi_{vi}(\bm{\overline{p}}, \bm{\tilde{r}}_{i1}^{\text{rel}})) + w$.

Each of the four terms in \eqref{eqn:line_relevance_2} can be upper-bounded with high probability using Lemma \ref{lemma:V_conv_in_p}. In particular, for large enough $i$, each term is less than $\frac{1}{4}\varepsilon$ with probability at least $1 - \frac{1}{4}\delta$. Therefore, the desired result holds.
\end{proof}

\begin{remark}\label{rmk:relevant_evecs}
The reward dimensions could be irrelevant due to a number of reasons. For instance, because the elements of $\bm{x}_i := \bm{x}_{i2} - \bm{x}_{i1}$ must sum to zero, the vector $[1, 1, \ldots, 1]^T$ must always be orthogonal to every observation $\bm{x}_i$. Alternatively, the MDP's transition dynamics could constrain the expected number of visits to a particular state to be constant regardless of the policy.

Such constraints result in a subspace of $\mathbb{R}^d$ that is irrelevant to learning the optimal policy once the transition dynamics model has converged sufficiently. Therefore, we only need Lemma \ref{lemma:reward_consistency_part1} to be satisfied for eigenvalues of $M_i$ along relevant dimensions in order to asymptotically select the optimal policy. Thus, we can assume that sampled reward vectors $\bm{\tilde{r}}_{i1}, \bm{\tilde{r}}_{i2}$ have been projected onto the relevant subspace of $\mathbb{R}^d$. As a result, in proving that $\frac{\beta_i(\delta)^2}{\lambda_d^{(i)}} \overset{D}\longrightarrow 0$ as $i \longrightarrow \infty$, we can assume that all eigenvectors of $M_i$ belong to the relevant subspace without loss of generality. More formally, we assume without loss of generality that all eigenvectors $\{\bm{v}_j^{(i)}\}$ of $M_i$ satisfy \eqref{eqn:relevance_condition}.
\end{remark}

In Lemma \ref{lemma:reward_consistency_part4}, we will show that as $i \longrightarrow \infty$, $\frac{\beta_i(\delta)^2}{\lambda_d^{(i)}} \overset{D}\longrightarrow 0$. Combined with Lemma \ref{lemma:reward_consistency_part1}, this proves Proposition \ref{prop:reward_consistency}, that the reward samples are convergent in distribution to $\bm{\overline{r}}$.  Lemma \ref{lemma:reward_consistency_part4} proves this result via contradiction, by first assuming that there exists an $i_0$ such that for all $i \ge i_0$, $\frac{\beta_i(\delta)^2}{\lambda_d^{(i)}} \ge \alpha$. The following two lemmas, Lemmas \ref{lemma:reward_consistency_part2}-\ref{lemma:reward_consistency_part3}, are intermediate results leading to Lemma \ref{lemma:reward_consistency_part4}, which utilize this contradiction hypothesis as a premise. In particular, Lemma \ref{lemma:reward_consistency_part2} demonstrates that under the contradiction hypothesis, there is a non-decaying probability of sampling rewards $\bm{\tilde{r}}_{i1}, \bm{\tilde{r}}_{i2}$ that are highly-aligned with the eigenvector $\bm{v}_d^{(i)}$ of $M_i^{-1}$ corresponding to its largest eigenvalue, $(\lambda_d^{(i)})^{-1}$.

\begin{lemma}\label{lemma:reward_consistency_part2}
Assume that for a given iteration $i$, $\beta_i(\delta)^2 \left(\lambda_d^{(i)}\right)^{-1} \ge \alpha$. Then, the reward samples $\bm{\tilde{r}}_{i1}, \bm{\tilde{r}}_{i2}$ satisfy:
\begin{flalign}
P(\bm{\tilde{r}}_{i1}^T\bm{v}_d^{(i)} &\ge a\max_{j < d}|\bm{\tilde{r}}_{i1}^T\bm{v}_j^{(i)}|) \ge c(a) > 0, \label{eqn:reward_alignment_1}\\
P(\bm{\tilde{r}}_{i2}^T\bm{v}_d^{(i)} &\le -a\max_{j < d}|\bm{\tilde{r}}_{i2}^T\bm{v}_j^{(i)}|) \ge c(a) > 0, \label{eqn:reward_alignment_2}
\end{flalign}
where $c: \mathbb{R}_+ \longrightarrow \mathbb{R}_+$ is a continuous, monotonically-decreasing function.
\end{lemma}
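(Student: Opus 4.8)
The plan is to reparametrize the Gaussian reward sample, project it onto the eigenbasis of $M_i$, and then exhibit an explicit event of non-vanishing probability on which the claimed alignment holds. Writing $\bm{\tilde{r}}_{i1} = \bm{\hat{r}}_i + \beta_i(\delta) M_i^{-1/2}\bm{z}$ with $\bm{z} \sim \mathcal{N}(\bm{0}, I)$ (as in \eqref{eqn:reward_with_z}), I would project onto each eigenvector $\bm{v}_j^{(i)}$ of $M_i$. Since $M_i^{-1/2}\bm{v}_j^{(i)} = (\lambda_j^{(i)})^{-1/2}\bm{v}_j^{(i)}$, this yields
\[ U_j := \bm{\tilde{r}}_{i1}^T \bm{v}_j^{(i)} = \mu_j + \sigma_j w_j, \quad \mu_j := \bm{\hat{r}}_i^T\bm{v}_j^{(i)}, \quad \sigma_j := \beta_i(\delta)(\lambda_j^{(i)})^{-1/2}, \quad w_j := \bm{z}^T \bm{v}_j^{(i)}. \]
Because $\{\bm{v}_j^{(i)}\}$ is orthonormal (Definition \ref{defn:evecs_evals}) and $\bm{z}$ is standard Gaussian, the $w_j$ are i.i.d.\ $\mathcal{N}(0,1)$, and the target event $\{\bm{\tilde{r}}_{i1}^T\bm{v}_d^{(i)} \ge a\max_{j<d}|\bm{\tilde{r}}_{i1}^T\bm{v}_j^{(i)}|\}$ becomes $\{U_d \ge a\max_{j<d}|U_j|\}$.

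Next I would collect three uniform bounds. First, Lemma \ref{lemma:MAP_bound} gives $|\mu_j| = |\bm{\hat{r}}_i^T\bm{v}_j^{(i)}| \le \|\bm{\hat{r}}_i\|_2 \le b$ for every $j$ and every $i$. Second, since $\lambda_d^{(i)}$ is the smallest eigenvalue of $M_i$, the quantity $\sigma_d$ is the largest of the $\sigma_j$, so $\sigma_j \le \sigma_d$ for all $j$. Third, the hypothesis $\beta_i(\delta)^2(\lambda_d^{(i)})^{-1} \ge \alpha$ is exactly $\sigma_d^2 \ge \alpha$, i.e.\ $\sigma_d \ge \sqrt{\alpha}$. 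Fixing any constant $K > 0$ (say $K=1$), I would then consider the event
\[ E := \left\{ w_d \ge T(a) \right\} \cap \bigcap_{j<d}\left\{|w_j| \le K\right\}, \quad T(a) := aK + \frac{(1+a)b}{\sqrt{\alpha}}. \]
On $E$ one has $|U_j| \le b + \sigma_j K \le b + \sigma_d K$ for each $j < d$, while $U_d \ge -b + \sigma_d w_d$; a short calculation using $\sigma_d \ge \sqrt{\alpha}$ shows that $w_d \ge T(a)$ forces $U_d \ge a(b + \sigma_d K) \ge a\max_{j<d}|U_j|$, so $E$ implies the target inequality.

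Finally, by independence of the $w_j$,
\[ P(E) = \bigl[1 - \Phi(T(a))\bigr]\bigl[2\Phi(K) - 1\bigr]^{d-1} =: c(a), \]
which is strictly positive for every finite $a$, continuous in $a$ (as $T$ is affine and $\Phi$ is continuous), and monotonically decreasing in $a$ (since $T(a)$ increases and $1-\Phi$ decreases); crucially, $c$ depends only on the fixed constants $b$, $\alpha$, $d$, $K$, and not on $i$. The companion bound \eqref{eqn:reward_alignment_2} follows identically via the symmetric event $\{w_d \le -T(a)\} \cap \bigcap_{j<d}\{|w_j|\le K\}$ applied to the independent sample $\bm{\tilde{r}}_{i2}$, whose probability equals $c(a)$ by symmetry of the Gaussian. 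I expect the main obstacle to be the bookkeeping of uniformity: one must guarantee that the threshold $T(a)$, and hence $c(a)$, can be chosen independently of $i$, which relies precisely on combining the $i$-uniform MAP bound $b$ with the lower bound $\sigma_d \ge \sqrt{\alpha}$ from the hypothesis to neutralize the otherwise unbounded means $\mu_j$ and standard deviations $\sigma_j$.
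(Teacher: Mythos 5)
Your proof is correct and follows essentially the same route as the paper's: the same eigenbasis reparametrization of the Thompson sample, the same use of the uniform MAP bound $b$ from Lemma \ref{lemma:MAP_bound}, and the same exploitation of the hypothesis $\beta_i(\delta)(\lambda_d^{(i)})^{-1/2}\ge\sqrt{\alpha}$ together with $\lambda_d^{(i)}\le\lambda_j^{(i)}$ to make the threshold $i$-independent. The only cosmetic difference is that you truncate the transverse coordinates at a fixed constant $K$ to obtain a closed-form product for $c(a)$, whereas the paper keeps $\max_{j<d}|z_{ij}|$ inside the probability and defines $c(a)$ as that (equally $i$-independent) quantity directly.
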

\begin{proof}
Recall that the reward samples $\bm{\tilde{r}}_{i1}, \bm{\tilde{r}}_{i2}$ are drawn according to \eqref{eqn:TS_dist}. We will demonstrate that the reward samples can equivalently be expressed as:
\begin{equation}\label{eqn:reward_samples_z_v}
    \bm{\tilde{r}}_{i1} = \bm{\hat{r}}_i + \beta_i(\delta)\sum_{j = 1}^d \left(\lambda_j^{(i)}\right)^{-\frac{1}{2}} z_{ij} \bm{v}_j^{(i)}, z_{ij} \sim \mathcal{N}(0, 1) \text{ i.i.d.},
\end{equation}
and similarly for $\bm{\tilde{r}}_{i2}$. Similarly to \eqref{eqn:TS_dist}, the expression in \eqref{eqn:reward_samples_z_v} has a multivariate Gaussian distribution. We take the expectation and covariance of \eqref{eqn:reward_samples_z_v} with respect to the variables $\{z_{ij}\}$ to show that they match the expressions in \eqref{eqn:TS_dist}: 
\begin{flalign*}
\mathbb{E}\left[\bm{\hat{r}}_i + \beta_i(\delta)\sum_{j = 1}^d \left(\lambda_j^{(i)}\right)^{-\frac{1}{2}} z_{ij} \bm{v}_j^{(i)} \right] &= \bm{\hat{r}}_i + \beta_i(\delta)\sum_{j = 1}^d \left(\lambda_j^{(i)}\right)^{-\frac{1}{2}}\mathbb{E}[z_{ij}]\bm{v}_j^{(i)} = \bm{\hat{r}}_i, \\ \text{Cov}\left[\bm{\hat{r}}_i + \beta_i(\delta)\sum_{j = 1}^d \left(\lambda_j^{(i)}\right)^{-\frac{1}{2}} z_{ij} \bm{v}_j^{(i)} \right] &\overset{(a)}= \mathbb{E}\left[\left(\beta_i(\delta)\sum_{j = 1}^d \left(\lambda_j^{(i)}\right)^{-\frac{1}{2}} z_{ij} \bm{v}_j^{(i)}\right)\left(\beta_i(\delta)\sum_{k = 1}^d \left(\lambda_k^{(i)}\right)^{-\frac{1}{2}} z_{ik} \bm{v}_k^{(i)}\right)^T \right] \\ &= \beta_i(\delta)^2 \sum_{j = 1}^d \sum_{k = 1}^d \left(\lambda_j^{(i)}\right)^{-\frac{1}{2}}\left(\lambda_k^{(i)}\right)^{-\frac{1}{2}}\bm{v}_j^{(i)}\bm{v}_k^{(i) T} \mathbb{E}[z_{ij}z_{ik}] \\ &\overset{(b)}= \beta_i(\delta)^2 \sum_{j = 1}^d \left(\lambda_j^{(i)}\right)^{-1}\bm{v}_j^{(i)}\bm{v}_j^{(i) T} = \beta_i(\delta)^2 M_i^{-1},
\end{flalign*}
which match the expectation and covariance in \eqref{eqn:TS_dist}. In the above, (a) applies the definition $\text{Cov}[\bm{x}] = \mathbb{E}[(\bm{x} - \mathbb{E}[\bm{x}])(\bm{x} - \mathbb{E}[\bm{x}])^T]$, and (b) holds because $\mathbb{E}[z_{ij}z_{ik}] = \text{Cov}[z_{ij}z_{ik}] = \delta_{jk}$, where $\delta_{jk}$ is the Kronecker delta function.

Next, we show that the probability that $\bm{\tilde{r}}_{i1}$ is arbitrarily-aligned with $\bm{v}_d^{(i)}$ is lower-bounded above zero: that is, there exists $c: \mathbb{R}_+ \longrightarrow \mathbb{R}_+$ such that for any $a > 0$, $P(\bm{\tilde{r}}_{i1}^T\bm{v}_d^{(i)} \ge a\max_{j < d}|\bm{\tilde{r}}_{i1}^T\bm{v}_j^{(i)}|) \ge c(a) > 0$. This can be shown by bounding the terms $|\bm{\tilde{r}}_{i1}^T\bm{v}_j^{(i)}|, j < d$, and $\bm{\tilde{r}}_{i1}^T \bm{v}_d^{(i)}$. Firstly, the term $|\bm{\tilde{r}}_{i1}^T\bm{v}_j^{(i)}|$, $j < d$, can be upper-bounded:
\begin{flalign*}
|\bm{\tilde{r}}_{i1}^T\bm{v}_j^{(i)}| &\overset{(a)}= \left|\bm{\hat{r}}_i^T\bm{v}_j^{(i)} + \beta_i(\delta)\sum_{k = 1}^d \left(\lambda_k^{(i)}\right)^{-\frac{1}{2}} z_{ik} \bm{v}_k^{(i) T}\bm{v}_j^{(i)}\right| \overset{(b)}= \left|\bm{\hat{r}}_i^T\bm{v}_j^{(i)} + \beta_i(\delta) \left(\lambda_j^{(i)}\right)^{-\frac{1}{2}} z_{ij}\right| \\&\le |\bm{\hat{r}}_i^T\bm{v}_j^{(i)}| + \beta_i(\delta) \left(\lambda_j^{(i)}\right)^{-\frac{1}{2}} |z_{ij}| \overset{(c)}\le ||\bm{\hat{r}}_i||_2 ||\bm{v}_j^{(i)}||_2 + \beta_i(\delta) \left(\lambda_j^{(i)}\right)^{-\frac{1}{2}} |z_{ij}| \\&\overset{(d)}\le b + \beta_i(\delta) \left(\lambda_j^{(i)}\right)^{-\frac{1}{2}} |z_{ij}|,
\end{flalign*}
where (a) applies \eqref{eqn:reward_samples_z_v}, (b) follows from orthonormality of the eigenbasis, (c) follows from the Cauchy-Schwarz inequality, and (d) uses that $||\bm{\hat{r}}_i||_2 \le b$ (Lemma \ref{lemma:MAP_bound}). Similarly, $\bm{\tilde{r}}_{i1}^T \bm{v}_d^{(i)}$ can be lower-bounded:
\begin{flalign*}
\bm{\tilde{r}}_{i1}^T \bm{v}_d^{(i)} &\overset{(a)}= \bm{\hat{r}}_i^T \bm{v}_d^{(i)} + \beta_i(\delta)\sum_{j = 1}^d \left(\lambda_j^{(i)}\right)^{-\frac{1}{2}} z_{ij} \bm{v}_j^{(i) T}\bm{v}_d^{(i)} \overset{(b)}= \bm{\hat{r}}_i^T \bm{v}_d^{(i)} + \beta_i(\delta)\left(\lambda_d^{(i)}\right)^{-\frac{1}{2}} z_{i1} \\ &\ge -|\bm{\hat{r}}_i^T \bm{v}_d^{(i)}| + \beta_i(\delta) \left(\lambda_d^{(i)}\right)^{-\frac{1}{2}} z_{i1} \overset{(c)}\ge -||\bm{\hat{r}}_i||_2 ||\bm{v}_d^{(i)}||_2 + \beta_i(\delta) \left(\lambda_d^{(i)}\right)^{-\frac{1}{2}} z_{i1} \\ &\overset{(d)}\ge -b + \beta_i(\delta) \left(\lambda_d^{(i)}\right)^{-\frac{1}{2}} z_{i1},
\end{flalign*}
where as before, (a) applies \eqref{eqn:reward_samples_z_v}, (b) follows from orthonormality of the eigenbasis, (c) follows from the Cauchy-Schwarz inequality, and (d) holds via Lemma \ref{lemma:MAP_bound}. Given these upper and lower bounds, the probability $P\left(\bm{\tilde{r}}_{i1}^T\bm{v}_d^{(i)} \ge a\max_{j < d}|\bm{\tilde{r}}_{i1}^T\bm{v}_j^{(i)}|\right)$ can be lower-bounded:
\begin{flalign*}
P\left(\bm{\tilde{r}}_{i1}^T\bm{v}_d^{(i)} \ge a\max_{j < d}|\bm{\tilde{r}}_{i1}^T\bm{v}_j^{(i)}|\right) &\overset{(a)}\ge P\left(-b + \beta_i(\delta) \left(\lambda_d^{(i)}\right)^{-\frac{1}{2}} z_{i1} \ge a\max_{j < d}\left[b + \beta_i(\delta) \left(\lambda_j^{(i)}\right)^{-\frac{1}{2}} |z_{ij}|\right]\right) \\&= P\left(z_{i1} \ge \frac{b \sqrt{\lambda_d^{(i)}}}{\beta_i(\delta)} + a\max_{j < d}\left[\frac{b \sqrt{\lambda_d^{(i)}}}{\beta_i(\delta)} + \sqrt{\frac{\lambda_d^{(i)}}{\lambda_j^{(i)}}} |z_{ij}|\right]\right) \\&\overset{(b)}\ge P\left(z_{i1} \ge \frac{b}{\sqrt{\alpha}} + a\max_{j < d}\left[\frac{b}{\sqrt{\alpha}} + |z_{ij}|\right]\right) \\ &= P\left(z_{i1} \ge \frac{b(1 + a)}{\sqrt{\alpha}} + a\max_{j < d}|z_{ij}| \right) := c(a) > 0,
\end{flalign*}
where (a) results from the upper and lower bounds derived above, and (b) follows because $\frac{\lambda_d^{(i)}}{\lambda_j^{(i)}} \le 1$ and $\beta_i(\delta) \left(\lambda_d^{(i)}\right)^{-\frac{1}{2}} \ge \sqrt{\alpha}$ by assumption. The function $c(a) > 0$ is continuous and decreasing in $a$.

By identical arguments, $P(\bm{\tilde{r}}_{i2}^T\bm{v}_d^{(i)} \le -a\max_{j < d}|\bm{\tilde{r}}_{i2}^T\bm{v}_j^{(i)}|) \ge c(a)$. Thus, for any $a > 0$ and set of eigenvectors $\bm{v}_j^{(i)}$:
\begin{flalign*}
P(\bm{\tilde{r}}_{i1}^T\bm{v}_d^{(i)} &\ge a\max_{j < d}|\bm{\tilde{r}}_{i1}^T\bm{v}_j^{(i)}|) \ge c(a) > 0, \\
P(\bm{\tilde{r}}_{i2}^T\bm{v}_d^{(i)} &\le -a\max_{j < d}|\bm{\tilde{r}}_{i2}^T\bm{v}_j^{(i)}|) \ge c(a) > 0.
\end{flalign*}
\end{proof}

Next, we show that given sampled rewards $\bm{\tilde{r}}_{i1}, \bm{\tilde{r}}_{i2}$ that are highly-aligned with the eigenvector $\bm{v}_d^{(i)}$ of $M_i$ as in Lemma \ref{lemma:reward_consistency_part2}, there is a lower-bounded probability of sampling trajectories that have non-zero projection onto this eigenvector.

\begin{lemma}\label{lemma:reward_consistency_part3}
Assume that there exists $i_0$ such that for $i > i_0$, $\beta_i(\delta)^2 \left(\lambda_d^{(i)}\right)^{-1} \ge \alpha$. Then, there exists $i^\prime \ge i_0$ and a constant $c^\prime > 0$ such that for $i > i^\prime$:
\begin{flalign}\label{eqn:exp_bound}
\mathbb{E}\left[\left|\bm{x}_i^T \bm{v}_d^{(i)} \right| \right] \ge c^\prime > 0,
\end{flalign}
where $c^\prime > 0$ depends only on the MDP parameters $\bm{\overline{p}}$ and $\bm{\overline{r}}$, so that in particular, \eqref{eqn:exp_bound} holds for any eigenvector $\bm{v}_d^{(i)}$.
\end{lemma}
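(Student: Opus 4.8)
The plan is to turn the contradiction hypothesis $\beta_i(\delta)^2(\lambda_d^{(i)})^{-1}\ge\alpha$ into a guaranteed separation of the two rolled-out trajectories along $\bm{v}_d^{(i)}$. The engine is Lemma \ref{lemma:reward_consistency_part2}: under the hypothesis, the independently drawn reward samples $\bm{\tilde{r}}_{i1},\bm{\tilde{r}}_{i2}$ are, with non-vanishing probability, strongly aligned with $+\bm{v}_d^{(i)}$ and $-\bm{v}_d^{(i)}$ respectively. Such alignment should force value iteration to select policies $\pi_{i1},\pi_{i2}$ that nearly \emph{maximize} and nearly \emph{minimize} $V(\bm{\overline{p}},\bm{v}_d^{(i)},\cdot)$. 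Using the identity $\mathbb{E}[\bm{x}_{ij}^T\bm{v}\mid\pi_{ij}=\pi]=V(\bm{\overline{p}},\bm{v},\pi)$ (already exploited in \eqref{eqn:relevance_condition}), the conditional mean of $\bm{x}_i^T\bm{v}_d^{(i)}=(\bm{x}_{i2}-\bm{x}_{i1})^T\bm{v}_d^{(i)}$ is then separated from zero by the value gap $\max_\pi V(\bm{\overline{p}},\bm{v}_d^{(i)},\pi)-\min_\pi V(\bm{\overline{p}},\bm{v}_d^{(i)},\pi)$, which finally lower-bounds $\mathbb{E}[|\bm{x}_i^T\bm{v}_d^{(i)}|]$ via Jensen's inequality.

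First I would produce a \emph{uniform} lower bound on this value gap. By Remark \ref{rmk:relevant_evecs}, every $\bm{v}_d^{(i)}$ may be assumed to lie in the relevant subspace, so $g(\bm{v}):=\max_\pi V(\bm{\overline{p}},\bm{v},\pi)-\min_\pi V(\bm{\overline{p}},\bm{v},\pi)>0$ holds for every unit vector $\bm{v}$ there by \eqref{eqn:relevance_condition}. Since $g$ is a max/min of finitely many functions that are linear, hence continuous, in $\bm{v}$, it is continuous, and the unit sphere of the relevant subspace is compact; therefore $g$ attains a strictly positive minimum, call it $2c_0>0$. The point is that $c_0$ depends only on $\bm{\overline{p}}$ through the value functions and is independent of $i$ and of which eigenvector $\bm{v}_d^{(i)}$ arises.

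Second I would convert reward-alignment into near-optimality in the $\bm{v}_d^{(i)}$ direction. Writing $\bm{\tilde{r}}_{i1}=r_d\bm{v}_d^{(i)}+\bm{\tilde{r}}_\perp$ with $r_d\ge a\max_{j<d}|\bm{\tilde{r}}_{i1}^T\bm{v}_j^{(i)}|$, scale-invariance of value iteration gives $\pi_{i1}=\pi_{vi}(\bm{\tilde{p}}_{i1},\bm{v}_d^{(i)}+\bm{\tilde{r}}_\perp/r_d)$, a maximizer of $V(\bm{\tilde{p}}_{i1},\bm{v}_d^{(i)},\cdot)$ perturbed only by $V(\bm{\tilde{p}}_{i1},\bm{\tilde{r}}_\perp/r_d,\cdot)$, whose magnitude is $O(h\sqrt{d}/a)$ since $\|\bm{\tilde{r}}_\perp\|_2\le\sqrt{d}\,r_d/a$; hence $V(\bm{\tilde{p}}_{i1},\bm{v}_d^{(i)},\pi_{i1})\ge\max_\pi V(\bm{\tilde{p}}_{i1},\bm{v}_d^{(i)},\pi)-O(h\sqrt{d}/a)$. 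I would then invoke Lemma \ref{lemma:V_conv_in_p} — whose uniformity over \emph{all} unit reward vectors is exactly what is needed — to transfer this from $\bm{\tilde{p}}_{i1}$ to $\bm{\overline{p}}$ with probability $1-\delta'$ for $i$ large. Taking $a$ large and $i$ large so the total error is below $c_0/2$ yields $V(\bm{\overline{p}},\bm{v}_d^{(i)},\pi_{i1})\ge\max_\pi V(\bm{\overline{p}},\bm{v}_d^{(i)},\pi)-c_0/2$, and symmetrically $V(\bm{\overline{p}},\bm{v}_d^{(i)},\pi_{i2})\le\min_\pi V(\bm{\overline{p}},\bm{v}_d^{(i)},\pi)+c_0/2$.

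Finally I would assemble the bound. As the two reward samples and the dynamics samples are independent, the joint event $E$ (both alignments from Lemma \ref{lemma:reward_consistency_part2} and the good-dynamics event of Lemma \ref{lemma:V_conv_in_p}) has probability at least $c(a)^2(1-\delta')\ge c(a)^2/2>0$ for $i$ large. On $E$, the tower property gives $\mathbb{E}[\bm{x}_i^T\bm{v}_d^{(i)}\mid E]=\mathbb{E}[V(\bm{\overline{p}},\bm{v}_d^{(i)},\pi_{i2})-V(\bm{\overline{p}},\bm{v}_d^{(i)},\pi_{i1})\mid E]\le c_0-(\max_\pi V-\min_\pi V)\le-c_0$, whence $\mathbb{E}[|\bm{x}_i^T\bm{v}_d^{(i)}|\mid E]\ge|\mathbb{E}[\bm{x}_i^T\bm{v}_d^{(i)}\mid E]|\ge c_0$; dropping all other (nonnegative) contributions gives $\mathbb{E}[|\bm{x}_i^T\bm{v}_d^{(i)}|]\ge\tfrac{1}{2}c(a)^2c_0=:c'>0$, built only from $c_0$, the MAP bound $b$ of Lemma \ref{lemma:MAP_bound}, the constant $\alpha$, and the dimensions $d,h$, none of which depend on $i$ or on the eigenvector. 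The main obstacle is precisely this uniformity: because $\bm{v}_d^{(i)}$ moves with $i$, every estimate — the value gap $c_0$, the alignment-to-optimality conversion, and the dynamics transfer — must hold \emph{simultaneously} over all unit directions in the relevant subspace. The compactness argument producing $c_0$ together with the ``for any unit reward vector'' formulation of Lemma \ref{lemma:V_conv_in_p} are what secure this; without them the per-iteration bound could decay to zero.
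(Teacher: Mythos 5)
Your proposal is correct and follows essentially the same route as the paper's proof: it uses the alignment events of Lemma \ref{lemma:reward_consistency_part2}, converts alignment of $\bm{\tilde{r}}_{i1},\bm{\tilde{r}}_{i2}$ into near-optimality of $\pi_{i1},\pi_{i2}$ with respect to $\pm\bm{v}_d^{(i)}$ via Lemma \ref{lemma:V_conv_in_p}, obtains a uniform positive value gap on the relevant subspace by compactness (Remark \ref{rmk:relevant_evecs}), and finishes with Jensen's inequality. The only deviations are cosmetic refinements: your explicit $O(h\sqrt{d}/a)$ perturbation bound replaces the paper's compactness-over-eigenbases selection of $a_{\min}(\varepsilon)$ while delivering the needed uniformity more directly, and you retain the probability factor $P(E)\ge \tfrac{1}{2}c(a)^2$ in the final constant, which the paper's last display elides.
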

\begin{proof}
By Lemma \ref{lemma:reward_consistency_part2}, \eqref{eqn:reward_alignment_1} and \eqref{eqn:reward_alignment_2} both hold. We will refer to the events in \eqref{eqn:reward_alignment_1} and \eqref{eqn:reward_alignment_2}, $\{\bm{\tilde{r}}_{i1}^T\bm{v}_d^{(i)} \ge a\max_{j < d}|\bm{\tilde{r}}_{i1}^T\bm{v}_j^{(i)}|\}$ and $\{\bm{\tilde{r}}_{i2}^T\bm{v}_d^{(i)} \le -a\max_{j < d}|\bm{\tilde{r}}_{i2}^T\bm{v}_j^{(i)}|\}$, as events $A(a)$ and $B(a)$, respectively. From Lemma \ref{lemma:reward_consistency_part2}, $A(a)$ and $B(a)$ have positive probability for any $a$.

We will show that by setting $a$ to a large-enough value, under events $A(a)$ and $B(a)$, value iteration samples policies $\pi_{i1}$, $\pi_{i2}$ such that $\mathbb{E}\left[\left|\bm{x}_i^T \bm{v}_d^{(i)} \right| \right] \ge c^\prime > 0$ for some $c^\prime > 0$, for sufficiently-high $i$ and for any unit vector $\bm{v}_d^{(i)}$.

First, note that under events $A(a)$ and $B(a)$, as $a \longrightarrow \infty$, $\frac{\bm{\tilde{r}}_{i1}}{||\bm{\tilde{r}}_{i1}||_2} \longrightarrow \bm{v}_d^{(i)}$ and $\frac{\bm{\tilde{r}}_{i2}}{||\bm{\tilde{r}}_{i2}||_2} \longrightarrow -\bm{v}_d^{(i)}$. Let $\varepsilon > 0$. Under event $A(a)$ for sufficiently-large $a$, $\left|\left|\frac{\bm{\tilde{r}}_{i1}}{||\bm{\tilde{r}}_{i1}||_2} - \bm{v}_d^{(i)}\right|\right|_2 < \varepsilon$. Define $a_{\text{min}, 1}(\varepsilon, \bm{v}_1^{(i)}, \ldots, \bm{v}_d^{(i)})$ as the minimum value of $a$ such that $A(a)$ implies $\left|\left|\frac{\bm{\tilde{r}}_{i1}}{||\bm{\tilde{r}}_{i1}||_2} - \bm{v}_d^{(i)}\right|\right|_2 \le \frac{\varepsilon}{2}$, given the eigenbasis $\{\bm{v}_1^{(i)}, \ldots, \bm{v}_d^{(i)}\}$. Because the inequality defining $A(a)$ is continuous in $a$, $\bm{\tilde{r}}_{i1}$, and the eigenbasis $\{\bm{v}_1^{(i)}, \ldots, \bm{v}_d^{(i)}\}$, the function $a_{\text{min}, 1}(\varepsilon, \bm{v}_1^{(i)}, \ldots, \bm{v}_d^{(i)})$ is also continuous in the eigenbasis $\{\bm{v}_1^{(i)}, \ldots, \bm{v}_d^{(i)}\}$. Because $a_{\text{min}, 1}(\varepsilon, \bm{v}_1^{(i)}, \ldots, \bm{v}_d^{(i)})$ is positive for all $\{\bm{v}_1^{(i)}, \ldots, \bm{v}_d^{(i)}\}$, and the set of all eigenbases $\{\bm{v}_1^{(i)}, \ldots, \bm{v}_d^{(i)}\}$ is compact, there exists $a_{\text{min}, 1}(\varepsilon)$ such that for any eigenbasis, if $A(a)$ holds for $a \ge a_{\text{min}, 1}(\varepsilon)$, then $\left|\left|\frac{\bm{\tilde{r}}_{i1}}{||\bm{\tilde{r}}_{i1}||_2} - \bm{v}_d^{(i)}\right|\right|_2 < \varepsilon$.

By the same arguments, there exists $a_{\text{min}, 2}(\varepsilon)$ such that for any eigenbasis, if $B(a)$ holds for $a \ge a_{\text{min}, 2}(\varepsilon)$, then $\left|\left|\frac{\bm{\tilde{r}}_{i2}}{||\bm{\tilde{r}}_{i2}||_2} - (-\bm{v}_d^{(i)})\right|\right|_2 < \varepsilon$. Taking $a_{\text{min}}(\varepsilon) := \max\{a_{\text{min}, 1}(\varepsilon), a_{\text{min}, 2}(\varepsilon)\}$, then for any $a \ge a_{\text{min}}(\varepsilon)$, under events $A(a)$ and $B(a)$, both $\left|\left|\frac{\bm{\tilde{r}}_{i1}}{||\bm{\tilde{r}}_{i1}||_2} - \bm{v}_d^{(i)}\right|\right|_2 < \varepsilon$ and $\left|\left|\frac{\bm{\tilde{r}}_{i2}}{||\bm{\tilde{r}}_{i2}||_2} - (-\bm{v}_d^{(i)})\right|\right|_2 < \varepsilon$ hold.

Next, we will show that by setting $\varepsilon$ small enough, the inequality $\left|\left|\frac{\bm{\tilde{r}}_{i1}}{||\bm{\tilde{r}}_{i1}||_2} - \bm{v}_d^{(i)}\right|\right|_2 < \varepsilon$ implies that the expected reward accrued by $\pi_{i1}$ with respect to $\bm{v}_d^{(i)}$, that is, $V(\bm{\overline{p}}, \bm{v}_d^{(i)}, \pi_{i1})$, is close to the maximum possible expected reward with respect to $\bm{v}_d^{(i)}$, $\max_{\pi} V(\bm{\overline{p}}, \bm{v}_d^{(i)}, \pi) = V(\bm{\overline{p}}, \bm{v}_d^{(i)}, \pi_{vi}(\bm{\overline{p}}, \bm{v}_d^{(i)}))$. (The same approach yields an equivalent result for $\bm{\tilde{r}}_{i2}$.)

Assume that $\left|\left|\frac{\bm{\tilde{r}}_{i1}}{||\bm{\tilde{r}}_{i1}||_2} - \bm{v}_d^{(i)}\right|\right|_2 < \varepsilon$, and let $\varepsilon^\prime > 0$. We will show that $|V(\bm{\overline{p}}, \bm{v}_d^{(i)}, \pi_{vi}(\bm{\overline{p}}, \bm{v}_d^{(i)})) - V(\bm{\overline{p}}, \bm{v}_d^{(i)}, \pi_{i1})| < \varepsilon^\prime$ for small-enough $\varepsilon$ and when $\bm{\tilde{p}}_{i1}$ has sufficiently converged in distribution to $\bm{\overline{p}}$ (as is guaranteed to occur by Proposition \ref{prop:dyn_consistency}):

\begingroup % We define a group here in order to allow page breaks specifically within this set of aligned equations (but to then disable it afterwards).
\allowdisplaybreaks
\begin{flalign}
\Bigr|V(\bm{\overline{p}}, \bm{v}_d^{(i)}, \pi_{vi}(\bm{\overline{p}}, &\bm{v}_d^{(i)})) - V(\bm{\overline{p}}, \bm{v}_d^{(i)}, \pi_{i1})\Bigr| \overset{(a)}= \left|V\left(\bm{\overline{p}}, \bm{v}_d^{(i)}, \pi_{vi}(\bm{\overline{p}}, \bm{v}_d^{(i)})\right) - V\left(\bm{\overline{p}}, \bm{v}_d^{(i)}, \pi_{vi}\left(\bm{\tilde{p}}_{i1}, \frac{\bm{\tilde{r}}_{i1}}{||\bm{\tilde{r}}_{i1}||_2}\right)\right)\right| \nonumber\\ &= \Bigg|V\left(\bm{\overline{p}}, \bm{v}_d^{(i)}, \pi_{vi}(\bm{\overline{p}}, \bm{v}_d^{(i)})\right) - V\left(\bm{\overline{p}}, \frac{\bm{\tilde{r}}_{i1}}{||\bm{\tilde{r}}_{i1}||_2}, \pi_{vi}\left(\bm{\overline{p}}, \frac{\bm{\tilde{r}}_{i1}}{||\bm{\tilde{r}}_{i1}||_2}\right)\right) \nonumber\\ &\hspace{5mm}+ V\left(\bm{\overline{p}}, \frac{\bm{\tilde{r}}_{i1}}{||\bm{\tilde{r}}_{i1}||_2}, \pi_{vi}\left(\bm{\overline{p}}, \frac{\bm{\tilde{r}}_{i1}}{||\bm{\tilde{r}}_{i1}||_2}\right)\right) - V\left(\bm{\tilde{p}}_{i1}, \frac{\bm{\tilde{r}}_{i1}}{||\bm{\tilde{r}}_{i1}||_2}, \pi_{vi}\left(\bm{\tilde{p}}_{i1}, \frac{\bm{\tilde{r}}_{i1}}{||\bm{\tilde{r}}_{i1}||_2}\right)\right) \nonumber\\ &\hspace{5mm}+ V\left(\bm{\tilde{p}}_{i1}, \frac{\bm{\tilde{r}}_{i1}}{||\bm{\tilde{r}}_{i1}||_2}, \pi_{vi}\left(\bm{\tilde{p}}_{i1}, \frac{\bm{\tilde{r}}_{i1}}{||\bm{\tilde{r}}_{i1}||_2}\right)\right) - V\left(\bm{\overline{p}}, \frac{\bm{\tilde{r}}_{i1}}{||\bm{\tilde{r}}_{i1}||_2}, \pi_{vi}\left(\bm{\tilde{p}}_{i1}, \frac{\bm{\tilde{r}}_{i1}}{||\bm{\tilde{r}}_{i1}||_2}\right)\right) \nonumber\\ &\hspace{5mm}+ V\left(\bm{\overline{p}}, \frac{\bm{\tilde{r}}_{i1}}{||\bm{\tilde{r}}_{i1}||_2}, \pi_{vi}\left(\bm{\tilde{p}}_{i1}, \frac{\bm{\tilde{r}}_{i1}}{||\bm{\tilde{r}}_{i1}||_2}\right)\right) - V\left(\bm{\overline{p}}, \bm{v}_d^{(i)}, \pi_{vi}\left(\bm{\tilde{p}}_{i1}, \frac{\bm{\tilde{r}}_{i1}}{||\bm{\tilde{r}}_{i1}||_2}\right)\right)\Bigg| \nonumber\\ &\overset{(b)}\le \left|V\left(\bm{\overline{p}}, \frac{\bm{\tilde{r}}_{i1}}{||\bm{\tilde{r}}_{i1}||_2}, \pi_{vi}\left(\bm{\tilde{p}}_{i1}, \frac{\bm{\tilde{r}}_{i1}}{||\bm{\tilde{r}}_{i1}||_2}\right)\right) - V\left(\bm{\overline{p}}, \bm{v}_d^{(i)}, \pi_{vi}\left(\bm{\tilde{p}}_{i1}, \frac{\bm{\tilde{r}}_{i1}}{||\bm{\tilde{r}}_{i1}||_2}\right)\right)\right|\label{eqn:diff1} \\ &\hspace{5mm}+\left|V\left(\bm{\overline{p}}, \bm{v}_d^{(i)}, \pi_{vi}(\bm{\overline{p}}, \bm{v}_d^{(i)})\right) - V\left(\bm{\overline{p}}, \frac{\bm{\tilde{r}}_{i1}}{||\bm{\tilde{r}}_{i1}||_2}, \pi_{vi}\left(\bm{\overline{p}}, \frac{\bm{\tilde{r}}_{i1}}{||\bm{\tilde{r}}_{i1}||_2}\right)\right)\right| \label{eqn:diff2}\\ &\hspace{5mm}+ \left|V\left(\bm{\overline{p}}, \frac{\bm{\tilde{r}}_{i1}}{||\bm{\tilde{r}}_{i1}||_2}, \pi_{vi}\left(\bm{\overline{p}}, \frac{\bm{\tilde{r}}_{i1}}{||\bm{\tilde{r}}_{i1}||_2}\right)\right) - V\left(\bm{\tilde{p}}_{i1}, \frac{\bm{\tilde{r}}_{i1}}{||\bm{\tilde{r}}_{i1}||_2}, \pi_{vi}\left(\bm{\tilde{p}}_{i1}, \frac{\bm{\tilde{r}}_{i1}}{||\bm{\tilde{r}}_{i1}||_2}\right)\right)\right| \label{eqn:diff3}\\ &\hspace{5mm}+ \left|V\left(\bm{\tilde{p}}_{i1}, \frac{\bm{\tilde{r}}_{i1}}{||\bm{\tilde{r}}_{i1}||_2}, \pi_{vi}\left(\bm{\tilde{p}}_{i1}, \frac{\bm{\tilde{r}}_{i1}}{||\bm{\tilde{r}}_{i1}||_2}\right)\right) - V\left(\bm{\overline{p}}, \frac{\bm{\tilde{r}}_{i1}}{||\bm{\tilde{r}}_{i1}||_2}, \pi_{vi}\left(\bm{\tilde{p}}_{i1}, \frac{\bm{\tilde{r}}_{i1}}{||\bm{\tilde{r}}_{i1}||_2}\right)\right)\right| \label{eqn:diff4},
\end{flalign}
\endgroup
where (a) uses that $\pi_{i1} = \pi_{vi}(\bm{\tilde{p}}_{i1}, \bm{\tilde{r}}_{i1})$ by definition, and also that positive scaling of the reward argument of $\pi_{vi}(\bm{p}, \bm{r})$ does not affect its output; and (b) applies the triangle inequality. Next, we will show that each of \eqref{eqn:diff1}-\eqref{eqn:diff4} can be upper-bounded by $\frac{1}{4}\varepsilon^\prime$ (for \eqref{eqn:diff3} and \eqref{eqn:diff4} with high probability) by appropriate choice of $\varepsilon$ and by utilizing that $\bm{\tilde{p}}_{i1} \overset{D}\longrightarrow \bm{\overline{p}}$ (Proposition \ref{prop:dyn_consistency}).

Beginning with \eqref{eqn:diff1}, because $V(\bm{p}, \bm{r}, \pi)$ is linear in $\bm{r}$, it is uniformly continuous in $\bm{r}$ for fixed transition dynamics $\bm{p}$ and policy $\pi$. So, for fixed dynamics $\bm{\overline{p}}$ and policy $\pi$ and for any reward vector $\bm{r}$, there exists $\varepsilon_{\pi}$ such that if $||\bm{r} - \bm{r}^\prime|| < \varepsilon_{\pi}$, then $|V(\bm{p}, \bm{r}, \pi) - V(\bm{p}, \bm{r}^\prime, \pi)| < \frac{1}{4}\varepsilon^\prime$. Because there are finitely-many deterministic policies, there exists $\varepsilon_1 > 0$ such that $\varepsilon_1 \le \varepsilon_{\pi}$ for all $\pi$. Therefore, for any policy $\pi$, if $||\bm{r} - \bm{r}^\prime||_2 < \varepsilon_1$, then $|V(\bm{\overline{p}}, \bm{r}, \pi) - V(\bm{\overline{p}}, \bm{r}^\prime, \pi)| < \frac{1}{4}\varepsilon^\prime$. The expression in \eqref{eqn:diff1} is thus upper-bounded by $\frac{1}{4}\varepsilon^\prime$ if $\varepsilon < \varepsilon_1$.

To upper-bound \eqref{eqn:diff2}, observe that $V(\bm{p}, \bm{r}, \pi_{vi}(\bm{p}, \bm{r})) = \max_\pi V(\bm{p}, \bm{r}, \pi)$ is also uniformly continuous in $\bm{r}$ for fixed transition dynamics $\bm{p}$: the maximum over finitely-many uniformly continuous functions is also uniformly continuous. Thus, there exists $\varepsilon_2 > 0$ such that if $||\bm{r} - \bm{r}^\prime||_2 < \varepsilon_2$, then $|V(\bm{\overline{p}}, \bm{r}, \pi_{vi}(\bm{\overline{p}}, \bm{r})) - V(\bm{\overline{p}}, \bm{r}^\prime, \pi_{vi}(\bm{\overline{p}}, \bm{r}^\prime))| < \frac{1}{4}\varepsilon^\prime$. The expression in \eqref{eqn:diff2} is thus upper-bounded by $\frac{1}{4}\varepsilon^\prime$ if $\varepsilon < \varepsilon_2$.

To obtain high-probability upper bounds for \eqref{eqn:diff3} and \eqref{eqn:diff4}, we apply Lemma \ref{lemma:V_conv_in_p}. For sufficiently-high $i$, each of the following holds with probability at least $1 - \frac{1}{2}\delta^\prime$:
\begin{flalign*}
    &\left|V\left(\bm{\overline{p}}, \frac{\bm{\tilde{r}}_{i1}}{||\bm{\tilde{r}}_{i1}||_2}, \pi_{vi}\left(\bm{\overline{p}}, \frac{\bm{\tilde{r}}_{i1}}{||\bm{\tilde{r}}_{i1}||_2}\right)\right) - V\left(\bm{\tilde{p}}_{i1}, \frac{\bm{\tilde{r}}_{i1}}{||\bm{\tilde{r}}_{i1}||_2}, \pi_{vi}\left(\bm{\tilde{p}}_{i1}, \frac{\bm{\tilde{r}}_{i1}}{||\bm{\tilde{r}}_{i1}||_2}\right)\right)\right| < \frac{1}{4}\varepsilon^\prime, \\
    &\left|V\left(\bm{\tilde{p}}_{i1}, \frac{\bm{\tilde{r}}_{i1}}{||\bm{\tilde{r}}_{i1}||_2}, \pi\right) - V\left(\bm{\overline{p}}, \frac{\bm{\tilde{r}}_{i1}}{||\bm{\tilde{r}}_{i1}||_2}, \pi\right)\right| < \frac{1}{4}\varepsilon^\prime,
\end{flalign*}
where the second statement holds for any policy $\pi$, and in particular for $\pi = \pi_{vi}\left(\bm{\tilde{p}}_{i1}, \frac{\bm{\tilde{r}}_{i1}}{||\bm{\tilde{r}}_{i1}||_2}\right)$.

We combine the bounds for \eqref{eqn:diff1}-\eqref{eqn:diff4}, setting $\varepsilon < \min\{\varepsilon_1, \varepsilon_2\}$ and $i > i^\prime$. Thus, for any $\varepsilon^\prime, \delta^\prime > 0$, we have shown that by setting $\varepsilon$ small enough and taking $i > i^\prime$:
\begin{equation*}
    \Bigr|V(\bm{\overline{p}}, \bm{v}_d^{(i)}, \pi_{vi}(\bm{\overline{p}}, \bm{v}_d^{(i)})) - V(\bm{\overline{p}}, \bm{v}_d^{(i)}, \pi_{i1})\Bigr| < \varepsilon^\prime \text{ with probability at least } 1 - \delta^\prime.
\end{equation*}

Combining with the analogous result for $\pi_{i2}$ and $-\bm{v}_d^{(i)}$ yields that for any $\varepsilon^\prime, \delta^\prime > 0$, there exists sufficiently-small $\varepsilon$ and large-enough $i^\prime$ such that for $i > i^\prime$:
\begin{flalign}\label{eqn:V_proximity}
    &\Bigr|V(\bm{\overline{p}}, \bm{v}_d^{(i)}, \pi_{vi}(\bm{\overline{p}}, \bm{v}_d^{(i)})) - V(\bm{\overline{p}}, \bm{v}_d^{(i)}, \pi_{i1})\Bigr| < \varepsilon^\prime \text{ with probability at least } 1 - \delta^\prime, \text{ and} \\
    &\Bigr|V(\bm{\overline{p}}, -\bm{v}_d^{(i)}, \pi_{vi}(\bm{\overline{p}}, -\bm{v}_d^{(i)})) - V(\bm{\overline{p}}, -\bm{v}_d^{(i)}, \pi_{i2})\Bigr| < \varepsilon^\prime \text{ with probability at least } 1 - \delta^\prime.  \nonumber   
\end{flalign}

Next, we will set $\varepsilon^\prime$ to a small enough number to achieve $\left|\mathbb{E}[\bm{x}_i^T \bm{v}_d^{(i)}]\right| > \varepsilon^\prime > 0$. Firstly, note that  that $|\mathbb{E}[\bm{x}_i^T \bm{v}_d^{(i)}]|$ is maximized when setting $\pi_{i1} = \pi_{vi}(\bm{\overline{p}}, \bm{v}_d^{(i)})$ and $\pi_{i2} = \pi_{vi}(\bm{\overline{p}}, -\bm{v}_d^{(i)})$:
\begin{flalign*}
    \max_{\pi_1, \pi_2}\Bigr|\mathbb{E}[\bm{x}_i^T \bm{v}_d^{(i)}] \,|\, \pi_{i1} &= \pi_1, \pi_{i2} = \pi_2 \Bigr| = \max_{\pi_1, \pi_2}\left|\mathbb{E}[\bm{x}_{i1}^T \bm{v}_d^{(i)} - \bm{x}_{i2}^T \bm{v}_d^{(i)}]  \,|\, \pi_{i1} = \pi_1, \pi_{i2} = \pi_2 \right| \\ &= \left|\mathbb{E}[\bm{x}_{i1}^T \bm{v}_d^{(i)} - \bm{x}_{i2}^T \bm{v}_d^{(i)} \,|\, \pi_{i1} = \text{argmax}_{\pi}\mathbb{E}[\bm{x}_{i1}^T \bm{v}_d^{(i)}], \pi_{i2} = \text{argmin}_{\pi}\mathbb{E}[\bm{x}_{i2}^T \bm{v}_d^{(i)}]]\right| \\ &= \left|\mathbb{E}[\bm{x}_{i1}^T \bm{v}_d^{(i)} - \bm{x}_{i2}^T \bm{v}_d^{(i)} \,|\, \pi_{i1} = \text{argmax}_{\pi}\mathbb{E}[\bm{x}_{i1}^T \bm{v}_d^{(i)}], \pi_{i2} = \text{argmax}_{\pi}\mathbb{E}[\bm{x}_{i2}^T (-\bm{v}_d^{(i)})]]\right| \\ &= \left|\mathbb{E}[\bm{x}_{i1}^T \bm{v}_d^{(i)} - \bm{x}_{i2}^T \bm{v}_d^{(i)} \,|\, \pi_{i1} = \text{argmax}_{\pi}V(\bm{\overline{p}}, \bm{v}_d^{(i)}, \pi), \pi_{i2} = \text{argmax}_{\pi}V(\bm{\overline{p}}, -\bm{v}_d^{(i)}, \pi)]\right| \\ &= \left|\mathbb{E}[\bm{x}_{i1}^T \bm{v}_d^{(i)} - \bm{x}_{i2}^T \bm{v}_d^{(i)} \,|\, \pi_{i1} = \pi_{vi}(\bm{\overline{p}}, \bm{v}_d^{(i)}), \pi_{i2} = \pi_{vi}(\bm{\overline{p}}, -\bm{v}_d^{(i)})]\right|.
\end{flalign*}

From Lemma \ref{lemma:relevance} and Remark \ref{rmk:relevant_evecs}, we can assume without loss of generality that for all $\bm{v}_d^{(i)}$, $\max_{\pi_1, \pi_2} \left|\mathbb{E}[\bm{x}_i^T \bm{v}_d^{(i)} \,|\, \pi_{i1} = \pi_1, \pi_{i2} = \pi_2]\right| > 0$. Because $\left|\mathbb{E}[\bm{x}_i^T \bm{v}_d^{(i)} \,|\, \pi_{i1} = \pi_1, \pi_{i2} = \pi_2]\right|$ is continuous in $\bm{v}_d^{(i)}$ for fixed $\pi_1, \pi_2$, and a maximum over finitely-many continuous functions is continuous, $\max_{\pi_1, \pi_2} \left|\mathbb{E}[\bm{x}_i^T \bm{v}_d^{(i)} \,|\, \pi_{i1} = \pi_1, \pi_{i2} = \pi_2]\right|$ is also continuous in $\bm{v}_d^{(i)}$. Because $\bm{v}_d^{(i)}$ belongs to the compact set of unit vectors, the expression achieves a minimum positive value on the set of possible $\bm{v}_d^{(i)}$, and thus, there exists $\eta > 0$ such that $\max_{\pi_1, \pi_2} \left|\mathbb{E}[\bm{x}_i^T \bm{v}_d^{(i)} \,|\, \pi_{i1} = \pi_1, \pi_{i2} = \pi_2]\right| \ge \eta > 0$. Setting $\varepsilon^\prime := \frac{\eta}{3}$: 
\begin{flalign*}
0 < 3\varepsilon^\prime &= \eta \le \max_{\pi_1, \pi_2} \left|\mathbb{E}[\bm{x}_i^T \bm{v}_d^{(i)} \,|\, \pi_{i1} = \pi_1, \pi_{i2} = \pi_2]\right| = \left|\max_{\pi_1} \mathbb{E}[\bm{x}_{i1}^T \bm{v}_d^{(i)} \,|\, \pi_{i1} = \pi_1] - \min_{\pi_2} \mathbb{E}[\bm{x}_{i2}^T \bm{v}_d^{(i)} \,|\, \pi_{i2} = \pi_2] \right| \\ &= \left|\max_{\pi_1} V(\bm{\overline{p}}, \bm{v}_d^{(i)}, \pi_1) - \min_{\pi_2} V(\bm{\overline{p}}, \bm{v}_d^{(i)}, \pi_2) \right| = \left|\max_{\pi_1} V(\bm{\overline{p}}, \bm{v}_d^{(i)}, \pi_1) + \max_{\pi_2} [-V(\bm{\overline{p}}, \bm{v}_d^{(i)}, \pi_2)] \right| \\ &= \left|\max_{\pi_1} V(\bm{\overline{p}}, \bm{v}_d^{(i)}, \pi_1) + \max_{\pi_2} V(\bm{\overline{p}}, -\bm{v}_d^{(i)}, \pi_2) \right| = \left|V(\bm{\overline{p}}, \bm{v}_d^{(i)}, \pi_{vi}(\bm{\overline{p}}, \bm{v}_d^{(i)})) + V(\bm{\overline{p}}, -\bm{v}_d^{(i)}, \pi_{vi}(\bm{\overline{p}}, -\bm{v}_d^{(i)})) \right| \\ &\overset{(a)}\le \left|V(\bm{\overline{p}}, \bm{v}_d^{(i)}, \pi_{i1}) + V(\bm{\overline{p}}, -\bm{v}_d^{(i)}, \pi_{i2}) \right| + 2\varepsilon^\prime,
\end{flalign*}
where (a) holds with probability at least $1 - 2\delta^\prime$ by \eqref{eqn:V_proximity}. Rearranging terms, with probability at least $1 - 2\delta^\prime$,
\begin{flalign*}
\varepsilon^\prime < \left|V(\bm{\overline{p}}, \bm{v}_d^{(i)}, \pi_{i1}) + V(\bm{\overline{p}}, -\bm{v}_d^{(i)}, \pi_{i2}) \right| = \left|V(\bm{\overline{p}}, \bm{v}_d^{(i)}, \pi_{i1}) - V(\bm{\overline{p}}, \bm{v}_d^{(i)}, \pi_{i2}) \right| = \left|\mathbb{E}[\bm{x}_i^T\bm{v}_d^{(i)}] \right|.
\end{flalign*}

This implies that:
\begin{flalign*}
\mathbb{E}\left[\left|\bm{x}_i^T \bm{v}_d^{(i)} \right| \right] \overset{(a)}\ge \left|\mathbb{E}\left[\bm{x}_i^T \bm{v}_d^{(i)}  \right]\right| \ge c^\prime > 0 \text{ for some positive } c^\prime \text{ and for all } i > i^\prime \text{ and } \bm{v}_d^{(i)},
\end{flalign*}
where (a) holds via Jensen's inequality and $c^\prime := \varepsilon^\prime$.
\end{proof}

We are now equipped to complete the proof of asymptotic consistency of the reward model.

\begin{lemma}\label{lemma:reward_consistency_part4}
As $i \longrightarrow \infty$, $\frac{\beta_i(\delta)^2}{\lambda_d^{(i)}} \overset{D}\longrightarrow 0$, where $\lambda_d^{(i)}$ is the minimum eigenvalue of $M_i$.
\end{lemma}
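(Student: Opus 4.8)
The plan is to argue by contradiction, adopting the hypothesis that there exist $\alpha > 0$ and $i_0$ with $\beta_i(\delta)^2 (\lambda_d^{(i)})^{-1} \ge \alpha$, equivalently $\lambda_d^{(i)} \le \beta_i(\delta)^2/\alpha$, for all $i \ge i_0$. Under this hypothesis Lemma \ref{lemma:reward_consistency_part3} already supplies a constant $c' > 0$ and an index $i' \ge i_0$ such that, for $i > i'$, the current observation has expected projection $\mathbb{E}[|\bm{x}_i^T \bm{v}_d^{(i)}| \mid \mathcal{F}_i] \ge c'$ onto the minimum eigenvector of $M_i$, where $\mathcal{F}_i$ is the history through iteration $i-1$ (so $M_i$, $\lambda_d^{(i)}$, $\bm{v}_d^{(i)}$, and $\beta_i(\delta)$ are all $\mathcal{F}_i$-measurable). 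Since $|\bm{x}_i^T \bm{v}_d^{(i)}| \le \|\bm{x}_i\|_2 \le L$ is bounded, a reverse-Markov argument ($c' \le \mathbb{E}[|\bm{x}_i^T\bm{v}_d^{(i)}|\mid\mathcal{F}_i] \le t + L\,P(|\bm{x}_i^T\bm{v}_d^{(i)}| \ge t\mid\mathcal{F}_i)$ with $t = c'/2$) converts this first-moment bound into constants $\gamma := (c'/2)^2 > 0$ and $p_0 := c'/(2L) > 0$ satisfying $P((\bm{x}_i^T\bm{v}_d^{(i)})^2 \ge \gamma \mid \mathcal{F}_i) \ge p_0$.

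The engine of the proof is the log-determinant (elliptical) potential $\log\det M_i$. By the matrix determinant lemma, $\det(M_{i+1}) = \det(M_i)(1 + \bm{x}_i^T M_i^{-1}\bm{x}_i)$, and since the largest-eigenvalue term of $M_i^{-1}$ is the one tied to the \emph{smallest} eigenvalue of $M_i$ (Definition \ref{defn:evecs_evals}),
\[
\bm{x}_i^T M_i^{-1}\bm{x}_i = \sum_{j=1}^d \frac{(\bm{x}_i^T\bm{v}_j^{(i)})^2}{\lambda_j^{(i)}} \ge \frac{(\bm{x}_i^T\bm{v}_d^{(i)})^2}{\lambda_d^{(i)}}.
\]
On the event $\{(\bm{x}_i^T\bm{v}_d^{(i)})^2 \ge \gamma\}$ the contradiction bound $\lambda_d^{(i)} \le \beta_i(\delta)^2/\alpha$ gives $\bm{x}_i^T M_i^{-1}\bm{x}_i \ge \gamma\alpha/\beta_i(\delta)^2$, while off this event the increment is still nonnegative. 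Using monotonicity of $\log(1+\cdot)$ and taking conditional expectations,
\[
\mathbb{E}\!\left[\log\det M_{i+1} - \log\det M_i \,\middle|\, \mathcal{F}_i\right] \ge p_0 \log\!\left(1 + \frac{\gamma\alpha}{\beta_i(\delta)^2}\right).
\]

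Now I would insert the deterministic upper bound $\beta_i(\delta)^2 \le C\log i$ from \eqref{eqn:defn_beta}. Because $\beta_i(\delta)^2 \to \infty$, the argument $\gamma\alpha/\beta_i(\delta)^2 \to 0$, so $\log(1 + \gamma\alpha/\beta_i(\delta)^2) \ge \tfrac{1}{2}\gamma\alpha/\beta_i(\delta)^2 \ge \gamma\alpha/(2C\log i)$ for $i$ large. Taking total expectations and telescoping then gives
\[
\mathbb{E}[\log\det M_N] \ge \frac{p_0\gamma\alpha}{2C}\sum_{i=i'}^{N-1}\frac{1}{\log i} + \mathrm{const} = \Omega\!\left(\frac{N}{\log N}\right).
\]
On the other hand, every eigenvalue of $M_N = \lambda I + \sum_{k=1}^{N-1}\bm{x}_k\bm{x}_k^T$ is at most $\lambda + (N-1)L^2$, so $\log\det M_N \le d\log(\lambda + (N-1)L^2) = O(\log N)$, whence $\mathbb{E}[\log\det M_N] = O(\log N)$. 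For large $N$ the lower bound $\Omega(N/\log N)$ exceeds the upper bound $O(\log N)$, a contradiction; hence no such $\alpha, i_0$ exist and $\beta_i(\delta)^2/\lambda_d^{(i)} \overset{D}\longrightarrow 0$.

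The main obstacle is that the naive route---arguing that an observation aligned with $\bm{v}_d^{(i)}$ directly raises the minimum eigenvalue---fails: a rank-one update need not increase $\lambda_d^{(i)}$ when the bottom of the spectrum is nearly degenerate, since the minimizing direction can rotate toward $\bm{v}_{d-1}^{(i)}$ to avoid $\bm{x}_i$. The log-determinant potential is precisely what circumvents this, because it aggregates information gained in \emph{every} direction and is insensitive to which eigenvector currently realizes the minimum. The remaining care is measure-theoretic: one must invoke Lemma \ref{lemma:reward_consistency_part3} in its conditional-on-$\mathcal{F}_i$ form so that the tower property validates the telescoping sum, and one must translate the pathwise contradiction (the ratio bounded below for all large $i$) into convergence in distribution, which leans on $\lambda_d^{(i)}$ being non-decreasing in $i$.
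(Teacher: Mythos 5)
Your proof is correct, and it takes a genuinely different route from the paper's for the core contradiction. The paper tracks the minimum eigenvalue $\lambda_d^{(i)}$ directly: it writes $\lambda_d^{(n)} = \lambda + \sum_{k<n}(\bm{x}_k^T\bm{v}_d^{(n)})^2$ and must then confront exactly the obstacle you identify---that $\bm{v}_d^{(i)}$ rotates over time---which it handles by covering the unit sphere with finitely many cells of diameter $\varepsilon$, pigeonholing to find a cell visited linearly often, and arguing that within one cell the projections $|\bm{x}_{n_k}^T\bm{v}_d^{(n_j)}|$ stay within $2\varepsilon h$ of $|\bm{x}_{n_k}^T\bm{v}_d^{(n_k)}|$, so that $\lambda_d^{(i)}$ grows at least linearly in expectation while $\beta_i(\delta)^2$ grows only logarithmically. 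Your log-determinant potential sidesteps that covering argument entirely: the rank-one update $\det M_{i+1} = \det M_i\,(1+\bm{x}_i^TM_i^{-1}\bm{x}_i)$ together with $\bm{x}_i^TM_i^{-1}\bm{x}_i \ge (\bm{x}_i^T\bm{v}_d^{(i)})^2/\lambda_d^{(i)} \ge \gamma\alpha/\beta_i(\delta)^2$ on a constant-probability event converts Lemma 8 and the contradiction hypothesis into an expected increment of order $1/\log i$, and the resulting $\Omega(N/\log N)$ lower bound on $\mathbb{E}[\log\det M_N]$ collides with the deterministic $O(d\log N)$ ceiling. This is shorter, uses a standard elliptical-potential tool, and is arguably more robust than the paper's argument precisely because the determinant aggregates growth over all directions and is indifferent to which eigenvector currently realizes the minimum; what it gives up is the explicit conclusion that $\lambda_d^{(i)}$ itself grows linearly, which the paper obtains but does not actually need. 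Your reverse-Markov step, the conditional-on-$\mathcal{F}_i$ invocation of Lemma 8, and the telescoping via the tower property are all sound. The one place where you are no more rigorous than the paper is the final upgrade from the pathwise liminf-style contradiction to convergence in probability of the (random) ratio; you flag this honestly, and the paper's own treatment of its ``case 2'' is equally informal, so you are on the same footing there.
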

\begin{proof}
We will first show that $\underset{i \longrightarrow \infty}\liminf \, \beta_i(\delta)^2 \left(\lambda_d^{(i)}\right)^{-1} = 0$ via a proof by contradiction. Assume that:
\begin{equation}\label{eqn:contradiction_hypothesis}
    \underset{i \longrightarrow \infty}\liminf \, \beta_i(\delta)^2 \left(\lambda_d^{(i)}\right)^{-1} = 2\alpha > 0.
\end{equation}
If \eqref{eqn:contradiction_hypothesis} is true, then there exists $i_0$ such that for all $i \ge i_0$, $\beta_i(\delta)^2 \left(\lambda_d^{(i)}\right)^{-1} \ge \alpha$. In the following, we assume that $i \ge i_0$.
%so that $\beta_i(\delta)^2 \left(\lambda_d^{(i)}\right)^{-1} \ge \alpha$ for all $i$. 
Since $\beta_i(\delta)$ increases at most logarithmically in $i$, it suffices to show that $\lambda_d^{(i)}$ increases at least linearly on average to achieve a contradiction with \eqref{eqn:contradiction_hypothesis}.

Under the contradiction hypothesis, Lemmas \ref{lemma:reward_consistency_part2} and \ref{lemma:reward_consistency_part3} both hold. Due to Lemma \ref{lemma:reward_consistency_part3}, \algo~will infinitely-often, and at a non-decaying rate, sample trajectory pairs such that $|\bm{x}_i^T \bm{v}_d^{(i)}|$ is lower-bounded away from zero. At iteration $n$, we analyze the effect of this guarantee upon $\lambda_d^{(n)}$. Note that $\lambda_d^{(n)}$ corresponds to the eigenvector $\bm{v}_d^{(n)}$ of $M_n$, and so:
\begin{flalign}\label{eqn:lambda_d}
\lambda_d^{(n)} = \bm{v}_d^{(n) T}M_n \bm{v}_d^{(n)} \overset{(a)}= \bm{v}_d^{(n) T}\left(\lambda I + \sum_{i = 1}^{n - 1}\bm{x}_i\bm{x}_i^T \right) \bm{v}_d^{(n)} = \lambda + \sum_{i = 1}^{n - 1}\left(\bm{x}_i^T\bm{v}_d^{(n)}\right)^2,
\end{flalign}
where (a) follows from the definition of $M_n$. Note that while the right-hand side expression of \eqref{eqn:lambda_d} depends upon $\bm{x}_i^T \bm{v}_d^{(n)}$ for $i < n$, \eqref{eqn:exp_bound} depends upon $\bm{x}_i^T \bm{v}_d^{(i)}$. Clearly, if $\bm{v}_d^{(i)}$ remained constant in $i$, then the combination of \eqref{eqn:exp_bound} and \eqref{eqn:lambda_d} would suffice to prove that $\lambda_d^{(n)} \longrightarrow \infty$ with at least a linear on-average rate; however, $\bm{v}_d^{(i)}$ can vary with $i$ over the entire space of unit vectors in $\mathbb{R}^d$.

We leverage that $\bm{v}_d^{(i)}$ is a unit vector, that the set of unit vectors in $\mathbb{R}^d$ is compact, and that any infinite cover of a compact set has a finite subcover. In particular, for any $\varepsilon > 0$, there exist sets $S_1, \ldots, S_K \subset \mathbb{R}^d$, $K < \infty$, such that:
\begin{enumerate}
    \item For $\bm{v} \in \mathbb{R}^d$ such that $||\bm{v}||_2 = 1$, $\bm{v} \in S_k$ for some $k \in \{1, \ldots, K\}$, and 
    \item If $\bm{v}_1, \bm{v}_2 \in S_k$, then $||\bm{v}_1 - \bm{v}_2|| < \varepsilon$.
\end{enumerate}

We will show that there exists a sequence $(n_i) \in \mathbb{N}$ such that $\bm{v}_d^{(n_i)} \in S_k$ for fixed $k \in\{1, \ldots, K\}$, with the events $\bm{v}_d^{(n_i)} \in S_k$ corresponding to the indices $(n_i)$ occurring at some non-decaying frequency. Then, by appropriately choosing $\varepsilon$, we will use \eqref{eqn:lambda_d} and the mutual proximity of the vectors $\bm{v}_d^{(n_i)}$ to show that $\lambda_d^{(n)}$ increases with an at-least linear rate.

Observe that for any number of total iterations $N$, there exists an integer $k \in \{1, \ldots, K\}$ such that $\bm{v}_d^{(i)} \in S_k$ during at least $\frac{N}{K}$ iterations. Because $K$ is a constant and $\frac{N}{K}$ is linear in $N$, the number of iterations in which $\bm{v}_d^{(i)} \in S_k$ is at least linear in $N$ for some $k$. The right-hand sum in \eqref{eqn:lambda_d} can then be divided according to the indices $(n_i)$ and the remaining indices:
\begin{flalign}\label{eqn:lambda_2}
\lambda_d^{(n_j)} = \lambda + \sum_{i = 1}^{n_j - 1}\left(\bm{x}_i^T\bm{v}_d^{(n_i)}\right)^2 = \lambda + \sum_{i = 1}^{j - 1}\left(\bm{x}_{n_i}^T\bm{v}_d^{(n_i)}\right)^2 + \sum_{j = 1; j\notin \{n_1, n_2, \ldots, n_{j - 1}\}}^{n_j - 1}\left(\bm{x}_j^T\bm{v}_d^{(n_i)}\right)^2.
\end{flalign}

The latter sum in \eqref{eqn:lambda_2} is non-decreasing in $n_j$, as all of its terms are non-negative. In the former sum, $||\bm{v}_d^{(n_j)} - \bm{v}_d^{(n_i)}|| < \varepsilon$ for each $i \in \{1, \ldots, j - 1\}$. Defining $\bm{\delta} := \bm{v}_d^{(n_j)} - \bm{v}_d^{(n_i)}$, so that $||\bm{\delta}||_2 \le \varepsilon$:
\begin{flalign}\label{eqn_v_tn_bound}
\left(\bm{x}_{n_i}^T\bm{v}_d^{(n_i)}\right)^2 =
\left(\bm{x}_{n_i}^T\left(\bm{v}_d^{(n_i)} + \bm{\delta} \right)\right)^2 = \left(\bm{x}_{n_i}^T \bm{v}_d^{(n_i)} + \bm{x}_{n_i}^T\bm{\delta} \right)^2 \ge \left(\left|\bm{x}_{n_i}^T \bm{v}_d^{(n_i)}\right| - \left|\bm{x}_{n_i}^T\bm{\delta} \right|\right)^2.
\end{flalign}

By the Cauchy-Schwarz inequality, $\left|\bm{x}_{n_i}^T\bm{\delta} \right| \le ||\bm{\delta}||_2*||\bm{x}_{n_i}||_2 \le 2\varepsilon h$, where $h$ is the trajectory horizon. Because \eqref{eqn:exp_bound} requires that $\mathbb{E}\left[\left|\bm{x}_{n_i}^T \bm{v}_d^{(n_i)} \right| \right] \ge c^\prime > 0$, one can choose $\varepsilon$ small enough that $\mathbb{E}\left[\left|\bm{x}_{n_i}^T \bm{v}_d^{(n_i)}\right| - \left|\bm{x}_{n_i}^T\bm{\delta} \right|\right] \ge c^\prime - 2\varepsilon h \ge c^{\prime\prime} > 0$, and:
\begin{flalign*}
\mathbb{E}\left[\left(\bm{x}_{n_i}^T\bm{v}_d^{(n_i)}\right)^2\right] \overset{(a)}\ge \mathbb{E}\left[\left(\left|\bm{x}_{n_i}^T \bm{v}_d^{(n_i)}\right| - \left|\bm{x}_{n_i}^T\bm{\delta} \right|\right)^2\right] \overset{(b)}\ge \mathbb{E}\left[\left|\bm{x}_{n_i}^T \bm{v}_d^{(n_i)}\right| - \left|\bm{x}_{n_i}^T\bm{\delta} \right|\right]^2 \ge (c^{\prime\prime})^2 > 0,
\end{flalign*}
where (a) takes expectations of both sides of \eqref{eqn_v_tn_bound}, and (b) follows from Jensen's inequality. Merging this result with \eqref{eqn:lambda_2} implies that $\lambda_d^{(n_j)}$ is expected to increase at least linearly on average, according to the positive constant $c^{\prime\prime}$, over the indices $(n_i)$. Recall that there always exists an $S_k$ such that the number of times when $\bm{v}_d^{(i)} \in S_k$ is at least linear in the total number of iterations $N$. Thus, the rate at which indices $(n_i)$ occur is always (at least) linear in $N$ on average, and $\lambda_d^{(n_j)}$ increases at least linearly in $N$ in expectation.

We demonstrate that $\underset{i \longrightarrow \infty}{\liminf} \frac{\beta_i(\delta)^2} {\lambda_d^{(i)}} = 0$ holds: the numerator of $\frac{\beta_i(\delta)^2} {\lambda_d^{(i)}}$ is the square of a quantity that increases at most logarithmically in $i$, while the denominator increases at least linearly in $i$ on average. This contradicts the assumption in \eqref{eqn:contradiction_hypothesis}, and so $\underset{i \longrightarrow \infty}\liminf \, \beta_i(\delta)^2 \left(\lambda_d^{(i)}\right)^{-1} = 0$ must hold.

Finally, we leverage that $\underset{i \longrightarrow \infty}\liminf \, \beta_i(\delta)^2 \left(\lambda_d^{(i)}\right)^{-1} = 0$ to show that $\underset{i \longrightarrow \infty}\lim \, \beta_i(\delta)^2 \left(\lambda_d^{(i)}\right)^{-1} = 0$. Consider the following two possible cases: 1) $\beta_i(\delta)^2 \left(\lambda_d^{(i)}\right)^{-1}$ converges to zero in probability, and 2) $\beta_i(\delta)^2 \left(\lambda_d^{(i)}\right)^{-1}$ \textit{does not} converge to zero in probability. In case 1), because convergence in probability implies convergence in distribution, the desired result holds.

In case 2), there exists some $\varepsilon > 0$ such that $P\left(\beta_i(\delta)^2 \left(\lambda_d^{(i)}\right)^{-1} \ge \varepsilon\right) \centernot\longrightarrow 0$. In this case, one can apply the same arguments used to show that $\underset{i \longrightarrow \infty}\liminf \, \beta_i(\delta)^2 \left(\lambda_d^{(i)}\right)^{-1} = 0$, but specifically over time indices where $\beta_i(\delta)^2\left(\lambda_d^{(i)}\right)^{-1} \ge \varepsilon$. Due to the non-convergence in probability, these time indices must occur at some non-decaying rate, and so the same analysis applies. Thus, $\lambda_d^{(i)}$ increases in $i$ with at least a minimum linear average rate, while $\beta_i(\delta)$ increases at most logarithmically in $i$. This violates the non-convergence assumption of case 2), resulting in a contradiction. Therefore, only case 1) can hold.

\end{proof}

From the asymptotic consistency of the dynamics and reward samples, one can show that the sampled policies converge in distribution to the optimal policy:
\begin{theorem}\label{thm:policy_consistency}
With probability $1 - \delta$, the sampled policies $\pi_{i1}, \pi_{i2}$ converge in distribution to the optimal policy, $\pi^*$, as $i \longrightarrow \infty$. That is, $P(\pi_{i1} = \pi^*) \longrightarrow 1$ and $P(\pi_{i2} = \pi^*) \longrightarrow 1$ as $i \longrightarrow \infty$.
\end{theorem}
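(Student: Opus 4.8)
The plan is to exploit that the sampled policy is a deterministic function of the sampled MDP via value iteration, $\pi_{ij} = \pi_{vi}(\bm{\tilde{p}}_{ij}, \bm{\tilde{r}}_{ij})$ (Definition \ref{defn:value_iteration_policy}), while the optimal policy is $\pi^* = \pi_{vi}(\bm{\overline{p}}, \bm{\overline{r}})$. Propositions \ref{prop:dyn_consistency} and \ref{prop:reward_consistency} give $\bm{\tilde{p}}_{ij} \overset{D}{\longrightarrow} \bm{\overline{p}}$ and (with probability $1 - \delta$) $\bm{\tilde{r}}_{ij} \overset{D}{\longrightarrow} \bm{\overline{r}}$. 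Since both limits are \emph{constants}, convergence in distribution upgrades to convergence in probability (Fact \ref{fact:conv_dist_2}), and because each coordinate converges in probability to a deterministic value, the joint pair converges in probability as well: $(\bm{\tilde{p}}_{ij}, \bm{\tilde{r}}_{ij}) \overset{P}{\longrightarrow} (\bm{\overline{p}}, \bm{\overline{r}})$ for $j \in \{1, 2\}$.

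First I would reduce the claim to a statement about value functions. The value $V(\bm{p}, \bm{r}, \pi)$ is jointly continuous in $(\bm{p}, \bm{r})$ for every fixed deterministic policy $\pi$, and there are only finitely many such policies (namely $A^{Sh}$, per Definition \ref{defn:value_iteration_policy}). Applying the continuous mapping theorem coordinatewise (equivalently, combining Lemma \ref{lemma:V_conv_in_p} with the reward convergence and continuity of $V$ in $\bm{r}$), $V(\bm{\tilde{p}}_{ij}, \bm{\tilde{r}}_{ij}, \pi) \overset{P}{\longrightarrow} V(\bm{\overline{p}}, \bm{\overline{r}}, \pi)$ for each $\pi$; taking a union bound over the finite policy class, for any $\varepsilon, \delta^\prime > 0$ there is $i_0$ such that for $i > i_0$, all these value estimates are simultaneously within $\varepsilon$ of their true values with probability at least $1 - \delta^\prime$.

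Next, I would introduce the value gap. Assuming $\pi^*$ is the unique optimal deterministic policy, set $\Delta := V(\bm{\overline{p}}, \bm{\overline{r}}, \pi^*) - \max_{\pi \neq \pi^*} V(\bm{\overline{p}}, \bm{\overline{r}}, \pi) > 0$. On the event that every $|V(\bm{\tilde{p}}_{ij}, \bm{\tilde{r}}_{ij}, \pi) - V(\bm{\overline{p}}, \bm{\overline{r}}, \pi)| < \Delta/2$, the sampled ordering preserves $\pi^*$ as the strict maximizer, so value iteration on the sampled MDP returns $\pi^*$, i.e.\ $\pi_{ij} = \pi^*$. Hence $\{\pi_{ij} \neq \pi^*\} \subseteq \{\exists\, \pi : |V(\bm{\tilde{p}}_{ij}, \bm{\tilde{r}}_{ij}, \pi) - V(\bm{\overline{p}}, \bm{\overline{r}}, \pi)| \geq \Delta/2\}$, and by the previous paragraph the probability of the right-hand event vanishes as $i \longrightarrow \infty$. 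This yields $P(\pi_{ij} = \pi^*) \longrightarrow 1$, and since the $(1 - \delta)$-probability reward event of Proposition \ref{prop:reward_consistency} is the only source of failure, the conclusion holds with probability $1 - \delta$.

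The main obstacle is the case of \emph{ties} in the optimal policy. When several deterministic policies attain the optimal value under $(\bm{\overline{p}}, \bm{\overline{r}})$, we have $\Delta = 0$ and the strict-separation argument collapses; the sampled policy may oscillate among the optimal policies, so $P(\pi_{ij} = \pi^*)$ need not converge to $1$ for a single designated $\pi^*$. I would resolve this either by the (generic) standing assumption that the true MDP has a unique optimal policy, or by reinterpreting the claim as convergence to the \emph{set} of optimal policies: redefining $\Delta$ as the gap between the optimal value and the best \emph{strictly}-suboptimal policy, the identical argument shows that $\pi_{ij}$ lies in the optimal set with probability tending to $1$, which is exactly what the subsequent regret analysis requires.
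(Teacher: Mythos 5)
Your proposal is correct and follows essentially the same route as the paper's proof: convergence in distribution of the sampled parameters to constants upgrades to convergence in probability, continuity of $V(\bm{p},\bm{r},\pi)$ plus a union bound over the finitely many deterministic policies gives simultaneous $\varepsilon$-accuracy of all sampled values, and choosing $\varepsilon$ below half the value gap forces value iteration to return $\pi^*$. Your concern about ties is also already reflected in the paper's definition of $\varepsilon$, which is taken as half the gap between the optimal value and the best \emph{strictly}-suboptimal policy, i.e.\ exactly your ``convergence to the optimal set'' resolution.
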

\begin{proof}
It suffices to show that $P(\pi_{i1} = \pi^*) \longrightarrow 1$  as $i \longrightarrow \infty$, as the proof is identical for $\pi_{i2}$. From Propositions \ref{prop:dyn_consistency} and \ref{prop:reward_consistency}, respectively, we have that $\bm{\tilde{p}}_{i1} \overset{D}\longrightarrow \bm{\overline{p}}$ and that $\bm{\tilde{r}}_{i1} \overset{D}\longrightarrow \bm{\overline{r}}$ with probability $1 - \delta$. We proceed under the assumption that $\bm{\tilde{r}}_{i1} \overset{D}\longrightarrow \bm{\overline{r}}$, i.e., that the probability-$(1 - \delta)$ event occurs.

By Fact \ref{fact:conv_dist_1} in Appendix \ref{sec:appendix_prob_facts}, for each fixed $\pi$, $V(\bm{\tilde{p}}_{i1}, \bm{\tilde{r}}_{i1}, \pi) \overset{D}\longrightarrow V(\bm{\overline{p}}, \bm{\overline{r}}, \pi)$, as value functions are continuous in the dynamics and reward parameters. Applying Fact \ref{fact:conv_dist_2} in Appendix \ref{sec:appendix_prob_facts}, for each fixed $\pi$ and $\varepsilon > 0$:
\begin{equation}\label{eqn:value_conv_prob}
    P(|V(\bm{\tilde{p}}_{i1}, \bm{\tilde{r}}_{i1}, \pi) - V(\bm{\overline{p}}, \bm{\overline{r}}, \pi)| > \varepsilon) \longrightarrow 0 \text{ as } i \longrightarrow \infty.
\end{equation}

Next, we set the value of $\varepsilon$ to be less than half of the smallest gap between the value of the optimal policy and the value of any suboptimal policy:
\begin{equation*}
    \varepsilon < \frac{1}{2} \left[\max_\pi V(\bm{\overline{p}}, \bm{\overline{r}}, \pi) - \underset{\pi \text{ s.t. } V(\bm{\overline{p}}, \bm{\overline{r}}, \pi) < \max_{\pi^\prime}V(\bm{\overline{p}}, \bm{\overline{r}}, \pi^\prime)}{\max} \, V(\bm{\overline{p}}, \bm{\overline{r}}, \pi)\right].
\end{equation*}

Then, the probability of selecting a non-optimal policy can be upper-bounded by a quantity that decays with $i$:
\begin{flalign*}
P(\pi_{i1} \neq \pi^*) &\overset{(a)}\le P\left(\underset{\pi}{\bigcup} \{|V(\bm{\tilde{p}}_{i1}, \bm{\tilde{r}}_{i1}, \pi) - V(\bm{\overline{p}}, \bm{\overline{r}}, \pi)| > \varepsilon \} \right) \\ &\overset{(b)}\le \sum_{\pi} P\left(|V(\bm{\tilde{p}}_{i1}, \bm{\tilde{r}}_{i1}, \pi) - V(\bm{\overline{p}}, \bm{\overline{r}}, \pi)| > \varepsilon \right) \overset{(c)}{\longrightarrow} 0 \text{ as } i \longrightarrow \infty,
\end{flalign*}
where (a) follows from the definition of $\varepsilon$, (b) follows from the union bound, and (c) holds due to \eqref{eqn:value_conv_prob}.

\end{proof}

\subsection{BOUNDING THE ONE-SIDED REGRET WHEN ONE POLICY IS DRAWN FROM A FIXED DISTRIBUTION}\label{sec:bound_one_sided_fixed}

In this portion of the analysis, we assume that in each iteration $i$, policy $\pi_{i1}$ is drawn from a fixed distribution over $\Pi$, the set of deterministic policies. In this setting, we only consider the one-sided regret incurred by the policies $\{\pi_{i2}\}, i \ge 1$. Let $N$ be the total number of iterations of \algo, so that the total number of actions taken by $\{\pi_{i2}\}, i \ge 1$, is $T = Nh$. Let $\bm{x}_i^*$ be the trajectory obtained in iteration $i$ if $\pi_{i2} = \pi^*$. Then, the expected one-sided regret of the policies $\{\pi_{i2}\}$ is the expected loss in total utility due to selecting suboptimal policies:
\begin{flalign*}
\mathbb{E}\left[\text{\reg}_2(T)\right] = \mathbb{E}\left[\text{\reg}_2(Nh)\right] := \mathbb{E}\left[\sum_{i = 1}^N \bm{\overline{r}}^T(\bm{x}^*_i - \bm{x}_{i2})\right].
\end{flalign*}

Recall that the preference outcome on iteration $i$, $y_i \in \left\{-\frac{1}{2}, \frac{1}{2}\right\}$, can be written as:
\begin{equation*}
    y_i = \bm{\overline{r}}^T (\bm{x}_{i2} - \bm{x}_{i1}) + \eta_i,
\end{equation*}
for zero-mean noise $\eta_i$. Define the outcome of selecting policy $\pi_{i2} = \pi^*$ in iteration $i$ as $y_i^* \in \left\{-\frac{1}{2}, \frac{1}{2}\right\}$:
\begin{equation*}
    y^*_i = \bm{\overline{r}}^T (\bm{x}^*_i - \bm{x}_{i1}) + \eta_i^*,
\end{equation*}
for zero-mean noise $\eta_i^*$. Importantly, the difference $y^*_i - y_i$ is equal to the instantaneous one-sided regret in expectation:
\begin{flalign}
\mathbb{E}\left[\text{\reg}_2(T)\right] :&= \mathbb{E}\left[\sum_{i = 1}^N \bm{\overline{r}}^T(\bm{x}^*_i - \bm{x}_{i2})\right] = \mathbb{E}\left[\sum_{i = 1}^N \bigr(\bm{\overline{r}}^T(\bm{x}^*_i - \bm{x}_{i1}) - \bm{\overline{r}}^T(\bm{x}_{i2} - \bm{x}_{i1})\bigr)\right] \label{eqn:xi1_cancel} \\ &\overset{(a)}= \mathbb{E}\left[\sum_{i = 1}^N (y_i^* - y_i) \right], \nonumber
\end{flalign}
where equality (a) holds because the noise terms $\eta_i$ and $\eta_i^*$ are zero-mean.

Define the one-sided algorithm's history at iteration $i$ as $\mathcal{H}_{i}^{(2)} = \{Z_1, Z_2, \ldots, Z_i\}$, where $Z_i = (\pi_{i2}, \tau_{i1}, \tau_{i2}, \bm{x}_{i2} - \bm{x}_{i1}, y_i)$. Analogously to \cite{russo2016information}, we establish notation for probabilities and information-theoretic quantities while conditioning on the history $\mathcal{H}_{i - 1}^{(2)}$. In particular, $P_i(\cdot) := P(\cdot \,|\, \mathcal{H}_{i - 1}^{(2)})$ and $\mathbb{E}_i[\cdot] := \mathbb{E}[\cdot \,|\, \mathcal{H}_{i - 1}^{(2)}]$. With respect to the history, the entropy of a random variable $X$ is $H_i(X) := -\sum_{x} P_i(X = x)\log P_i(X = x)$, while two random variables $X$ and $Y$ have mutual information $I_i(X; Y) := H_i(X) - H_i(X | Y)$. Lastly, $D(P || Q)$ is the Kullback-Leibler divergence between two discrete probability distributions $P$ and $Q$.

The \textit{information ratio} is then defined as:
\begin{equation}\label{eqn:iratio}
    \Gamma_i := \frac{\mathbb{E}_i[y_i^* - y_i]^2}{I_i(\pi^*; (\pi_{i2}, \tau_{i1}, \tau_{i2}, \bm{x}_{i2} - \bm{x}_{i1}, y_i))}.
\end{equation}
This definition is analogous to the information ratio defined in \cite{russo2016information}, but while \cite{russo2016information} study the bandit setting with absolute feedback, our definition is adapted to the preference-based RL setting. Note that the numerator is the square of the expected instantaneous one-sided regret, while the denominator is the information gained about the optimal policy in iteration $i$.

In \cite{russo2016information}, the authors express the expected Bayesian regret of Thompson sampling in the bandit setting in terms of the information ratio (Proposition 1, \cite{russo2016information}). The following lemma adapts this result to the PBRL setting:

\begin{lemma}\label{lemma:regret_inf_ratio}
If there exists an upper bound $\overline{\Gamma}$ such that $\Gamma_i \le \overline{\Gamma}$ almost surely for each $i \in \{1, \ldots, N\}$, where $N$ is the number of \algo~iterations (i.e., pairs of trajectories), then the policies $\{\pi_{i2}\}$ take $T = Nh$ total actions, and:
\begin{equation*}
    \mathbb{E}[\reg_2(T)] = \mathbb{E}[\reg_2(Nh)] \le \sqrt{\overline{\Gamma} H(\pi^*) N},
\end{equation*}
where $H(\pi^*)$ is the entropy of the optimal policy $\pi^*$. Note that since there are at most $A^{Sh}$ possible deterministic policies, $H(\pi^*) \le \log |A^{Sh}| = Sh \log A$. Substituting this,
\begin{equation*}
    \mathbb{E}[\reg_2(T)] \le \sqrt{\overline{\Gamma}Sh N \log A} = \sqrt{\overline{\Gamma}ST \log A}.
\end{equation*}
\end{lemma}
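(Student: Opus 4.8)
The plan is to follow the information-theoretic argument of Proposition 1 in \cite{russo2016information}, adapted to the one-sided preference-based regret established above. The starting point is the identity $\mathbb{E}[\reg_2(Nh)] = \mathbb{E}\left[\sum_{i=1}^N (y_i^* - y_i)\right]$, which was already derived. First I would condition on the history and apply the tower property, writing $\mathbb{E}[y_i^* - y_i] = \mathbb{E}[\mathbb{E}_i[y_i^* - y_i]]$. The hypothesis $\Gamma_i \le \overline{\Gamma}$ then gives, almost surely, $\mathbb{E}_i[y_i^* - y_i]^2 \le \overline{\Gamma}\, I_i(\pi^*; Z_i)$, where $Z_i = (\pi_{i2}, \tau_{i1}, \tau_{i2}, \bm{x}_{i2} - \bm{x}_{i1}, y_i)$. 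Taking square roots, and using that the conditional expected one-sided regret $\mathbb{E}_i[y_i^*-y_i]$ is nonnegative (since $\pi^*$ is optimal), yields the per-step bound $\mathbb{E}_i[y_i^* - y_i] \le \sqrt{\overline{\Gamma}\, I_i(\pi^*; Z_i)}$.

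Next I would aggregate these per-step bounds. Summing over $i$ and taking the outer expectation gives $\mathbb{E}[\reg_2(Nh)] \le \sum_{i=1}^N \mathbb{E}\bigl[\sqrt{\overline{\Gamma}\, I_i(\pi^*; Z_i)}\bigr]$. Concavity of the square root (Jensen's inequality) bounds each summand by $\sqrt{\overline{\Gamma}\, \mathbb{E}[I_i(\pi^*; Z_i)]}$, and then the Cauchy--Schwarz inequality over the $N$ terms yields
\[
\mathbb{E}[\reg_2(Nh)] \le \sqrt{\overline{\Gamma}\, N \sum_{i=1}^N \mathbb{E}[I_i(\pi^*; Z_i)]}.
\]

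The crucial remaining step is to control $\sum_{i=1}^N \mathbb{E}[I_i(\pi^*; Z_i)]$. Because $I_i$ denotes mutual information conditioned on the realized history $\mathcal{H}_{i-1}^{(2)}$, the outer expectation turns each term into the conditional mutual information $I(\pi^*; Z_i \mid \mathcal{H}_{i-1}^{(2)})$, and the chain rule for mutual information telescopes the sum into $I(\pi^*; \mathcal{H}_N^{(2)}) = I(\pi^*; Z_1, \ldots, Z_N)$. Since $\pi^*$ ranges over the finite set of deterministic policies, mutual information is bounded by entropy, so $I(\pi^*; \mathcal{H}_N^{(2)}) \le H(\pi^*)$. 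Substituting gives $\mathbb{E}[\reg_2(Nh)] \le \sqrt{\overline{\Gamma}\, H(\pi^*)\, N}$, and the counting bound $H(\pi^*) \le \log(A^{Sh}) = Sh\log A$ together with $T = Nh$ delivers the stated rate $\sqrt{\overline{\Gamma}\, S T \log A}$.

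The main obstacle I anticipate is purely bookkeeping in the information-theoretic step: one must verify that $\mathbb{E}[I_i(\pi^*; Z_i)]$ is genuinely the conditional mutual information $I(\pi^*; Z_i \mid \mathcal{H}_{i-1}^{(2)})$, so that the chain rule applies exactly, and that the $\sigma$-algebra generated by $(Z_1,\dots,Z_N)$ coincides with $\mathcal{H}_N^{(2)}$. Unlike the bandit setting of \cite{russo2016information}, here the observation $Z_i$ bundles two rolled-out trajectories, their feature difference, and a single preference bit, so I would take care that the entropy bound $I(\pi^*;\mathcal{H}_N^{(2)}) \le H(\pi^*)$ relies only on the discreteness of $\pi^*$ and not on any property of $Z_i$. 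The remaining inequalities (Jensen, Cauchy--Schwarz, the tower property) are standard.
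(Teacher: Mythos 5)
Your proposal is correct and follows essentially the same route as the paper's proof: tower property, the information-ratio bound per step, Cauchy--Schwarz and Jensen to aggregate, and the chain rule for mutual information telescoping to $I(\pi^*; Z_1,\ldots,Z_N) \le H(\pi^*)$. The only cosmetic difference is that you apply Jensen before Cauchy--Schwarz rather than after, and you make explicit the nonnegativity of $\mathbb{E}_i[y_i^* - y_i]$ needed to take square roots, which the paper leaves implicit.
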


\begin{proof}
\begin{flalign}
\mathbb{E}[\reg_2(T)] &= \sum_{i = 1}^N \mathbb{E}[y_i^* - y_i] \overset{(a)}{=} \sum_{i = 1}^N \mathbb{E}_{\mathcal{H}_{i - 1}^{(2)}}\mathbb{E}_i\left[y_i^* - y_i \right] = \sum_{i = 1}^N \mathbb{E}_{\mathcal{H}_{i - 1}^{(2)}}\sqrt{\Gamma_i I_i(\pi^*; (\pi_{i2}, \tau_{i1}, \tau_{i2}, \bm{x}_{i2} - \bm{x}_{i1}, y_i))} \nonumber \\ &\le \sqrt{\overline{\Gamma}} \sum_{i = 1}^N \mathbb{E}_{\mathcal{H}_{i - 1}^{(2)}}\sqrt{I_i(\pi^*; (\pi_{i2}, \tau_{i1}, \tau_{i2}, \bm{x}_{i2} - \bm{x}_{i1}, y_i))} \nonumber \\ &\overset{(b)}{\le} \sqrt{\overline{\Gamma}N \sum_{i = 1}^N \left[\mathbb{E}_{\mathcal{H}_{i - 1}^{(2)}}\sqrt{I_i(\pi^*; (\pi_{i2}, \tau_{i1}, \tau_{i2}, \bm{x}_{i2} - \bm{x}_{i1}, y_i))}\right]^2} \nonumber \\ &\overset{(c)}{\le} \sqrt{\overline{\Gamma}N \sum_{i = 1}^N \mathbb{E}_{\mathcal{H}_{i - 1}^{(2)}}I_i(\pi^*; (\pi_{i2}, \tau_{i1}, \tau_{i2}, \bm{x}_{i2} - \bm{x}_{i1}, y_i))}, \label{eqn:lemma1}
\end{flalign}
where (a) results from the tower property of conditional expectation, (b) follows from the Cauchy-Schwarz inequality, and (c) follows from Jensen's inequality. It remains to upper-bound the summation in \eqref{eqn:lemma1} by $H(\pi^*)$. Defining $Z_i := (\pi_{i2}, \tau_{i1}, \tau_{i2}, \bm{x}_{i2} - \bm{x}_{i1}, y_i)$, the summation terms are equal to $\mathbb{E}_{\mathcal{H}_{i - 1}^{(2)}}I_i(\pi^*; Z_i) = I(\pi^*; Z_i | Z_1, \ldots, Z_{i - 1})$, where the last equality comes from applying the definitions of $I_i$ and of conditional mutual information. Therefore,
\begin{flalign*}
\sum_{i = 1}^N \mathbb{E}_{\mathcal{H}_{i - 1}^{(2)}}I_i(\pi^*; (\pi_{i2}, \tau_{i1}, \tau_{i2}, \bm{x}_{i2} - \bm{x}_{i1}, y_i)) &= \sum_{i = 1}^N I(\pi^*; Z_i | Z_1, \ldots, Z_{i - 1}) \overset{(a)}{=} I(\pi^*; Z_1, \ldots, Z_i) \\ &\overset{(b)}{=} H(\pi^*) - H(\pi^* | Z_1, \ldots, Z_i) \overset{(c)}{\le} H(\pi^*),
\end{flalign*}
where (a) results from the chain rule for mutual information, (b) is a standard identity resulting from the definitions of entropy and mutual information \citep{cover2012elements}, and (c) follows from the non-negativity of entropy.
\end{proof}

\begin{coro}\label{coro:one_sided_bound_in_terms_of_Gamma}
If $\Gamma_i \le \overline{\Gamma}$ almost surely for each $i \in \{i_0, \ldots, N\}$ for some index $i_0 \ge 1$, then \algo~achieves an asymptotic one-sided regret rate of $\sqrt{\overline{\Gamma}Sh N \log A} = \sqrt{\overline{\Gamma}ST \log A}$.
\end{coro}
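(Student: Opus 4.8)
The plan is to reduce this corollary to Lemma~\ref{lemma:regret_inf_ratio} by splitting the cumulative one-sided regret into a finite ``burn-in'' portion over iterations $1,\ldots,i_0-1$, where the information-ratio bound is not assumed to hold, and a ``tail'' portion over iterations $i_0,\ldots,N$, where $\Gamma_i \le \overline{\Gamma}$ by hypothesis. I would write
\begin{equation*}
    \mathbb{E}[\reg_2(T)] = \sum_{i=1}^{i_0 - 1}\mathbb{E}[y_i^* - y_i] + \sum_{i=i_0}^{N}\mathbb{E}[y_i^* - y_i],
\end{equation*}
and argue that the first sum is a constant independent of $N$, while the second is bounded exactly as in Lemma~\ref{lemma:regret_inf_ratio}. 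Since the burn-in term does not grow with $N$, it is asymptotically negligible relative to the $\sqrt{N}$ tail term, leaving the stated rate.

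For the burn-in sum, each expected instantaneous regret is uniformly bounded: by \eqref{eqn:xi1_cancel} we have $\mathbb{E}[y_i^* - y_i] = \bm{\overline{r}}^T(\bm{x}^*_i - \bm{x}_{i2})$, and by Cauchy--Schwarz together with $\|\bm{\overline{r}}\|_2 \le S_r$ and $\|\bm{x}^*_i - \bm{x}_{i2}\|_2 \le \|\bm{x}^*_i\|_1 + \|\bm{x}_{i2}\|_1 = 2h$, this is at most $2hS_r$. Hence $\sum_{i=1}^{i_0-1}\mathbb{E}[y_i^* - y_i] \le 2hS_r(i_0 - 1) =: C$, a constant depending only on $h$, $S_r$, and the fixed index $i_0$, and in particular independent of $N$.

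For the tail sum, I would rerun the argument of Lemma~\ref{lemma:regret_inf_ratio} verbatim but with the summation ranging over $i \in \{i_0,\ldots,N\}$. The Cauchy--Schwarz and Jensen steps go through unchanged, yielding a bound proportional to $\sqrt{\overline{\Gamma}(N - i_0 + 1)\sum_{i=i_0}^{N}\mathbb{E}_{\mathcal{H}_{i-1}^{(2)}}I_i(\pi^*; Z_i)}$. The chain rule for mutual information then gives $\sum_{i=i_0}^{N} I(\pi^*; Z_i \mid Z_1,\ldots,Z_{i-1}) = I(\pi^*; Z_{i_0},\ldots,Z_N \mid Z_1,\ldots,Z_{i_0-1}) \le H(\pi^* \mid Z_1,\ldots,Z_{i_0-1}) \le H(\pi^*)$, so the information budget is if anything tighter than in the full-sum case. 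Bounding $H(\pi^*) \le Sh\log A$ and $N - i_0 + 1 \le N$ yields a tail bound of $\sqrt{\overline{\Gamma}Sh N \log A} = \sqrt{\overline{\Gamma}ST\log A}$. Combining, $\mathbb{E}[\reg_2(T)] \le C + \sqrt{\overline{\Gamma}ST\log A}$; since $C$ is constant, the asymptotic rate is $\sqrt{\overline{\Gamma}ST\log A}$. The only genuinely delicate point is making precise that ``asymptotic rate'' absorbs the additive constant $C$, i.e.\ that the finitely many early iterations during which the information-ratio bound may fail cannot affect the leading-order $\Theta(\sqrt{N})$ growth; the remainder is a direct reduction to the already-established Lemma~\ref{lemma:regret_inf_ratio}.
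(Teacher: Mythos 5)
Your proof is correct and follows essentially the same route as the paper: decompose the regret into the burn-in sum over $i < i_0$ (a constant) and the tail sum over $i \ge i_0$, then rerun the argument of Lemma~\ref{lemma:regret_inf_ratio} on the tail. You fill in details the paper leaves implicit (the explicit $2hS_r(i_0-1)$ bound on the burn-in via Cauchy--Schwarz, and the conditional chain-rule step showing the tail's information budget is still at most $H(\pi^*)$), but the structure is identical.
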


\begin{proof}
The one-sided regret can be decomposed into two sums:
\begin{flalign*}
\mathbb{E}[\reg_2(T)] = \sum_{i = 1}^N \mathbb{E}[y_i^* - y_i] = \sum_{i = 1}^{i_0 - 1} \mathbb{E}[y_i^* - y_i] + \sum_{i = i_0}^N \mathbb{E}[y_i^* - y_i].
\end{flalign*}
The second term can be upper-bounded via the same arguments used to prove Lemma \ref{lemma:regret_inf_ratio}. The first term is a constant for iteration $i \ge i_0$, as it does not depend on the performance of the algorithm for iterations $i \ge i_0$.
\end{proof}

Next, we turn to upper-bounding the information ratio. First, Lemma \ref{lemma:TS} demonstrates in Thompson sampling, the probability of selecting a policy is equal to that policy's posterior probability of being optimal.

\begin{lemma}\label{lemma:TS}
When $\pi_{i2}$ is selected via Thompson sampling---i.e., by drawing respective samples $\bm{\tilde{p}}$ and $\bm{\tilde{r}}$ from the dynamics and reward posteriors, and performing value iteration to obtain $\pi_{i2} = \pi_{vi}(\bm{\tilde{p}}, \bm{\tilde{r}})$---then for each deterministic policy $\pi$, $P_i(\pi_{i2} = \pi) = P_i(\pi^* = \pi)$.
\end{lemma}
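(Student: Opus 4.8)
The plan is to exploit the defining feature of Thompson (posterior) sampling: both the optimal policy and the sampled policy are obtained by applying the \emph{same} deterministic map---value iteration, $\pi_{vi}(\cdot,\cdot)$ from Definition \ref{defn:value_iteration_policy}---to parameter vectors drawn from the \emph{same} conditional distribution, namely the posterior over the MDP parameters given the history $\mathcal{H}_{i-1}^{(2)}$. Since applying a fixed measurable map to identically-distributed inputs yields identically-distributed outputs, the conditional law of $\pi_{i2}$ coincides with that of $\pi^*$, which is exactly the claimed probability-matching identity.

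Concretely, I would first observe that the optimal policy is itself a deterministic function of the true parameters: $\pi^* = \pi_{vi}(\bm{\overline{p}}, \bm{\overline{r}})$, since value iteration applied to the true dynamics and rewards returns a policy maximizing the expected value (breaking ties by the shared randomization device of Definition \ref{defn:value_iteration_policy}). Next I would invoke the definition of the posteriors maintained by \feedback: by construction, the distribution from which $(\bm{\tilde{p}}, \bm{\tilde{r}})$ is drawn in \adv~is precisely the conditional distribution of $(\bm{\overline{p}}, \bm{\overline{r}})$ given $\mathcal{H}_{i-1}^{(2)}$, i.e. for every measurable set $B$ we have $P_i\big((\bm{\tilde{p}}, \bm{\tilde{r}}) \in B\big) = P_i\big((\bm{\overline{p}}, \bm{\overline{r}}) \in B\big)$, with the sampling randomness conditionally independent of the true parameters given the history.

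With these two facts in hand, the conclusion follows by pushing the common conditional law through $\pi_{vi}$. For a fixed deterministic policy $\pi$, the event $\{\pi_{i2} = \pi\}$ equals $\{\pi_{vi}(\bm{\tilde{p}}, \bm{\tilde{r}}) = \pi\} = \{(\bm{\tilde{p}}, \bm{\tilde{r}}) \in \pi_{vi}^{-1}(\pi)\}$, where $\pi_{vi}^{-1}(\pi)$ denotes the (measurable) preimage of $\pi$ under value iteration; likewise $\{\pi^* = \pi\} = \{(\bm{\overline{p}}, \bm{\overline{r}}) \in \pi_{vi}^{-1}(\pi)\}$. Applying the distributional identity above with $B = \pi_{vi}^{-1}(\pi)$ immediately yields $P_i(\pi_{i2} = \pi) = P_i(\pi^* = \pi)$, as desired.

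The main obstacle is one of care rather than depth: making the preimage argument rigorous requires that $\pi_{vi}$ be a well-defined measurable map, which is delicate precisely because value iteration breaks ties among equally-valued policies. I would handle this by treating the tie-breaking as an auxiliary randomization shared between the evaluation of $\pi^*$ and of $\pi_{i2}$ and independent of both the history and the parameter samples; then $\pi_{vi}$ becomes a deterministic measurable function on the enlarged probability space, and the preimage argument applies verbatim. (Alternatively, if ties among deterministic policies have posterior probability zero, the tie-breaking is irrelevant and the argument simplifies.) A secondary point to verify is the conditional independence of the Thompson-sampling draw from the true parameters given the history, which is what licenses identifying the sampling distribution with the posterior in the first place.
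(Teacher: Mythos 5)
Your proposal is correct and follows essentially the same route as the paper's proof: both identify the event $\{\pi_{i2}=\pi\}$ (respectively $\{\pi^*=\pi\}$) with the preimage of $\pi$ under value iteration and integrate the common posterior density of the MDP parameters over that set. Your additional care about the tie-breaking randomization and measurability of $\pi_{vi}$ is a reasonable refinement that the paper leaves implicit, but it does not change the substance of the argument.
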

\begin{proof}
In Thompson sampling, the MDP parameters are sampled from the model posterior, that is, according to their posterior probability of being the true MDP parameters. Let $\bm{\overline{m}} = [\bm{\overline{r}}^T, \bm{\overline{p}}^T]^T \in \mathbb{R}^{S^2 A + SA}$ be the vector of true MDP parameters, where $\bm{\overline{r}} \in \mathbb{R}^{SA}$ is the vector of reward parameters and $\bm{\overline{p}} \in \mathbb{R}^{S^2 A}$ is the vector of dynamics parameters. Let $p_i(\bm{m})$ be the posterior probability density of $\bm{\overline{m}}$, given the history; then, Thompson sampling samples parameter vectors $\bm{m} \in \mathbb{R}^{S^2 A + SA}$ according to $p_i(\bm{m})$. Finally, for any deterministic policy $\pi$, let $m(\pi) \subset \mathbb{R}^{S^2 A + SA}$ give the set of all MDP parameters for which value iteration yields the policy $\pi$. Then:
\begin{flalign*}
P_i(\pi_{i2} = \pi) \overset{(a)}{=} \int_{m(\pi)}p_i(\bm{m}) d\bm{m} \overset{(b)}{=} P_i(\pi^* = \pi),
\end{flalign*}
where (a) holds because Thompson sampling selects policies by sampling $\bm{m} \sim p_i(\cdot)$ and then applying value iteration to $\bm{m}$, and (b) follows from integrating over the posterior probability of all MDP parameter vectors resulting in $\pi$.

\end{proof}

To upper-bound the information ratio, we next express its numerator and denominator using the optimal policy's distribution via the following lemma (analogous to Proposition 2 in \cite{russo2016information}):

\begin{lemma}\label{lemma:inf_ratio} Recall that $\Pi$ is the set of deterministic policies. The following two statements hold:
\begin{equation*}
    \mathbb{E}_i \left[y_i^* - y_i \right] = \sum_{\pi \in \Pi} P_i(\pi^* = \pi) \big\{\mathbb{E}_i[y_i \,|\, \pi^* = \pi_{i2} = \pi] - \mathbb{E}_i[y_i \,|\, \pi_{i2} = \pi]  \big\}
\end{equation*}
and
\begin{equation*}
    I_i(\pi^*; (\pi_{i2}, \tau_{i1}, \tau_{i2}, \bm{x}_{i2} - \bm{x}_{i1}, y_i)) \ge \sum_{\pi, \pi^\prime \in \Pi} P_i(\pi^* = \pi)P_i(\pi^* = \pi^\prime) D\big(P_i(y_i \,|\, \pi_{i2} = \pi, \pi^* = \pi^\prime) \,||\, P_i(y_i \,|\, \pi_{i2} = \pi)\big),
\end{equation*}
where $D(p || q)$ is the Kullback-Leibler divergence between discrete probability distributions $p$ and $q$.
\end{lemma}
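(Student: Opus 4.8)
The plan is to prove the two statements separately, both resting on a single structural fact: conditioned on the history $\mathcal{H}_{i-1}^{(2)}$, the sampled policy $\pi_{i2}$ and the true optimal policy $\pi^*$ are independent and, by Lemma \ref{lemma:TS}, identically distributed. Since $\pi_{i2}$ is produced by drawing fresh samples $\bm{\tilde{p}}, \bm{\tilde{r}}$ from the posterior and running value iteration, its randomness is independent of the realized MDP parameters that determine $\pi^*$; hence conditioning on $\pi_{i2}$ leaves the posterior law of $\pi^*$ intact. I would isolate this exchangeability of $\pi_{i2}$ and $\pi^*$ under $P_i(\cdot)$ at the outset and invoke it uniformly in both parts.

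For the numerator identity, I would rewrite $\mathbb{E}_i[y_i^*]$ and $\mathbb{E}_i[y_i]$ by conditioning on the relevant policy. Recalling that $y_i^*$ is the label produced when $\pi_{i2}$ is replaced by $\pi^*$, the tower property over $\pi^*$ gives $\mathbb{E}_i[y_i^*] = \sum_{\pi \in \Pi} P_i(\pi^* = \pi)\, \mathbb{E}_i[y_i \mid \pi^* = \pi_{i2} = \pi]$, since on $\{\pi^* = \pi\}$ we substitute $\pi_{i2} = \pi$. For $y_i$, conditioning on $\pi_{i2}$ gives $\mathbb{E}_i[y_i] = \sum_{\pi \in \Pi} P_i(\pi_{i2} = \pi)\, \mathbb{E}_i[y_i \mid \pi_{i2} = \pi]$, and I would apply Lemma \ref{lemma:TS} to replace $P_i(\pi_{i2} = \pi)$ by $P_i(\pi^* = \pi)$. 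Subtracting yields the claim. The only delicate point is that $\mathbb{E}_i[y_i \mid \pi_{i2} = \pi]$ still averages over $\pi^*$, $\pi_{i1}$, the rollouts, and the noise; conditional independence guarantees that conditioning on $\pi_{i2}$ does not perturb the distribution of $\pi^*$.

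For the denominator lower bound, I would apply the chain rule for mutual information to $Z_i = (\pi_{i2}, \tau_{i1}, \tau_{i2}, \bm{x}_{i2} - \bm{x}_{i1}, y_i)$, writing $I_i(\pi^*; Z_i) = I_i(\pi^*; \pi_{i2}) + I_i(\pi^*; (\tau_{i1}, \tau_{i2}, \bm{x}_{i2} - \bm{x}_{i1}, y_i) \mid \pi_{i2})$. The first term vanishes by independence of $\pi_{i2}$ and $\pi^*$, and monotonicity of mutual information (discarding every variable except $y_i$) gives $I_i(\pi^*; Z_i) \ge I_i(\pi^*; y_i \mid \pi_{i2})$. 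I would then expand this conditional mutual information over the values of $\pi_{i2}$, and within each event $\{\pi_{i2} = \pi\}$ use the standard identity $I(X; Y) = \sum_x P(x)\, D\big(P(Y \mid X = x)\,\|\, P(Y)\big)$ with $X = \pi^*$ and $Y = y_i$. Substituting $P_i(\pi_{i2} = \pi) = P_i(\pi^* = \pi)$ via Lemma \ref{lemma:TS} and $P_i(\pi^* = \pi' \mid \pi_{i2} = \pi) = P_i(\pi^* = \pi')$ via independence produces the advertised double sum.

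I expect the main obstacle to be the careful justification of the conditional independence and identical distribution of $\pi_{i2}$ and $\pi^*$ given $\mathcal{H}_{i-1}^{(2)}$, together with its interaction with the extra randomness from $\pi_{i1}$ (drawn from a fixed distribution here) and from the state-transition rollouts. Every exchange of the roles of $\pi_{i2}$ and $\pi^*$ under $P_i$, as well as the vanishing of $I_i(\pi^*; \pi_{i2})$, depends on this property, so I would make the exchangeability fully explicit before carrying out either derivation.
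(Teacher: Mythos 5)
Your proposal is correct and follows essentially the same route as the paper's proof: the numerator identity via conditioning on $\pi^*$ (resp.\ $\pi_{i2}$) and substituting $P_i(\pi_{i2} = \pi) = P_i(\pi^* = \pi)$ from Lemma \ref{lemma:TS}, and the denominator bound via the chain rule for mutual information, reduction to $I_i(\pi^*; y_i \,|\, \pi_{i2})$, the identity $I(X;Y) = \sum_x P(X=x) D(P(Y|X=x)\,\|\,P(Y))$, and the conditional independence of $\pi^*$ and $\pi_{i2}$ given the history. The only cosmetic difference is that you drop $I_i(\pi^*;\pi_{i2})$ because it vanishes by independence, whereas the paper simply discards it (together with the trailing term) by non-negativity of mutual information; both are valid.
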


\begin{proof}
The numerator of the information ratio can be written as:
\begin{flalign*}
\mathbb{E}_i \left[y_i^* - y_i \right] &= \sum_{\pi \in \Pi}P_i(\pi^* = \pi)\mathbb{E}_i [y_i^* \,|\, \pi^* = \pi] - \sum_{\pi \in \Pi} P_i(\pi_{i2} = \pi)\mathbb{E}_i [y_i \,|\, \pi_{i2} = \pi] \\ &\overset{(a)}{=} \sum_{\pi \in \Pi}P_i(\pi^* = \pi)\big(\mathbb{E}_i [y_i^* \,|\, \pi^* = \pi] - \mathbb{E}_i [y_i \,|\, \pi_{i2} = \pi]\big) \\ &= \sum_{\pi \in \Pi} P_i(\pi^* = \pi) \big\{\mathbb{E}_i[y_i \,|\, \pi^* = \pi_{i2} = \pi] - \mathbb{E}_i[y_i \,|\, \pi_{i2} = \pi]  \big\},
\end{flalign*}
where (a) follows from Lemma \ref{lemma:TS}. The denominator of the information ratio can meanwhile be lower-bounded:
\begin{flalign*}
I_i(\pi^*; (\pi_{i2}, \tau_{i1}, \tau_{i2}, \bm{x}_{i2} - \bm{x}_{i1}, y_i)) &\overset{(a)}{=} I_i(\pi^*; \pi_{i2}) + I_i(\pi^*; y_i \,|\, \pi_{i2}) + I_i(\pi^*; (\bm{x}_{i2} - \bm{x}_{i1}, \tau_{i1}, \tau_{i2}) \,|\,  \pi_{i2}, y_i) \\ &\overset{(b)}{\ge} I_i(\pi^*; y_i \,|\, \pi_{i2}) = \sum_{\pi \in \Pi} P_i(\pi_{i2} = \pi) I_i(\pi^*; y_i \,|\, \pi_{i2} = \pi) \\ &\overset{(c)}{=} \sum_{\pi \in \Pi} P_i(\pi^* = \pi) I_i(\pi^*; y_i \,|\, \pi_{i2} = \pi),
\end{flalign*}
where (a) follows from the chain rule for mutual information, (b) results from the non-negativity of mutual information, and (c) is a consequence of Lemma \ref{lemma:TS}. We next apply the following information-theoretic identity (Fact 6 in \cite{russo2016information}), which holds for discrete random variables $X$ and $Y$:
\begin{equation*}
    I(X; Y) = \sum_x P(X = x)D(P(Y \,|\, X = x) \,||\, P(Y)).
\end{equation*}
From this, one obtains:
\begin{flalign*}
    I_i(\pi^*; y_i \,|\, \pi_{i2} = \pi) &= \sum_{\pi^\prime \in \Pi} P_i(\pi^* = \pi^\prime  \,|\, \pi_{i2} = \pi)D(P_i(y_i \,|\,  \pi^* = \pi^\prime, \pi_{i2} = \pi) \,||\, P_i(y_i \,|\, \pi_{i2} = \pi)) \\ &\overset{(a)}= \sum_{\pi^\prime \in \Pi} P_i(\pi^* = \pi^\prime)D(P_i(y_i \,|\,  \pi^* = \pi^\prime, \pi_{i2} = \pi) \,||\, P_i(y_i \,|\, \pi_{i2} = \pi)),
\end{flalign*}
where (a) holds because by definition of Thompson sampling, $\pi^*$ and $\pi_{i2}$ are independent given the history. Therefore:
\begin{flalign*}
I_i(\pi^*; (\pi_{i2}, \tau_{i1}, \tau_{i2}, \bm{x}_{i2} - \bm{x}_{i1}, y_i)) &\ge \sum_{\pi \in \Pi} P_i(\pi^* = \pi) \sum_{\pi^\prime \in \Pi} P_i(\pi^* = \pi^\prime)D(P_i(y_i \,|\,  \pi^* = \pi^\prime, \pi_{i2} = \pi) \,||\, P_i(y_i \,|\, \pi_{i2} = \pi))  \\ &= \sum_{\pi, \pi^\prime \in \Pi} P_i(\pi^* = \pi)P_i(\pi^* = \pi^\prime) D\big(P_i(y_i \,|\, \pi_{i2} = \pi, \pi^* = \pi^\prime) \,||\, P_i(y_i \,|\, \pi_{i2} = \pi)\big),
\end{flalign*}
which is the desired result.

\end{proof}

The next lemma asymptotically upper-bounds the information ratio $\Gamma_i$ for the one-sided regret. It is inspired by the analysis in \cite{russo2016information} for the linear bandit setting (Proposition 5 in \cite{russo2016information}); however, extending this result to the PBRL setting requires accounting for the dynamics, which complicates the analysis.

\begin{lemma}\label{lemma:bounded_inf_ratio}
Consider the one-sided regret when $\pi_{i1}$ is drawn from a fixed distribution. The information ratio $\Gamma_i$ satisfies:
\begin{equation*}
    \underset{i \longrightarrow \infty}{\lim} \, \Gamma_i \le \frac{d}{2} = \frac{SA}{2}.
\end{equation*}
Thus, for any $\varepsilon > 0$ and a sufficiently-large iteration index $i$, $\Gamma_i \le \frac{SA}{2} + \varepsilon$.
\end{lemma}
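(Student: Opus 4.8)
The plan is to adapt the matrix-based argument of Russo and Van Roy (Proposition 5 in \cite{russo2016information}) to the preference setting, proving the bound first under \emph{known} dynamics and then lifting it to the limit via Proposition \ref{prop:dyn_consistency}. The starting point is Lemma \ref{lemma:inf_ratio}, which writes the numerator of $\Gamma_i$ as $\mathbb{E}_i[y_i^* - y_i] = \sum_{\pi \in \Pi} P_i(\pi^* = \pi)\{\mathbb{E}_i[y_i \,|\, \pi^* = \pi_{i2} = \pi] - \mathbb{E}_i[y_i \,|\, \pi_{i2} = \pi]\}$ and lower-bounds the denominator by $\sum_{\pi, \pi'} P_i(\pi^* = \pi)P_i(\pi^* = \pi')\, D(P_i(y_i \,|\, \pi_{i2} = \pi, \pi^* = \pi') \,\|\, P_i(y_i \,|\, \pi_{i2} = \pi))$. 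I would encode both through a single matrix $M \in \mathbb{R}^{|\Pi| \times |\Pi|}$ with entries $M_{\pi, \pi'} := \sqrt{P_i(\pi^* = \pi)P_i(\pi^* = \pi')}\,(\mathbb{E}_i[y_i \,|\, \pi_{i2} = \pi, \pi^* = \pi'] - \mathbb{E}_i[y_i \,|\, \pi_{i2} = \pi])$, so that the numerator equals $(\mathrm{tr}\,M)^2$ exactly (the diagonal entries reproduce the numerator sum).

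For the denominator, the key observation is that, conditioned on the history, each $y_i$ is a shifted Bernoulli variable on $\{-\tfrac12, \tfrac12\}$ whose mean is the corresponding conditional expectation. Hence Pinsker's inequality applied to two such Bernoulli laws gives $D(P \,\|\, Q) \ge 2(\mathbb{E}_P[y_i] - \mathbb{E}_Q[y_i])^2$, the preference-based analog of the Gaussian KL identity used in \cite{russo2016information}. Summing over $\pi, \pi'$ yields denominator $\ge 2\|M\|_F^2$, so that $\Gamma_i \le (\mathrm{tr}\,M)^2 / (2\|M\|_F^2)$. Passing to the singular value decomposition of $M$ gives the linear-algebra fact $(\mathrm{tr}\,M)^2 \le \mathrm{rank}(M)\,\|M\|_F^2$, and therefore $\Gamma_i \le \tfrac12\,\mathrm{rank}(M)$. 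It then remains only to show $\mathrm{rank}(M) \le d = SA$.

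To bound the rank, I would first work under known dynamics $\bm{\overline{p}}$. Since $\pi_{i1}$ is drawn from a fixed distribution independent of $\pi^*$, and since the roll-out feature distribution given a policy depends on the dynamics but not the rewards, the expected feature vector of $\pi$ under $\bm{\overline{p}}$, call it $\bm{\mu}_\pi$, and $\bm{\overline{\mu}}_1 := \mathbb{E}[\bm{x}_{i1}]$ do not depend on $\pi^*$; moreover, because $\pi_{i2}$ is a function of an independent posterior sample, conditioning on $\pi_{i2} = \pi$ leaves the posterior of $\bm{\overline{r}}$ unchanged. Writing $\bm{\theta} := \mathbb{E}_i[\bm{\overline{r}}]$ and $\bm{\theta}_{\pi'} := \mathbb{E}_i[\bm{\overline{r}} \,|\, \pi^* = \pi']$ and using Lemma \ref{lemma:TS} to identify the $\pi_{i2}$ and $\pi^*$ laws, the mean difference factors as $\mathbb{E}_i[y_i \,|\, \pi_{i2} = \pi, \pi^* = \pi'] - \mathbb{E}_i[y_i \,|\, \pi_{i2} = \pi] = (\bm{\theta}_{\pi'} - \bm{\theta})^T(\bm{\mu}_\pi - \bm{\overline{\mu}}_1)$. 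Consequently $M_{\pi,\pi'} = \bm{a}_\pi^T \bm{b}_{\pi'}$ with $\bm{a}_\pi := \sqrt{P_i(\pi^* = \pi)}(\bm{\mu}_\pi - \bm{\overline{\mu}}_1)$ and $\bm{b}_{\pi'} := \sqrt{P_i(\pi^* = \pi')}(\bm{\theta}_{\pi'} - \bm{\theta})$, both in $\mathbb{R}^d$; hence $M = AB^T$ for $A, B \in \mathbb{R}^{|\Pi| \times d}$, so $\mathrm{rank}(M) \le d$ and $\Gamma_i \le SA/2$ holds exactly under known dynamics.

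The main obstacle is removing the known-dynamics assumption, which is exactly why the statement is asymptotic. Under \algo~the dynamics are unknown, but by Proposition \ref{prop:dyn_consistency} the sampled dynamics converge in distribution to $\bm{\overline{p}}$. I would argue that as $i \to \infty$ the posterior-averaged expected feature vectors entering each $\mathbb{E}_i[y_i \,|\, \cdot]$ converge to the fixed-dynamics quantities $\bm{\mu}_\pi, \bm{\overline{\mu}}_1$, so that the exact rank-$d$ factorization of $M$ is recovered in the limit and the cross-correlation between the dynamics and reward posteriors vanishes. Making this limiting argument rigorous---controlling the convergence of the finitely-many per-policy quantities $\mathbb{E}_i[y_i \,|\, \pi_{i2} = \pi, \pi^* = \pi']$ uniformly enough that the rank bound survives---is the delicate step; the finiteness of $\Pi$ together with the continuity and convergence machinery already established (notably Lemma \ref{lemma:V_conv_in_p}) should yield $\lim_{i\to\infty}\Gamma_i \le d/2 = SA/2$, and hence $\Gamma_i \le SA/2 + \varepsilon$ for all sufficiently large $i$.
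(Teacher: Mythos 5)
Your proposal is correct and follows essentially the same route as the paper's proof: the same matrix $B^{(i)}$ (your $M$), the same trace/Frobenius-norm bound via the Pinsker-type inequality (Fact 9 of \cite{russo2016information}) and the rank inequality, the same rank-$d$ factorization under known dynamics using the conditional independence of $\bm{\overline{r}}$ and $\bm{x}_{i2}$ given $\pi_{i2}$, and the same asymptotic lift via Proposition \ref{prop:dyn_consistency}. The ``delicate step'' you flag is handled in the paper exactly as you sketch---by showing the conditional expectations converge to their known-dynamics counterparts and decomposing $B^{(i)}$ into a rank-$\le d$ part plus a vanishing perturbation.
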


\begin{proof}
Let $K := |\Pi|$. Index all the deterministic policies as $\pi_1, \ldots, \pi_K$, and define $B^{(i)} \in \mathbb{R}^{K \times K}$ with $jk$\textsuperscript{th} element:
\begin{equation}\label{eqn:B_matrix}
    B_{jk}^{(i)} = \sqrt{P_i(\pi^* = \pi_j)P_i(\pi^* = \pi_k)} \Big(\mathbb{E}_i[y_i \,|\, \pi_{i2} = \pi_j, \pi^* = \pi_k] -  \mathbb{E}_i[y_i \,|\, \pi_{i2} = \pi_j] \Big).
\end{equation}

The numerator and denominator of $\Gamma_i$ can both be expressed in terms of $B^{(i)}$. Applying Lemma \ref{lemma:inf_ratio} to the numerator, the instantaneous regret can be written as follows:
\begin{flalign*}
\mathbb{E}_i \left[y_i^* - y_i \right] &= \sum_{\pi \in \Pi} P_i(\pi^* = \pi) \big\{\mathbb{E}_i[y_i \,|\, \pi^* = \pi_{i2} = \pi] - \mathbb{E}_i[y_i \,|\, \pi_{i2} = \pi]  \big\} \\ &= \sum_{j = 1}^K P_i(\pi^* = \pi_j) \big\{\mathbb{E}_i[y_i \,|\, \pi^* = \pi_{i2} = \pi_j] - \mathbb{E}_i[y_i \,|\, \pi_{i2} = \pi_j]  \big\} \\ &= \sum_{j = 1}^K B_{jj}^{(i)} = \text{Tr}\left(B^{(i)}\right).
\end{flalign*}

Applying Lemma \ref{lemma:inf_ratio} to the denominator of $\Gamma_i$ in Equation \eqref{eqn:iratio} yields:
\begin{flalign*}
I_i(\pi^*; (\pi_{i2}, \tau_{i1}, \tau_{i2}, \bm{x}_{i2} - \bm{x}_{i1}, y_i)) &\ge \sum_{\pi, \pi^\prime \in \Pi} P_i(\pi^* = \pi)P_i(\pi^* = \pi^\prime) D\big(P_i(y_i \,|\, \pi_{i2} = \pi, \pi^* = \pi^\prime) \,||\, P_i(y_i \,|\, \pi_{i2} = \pi)\big) \\ &= \sum_{j, k = 1}^K P_i(\pi^* = \pi_j)P_i(\pi^* = \pi_k) D\big(P_i(y_i \,|\, \pi_{i2} = \pi_j, \pi^* = \pi_k) \,||\, P_i(y_i \,|\, \pi_{i2} = \pi_j)\big).
\end{flalign*}

We convert the Kullback-Leibler divergence to a difference of expectations by applying Fact 9 from \cite{russo2016information}, restated here: for any probability distributions $P$ and $Q$ such that $P$ is absolutely continuous with respect to $Q$, any random variable $X$ taking values on the set $\mathcal{X}$, and any $g: \mathcal{X} \longrightarrow \mathbb{R}$ such that $\sup g - \inf g \le 1$,
\begin{equation}\label{eqn:fact_9}
    D(P \,||\, Q) \ge 2 \bigr(\mathbb{E}_P[g(X)] - \mathbb{E}_Q[g(X)]\bigr)^2.
\end{equation}

Thus, we have that:
\begin{equation*}
    D\big(P_i(y_i \,|\, \pi_{i2} = \pi_j, \pi^* = \pi_k) \,||\, P_i(y_i \,|\, \pi_{i2} = \pi_j)\big) \ge 2 \Big(\mathbb{E}_i[y_i \,|\, \pi_{i2} = \pi_j, \pi^* = \pi_k] -  \mathbb{E}_i[y_i \,|\, \pi_{i2} = \pi_j] \Big)^2,
\end{equation*}
where we applied \eqref{eqn:fact_9} with $g(x) = x$; this definition of $g$ satisfies the requirement that $\sup g - \inf g \le 1$, since its argument is $y_i \in \left\{-\frac{1}{2}, \frac{1}{2}\right\}$. As a result:
\begin{flalign*}
    I_i(\pi^*; (\pi_{i2}, \tau_{i1}, \tau_{i2}, \bm{x}_{i2} - \bm{x}_{i1}, y_i)) &\ge 2\sum_{j, k = 1}^K P_i(\pi^* = \pi_j)P_i(\pi^* = \pi_k) \Big(\mathbb{E}_i[y_i \,|\, \pi_{i2} = \pi_j, \pi^* = \pi_k] -  \mathbb{E}_i[y_i \,|\, \pi_{i2} = \pi_j] \Big)^2 \\ &= 2\sum_{j, k = 1}^K \left(B_{jk}^{(i)}\right)^2 = 2||B^{(i)}||_F^2.
\end{flalign*}    
    
Combining these results gives that $\Gamma_i \le \frac{\text{Tr}\left(B^{(i)}\right)^2}{2||B^{(i)}||_F^2}$. As shown in \cite{russo2016information} (Fact 10), for any square matrix $B \in \mathbb{R}^{m \times m}$, $\text{Tr}(B) \le \sqrt{\text{Rank}(B)}||B||_F$. Thus:
\begin{equation*}
\Gamma_i \le \frac{\text{Tr}\left(B^{(i)}\right)^2}{2||B^{(i)}||_F^2} \le \frac{1}{2} \text{Rank}\left(B^{(i)}\right).    
\end{equation*}

The problem is therefore reduced to upper-bounding $\text{Rank}\left(B^{(i)}\right)$. First, we will show that under known transition dynamics, $\text{Rank}\left(B^{(i)}\right) \le d = SA$. Subsequently, we will demonstrate that as the sampled dynamics parameters converge in distribution to their true values (which occurs by Proposition \ref{prop:dyn_consistency}), $\underset{i \longrightarrow \infty}{\mathrm{lim}} \, \text{Rank}\left(B^{(i)}\right) \le d$.

Recall the definition of $B^{(i)}$ in \eqref{eqn:B_matrix}. We will first show that under known transition dynamics, we can define a set of vectors $\bm{u}_1, \ldots, \bm{u}_K, \bm{v}_1, \ldots, \bm{v}_K \in \mathbb{R}^d$, such that $B_{jk}^{(i)} = \bm{u}_k^T \bm{v}_j$. Recalling that $y_i = \bm{\overline{r}}^T (\bm{x}_{i2} - \bm{x}_{i1}) + \eta_i$, and assuming that the dynamics $\bm{\overline{p}}$ are known,
\begin{flalign}
\mathbb{E}_i[y_i \,|\, \pi_{i2} = \pi_j, \pi^* = \pi_k, \bm{\overline{p}}] &= \mathbb{E}_i[\bm{\overline{r}}^T (\bm{x}_{i2} - \bm{x}_{i1}) + \eta_i \,|\, \pi_{i2} = \pi_j, \pi^* = \pi_k, \bm{\overline{p}}] \nonumber \\ &\overset{(a)}{=} \mathbb{E}_i[\bm{\overline{r}}^T (\bm{x}_{i2} - \bm{x}_{i1}) \,|\, \pi_{i2} = \pi_j, \pi^* = \pi_k, \bm{\overline{p}}]  \nonumber \\ &\overset{(b)}{=} \mathbb{E}_i[\bm{\overline{r}} \,|\, \pi_{i2} = \pi_j, \pi^* = \pi_k, \bm{\overline{p}}]^T (\mathbb{E}_i[\bm{x}_{i2} \,|\, \pi_{i2} = \pi_j, \bm{\overline{p}}] - \mathbb{E}_i[\bm{x}_{i1}]) \label{eqn:assume_dynamics}\\ &\overset{(c)}{=} \mathbb{E}_i[\bm{\overline{r}} \,|\, \pi^* = \pi_k, \bm{\overline{p}}]^T (\mathbb{E}_i[\bm{x}_{i2} \,|\, \pi_{i2} = \pi_j, \bm{\overline{p}}] - \mathbb{E}_i[\bm{x}_{i1}]),  \nonumber 
\end{flalign}
where equality (a) holds because the noise $\eta_i$ is zero-mean.  Equality (b) holds because, by assumption, $\bm{x}_{i1}$ is drawn from a fixed distribution, independently of $\pi^*$, $\pi_{i2}$, or $\bm{\overline{r}}$, while given $\pi_{i2}$ and known dynamics, the distribution of $\bm{x}_{i2}$ is fully-determined and independent of $\pi^*$ and $\bm{\overline{r}}$.  Equality (c) holds because, conditioned upon $\mathcal{H}_{i - 1}^{(2)}$, $\bm{\rr}$ and $\pi_{i2}$ are independent (as the history fully determines the distribution of $\pi_{i2}$). By similar arguments,
\begin{flalign}
\mathbb{E}_i[y_i \,|\, \pi_{i2} = \pi_j, \bm{\overline{p}}] &= \mathbb{E}_i[\bm{\overline{r}}^T (\bm{x}_{i2} - \bm{x}_{i1}) + \eta_i \,|\, \pi_{i2} = \pi_j, \bm{\overline{p}}] = \mathbb{E}_i[\bm{\overline{r}}^T (\bm{x}_{i2} - \bm{x}_{i1}) \,|\, \pi_{i2} = \pi_j, \bm{\overline{p}}] \nonumber \\ &= \mathbb{E}_i[\bm{\overline{r}} \,|\, \pi_{i2} = \pi_j, \bm{\overline{p}}]^T (\mathbb{E}_i[\bm{x}_{i2} \,|\, \pi_{i2} = \pi_j, \bm{\overline{p}}] - \mathbb{E}_i[\bm{x}_{i1}]) \label{eqn:assume_dynamics_2} \\ &= \mathbb{E}_i[\bm{\overline{r}} \,|\, \bm{\overline{p}}]^T (\mathbb{E}_i[\bm{x}_{i2} \,|\, \pi_{i2} = \pi_j, \bm{\overline{p}}] - \mathbb{E}_i[\bm{x}_{i1}]). \nonumber
\end{flalign}
Subtracting the latter two quantities yields:
\begin{flalign}\label{eqn:exp_diff}
\mathbb{E}_i[y_i \,|\, \pi_{i2} = \pi_j, \pi^* = \pi_k, \bm{\overline{p}}] &-  \mathbb{E}_i[y_i \,|\, \pi_{i2} = \pi_j, \bm{\overline{p}}] \nonumber \\ &= (\mathbb{E}_i[\bm{\overline{r}} \,|\, \pi^* = \pi_k, \bm{\overline{p}}] - \mathbb{E}_i[\bm{\overline{r}} \,|\, \bm{\overline{p}}])^T (\mathbb{E}_i[\bm{x}_{i2} \,|\, \pi_{i2} = \pi_j, \bm{\overline{p}}] - \mathbb{E}_i[\bm{x}_{i1}]).
\end{flalign}

Applying this result, under the case of known dynamics, $B_{jk}^{(i)}$ can be written as:
\begin{flalign*}
B_{jk}^{(i)} &= \sqrt{P_i(\pi^* = \pi_j \,|\, \bm{\overline{p}})P_i(\pi^* = \pi_k  \,|\, \bm{\overline{p}})} \Big(\mathbb{E}_i[y_i \,|\, \pi_{i2} = \pi_j, \pi^* = \pi_k, \bm{\overline{p}}] -  \mathbb{E}_i[y_i \,|\, \pi_{i2} = \pi_j, \bm{\overline{p}}] \Big) \\ &= \sqrt{P_i(\pi^* = \pi_k \,|\, \bm{\overline{p}})} (\mathbb{E}_i[\bm{\overline{r}} \,|\, \pi^* = \pi_k, \bm{\overline{p}}] - \mathbb{E}_i[\bm{\overline{r}} \,|\, \bm{\overline{p}}])^T \sqrt{P_i(\pi^* = \pi_j \,|\, \bm{\overline{p}})}(\mathbb{E}_i[\bm{x}_{i2} \,|\, \pi_{i2} = \pi_j, \bm{\overline{p}}] - \mathbb{E}_i[\bm{x}_{i1}]) \\ &:= \bm{u}_k^T \bm{v}_j,
\end{flalign*}
where in the last equality, we define $\bm{u}_k := \sqrt{P_i(\pi^* = \pi_k \,|\, \bm{\overline{p}})} (\mathbb{E}_i[\bm{\overline{r}} \,|\, \pi^* = \pi_k, \bm{\overline{p}}] - \mathbb{E}_i[\bm{\overline{r}} \,|\, \bm{\overline{p}}])$ and $\bm{v}_j := \sqrt{P_i(\pi^* = \pi_j \,|\, \bm{\overline{p}})}$ $(\mathbb{E}_i[\bm{x}_{i2} \,|\, \pi_{i2} = \pi_j, \bm{\overline{p}}] - \mathbb{E}_i[\bm{x}_{i1}])$. Therefore:
\begin{equation}\label{eqn:matrix_prod}
B^{(i)} = \begin{bmatrix}
\bm{u}_1^T \bm{v}_1 & \ldots & \bm{u}_K^T \bm{v}_1 \\
\vdots & \ddots & \vdots \\
\bm{u}_1^T \bm{v}_K & \ldots & \bm{u}_K^T \bm{v}_K
\end{bmatrix} = \begin{bmatrix} \bm{v}_1^T \\ \vdots \\ \bm{v}_K^T\end{bmatrix}\begin{bmatrix} \bm{u}_1 & \ldots & \bm{u}_K\end{bmatrix}.
\end{equation}

Because $B^{(i)}$ can be written as the product of a $K \times d$ matrix and a $d \times K$ matrix, $B^{(i)}$ can have rank at most $d$. To reach this result, however, we assumed that the MDP transition dynamics are known. To complete the proof, we now show that the result still holds asymptotically as the sampled dynamics converge in distribution to the true dynamics (Proposition \ref{prop:dyn_consistency} guarantees that this convergence occurs). Note that we only used our assumed knowledge of the dynamics to arrive at the equalities in lines \eqref{eqn:assume_dynamics} and \eqref{eqn:assume_dynamics_2}. In both cases, knowledge of $\bm{\overline{p}}$ is used to treat $\bm{x}_{i2}$ and $\bm{\rr}$ as conditionally independent given $\pi_{i2}$. In particular, \eqref{eqn:assume_dynamics} uses that:
\begin{equation*}
    \mathbb{E}_i[\bm{\overline{r}}^T\bm{x}_{i2} \,|\, \pi_{i2} = \pi_j, \pi^* = \pi_k, \bm{\overline{p}}] = \mathbb{E}_i[\bm{\overline{r}} \,|\, \pi_{i2} = \pi_j, \pi^* = \pi_k, \bm{\overline{p}}]^T \mathbb{E}_i[\bm{x}_{i2} \,|\, \pi_{i2} = \pi_j, \bm{\overline{p}}].
\end{equation*}
We show that in general, when the transition dynamics $\bm{\overline{p}}$ are not known but learned, that:
\begin{equation}\label{eqn:exp_conv_goal}
    \mathbb{E}_i[\bm{\overline{r}}^T\bm{x}_{i2} \,|\, \pi_{i2} = \pi_j, \pi^* = \pi_k] \,\overset{i \longrightarrow \infty}{\longrightarrow}\, \mathbb{E}_i[\bm{\overline{r}} \,|\, \pi_{i2} = \pi_j, \pi^* = \pi_k]^T \mathbb{E}_i[\bm{x}_{i2} \,|\, \pi_{i2} = \pi_j, \bm{\overline{p}}].
\end{equation}

Let $p_i(\cdot)$ be the posterior probability density of the transition dynamics parameters; then, $p_i(\cdot)$ is also the density with which the dynamics $\bm{\tilde{p}}$ are sampled at iteration $i$. Because each dynamics parameter converges in distribution to its true value---and there are finitely-many dynamics parameters---the distribution $p_i(\cdot)$ converges uniformly to $\delta(\bm{\overline{p}} = \cdot)$, where $\delta(\cdot)$ denotes the Dirac-delta distribution. Therefore,
\begin{flalign}\label{eqn:prob_conv}
P_i(\bm{x}_{i2} = \cdot \,|\, \pi_{i2} = \pi_j, \pi^* = \pi_k, \bm{\overline{r}}) &\overset{(a)}{=} \int_{\bm{\tilde{p}}} P_i(\bm{x}_{i2} = \cdot \,|\, \pi_{i2} = \pi_j, \pi^* = \pi_k, \, \bm{\overline{r}}, \, \bm{\overline{p}} = \bm{\tilde{p}})p_i(\bm{\tilde{p}})d\bm{\tilde{p}} \\ &\overset{i \longrightarrow \infty}{\longrightarrow} \, \int_{\bm{\tilde{p}}} P_i(\bm{x}_{i2} = \cdot \,|\, \pi_{i2} = \pi_j, \pi^* = \pi_k, \, \bm{\overline{r}}, \, \bm{\overline{p}} = \bm{\tilde{p}})\delta(\bm{\tilde{p}} = \bm{\overline{p}})d\bm{\tilde{p}} \nonumber \\ &\overset{(b)}{=} P_i(\bm{x}_{i2} = \cdot \,|\, \pi_{i2} = \pi_j, \pi^* = \pi_k, \, \bm{\overline{r}}, \, \bm{\overline{p}}) \overset{(c)}{=} P_i(\bm{x}_{i2} = \cdot \,|\, \pi_{i2} = \pi_j, \, \bm{\overline{p}}), \nonumber
\end{flalign}
where (a) integrates over the posterior probability density of each possible dynamics parameter vector $\bm{\tilde{p}}$, (b) utilizes the sifting property of the Dirac-delta function, and (c) follows because the distribution of $\bm{x}_{i2}$ is fully determined given the dynamics and policy $\pi_{i2}$. For any discrete random variables $X$ and $X_n, n \ge 1$, defined over a finite set $\mathcal{X}$, convergence in distribution $X_n \overset{D}\longrightarrow X$ is equivalent to $\lim_{n \longrightarrow \infty} P(X_n = x) = P(X = x)$ for each $x \in \mathcal{X}$. Since $\mathcal{X}$ is a finite set, one also has convergence in expectation:
\begin{equation}\label{eqn:exp_conv_general}
    \mathbb{E}[X_n] = \sum_{x \in \mathcal{X}}x P(X_n = x) \, \overset{n \longrightarrow \infty}{\longrightarrow} \, \sum_{x \in \mathcal{X}} xP(X = x) = \mathbb{E}[X].
\end{equation}

Combining \eqref{eqn:prob_conv} and \eqref{eqn:exp_conv_general} yields the result,
\begin{equation}\label{eqn:exp_conv}
    \mathbb{E}_i[\bm{x}_{i2} \,|\, \pi_{i2} = \pi_j, \pi^* = \pi_k, \bm{\overline{r}}] \, \overset{i \longrightarrow \infty}{\longrightarrow} \, \mathbb{E}_i[\bm{x}_{i2} \,|\, \pi_{i2} = \pi_j, \bm{\overline{p}}].
\end{equation}

In consequence,
\begin{flalign}
\mathbb{E}_i[\bm{\overline{r}}^T\bm{x}_{i2} \,|\, \pi_{i2} = \pi_j, \pi^* = \pi_k] &\overset{(a)}{=} \mathbb{E}_{i, \bm{\overline{r}}}[\mathbb{E}_i[\bm{\overline{r}}^T\bm{x}_{i2} \,|\, \pi_{i2} = \pi_j, \pi^* = \pi_k, \bm{\overline{r}}] \,|\, \pi_{i2} = \pi_j, \pi^* = \pi_k] \nonumber \\ &= \mathbb{E}_{i, \bm{\overline{r}}}[\bm{\overline{r}}^T\mathbb{E}_i[\bm{x}_{i2} \,|\, \pi_{i2} = \pi_j, \pi^* = \pi_k, \bm{\overline{r}}] \,|\, \pi_{i2} = \pi_j, \pi^* = \pi_k] \nonumber \\ &\overset{(b)}{\longrightarrow} \mathbb{E}_i[\bm{\overline{r}} \,|\, \pi_{i2} = \pi_j, \pi^* = \pi_k]^T \mathbb{E}_i[\bm{x}_{i2} \,|\, \pi_{i2} = \pi_j, \bm{\overline{p}}] \text{ as } i \longrightarrow \infty, \label{eqn:exp_conv_2}
\end{flalign}
where (a) follows from the tower property of expectation, and (b) is an application of \eqref{eqn:exp_conv}. This proves the desired statement, \eqref{eqn:exp_conv_goal}. Repeating the same analysis as in \eqref{eqn:exp_conv_goal}-\eqref{eqn:exp_conv_2}, but removing the conditioning on $\pi^* = \pi_k$ yields:
\begin{flalign}
\mathbb{E}_i[\bm{\overline{r}}^T\bm{x}_{i2} \,|\, \pi_{i2} = \pi_j] \longrightarrow \mathbb{E}_i[\bm{\overline{r}} \,|\, \pi_{i2} = \pi_j]^T \mathbb{E}_i[\bm{x}_{i2} \,|\, \pi_{i2} = \pi_j, \bm{\overline{p}}] \text{ as } i \longrightarrow \infty. \label{eqn:exp_conv_3}
\end{flalign}

The analysis in \eqref{eqn:assume_dynamics}-\eqref{eqn:exp_diff} can be repeated, replacing the knowledge of $\bm{\overline{p}}$ with the asymptotic relations in \eqref{eqn:exp_conv_2} and \eqref{eqn:exp_conv_3}:
\begin{flalign*}
\mathbb{E}_i[y_i \,|\, \pi_{i2} = \pi_j, \pi^* = \pi_k] &= \mathbb{E}_i[\bm{\overline{r}}^T (\bm{x}_{i2} - \bm{x}_{i1}) + \eta_i \,|\, \pi_{i2} = \pi_j, \pi^* = \pi_k] \\ &= \mathbb{E}_i[\bm{\overline{r}}^T (\bm{x}_{i2} - \bm{x}_{i1}) \,|\, \pi_{i2} = \pi_j, \pi^* = \pi_k]\\ &\overset{i \longrightarrow \infty}\longrightarrow \, \mathbb{E}_i[\bm{\overline{r}} \,|\, \pi_{i2} = \pi_j, \pi^* = \pi_k]^T (\mathbb{E}_i[\bm{x}_{i2} \,|\, \pi_{i2} = \pi_j, \bm{\overline{p}}] - \mathbb{E}_i[\bm{x}_{i1}]) \\ &= \mathbb{E}_i[\bm{\overline{r}} \,|\, \pi^* = \pi_k]^T (\mathbb{E}_i[\bm{x}_{i2} \,|\, \pi_{i2} = \pi_j, \bm{\overline{p}}] - \mathbb{E}_i[\bm{x}_{i1}]), \text{ and similarly,} \\
\mathbb{E}_i[y_i \,|\, \pi_{i2} = \pi_j] &= \mathbb{E}_i[\bm{\overline{r}}^T (\bm{x}_{i2} - \bm{x}_{i1}) + \eta_i \,|\, \pi_{i2} = \pi_j] = \mathbb{E}_i[\bm{\overline{r}}^T (\bm{x}_{i2} - \bm{x}_{i1}) \,|\, \pi_{i2} = \pi_j]\\ &\overset{i \longrightarrow \infty}\longrightarrow \, \mathbb{E}_i[\bm{\overline{r}} \,|\, \pi_{i2} = \pi_j]^T (\mathbb{E}_i[\bm{x}_{i2} \,|\, \pi_{i2} = \pi_j, \bm{\overline{p}}] - \mathbb{E}_i[\bm{x}_{i1}]) \\ &= \mathbb{E}_i[\bm{\overline{r}}]^T (\mathbb{E}_i[\bm{x}_{i2} \,|\, \pi_{i2} = \pi_j, \bm{\overline{p}}] - \mathbb{E}_i[\bm{x}_{i1}]).
\end{flalign*}

Then, the difference between these two quantities asymptotically becomes:
\begin{flalign*}
\mathbb{E}_i[y_i \,|\, \pi_{i2} = \pi_j, \pi^* = \pi_k] -  \mathbb{E}_i[y_i \,|\, \pi_{i2} = \pi_j] \,\overset{i \longrightarrow \infty}{\longrightarrow}\, (\mathbb{E}_i[\bm{\overline{r}} \,|\, \pi^* = \pi_k] - \mathbb{E}_i[\bm{\overline{r}}])^T (\mathbb{E}_i[\bm{x}_{i2} \,|\, \pi_{i2} = \pi_j, \bm{\overline{p}}] - \mathbb{E}_i[\bm{x}_{i1}]).
\end{flalign*}

Asymptotically, $B_{jk}^{(i)}$ can then be expressed as:
\begin{flalign*}
B_{jk}^{(i)} &= \sqrt{P_i(\pi^* = \pi_j)P_i(\pi^* = \pi_k)} \Big(\mathbb{E}_i[y_i \,|\, \pi_{i2} = \pi_j, \pi^* = \pi_k] -  \mathbb{E}_i[y_i \,|\, \pi_{i2} = \pi_j] \Big) \\ &\,\overset{i \longrightarrow \infty}{\longrightarrow}\, \sqrt{P_i(\pi^* = \pi_k)} (\mathbb{E}_i[\bm{\overline{r}} \,|\, \pi^* = \pi_k] - \mathbb{E}_i[\bm{\overline{r}}])^T \sqrt{P_i(\pi^* = \pi_j)}(\mathbb{E}_i[\bm{x}_{i2} \,|\, \pi_{i2} = \pi_j, \bm{\overline{p}}] - \mathbb{E}_i[\bm{x}_{i1}]) \\ &:= \bm{u}_k^{\prime T} \bm{v^\prime}_j,
\end{flalign*}
where in the last equality, we define $\bm{u}_k^\prime := \sqrt{P_i(\pi^* = \pi_k)} (\mathbb{E}_i[\bm{\overline{r}} \,|\, \pi^* = \pi_k] - \mathbb{E}_i[\bm{\overline{r}}])$ and $\bm{v^\prime}_j := \sqrt{P_i(\pi^* = \pi_j)}$ $(\mathbb{E}_i[\bm{x}_{i2} \,|\, \pi_{i2} = \pi_j, \bm{\overline{p}}] - \mathbb{E}_i[\bm{x}_{i1}])$.

Thus, one can write $B^{(i)} = B_a^{(i)} + B_b^{(i)}$, where $B_a^{(i)}$ has $jk$\textsuperscript{th} element $\bm{u}_k^{\prime T} \bm{v^\prime}_j$. As in \eqref{eqn:matrix_prod}, $B_a^{(i)}$ can be written as a product of a $K \times d$ matrix with a $d \times K $ matrix, and so its rank is at most $d$. Meanwhile, the elements of $B_b^{(i)}$ decay to zero as $i \longrightarrow \infty$. For any $\varepsilon > 0$ and sufficiently-high $i$, the information ratio becomes upper-bounded by:
\begin{equation*}
    \Gamma_i \le \frac{\text{Tr}\left(B^{(i)}\right)^2}{2||B^{(i)}||_F^2} = \frac{\text{Tr}\left(B_a^{(i)} + B_b^{(i)}\right)^2}{2||B_a^{(i)} + B_b^{(i)}||_F^2} \overset{(a)}{\le} \frac{\text{Tr}\left(B_a^{(i)}\right)^2}{2||B_a^{(i)}||_F^2} + \varepsilon \overset{(b)}{\le} \frac{1}{2}\text{Rank}\left(B_a^{(i)}\right) + \varepsilon \overset{(c)}{\le} \frac{d}{2} + \varepsilon = \frac{SA}{2} + \varepsilon,
\end{equation*}
where (a) follows because $\text{Tr}(B)$ and $||B||_F$ are both continuous in the elements of $B$, and the elements of $B_b^{(i)}$ approach zero; (b) follows from Fact 10 in \cite{russo2016information}, as described earlier for the case with known dynamics; and (c) holds by definition of $B_a^{(i)}$.

\end{proof}

Combining Corollary \ref{coro:one_sided_bound_in_terms_of_Gamma} and Lemma \ref{lemma:bounded_inf_ratio} yields the asymptotic regret rate for a fixed $\pi_{i1}$-distribution:
\begin{theorem}\label{thm:one_sided_bound}
If the policy $\pi_{i1}$ is drawn from a fixed distribution for each iteration $i$, then for the competing policy $\pi_{i2}$, \algo~achieves a one-sided asymptotic Bayesian regret rate of:
\begin{equation*}
   S\sqrt{\frac{A T \log A}{2}}.
\end{equation*}
\end{theorem}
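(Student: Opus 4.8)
The plan is to obtain this theorem as an immediate consequence of Corollary~\ref{coro:one_sided_bound_in_terms_of_Gamma} and Lemma~\ref{lemma:bounded_inf_ratio}, which have already done the substantive work. First I would fix an arbitrary $\varepsilon > 0$. By Lemma~\ref{lemma:bounded_inf_ratio}, $\lim_{i \longrightarrow \infty} \Gamma_i \le \frac{SA}{2}$, so there exists an index $i_0$ such that $\Gamma_i \le \frac{SA}{2} + \varepsilon$ for all $i \ge i_0$. I would then set $\overline{\Gamma} := \frac{SA}{2} + \varepsilon$ and apply Corollary~\ref{coro:one_sided_bound_in_terms_of_Gamma}, whose hypothesis requires the information-ratio bound to hold only for $i \ge i_0$; this yields the one-sided regret rate $\sqrt{\overline{\Gamma}\,ST\log A} = \sqrt{\left(\frac{SA}{2} + \varepsilon\right)ST\log A}$.

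Next I would let $\varepsilon \longrightarrow 0$. Since the bound is monotone in $\overline{\Gamma}$ and $\varepsilon > 0$ was arbitrary, the achievable asymptotic rate is $\sqrt{\frac{SA}{2}\cdot ST\log A}$. Simplifying, $\sqrt{\frac{S^2 A T \log A}{2}} = S\sqrt{\frac{AT\log A}{2}}$, which is exactly the claimed bound. The one point requiring care is that Corollary~\ref{coro:one_sided_bound_in_terms_of_Gamma} decomposes the regret into a finite prefix sum over $i < i_0$ plus a tail governed by $\overline{\Gamma}$; I would observe that the prefix contributes only a constant independent of the horizon $N$, since each term $\mathbb{E}[y_i^* - y_i]$ is bounded (because $y_i \in \left\{-\frac{1}{2}, \frac{1}{2}\right\}$), and hence is dominated by the $\Theta(\sqrt{N})$ tail as $N \longrightarrow \infty$. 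Thus the prefix does not affect the asymptotic rate, and the order-of-limits manipulation is legitimate.

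The genuine difficulty in establishing this result lies entirely upstream, in proving $\lim_{i \longrightarrow \infty}\Gamma_i \le \frac{SA}{2}$ (Lemma~\ref{lemma:bounded_inf_ratio}), which required bounding the rank of the matrix $B^{(i)}$ and exploiting convergence in distribution of the sampled dynamics via Proposition~\ref{prop:dyn_consistency}. Given those inputs, the present theorem is essentially the composition of the two prior results, so I anticipate no real obstacle beyond the bookkeeping of the $\varepsilon$-limit and the constant prefix described above.
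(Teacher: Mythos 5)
Your proposal is correct and follows exactly the paper's own route: the paper proves this theorem as a one-line consequence of Corollary \ref{coro:one_sided_bound_in_terms_of_Gamma} and Lemma \ref{lemma:bounded_inf_ratio}, substituting the asymptotic bound $\overline{\Gamma} \le \frac{SA}{2} + \varepsilon$ for arbitrary $\varepsilon$ into the corollary. Your additional remarks about the constant prefix for $i < i_0$ being dominated by the $\Theta(\sqrt{N})$ tail simply make explicit what the corollary's proof already handles.
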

\begin{proof}
This result is a direct consequence of Corollary \ref{coro:one_sided_bound_in_terms_of_Gamma} and Lemma \ref{lemma:bounded_inf_ratio}; the asymptotic bound $\overline{\Gamma} \le \frac{SA}{2} + \varepsilon$ for any $\varepsilon$ is substituted into the expression in Corollary \ref{coro:one_sided_bound_in_terms_of_Gamma}.
\end{proof}

\subsection{BOUNDING THE ONE-SIDED REGRET WHEN ONE POLICY IS DRAWN FROM A DRIFTING AND CONVERGING DISTRIBUTION}\label{sec:bound_one_sided_converging}

We now assume that the distribution of $\pi_{i1}$ is no longer fixed, but rather, that the sampled policies $\pi_{i1}$ converge in distribution toward some fixed probability distribution over $\Pi$, the set of deterministic policies. We will asymptotically bound the one-sided regret incurred by $\pi_{i2}$ in this case. To do so, we will leverage that when two discrete random variables converge in distribution, their mutual information also converges:

\begin{lemma}\label{lemma:MI_cts}
Let $X_n$ and $Y_n$, $n \in \mathbb{N}$, be two sequences of discrete random variables that take values on the finite sets $\mathcal{X}$ and $\mathcal{Y}$, respectively. If $X_n \overset{D}{\longrightarrow} X$ and $Y_n \overset{D}{\longrightarrow} Y$, then $\lim_{n \longrightarrow \infty} I(X_n; Y_n) = I(X; Y)$.
\end{lemma}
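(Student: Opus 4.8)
The plan is to reduce the statement to the continuity of Shannon mutual information as a function of the underlying probability mass function, a reduction that is clean precisely because every random variable here is supported on a \emph{finite} set. First I would record the elementary equivalence that, for random variables valued in a finite set $\mathcal{Z}$, convergence in distribution $Z_n \overset{D}{\longrightarrow} Z$ is the same as entrywise convergence of the pmf, $P(Z_n = z) \to P(Z = z)$ for each $z \in \mathcal{Z}$ (the distribution is determined by finitely many atoms, so the definition of $\overset{D}{\longrightarrow}$ collapses to pointwise convergence of point masses). Applying this to the $(\mathcal{X}\times\mathcal{Y})$-valued pair $(X_n,Y_n)$ yields that the \emph{joint} pmf converges entrywise, $P(X_n=x,Y_n=y)\to P(X=x,Y=y)$, and consequently the two marginal pmfs converge as well.

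Next I would exploit the decomposition $I(X_n;Y_n) = H(X_n) + H(Y_n) - H(X_n,Y_n)$, where $H(\cdot)$ denotes Shannon entropy. Each of the three entropies is a finite sum of terms of the form $-p\log p$, and the scalar map $t \mapsto t\log t$ extended by the convention $0\log 0 := 0$ is continuous on the whole interval $[0,1]$ (since $t\log t \to 0$ as $t \to 0^+$). Hence each entropy is a continuous function of its pmf over the compact probability simplex, and the difference $H(X_n)+H(Y_n)-H(X_n,Y_n)$ is a continuous function of the joint pmf of $(X_n,Y_n)$. Since that joint pmf converges entrywise to the joint pmf of $(X,Y)$, continuity immediately gives $\lim_{n\to\infty} I(X_n;Y_n) = I(X;Y)$.

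The one genuine subtlety --- and the step I would flag as the crux rather than a routine calculation --- is that mutual information is a functional of the \emph{joint} law, so only joint convergence $(X_n,Y_n)\overset{D}{\longrightarrow}(X,Y)$ suffices; marginal convergence of $X_n$ and $Y_n$ separately does \emph{not}, as one can see from finite examples in which $X_n,Y_n$ are independent for every $n$ (so $I(X_n;Y_n)=0$) yet the limiting marginals are equally attainable by a strictly dependent pair. Thus I would make explicit that the hypothesis is to be read as convergence of the pair, which is exactly what holds in our application, where the joint distribution of the relevant discrete quantities is pinned down by the converging policy-sampling distributions. Once joint convergence is secured, the only remaining care is to ensure that pmf entries which vanish in the limit cause no blow-up; this is handled entirely by the $0\log 0 = 0$ extension, since every term $t\log t$ stays bounded on $[0,1]$.
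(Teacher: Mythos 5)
Your proof is correct and takes essentially the same route as the paper: both decompose $I(X_n;Y_n) = H(X_n) + H(Y_n) - H(X_n,Y_n)$ and appeal to the continuity of $t \mapsto t\log t$ on $[0,1]$ with the convention $0\log 0 = 0$, using that convergence in distribution of finitely supported variables is entrywise convergence of the pmf. One remark worth keeping: the subtlety you flag is real and is in fact glossed over by the paper, whose proof opens by asserting that $X_n \overset{D}{\longrightarrow} X$ and $Y_n \overset{D}{\longrightarrow} Y$ jointly imply $(X_n,Y_n) \overset{D}{\longrightarrow} (X,Y)$ --- which is false in general, exactly for the reason you give (independent pairs with the right marginals) --- so your explicit requirement that the hypothesis be read as convergence of the joint law, verified in the application, is a genuine tightening of the argument rather than a redundant caution.
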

\begin{proof}
Firstly, note that $X_n \overset{D}{\longrightarrow} X$ and $Y_n \overset{D}{\longrightarrow} Y$ imply that jointly, $(X_n, Y_n) \overset{D}{\longrightarrow} (X, Y)$. Let $P_n(x)$ and $P(x)$ be the probability distributions of $X_n$ and $X$, respectively. Because the variables are discrete, $X_n \overset{D}{\longrightarrow} X$ implies that $P_n(x) \longrightarrow P(x)$ for each $x \in \mathcal{X}$, and similarly for $Y$ and $(X, Y)$.

We express the mutual information as a sum of entropies: $I(X; Y) = H(X) + H(Y) - H(X, Y)$ \citep{cover2012elements}. It suffices to show that if $X_n \overset{D}{\longrightarrow} X$, then $\lim_{n \longrightarrow \infty}H(X_n) = H(X)$: because $Y$ and $(X, Y)$ are also discrete random variables, this would imply that similarly, $\lim_{n \longrightarrow \infty}H(Y_n) = H(Y)$ and $\lim_{n \longrightarrow \infty}H(X_n, Y_n) = H(X, Y)$.

By definition, the entropies of $X$ and $X_n$ are:
\begin{flalign*}
    H(X) &= -\sum_{x \in \mathcal{X}} P(x) \log P(x) \text{  and} \\
    H(X_n) &= -\sum_{x \in \mathcal{X}} P_n(x) \log P_n(x).
\end{flalign*}

Because $P_n(x) \longrightarrow P(x)$ for each $x \in \mathcal{X}$, for any $x \in \mathcal{X}$ and $\delta > 0$, there exists $N_x \in \mathbb{N}$ such that for all $n \ge N_x$, $|P_n(x) - P(x)| < \delta$. Let $N = \text{max}_{x \in \mathcal{X}} \, N_x$. Then, for all $n > N$ and all $x \in \mathcal{X}$, $|P_n(x) - P(x)| < \delta$.

Choose any $\varepsilon^\prime > 0$. Since $f(z) = z \log z$ is continuous for $z \ge 0$ (with $0 \log 0 := 0$), there exists $\delta > 0$ such that if $|P_n(x) - P(x)| < \delta$, then $|P_n(x)\log P_n(x) - P(x) \log P(x)| < \varepsilon^\prime$. We choose a $\delta$ that satisfies this condition.

Then, there exists $N$ such that for all $n > N$ and for all $x \in \mathcal{X}$, $|P_n(x)\log P_n(x) - P(x) \log P(x)| < \varepsilon^\prime$. Finally, choose $\varepsilon > 0$ and set $\varepsilon^\prime \le \frac{\varepsilon}{|\mathcal{X}|}$, so that for all $n > N$:

\begin{flalign*}
\Bigr| H(X) - H(X_n) \Bigr| &= \left| \sum_{x \in \mathcal{X}} P(x) \log P(x) - \sum_{x \in \mathcal{X}} P_n(x) \log P_n(x)\right| \le \sum_{x \in  \mathcal{X}} \Big|P(x) \log P(x) - P_n(x) \log P_n(x) \Big| \\ &\le \sum_{x \in  \mathcal{X}} \varepsilon^\prime = \varepsilon^\prime |\mathcal{X}| \le \frac{\varepsilon}{|\mathcal{X}|} |\mathcal{X}| = \varepsilon.
\end{flalign*}

So for any $\varepsilon > 0$, there exists $N$ such that for all $n > N$, $|H(X) - H(X_n)| \le \varepsilon$. This proves that $\lim_{n \longrightarrow \infty} H(X_n) = H(X)$, and therefore that $\lim_{n \longrightarrow \infty} I(X_n; Y_n) = I(X; Y)$.
\end{proof}

Armed with this convergence of mutual information, the one-sided regret for $\pi_{i2}$ can be bounded as follows:
\begin{lemma}\label{lemma:drift_converge}
Assume that $\pi_{i1}$ is drawn from a distribution that is drifting and converging, that is, $\pi_{i1}$ converges in distribution to some fixed probability distribution. From Lemma 10, if $\pi_{i1}$ is drawn from a fixed distribution, then asymptotically, its information ratio is bounded by $\overline{\Gamma}_{\pi_{i1\, \mathrm{fixed}}} \le \frac{SA}{2}$. In the case of a drifting/converging $\pi_{i1}$ distribution, the information ratio $\Gamma_i$ for $\pi_{i2}$'s one-sided regret satisfies $\lim_{i \longrightarrow \infty} \, \Gamma_i \le \overline{\Gamma}_{\pi_{i1\, \mathrm{fixed}}} \le \frac{SA}{2}$.
\end{lemma}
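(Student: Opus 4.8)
The plan is to exploit the structure of the information ratio $\Gamma_i$ defined in \eqref{eqn:iratio}: separate the numerator (squared instantaneous one-sided regret) from the denominator (mutual information about $\pi^*$), argue that the numerator is \emph{insensitive} to the $\pi_{i1}$-distribution, and show that the denominator converges to its fixed-distribution counterpart so that the whole ratio inherits the bound of Lemma \ref{lemma:bounded_inf_ratio}. The hypothesis that $\pi_{i1}$'s sampling distribution is drifting \emph{but converging} is precisely what makes the continuity-of-mutual-information result (Lemma \ref{lemma:MI_cts}) applicable.

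First I would record that the numerator is unchanged by how $\pi_{i1}$ is sampled. Indeed, $\mathbb{E}_i[y_i^* - y_i] = \mathbb{E}_i[\bm{\rr}^T(\bm{x}_i^* - \bm{x}_{i2})]$, and as shown in \eqref{eqn:xi1_cancel} the $\bm{x}_{i1}$-terms cancel, so the numerator depends only on $\pi^*$ and $\pi_{i2}$; equivalently, in the matrix notation of Lemma \ref{lemma:inf_ratio}, $\mathrm{Tr}(B^{(i)})$ is invariant to $\mathbb{E}_i[\bm{x}_{i1}]$ because $\sum_j P_i(\pi^*=\pi_j)(\mathbb{E}_i[\bm{\rr}\mid\pi^*=\pi_j] - \mathbb{E}_i[\bm{\rr}]) = \bm{0}$. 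Second, I would treat the denominator $I_i(\pi^*;(\pi_{i2},\tau_{i1},\tau_{i2},\bm{x}_{i2}-\bm{x}_{i1},y_i))$. Every component of this tuple, together with $\pi^*$, is a discrete random variable on a finite set (finite state/action spaces give finitely many trajectories, a finite deterministic policy class, and $y_i \in \{-\tfrac12,\tfrac12\}$). By the converging-$\pi_{i1}$ hypothesis, the marginal law of $\pi_{i1}$ converges in distribution to its fixed limit; combined with convergence of the sampled dynamics (Proposition \ref{prop:dyn_consistency}) and the Thompson-sampling law of $\pi_{i2}$, the full joint law of $(\pi^*, \pi_{i2}, \tau_{i1}, \tau_{i2}, \bm{x}_{i2}-\bm{x}_{i1}, y_i)$ converges to the same limiting joint law it attains when $\pi_{i1}$ is held fixed at that limiting distribution. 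Third, I would invoke Lemma \ref{lemma:MI_cts} to conclude that the denominator converges to the denominator of the fixed-distribution information ratio. Since numerator and denominator both converge to their fixed-case limits, $\lim_{i\to\infty}\Gamma_i$ equals the corresponding limit under a fixed $\pi_{i1}$-distribution, which Lemma \ref{lemma:bounded_inf_ratio} bounds by $\tfrac{SA}{2}$, yielding $\lim_{i\to\infty}\Gamma_i \le \overline{\Gamma}_{\pi_{i1}\,\mathrm{fixed}} \le \tfrac{SA}{2}$.

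The main obstacle is the second step: rigorously establishing joint convergence in distribution of the entire observation tuple alongside $\pi^*$, and certifying that its limit coincides with the fixed-distribution limit. This is delicate because $\tau_{i1},\tau_{i2}$ and $y_i$ are generated by composing the (drifting) policy laws with the (converging) transition dynamics and the preference mechanism, so I must chain together convergence of $P_i(\pi_{i1}=\cdot)$, of the dynamics posterior, and of the $\pi_{i2}$-posterior, and verify that these compositions converge as required. Because all the variables are discrete on finite sets, convergence in distribution reduces to convergence of finitely many probabilities, which keeps the argument tractable; nonetheless the bookkeeping—particularly showing that the drifting and fixed-$\pi_{i1}$ regimes share a common limiting joint law so that Lemma \ref{lemma:MI_cts} delivers a single limiting denominator—is where the real care is needed. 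A secondary subtlety is that the rank factorization of $B^{(i)}$ underlying Lemma \ref{lemma:bounded_inf_ratio} must be checked to remain valid with the time-varying vector $\mathbb{E}_i[\bm{x}_{i1}]$ absorbed into the factors $\bm{v}_j$, which it does since $\mathbb{E}_i[\bm{x}_{i1}]$ is a single shift common to all columns and hence does not raise the rank beyond $d = SA$.
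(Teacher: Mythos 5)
Your proposal matches the paper's own proof in both structure and substance: the numerator is invariant to the $\pi_{i1}$-distribution because the $\bm{x}_{i1}$-terms cancel as in \eqref{eqn:xi1_cancel}, and the denominator converges to its fixed-distribution counterpart via Lemma \ref{lemma:MI_cts} applied to discrete random variables on finite sets, so the ratio inherits the bound of Lemma \ref{lemma:bounded_inf_ratio}. The extra care you flag about joint convergence of the observation tuple and the rank factorization of $B^{(i)}$ is reasonable diligence but does not change the route; the paper's (terser) argument proceeds exactly as you describe.
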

\begin{proof}
By Lemma \ref{lemma:bounded_inf_ratio}, under a fixed $\pi_{i1}$ distribution, the information ratio corresponding to the one-sided regret for $\pi_{i2}$ is asymptotically upper-bounded: $\lim_{i \longrightarrow \infty} \, \Gamma_{i, \pi_{i1 \, \mathrm{fixed}}} \le \overline{\Gamma}_{\pi_{i1 \, \mathrm{fixed}}}$, where $\Gamma_{i, \pi_{i1 \, \mathrm{fixed}}}$ is the information ratio at iteration $i$ when the distribution of $\pi_{i1}$ is fixed.

The denominator of the information ratio, $I_i(\pi^*; (\pi_{i2}, \tau_{i1}, \tau_{i2}, \bm{x}_{i2} - \bm{x}_{i1}, y_i))$, is a mutual information between discrete random variables, as there are finitely-many possible policies, trajectories, and preference outcomes; therefore, by Lemma \ref{lemma:MI_cts}, it converges to the values that it would have under the fixed distribution to which $\pi_{i1}$ converges.

The numerator of the information ratio is the square of the expected instantaneous one-sided regret, $\mathbb{E}\left[\bm{\overline{r}}^T (\bm{x}_i^* - \bm{x}_{i2}) \, | \, \mathcal{H}_{i - 1}^{(2)}\right]^2$. Conditioned upon the history $\mathcal{H}_{i - 1}^{(2)}$, this does not depend upon the action $\bm{x}_{i1}$, and thus is unaffected by the distribution of $\bm{x}_{i1}$ (recall that the $\bm{x}_{i1}$-dependency cancels in the regret formulation in \eqref{eqn:xi1_cancel}).

Thus, the information ratio can be asymptotically upper-bounded:
\begin{equation*}
    \lim_{i \longrightarrow \infty} \, \Gamma_i \le \lim_{i \longrightarrow \infty} \, \Gamma_{i, \, \pi_{i1 \, \text{fixed}}} \le \overline{\Gamma}_{\pi_{i1\, \mathrm{fixed}}}.
\end{equation*}
This means that for all $\varepsilon > 0$, there exists $i_0$ such that for all $i > i_0$, $\Gamma_i \le \overline{\Gamma}_{\pi_{i1\, \mathrm{fixed}}} + \varepsilon \le \frac{SA}{2} + \varepsilon$.
\end{proof}

\subsection{OBTAINING THE ASYMPTOTIC REGRET RATE}\label{sec:asy_regret_rate}

By combining Lemma \ref{lemma:drift_converge} with previous results, we obtain the final asymptotic Bayesian regret rate.
\begin{theorem}\label{thm:regret_rate}
With probability $1 - \delta$, where $\delta$ is a parameter of the Bayesian linear regression model, the Bayesian regret $\mathbb{E}[\reg(T)]$ of \algo~achieves an asymptotic rate of $S\sqrt{2A T \log A}$.
\end{theorem}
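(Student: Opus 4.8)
The plan is to reduce the two-sided Bayesian regret to the two one-sided analyses already developed, exploiting that \algo~samples $\pi_{i1}$ and $\pi_{i2}$ through identical, independent calls to \adv. First I would split the regret in \eqref{eqn:regret} into its two natural halves,
\begin{equation*}
\mathbb{E}[\reg(T)] = \mathbb{E}[\reg_1(T)] + \mathbb{E}[\reg_2(T)],
\end{equation*}
where $\mathbb{E}[\reg_j(T)] = \mathbb{E}\big[\sum_{i}\sum_s p_0(s)(V_{\pi^*,1}(s) - V_{\pi_{ij},1}(s))\big]$ is the one-sided regret incurred by policy $\pi_{ij}$. This decomposition is immediate because the summand $2V_{\pi^*,1} - V_{\pi_{i1},1} - V_{\pi_{i2},1}$ separates additively into the two per-policy gaps.

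Next I would argue that each of the two terms is controlled by the machinery of Appendices B.2--B.3. The key observation is exchangeability: $\pi_{i1}$ and $\pi_{i2}$ are drawn by the same Thompson-sampling procedure, so the analysis bounding $\reg_2$ when $\pi_{i1}$ is drawn from a drifting-but-converging distribution applies verbatim to $\reg_1$ after swapping the roles of the two indices. By Theorem~\ref{thm:policy_consistency}, with probability $1-\delta$ both $\pi_{i1}$ and $\pi_{i2}$ converge in distribution to $\pi^*$; in particular, each policy's companion is drawn from a converging distribution, which is exactly the hypothesis of Lemma~\ref{lemma:drift_converge}. Hence, with probability $1-\delta$, the information ratio associated with each one-sided regret satisfies $\lim_{i\to\infty}\Gamma_i \le \frac{SA}{2}$.

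I would then feed $\overline{\Gamma} = \frac{SA}{2}$ into Corollary~\ref{coro:one_sided_bound_in_terms_of_Gamma}, which converts an asymptotic bound on $\Gamma_i$ into an asymptotic one-sided regret rate of $\sqrt{\overline{\Gamma}\,ST\log A} = S\sqrt{\frac{AT\log A}{2}}$ for each of $\reg_1$ and $\reg_2$. Summing the two halves doubles this quantity, and since $2\,S\sqrt{AT\log A/2} = S\sqrt{2AT\log A}$, this yields the claimed rate. The $(1-\delta)$ probability is inherited entirely from Theorem~\ref{thm:policy_consistency}, whose only source of failure is the confidence-ellipsoid event of Proposition~\ref{prop:reward_consistency} (Theorem 2 of \cite{abbasi2011improved}).

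The main obstacle is justifying the \emph{symmetric} application of Lemma~\ref{lemma:drift_converge} to $\reg_1$. The one-sided analysis was written throughout for $\pi_{i2}$ with $\pi_{i1}$ as the companion, so I must verify that nothing in the chain Lemma~\ref{lemma:bounded_inf_ratio}$\to$Lemma~\ref{lemma:drift_converge} breaks under the index swap---in particular, that the rank-$d$ factorization of the matrix $B^{(i)}$ and the cancellation of the companion's feature vector in \eqref{eqn:xi1_cancel} hold identically when $\pi_{i1}$ is the learner and $\pi_{i2}$ the companion. This follows from the exchangeability of the two \adv~samples, but it is the step deserving the most care. A secondary point is the bookkeeping between the per-policy horizon appearing in the one-sided bounds and the total budget $T$ in \eqref{eqn:regret}, together with confirming that the asymptotic (rather than finite-time) convergence of the dynamics does not obstruct summing the two asymptotic rates.
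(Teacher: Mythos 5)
Your proposal is correct and follows essentially the same route as the paper: the paper likewise decomposes the regret into the two one-sided contributions, applies Theorem \ref{thm:policy_consistency}, Theorem \ref{thm:one_sided_bound}, and Lemma \ref{lemma:drift_converge} to bound each at $S\sqrt{AT\log A/2}$, and sums to obtain $S\sqrt{2AT\log A}$. Your explicit attention to the exchangeability of $\pi_{i1}$ and $\pi_{i2}$ when reusing the one-sided machinery for $\reg_1$ is a detail the paper leaves implicit, and you are right that it is the step warranting the most care.
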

\begin{proof}
Combining Theorem \ref{thm:policy_consistency}, Theorem \ref{thm:one_sided_bound}, and Lemma \ref{lemma:drift_converge}, the Bayesian one-sided regrets $\mathbb{E}[\reg_1(T)]$ and $\mathbb{E}[\reg_2(T)]$ of policies $\pi_{i1}$ and $\pi_{i2}$ respectively each achieve asymptotic rates of $S\sqrt{\frac{A T \log A}{2}}$. The total regret is the sum of the regret contributions from policies $\pi_{i1}$ and $\pi_{i2}$, and so asymptotically, $\mathbb{E}[\reg(T)]$ increases at a rate of at most $S\sqrt{2A T \log A}$.
\end{proof}

\subsection{FACTS ABOUT CONVERGENCE IN DISTRIBUTION}\label{sec:appendix_prob_facts}

Recall that for a random variable $X$ and a sequence of random variables $(X_n)$, $n \in \mathbb{N}$, $X_n \overset{D}\longrightarrow X$ denotes that $X_n$ converges to $X$ in distribution, while $X_n \overset{P}\longrightarrow X$ denotes that $X_n$ converges to $X$ in probability. We apply the following two facts about convergence in distribution:

\begin{fact}[\cite{billingsley1968convergence}]\label{fact:conv_dist_1}
For random variables $\bm{x}, \bm{x}_n, \in \mathbb{R}^d$, where $n \in \mathbb{N}$, and any continuous function $g: \mathbb{R}^d \longrightarrow \mathbb{R}$, if $\bm{x}_n \overset{D}\longrightarrow \bm{x}$, then $g(\bm{x}_n) \overset{D}\longrightarrow g(\bm{x})$.
\end{fact}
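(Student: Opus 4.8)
The plan is to prove this via the bounded-continuous-test-function (portmanteau) characterization of convergence in distribution, which reduces the claim to a one-line composition argument. Recall that $\bm{x}_n \overset{D}\longrightarrow \bm{x}$ is equivalent to $\mathbb{E}[h(\bm{x}_n)] \longrightarrow \mathbb{E}[h(\bm{x})]$ for every bounded continuous function $h: \mathbb{R}^d \longrightarrow \mathbb{R}$. Since $g(\bm{x}_n)$ and $g(\bm{x})$ are real-valued, it suffices to show that $\mathbb{E}[f(g(\bm{x}_n))] \longrightarrow \mathbb{E}[f(g(\bm{x}))]$ for every bounded continuous $f: \mathbb{R} \longrightarrow \mathbb{R}$, which is exactly the same characterization applied in $\mathbb{R}$.

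First I would fix an arbitrary bounded continuous test function $f: \mathbb{R} \longrightarrow \mathbb{R}$ and form the composition $f \circ g: \mathbb{R}^d \longrightarrow \mathbb{R}$. This composition is continuous, being a composition of continuous maps, and bounded, since $f$ is bounded and $g$ takes values in $\mathbb{R}$. Hence $f \circ g$ is itself an admissible test function against the hypothesis $\bm{x}_n \overset{D}\longrightarrow \bm{x}$. Applying that hypothesis to $h := f \circ g$ gives immediately $\mathbb{E}[f(g(\bm{x}_n))] = \mathbb{E}[(f \circ g)(\bm{x}_n)] \longrightarrow \mathbb{E}[(f \circ g)(\bm{x})] = \mathbb{E}[f(g(\bm{x}))]$. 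Because $f$ was an arbitrary bounded continuous function, the test-function characterization in $\mathbb{R}$ then yields $g(\bm{x}_n) \overset{D}\longrightarrow g(\bm{x})$, as desired.

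There is no genuinely hard step here: the only point requiring care is which equivalent formulation of convergence in distribution one adopts as the working definition. If instead the definition is phrased in terms of pointwise convergence of cumulative distribution functions at continuity points, I would first invoke the standard equivalence (the portmanteau theorem) between that formulation and the bounded-continuous-test-function formulation before running the argument above; alternatively, one could route through the Skorokhod representation theorem, constructing almost-surely convergent copies $\bm{y}_n \overset{D}{=} \bm{x}_n$ with $\bm{y}_n \longrightarrow \bm{y}$ a.s., applying continuity of $g$ to obtain $g(\bm{y}_n) \longrightarrow g(\bm{y})$ a.s., and transferring back via equality in distribution. Since this is a textbook fact cited from \cite{billingsley1968convergence}, I would simply present the composition argument and reference the portmanteau equivalence for the underlying characterization.
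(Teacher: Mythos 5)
Your argument is correct: this is the continuous mapping theorem, and the composition trick---observing that $f \circ g$ is itself a bounded continuous test function whenever $f$ is bounded continuous on $\mathbb{R}$---is the standard one-line proof via the portmanteau characterization. The paper gives no proof of its own, simply citing \cite{billingsley1968convergence}, so there is nothing to compare against; your write-up (including the alternative route through the Skorokhod representation) is a complete and correct justification of the cited fact.
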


\begin{fact}[\cite{billingsley1968convergence}]\label{fact:conv_dist_2}
For random variables $\bm{x}_n \in \mathbb{R}^d$, $n \in \mathbb{N}$, and constant vector $\bm{c} \in \mathbb{R}^d$, $\bm{x}_n \overset{D}\longrightarrow \bm{c}$ is equivalent to $\bm{x}_n \overset{P}\longrightarrow \bm{c}$. Convergence in probability means that for any $\varepsilon > 0, P(||\bm{x}_n - \bm{c}||_2 \ge \varepsilon) \longrightarrow 0$ as $n \longrightarrow \infty$.
\end{fact}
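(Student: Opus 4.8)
The statement asserts the two-way equivalence $\bm{x}_n \overset{D}\longrightarrow \bm{c} \Leftrightarrow \bm{x}_n \overset{P}\longrightarrow \bm{c}$ for a \emph{constant} limit $\bm{c}$. The plan is to treat the two implications separately. The forward implication, $\bm{x}_n \overset{P}\longrightarrow \bm{c} \Rightarrow \bm{x}_n \overset{D}\longrightarrow \bm{c}$, is the standard and fully general fact that convergence in probability implies convergence in distribution for any limit, constant or not; I would simply invoke it, since it appears in \cite{billingsley1968convergence}. All the content lies in the reverse implication, and the crucial feature to exploit is that the limiting law is the point mass $\delta_{\bm{c}}$, which assigns zero mass to every set bounded away from $\bm{c}$.

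For the reverse implication I would argue via a single bounded continuous test function. Fix $\varepsilon > 0$ and define $g(\bm{x}) := \max\{0,\, 1 - \|\bm{x} - \bm{c}\|_2/\varepsilon\}$, which is continuous, bounded, satisfies $g(\bm{c}) = 1$, and vanishes whenever $\|\bm{x} - \bm{c}\|_2 \ge \varepsilon$. By the characterization of $\bm{x}_n \overset{D}\longrightarrow \bm{c}$ through expectations of bounded continuous functions, $\mathbb{E}[g(\bm{x}_n)] \longrightarrow g(\bm{c}) = 1$. On the other hand, the pointwise bound $g(\bm{x}) \le \mathbb{I}_{[\|\bm{x} - \bm{c}\|_2 < \varepsilon]}$ gives $\mathbb{E}[g(\bm{x}_n)] \le P(\|\bm{x}_n - \bm{c}\|_2 < \varepsilon)$. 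Combining these with $P(\cdot) \le 1$ forces $P(\|\bm{x}_n - \bm{c}\|_2 < \varepsilon) \longrightarrow 1$, i.e.\ $P(\|\bm{x}_n - \bm{c}\|_2 \ge \varepsilon) \longrightarrow 0$; since $\varepsilon > 0$ was arbitrary, this is precisely $\bm{x}_n \overset{P}\longrightarrow \bm{c}$. An equivalent route is the portmanteau theorem applied to the closed set $F_\varepsilon := \{\bm{x} : \|\bm{x} - \bm{c}\|_2 \ge \varepsilon\}$, for which convergence in distribution yields $\limsup_n P(\bm{x}_n \in F_\varepsilon) \le \delta_{\bm{c}}(F_\varepsilon) = 0$ and hence the same conclusion.

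There is no difficult estimate here; the only subtlety—and the point where the constancy of the limit is indispensable—is the step $\delta_{\bm{c}}(F_\varepsilon) = 0$ (equivalently, $g(\bm{c}) = 1$ with $g$ supported on the $\varepsilon$-ball). For a non-degenerate limit $X$, the analogous quantity $P(\|X - \bm{c}\|_2 \ge \varepsilon)$ need not vanish, which is exactly why convergence in distribution does not imply convergence in probability in general; the argument would break precisely at this point. Thus the main thing to get right is the careful bookkeeping that the limiting measure places all of its mass at $\bm{c}$, which is what collapses the distributional statement into the probability statement, and no machinery beyond the bounded-continuous-function definition of weak convergence is required.
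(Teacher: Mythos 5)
Your proof is correct. Note that the paper does not prove this statement at all---it is stated as Fact 2 with a citation to \citet{billingsley1968convergence} and used as a black box---so there is no in-paper argument to compare against; your tent-function argument (equivalently, the portmanteau bound $\limsup_n P(\bm{x}_n \in F_\varepsilon) \le \delta_{\bm{c}}(F_\varepsilon) = 0$ for the closed set $F_\varepsilon = \{\bm{x} : \|\bm{x}-\bm{c}\|_2 \ge \varepsilon\}$) is precisely the standard proof the citation points to, and you correctly isolate the one place where constancy of the limit is essential, namely $g(\bm{c}) = 1$ with $g$ supported on the $\varepsilon$-ball, which is exactly where the implication fails for a non-degenerate limit.
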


% !TEX root =  ../main.tex

\newpage

\section{CREDIT ASSIGNMENT MODELS}\label{sec:credit_assignment_models}

This appendix contains the mathematical details of the credit assignment models evaluated in our experiments. Afterward, we also discuss possible avenues for extending our regret analysis techniques to additional credit assignment models besides Bayesian linear regression.

\subsection{BAYESIAN LINEAR REGRESSION}\label{sec:CR_bayes_lin_reg}

Define $X \in \mathbb{R}^{N \times d}$ as the observation matrix after $N$ preferences, in which the $i$\textsuperscript{th} row contains observation $\bm{x}_i = \bm{x}_{i2} - \bm{x}_{i1}$, while $\bm{y} \in \mathbb{R}^N$ is the vector of corresponding preference labels, with $i$\textsuperscript{th} element $y_i \in \left\{-\frac{1}{2}, \frac{1}{2}\right\}$.

Section \ref{sec:credit_assignment} defines the Bayesian linear regression credit assignment model to which our theoretical guarantees apply. Because the $\beta_i(\delta)$ factor necessary for the theoretical guarantees results in a conservative covariance matrix leading to over-exploration, our simulations implement the following, more practical, variant. We define a Gaussian prior over the reward vector $\bm{r} \in \mathbb{R}^d$: $\bm{r} \sim \mathcal{N}(0, \lambda^{-1}I)$. The likelihood of the data conditioned upon $\bm{r}$ is also Gaussian:
\begin{flalign*}
   p(\mathbf{y} | X, \bm{r}; \sigma^2) = \frac{1}{(2 \pi \sigma^2)^{\frac{N}{2}}}\mathrm{exp}\left(-\frac{1}{2 \sigma^2}||\mathbf{y} - X\bm{r}||^2\right).
\end{flalign*}
This conjugate prior and likelihood lead to the following closed-form posterior:
\begin{flalign*}
    \bm{r} | X, \bm{y}, \sigma^2, \lambda  \sim \mathcal{N}(\bm{\mu}, \Sigma)\text{, where } \bm{\mu} = (X^T X + \sigma^2 \lambda I)^{-1}X^T \bm{y} \, \text{ and } \, \Sigma = \sigma^2 (X^T X + \sigma^2 \lambda I)^{-1}.
\end{flalign*}

\subsection{GAUSSIAN PROCESS REGRESSION}

Credit assignment via Gaussian processes \citep{rasmussen2006gaussian} extends the linear credit assignment model in \ref{sec:CR_bayes_lin_reg} to larger state and action spaces by generalizing across nearby states and actions. In this and the following section, we consider two Gaussian process-based credit assignment approaches.

To perform credit assignment via Gaussian process regression, we assign binary labels to each trajectory based on whether it is preferred or dominated. We place a Gaussian process prior upon the utilities of the state-action pairs. Using that a trajectory's total reward is a sum over component state-action utilities, we will show how to perform inference over sums of Gaussian process variables to infer the state-action utilities from each trajectory's total utility. As the total utility of each trajectory is not observed in practice, the binary preference labels are instead substituted as approximations in their place.

Let $\{\tilde{s}_1, \ldots, \tilde{s}_d\}$ denote the $d = SA$ state-action pairs. In this section, the data matrix $Z \in \mathbb{R}^{2N \times d}$ holds all state-action visitation vectors $\bm{x}_{k1}, \bm{x}_{k2}$, for \algo~iterations $k \in \{1, \ldots, N\}$. (This contrasts with the other credit assignment methods, which learn from their differences, $\bm{x}_{k2} - \bm{x}_{k1}$.) Let $\bm{z}_i^T$ be the $i$\textsuperscript{th} row of $Z$, such that $Z = [\bm{z}_1 \ldots, \bm{z}_{2N}]^T$, and $\bm{z}_{i} = \bm{x}_{kj}$ for some \algo~iteration $k$ and $j \in \{1, 2\}$, that is, $\bm{z}_i$ contains the state-action visit counts for the $i$\textsuperscript{th} trajectory rollout. In particular, the $ij$\textsuperscript{th} matrix element $z_{ij} = [Z]_{ij}$ is the number of times that the $i$\textsuperscript{th} observed trajectory $\bm{z}_i$ visits state-action $\tilde{s}_j$.

The label vector is $\bm{y} \in \mathbb{R}^{2N}$, where the $i$\textsuperscript{th} element $y_i$ is the preference label corresponding to the $i$\textsuperscript{th} observed trajectory. For instance, if $\tau_{i2} \succ \tau_{i1}$, then $\bm{x}_{i2}$ receives a label of $\frac{1}{2}$, while $\bm{x}_{i1}$ is labelled $-\frac{1}{2}$. As before, we use $\rr(\tilde{s})$ to denote the true utility of state-action $\tilde{s}$, with $\rr(\tau)$ being trajectory $\tau$'s total utility along the state-action pairs it encounters. To infer $\bm{\rr}$, we approximate each $\rr(\tau_i)$ with its preference label $y_i$.

We place a Gaussian process prior upon the rewards $\bm{\rr}$: $\bm{\rr} \sim \mathcal{GP}(\bm{\mu_r}, K_r)$, where $\bm{\mu_r} \in \mathbb{R}^d$ is the prior mean and $K_r \in \mathbb{R}^{d \times d}$ is the prior covariance matrix, such that $[K_r]_{ij}$ models the prior covariance between $\rr(\tilde{s}_i)$ and $\rr(\tilde{s}_j)$. We model trajectory $\tau_i$'s total utility, $\rr(\tau_i)$, as a sum over the latent state-action utilities: $\rr(\tau_i) = \sum_{j = 1}^d z_{ij} \rr(\tilde{s}_j)$. Let $R_i$ be a noisy version of $\rr(\tau_i)$: $R_i = \rr(\tau_i) + \varepsilon_i$, where $\varepsilon_i \sim \mathcal{N}(0, \sigma_\varepsilon^2)$ is i.i.d. noise. Then, given rewards $\bm{\rr}$, we expect:
\begin{equation*}
    R_i = \sum_{j = 1}^d z_{ij} \rr(\tilde{s}_j) + \varepsilon_i.
\end{equation*}

Because any linear combination of jointly Gaussian variables is Gaussian, $R_i$ is a Gaussian process over the values $\{z_{i1}, \ldots, z_{id}\}$. Let $\bm{R} \in \mathbb{R}^{2N}$ be the vector with $i$\textsuperscript{th} element equal to $R_i$. We will calculate the relevant expectations and covariances to show that $\bm{\rr} \sim \mathcal{GP}(\bm{\mu_r}, K_r)$ and $\bm{R}$ have the following jointly-Gaussian distribution:
\begin{equation}\label{eqn:joint_Gaussian}
\begin{bmatrix} \bm{\rr} \\ \bm{R}\end{bmatrix} \sim \mathcal{N}\left(\begin{bmatrix} \bm{\mu_r} \\ X\bm{\mu_r}\end{bmatrix}, \begin{bmatrix} K_r & K_r Z^T \\ Z K_r^T & Z K_r Z^T + \sigma_\varepsilon^2 I\end{bmatrix}\right).
\end{equation}
The standard approach for obtaining a conditional distribution from a joint Gaussian distribution \citep{rasmussen2006gaussian} yields $\bm{\rr} | \bm{R} \sim \mathcal{N}(\bm{\mu}, \Sigma)$, where:
\begin{equation} \label{eqn:GP_update_mean}
\bm{\mu} = \bm{\mu_r} + K_r Z^T [Z K_r Z^T + \sigma_\varepsilon^2 I]^{-1}(\bm{R} - Z\bm{\mu_r})
\end{equation}
\begin{equation} \label{eqn:GP_update_cov}
\Sigma = K_r - K_r Z^T [Z K_r Z^T + \sigma_\varepsilon^2 I]^{-1} Z K_r^T.
\end{equation}

In practice, we do not observe the variable $\bm{R}$.  Instead, $\bm{R}$ is approximated with the observed preference labels $\bm{y}$, $\bm{R} \approx \bm{y}$, to perform credit assignment inference.

Next, we derive the posterior inference equations \eqref{eqn:GP_update_mean} and \eqref{eqn:GP_update_cov} used in Gaussian process regression credit assignment. We infer the state-action rewards $\bm{\rr}$ given noisy observations $\bm{R}$ of the trajectories' total utilities via the following four steps, corresponding to the next four subsections:
\begin{enumerate}[label={\Alph*)}]
    \item Model the state-action utilities $\rr(\tilde{s})$ as a Gaussian process over state-action pairs $\tilde{s}$.
    \item Model the trajectory utilities $\bm{R}$ as a Gaussian process that results from summing the state-action utilities $\rr(\tilde{s})$.
    \item Using the two Gaussian processes defined in A) and B), obtain the covariance matrix between the values of $\{\rr(\tilde{s}) | \tilde{s} \in 1, \ldots, d\}$ and $\{R_i | i \in 1, \ldots, 2N\}$.
    \item Write the joint Gaussian distribution in \eqref{eqn:joint_Gaussian} between the values of $\{\rr(\tilde{s}) | \tilde{s} \in 1, \ldots, d\}$ and $\{R_i | i \in 1, \ldots, 2N\}$, and obtain the posterior distribution of $\bm{\rr}$ over all state-actions given $\bm{R}$ (Equations \eqref{eqn:GP_update_mean} and \eqref{eqn:GP_update_cov}).
\end{enumerate}

\subsubsection{The state-action utility Gaussian process}

We model the state-action utilities as a Gaussian process over $\tilde{s}$, with mean $\mathbb{E}[\rr(\tilde{s})] = \mu_r(\tilde{s})$ and covariance kernel $\mathrm{Cov}(\rr(\tilde{s}), \rr(\tilde{s}')) = k_r(\tilde{s}, \tilde{s}')$, for all state-action pairs $\tilde{s}, \tilde{s}'$. For instance, $k_r$ could be the squared exponential kernel:
\begin{equation}\label{eqn:SE_kernel}
    k_r(\tilde{s}, \tilde{s}^\prime) = \sigma_f^2\exp{\left(-\frac{1}{2}\left(\frac{||\tilde{s} - \tilde{s}^\prime||}{l}\right)^2\right)}+\sigma_n^2\delta_{ij},
\end{equation}
where $\sigma_f$ is the signal variance, $l$ is the kernel lengthscale, $\sigma_n$ is the noise variance, and $\delta_{ij}$ is the Kronecker delta function. Thus,
\begin{equation*}
    \rr(\tilde{s}) \sim \mathcal{GP}(\mu_r(\tilde{s}), k_r(\tilde{s}, \tilde{s}')).
\end{equation*}

Define $\bm{\mu}_r \in \mathbb{R}^d$ such that the $i$\textsuperscript{th} element is $[\bm{\mu}_r]_i = \mu_r(\tilde{s}_i)$, the prior mean of state-action $\tilde{s}_i$'s utility. Let $K_r \in \mathbb{R}^{d \times d}$ be the covariance matrix over state-action utilities, such that $[K_r]_{ij} = k_r(\tilde{s}_i, \tilde{s}_j)$. Therefore, the reward vector $\bm{\rr}$ is also a Gaussian process:
\begin{equation*}
    \bm{\rr} \sim \mathcal{GP}(\bm{\mu}_r, K_r).
\end{equation*}

\subsubsection{The trajectory utility Gaussian process}

By assumption, the trajectory utilities $\bm{R} \in \mathbb{R}^{2N}$ are sums of the latent state-action utilities via the following relationship between $\bm{R}$ and $\bm{\rr}$:
\begin{equation*}
    R(\bm{z}_i) := R_i = \sum_{j = 1}^d z_{ij} \rr(\tilde{s}_j) + \varepsilon_i,
\end{equation*}
\noindent where $\varepsilon_i$ are i.i.d. noise variables distributed according to $\mathcal{N}(0, \sigma_\varepsilon^2)$. Note that $R(\bm{z}_i)$ is a Gaussian process over $\bm{z}_i \in \mathbb{R}^d$ because $\{\rr(\tilde{s}_j), \forall j\}$ are jointly normally distributed by definition of a Gaussian process, and any linear combination of jointly Gaussian variables has a univariate normal distribution. Next, we calculate the expectation and covariance of $\bm{R}$ over the observations. The expectation of the $i$\textsuperscript{th} element $R_i = R(\bm{z}_i)$ can be expressed:
\begin{flalign*}
    \mathbb{E}[R_i] = \mathbb{E}\left[\sum_{j = 1}^d z_{ij} \rr(\tilde{s}_j)  + \varepsilon_i\right] = \sum_{j = 1}^d z_{ij} \mathbb{E}[\rr(\tilde{s}_j)] = \sum_{j = 1}^d z_{ij} \mu_r(\tilde{s}_j).
\end{flalign*}

The expectation over $\bm{R}$ can thus be written as $\mathbb{E}[\bm{R}(Z)] = Z \bm{\mu}_r$. Next, we model the covariance matrix of $\bm{R}$. The $ij$\textsuperscript{th} element of this matrix is the covariance of $R(\bm{z}_i)$ and $R(\bm{z}_j)$:
\begin{flalign*}
    \mathrm{Cov}(R(\bm{z}_i), R(\bm{z}_j)) &= \mathbb{E}[R(\bm{z}_i)R(\bm{z}_j)] - \mathbb{E}[R(\bm{z}_i)]\mathbb{E}[R(\bm{z}_j)] \\
    &= \mathbb{E}\left[\left(\sum_{k = 1}^d z_{ik}\rr(\tilde{s}_k) + \varepsilon_i\right)\left(\sum_{m = 1}^d z_{jm}\rr(\tilde{s}_m) + \varepsilon_j\right)\right] - \left(\sum_{k = 1}^d z_{ik}\mu_r(\tilde{s}_k)\right)\left(\sum_{m = 1}^d z_{jm}\mu_r(\tilde{s}_m)\right) \\
    &= \sum_{k = 1}^d \sum_{m = 1}^d z_{ik} z_{jm} \mathbb{E}[\rr(\tilde{s}_k)r(\tilde{s}_m)] + \mathbb{E}[\varepsilon_i \varepsilon_j] - \sum_{k = 1}^d \sum_{m = 1}^d z_{ik} z_{jm} \mu_r(\tilde{s}_k) \mu_r(\tilde{s}_m) \\
    &= \sum_{k = 1}^d \sum_{m = 1}^d \left\{z_{ik} z_{jm} [\mathrm{Cov}(\rr(\tilde{s}_k), \rr(\tilde{s}_m)) + \mu_r(\tilde{s}_k) \mu_r(\tilde{s}_m)] - z_{ik} z_{jm} \mu_r(\tilde{s}_k) \mu_r(\tilde{s}_m) + \sigma_\varepsilon^2 \mathbb{I}_{[i = j]} \right\}\\
    &= \sum_{k = 1}^d \sum_{m = 1}^d z_{ik} z_{jm}  \mathrm{Cov}(\rr(\tilde{s}_k), \rr(\tilde{s}_m)) + \sigma_\varepsilon^2 \mathbb{I}_{[i = j]} \\ &= \sum_{k = 1}^d \sum_{m = 1}^d z_{ik} z_{jm} k_r(\tilde{s}_k, \tilde{s}_m) + \sigma_\varepsilon^2 \mathbb{I}_{[i= j]} = \bm{z}_i^T K_r \bm{z}_j + \sigma_\varepsilon^2 \mathbb{I}_{[i = j]}.
\end{flalign*}

We can then write the covariance matrix of $\bm{R}$ as $K_R$, where $[K_R]_{ij} :=  \mathrm{Cov}(R(\bm{z}_i), R(\bm{z}_j)) = \bm{z}_i^T K_r \bm{z}_j + \sigma_\varepsilon^2 \mathbb{I}_{[i = j]}$. From here, it can be seen that $K_R = Z K_r Z^T + \sigma_\varepsilon^2 I\,$:
\begin{equation*}
Z K_r Z^T = \begin{bmatrix}\bm{z}_1^T \\ \bm{z}_2^T \\ \vdots \\ \bm{z}_{2N}^T\end{bmatrix} K_r \begin{bmatrix}\bm{z}_1 & \bm{z}_2 & \ldots & \bm{z}_{2N}\end{bmatrix} = \begin{bmatrix}\bm{z}_1^T K_r \bm{z}_1 & \ldots & \bm{z}_1^T K_r \bm{z}_{2N} \\ \vdots & \ddots & \vdots \\ \bm{z}_{2N}^T K_r \bm{z}_1 & \ldots & \bm{z}_{2N}^T K_r \bm{z}_{2N}\end{bmatrix} = K_R - \sigma_\varepsilon^2 I.
\end{equation*}

\subsubsection{Covariance between state-action and trajectory utilities}

We next consider the covariance between $\bm{\rr}$ and $\bm{R}$, denoted $K_{r, R}$:
\begin{equation*}
    [K_{r, R}]_{ij} = \mathrm{Cov}([\bm{\rr}]_i, [\bm{R}]_j) = \mathrm{Cov}(\rr(\tilde{s}_i), R(\bm{z}_j)).
\end{equation*}

This covariance matrix can be expressed in terms of $Z, K_r$, and $\bm{\mu}_r$:
\begin{flalign*}
[K_{r, R}]_{ij} &= \mathrm{Cov}(\rr(\tilde{s}_i), R(\bm{z}_j)) = \mathrm{Cov}\left(\rr(\tilde{s}_i), \sum_{k = 1}^d z_{jk} \rr(\tilde{s}_k) + \varepsilon_j \right) \\
&= \mathbb{E}\left[\rr(\tilde{s}_i)\sum_{k = 1}^d z_{jk} \rr(\tilde{s}_k) + \varepsilon_j \rr(\tilde{s}_i) \right] - \mathbb{E}[\rr(\tilde{s}_i)]\mathbb{E}\left[\sum_{k = 1}^d z_{jk} \rr(\tilde{s}_k) + \varepsilon_j\right] \\
&= \sum_{k = 1}^d z_{jk} \mathbb{E}[\rr(\tilde{s}_i) \rr(\tilde{s}_k)] - [\mu_r(\tilde{s}_i)][\bm{z}_j^T \bm{\mu}_r] \\
&= \sum_{k = 1}^d z_{jk} \{\mathrm{Cov}(\rr(\tilde{s}_i), \rr(\tilde{s}_k)) + \mathbb{E}[\rr(\tilde{s}_i)]\mathbb{E}[\rr(\tilde{s}_k)]\} - \mu_r(\tilde{s}_i)\bm{z}_j^T \bm{\mu}_r \\
&= \sum_{k = 1}^d z_{jk} [k_r(\tilde{s}_i, \tilde{s}_k) + \mu_r(\tilde{s}_i)\mu_r(\tilde{s}_k)] - \mu_r(\tilde{s}_i)\bm{z}_j^T \bm{\mu}_r \\
&= \sum_{k = 1}^d z_{jk} k_r(\tilde{s}_i, \tilde{s}_k) + \mu_r(\tilde{s}_i)\bm{z}_j^T \bm{\mu}_r - \mu_r(\tilde{s}_i)\bm{z}_j^T \bm{\mu}_r = \sum_{k = 1}^d z_{jk} k_r(\tilde{s}_i, \tilde{s}_k) = \bm{z}_j^T [K_r]_{i, :}^T,
\end{flalign*}
where $[K_r]_{i, :}^T$ is the column vector obtained by transposing the $i$\textsuperscript{th} row of $K_r$. It is evident that $K_{r, R} = K_r Z^T$.

\subsubsection{Posterior inference over state-action utilities}

Merging the previous three subsections' results, one obtains the following joint probability density between $\bm{\rr}$ and $\bm{R}$:
\begin{equation*}
\begin{bmatrix} \bm{\rr} \\ \bm{R}\end{bmatrix} \sim \mathcal{N}\left(\begin{bmatrix} \bm{\mu}_r \\ Z\bm{\mu}_r\end{bmatrix}, \begin{bmatrix} K_r & K_r Z^T \\ Z K_r^T & Z K_r Z^T + \sigma_\varepsilon^2 I\end{bmatrix}\right).
\end{equation*}

This relationship expresses all components of the joint Gaussian density in terms of $Z, K_r$, and $\bm{\mu}_r$, or in other words, in terms of the observed state-action visitation counts (i.e., $Z$) and the Gaussian process prior on $\bm{\rr}$. Via the standard approach for obtaining a conditional distribution from a joint Gaussian distribution, we obtain $\bm{\rr} | \bm{R} \sim \mathcal{N}(\bm{\mu}, \Sigma)$, where the expressions for $\mu$ and $\Sigma$ are given by Equations \eqref{eqn:GP_update_mean} and \eqref{eqn:GP_update_cov} above. Substituting $\bm{y}$ for $\bm{R}$, we have expressed the conditional posterior density of $\bm{\rr}$ in terms of $Z$, $\bm{y}$, $K_r$, and $\bm{\mu_r}$.

\subsection{GAUSSIAN PROCESS PREFERENCE MODEL}

Finally, we show how to extend the preference-based Gaussian process model defined in \cite{chu2005preference} from the dueling bandit setting to the PBRL setting to perform credit assignment. Similarly to the GP regression model, this approach places a Gaussian prior over possible rewards $\bm{\rr}$; in contrast, however, this method explicitly models the likelihood of the observed preferences given the utilities $\bm{\rr}$, and thus it is a more theoretically-justified approach for handling preference data.

We elicit a preference feedback dataset $\mathcal{D} = \{\tau_{i2} \succ \tau_{i1} \,|\, i = 1, ... , N \}$, where $\tau_{i2} \succ \tau_{i1}$ indicates that trajectory $\tau_{i2}$ is preferred to $\tau_{i1}$ in preference $i$. Without loss of generality, we index each trajectory pair $i$ such that $\tau_{i2} \succ \tau_{i1}$. As before, we assume that each state-action pair $\tilde{s}_j, j \in \{1, \ldots, d\}$, has a latent, underlying utility $\rr(\tilde{s}_j)$. In vector form, these are written: $\bm{\rr} = [\rr(\tilde{s}_1),\rr(\tilde{s}_2),...,\rr(\tilde{s}_d)]^T$. We define a Gaussian prior over $\bm{\rr}$:
\begin{equation}\label{eqn:prior}
	p(\bm{\rr})=\frac{1}{{(2\pi)}^{\frac{d}{2}}\left | \Sigma\right |^\frac{1}{2}}\exp{\left(-\frac{1}{2}\bm{\rr}^{T}\Sigma^{-1}\bm{\rr}\right)},
\end{equation}
where $\Sigma\in \mathbb{R}^{d \times d}$ and $[\Sigma]_{ij} = k(\rr(\tilde{s}_i),\rr(\tilde{s}_j))$ for some kernel function $k$, such as the squared exponential kernel defined in \eqref{eqn:SE_kernel}. Next, we assume that the likelihood of the $i$\textsuperscript{th} preference given utilities $\bm{\rr}$ takes the following form:
\begin{equation*}
P(\tau_{i2} \succ \tau_{i1} \,|\, \bm{\rr}) = g\left(\frac{\rr(\tau_{i2}) - \rr(\tau_{i1})}{c}\right),
\end{equation*}
where $g(\cdot)$ is a monotonically-increasing link function that is bounded between 0 and 1, and $c > 0$ is a model hyperparameter controlling the degree of preference noise. The total return $\rr(\tau_{i1})$ of trajectory $\tau_{i1}$ can be written in terms of the corresponding state-action visitation vector $\bm{x}_{i1}$: $\rr(\tau_{i1}) = \bm{\rr}^T \bm{x}_{i1}$. Thus, the full likelihood expression is:
\begin{equation}\label{eqn:likelihood}
	P(\mathcal{D} \,|\, \bm{\rr}) = \prod_{i=1}^{N}g(z_i), \,\,\,\,\,\,\,\,\, z_i := \frac{\rr(\tau_{i2})- \rr(\tau_{i1})}{c} = \frac{\bm{\rr}^T(\bm{x}_{i2} - \bm{x}_{i1})}{c} = \frac{\bm{\rr}^T \bm{x}_i}{c}.
\end{equation}

Given the preference dataset $\mathcal{D}$, we are interested in the posterior probability of $\bm{\rr}$:
\begin{equation*}
p(\bm{\rr}\,|\, \mathcal{D}) \propto P(\mathcal{D} \,|\, \bm{\rr})p(\bm{\rr}),
\end{equation*}
where the expressions for the prior $p(\bm{\rr})$ and likelihood $P(\mathcal{D} \,|\, \bm{\rr})$ are given by Equations \eqref{eqn:prior} and \eqref{eqn:likelihood}, respectively. We sample reward vectors $\bm{\tilde{r}}$ from the posterior obtained via the Laplace approximation, $\bm{\tilde{r}} \sim \mathcal{N}(\bm{\hat{r}}_{\textup{MAP}},\, \alpha\Sigma_{\textup{MAP}})$, where: \begin{flalign}
\bm{\hat{r}}_{\textup{MAP}} &= \text{argmin}_{\bm{r}} S(\bm{r}),
\label{eqn:GP_model_mean} \\
\Sigma_{\textup{MAP}} &= \left(\nabla^2_{\bm{r}}S(\bm{r})|_{\bm{\hat{r}}_{\textup{MAP}}}\right)^{-1}, \label{eqn:GP_model_cov}
\end{flalign}
and $S(\bm{r}) := \frac{1}{2}\bm{r}^{T}\Sigma^{-1}\bm{r} - \sum_{i=1}^{N}\log g(z_{i})$ (note that $S(\bm{r})$ is equivalent to $-\log p(\bm{r}, \mathcal{D})$, neglecting constant terms); lastly, $\alpha > 0$ is a tunable hyperparameter that influences the balance between exploration and exploitation. In order for the Laplace approximation to be valid, $S(\bm{r})$ must be a convex function in $\bm{r}$: this guarantees that the optimization problem in \eqref{eqn:GP_model_mean} is convex and that the covariance matrix defined by \eqref {eqn:GP_model_cov} is positive semidefinite, and therefore a valid Gaussian covariance matrix. Convexity of $S(\bm{r})$ can be established by demonstrating that its Hessian matrix is positive definite. It can be shown that for any $\bm{r}$, $\nabla^2_{\bm{r}} S(\bm{r}) = \Sigma^{-1} + \Lambda$, where:
\begin{equation}\label{eqn:positive_semi}
	\Lambda_{mn} := \frac{1}{c^2}\sum_{i=1}^N 
	[\bm{x}_i]_m [\bm{x}_i]_n\left[-\frac{g^{\prime\prime}(z_i)}{g(z_i)}+\left(\frac{g^{\prime}(z_i)}{g(z_i)}\right)^2\right],
\end{equation}
for $\bm{x}_i = \bm{x}_{i2} - \bm{x}_{i1}$. To show that $\nabla^2_{\bm{r}} S(\bm{r})$ is positive definite, because the prior covariance $\Sigma$ is positive definite, it suffices to show that $\Lambda$ is positive semidefinite. From \eqref{eqn:positive_semi}, one can see that:
\begin{equation*}
    \Lambda = \frac{1}{c^2}\sum_{i=1}^N \left[-\frac{g^{\prime\prime}(z_i)}{g(z_i)}+\left(\frac{g^{\prime}(z_i)}{g(z_i)}\right)^2\right] \bm{x}_i \bm{x}_i^T.
\end{equation*}
Clearly $\bm{x}_i \bm{x}_i^T$ is positive semidefinite, and thus we arrive at the following sufficient condition for convexity of $S(\bm{r})$:
\begin{equation*}
    \left[-\frac{g^{\prime\prime}(z)}{g(z)}+\left(\frac{g^{\prime}(z)}{g(z)}\right)\right] \ge 0 \text{ for all } z \in \mathbb{R}.
\end{equation*}

This condition is in particular satisfied for the Gaussian link function, $g_{\text{Gaussian}}(\cdot) = \Phi(\cdot)$, where $\Phi$ is the standard Gaussian CDF, as well as for the sigmoidal link function, $g_{\text{sig}}(x) := \sigma (x) = \frac{1}{1 + \exp(-x)}$. Our experiments utilize the sigmoidal link function.

\subsubsection{Bayesian Logistic Regression}

Many of our experiments with the Gaussian process preference model fall under the special case of Bayesian logistic regression, in which $c = 1$, $g$ is the sigmoidal link function, and the prior covariance matrix is diagonal, i.e. $\Sigma = \lambda I$; for instance, the latter condition occurs with the squared exponential kernel defined in \eqref{eqn:SE_kernel} when its lengthscale $l$ is set to zero. In this case, the Gaussian prior over possible reward vectors $\bm{r} \in \mathbb{R}^{d}$ is: $\bm{r} \sim \mathcal{N}(\bm{0}, \lambda I)$, where $\lambda > 0$. Setting the $i$\textsuperscript{th} preference label $y_i$ equal to $1$ if $\tau_{i2} \succ \tau_{i1}$, while $y_i = -1$ if $\tau_{i1} \succ \tau_{i2}$, the logistic regression likelihood is:
\begin{flalign*}
p(\mathcal{D} | \bm{r}) = \prod_{i = 1}^N p(\bm{x}_i, y_i | \bm{r}) = \prod_{i = 1}^N \frac{1}{1 + \text{exp}(-y_i \bm{x}_i^T\bm{r})}.
\end{flalign*}

We approximate the posterior, $p(\bm{r} \,|\, \mathcal{D}) \propto p(\mathcal{D} \,|\, \bm{r})p(\bm{r})$, as Gaussian via the Laplace approximation:
\begin{flalign}
p(\bm{r}\,|\,\mathcal{D}) &\approx \mathcal{N}(\bm{\hat{r}}_{\textup{MAP}},\, \alpha\Sigma_{\textup{MAP}}) \text{, where:} \nonumber \\
\bm{\hat{r}}_{\textup{MAP}} &= \underset{\bm{r}}{\text{argmin}} \, f(\bm{r}), \,\,\,\,\,\, f(\bm{r}) := -\text{log} \, p(\mathcal{D}, \bm{r}) =  -\text{log} \, p(\bm{r}) - \text{log} \, p(\mathcal{D}| \bm{r}), \label{eqn:laplace_mean}\\
\Sigma_{\textup{MAP}} &= \left(\nabla^2_{\bm{r}} f(\bm{r}) \Bigr| _{\bm{\hat{r}}}\right)^{-1}, \label{eqn:laplace_cov} \text{   where the optimization problem in (\ref{eqn:laplace_mean}) is convex,} \nonumber
\end{flalign}
and $\alpha > 0$ is a tunable hyperparameter that influences the balance between exploration and exploitation.

\subsection{EXTENDING PROOF TECHNIQUES TO OTHER CREDIT ASSIGNMENT MODELS}\label{sec:credit_assignment_extend}

Currently, our proof methodology treats only the Bayesian linear regression credit assignment model. Extending it to other credit assignment models, such as the Gaussian process-based and Bayesian logistic regression methods detailed above, is an important direction for future work. Recall that our theoretical analysis follows three main steps:
\begin{enumerate}
    \item Prove that \algo~is asymptotically-consistent, that is, over time, the probability that \algo~selects the optimal policy approaches 1 (Appendix \ref{sec:asy_consistency}).
    \item Assume that in each iteration $i$, policy $\pi_{i1}$ is drawn from a fixed distribution while policy $\pi_{i2}$ is selected by \algo. Then, asymptotically bound the one-sided regret for $\pi_{i2}$ (Appendix \ref{sec:bound_one_sided_fixed}).
    \item Assume that policy $\pi_{i1}$ is drawn from a drifting but converging distribution while policy $\pi_{i2}$ is selected by \algo. Then, asymptotically bound the one-sided regret for $\pi_{i2}$ (Appendix \ref{sec:bound_one_sided_converging}).
\end{enumerate}

Notably, this proof outline does not depend upon any specific credit assignment model definition, and thus could likely extend to many models. Step 1) requires asymptotic consistency of the credit assignment model; this is not a restrictive requirement, as a non-asymptotically consistent model would not yield sublinear regret. Step 2) fixes one of the two distributions from which policies are drawn; this removes some mathematical difficulties inherent in analyzing preference-based sampling, making the required analysis more similar to the numeric feedback setting. Step 3) depends mainly upon continuity arguments.

The information-theoretic perspective used to prove 2) and 3) likely applies to a wide class of credit assignment models. For instance, recent work has applied the bandit analysis framework in \cite{russo2016information} to bandits with rewards generated via a class of general link functions \citep{dong2018information}. In particular, the information ratio has been studied for the logistic bandit problem \citep{dong2019performance}; we expect that this work could be extended toward analyzing credit assignment via Bayesian logistic regression in the PBRL setting.

Another interesting direction would be to analyze the information ratio for the state transition dynamics model. Bounding this quantity would strengthen the results significantly, since our current analysis only considers dynamics convergence asymptotically. Such a result would also be of independent interest outside of preference-based learning.

Finally, the concept of approximate linearity \citep{sui2017multi} could perhaps help to bridge the gap between the preference and absolute-reward domains, as it has previously done in the bandit setting, and could help to extend existing proof techniques toward a wider class of link functions. In practice, we expect that \algo~would perform well with any asymptotically-consistent credit assignment model that sufficiently captures users' preference behavior.

% !TEX root =  ../main.tex

\section{WHY THE INFORMATION-THEORETIC REGRET ANALYSIS?}\label{sec:why_info}

Several existing regret analyses in the linear bandit domain \citep{abbasi2011improved, agrawal2013thompson, abeille2017linear} utilize martingale concentration properties introduced by \cite{abbasi2011improved}. In these analyses, a key step requires sublinearly upper-bounding an expression of the form (e.g. Lemma 11 in \cite{abbasi2011improved}, Prop. 2 in \cite{abeille2017linear}):
\begin{equation}\label{eqn:abbasi_lemma_11}
    \sum_{i = 1}^n \bm{x}_i^T \left(\lambda I + \sum_{s = 1}^{i - 1}\bm{x}_s \bm{x}_s^T \right)^{-1}\bm{x}_i,
\end{equation}
where $\lambda \ge 1$ and $\bm{x}_i$ is the observation vector in iteration $i$. We will demonstrate that in the preference-feedback setting, the analogous quantity cannot always be sublinearly upper-bounded. Consider the setting defined in Section \ref{sec:problem_statement}, with Bayesian linear regression credit assignment. Under preference feedback, we assume that the probability that one trajectory is preferred to another is fully determined by the \textit{difference} between the trajectories' total rewards: on iteration $i$, the algorithm receives a pair of observations $\bm{x}_{i1}$, $\bm{x}_{i2}$, with $\bm{x}_i := \bm{x}_{i2} - \bm{x}_{i1}$, and a preference generated according to $P(\tau_{i2} \succ \tau_{i1}) = \bm{\rr}^T (\bm{x}_{i2} - \bm{x}_{i1}) + \frac{1}{2}$. Thus, only \textit{differences} between compared trajectory feature vectors yield information about the rewards. Under this assumption, one can show that applying the martingale techniques yields the following variant of \eqref{eqn:abbasi_lemma_11}:
\begin{equation}\label{eqn:abbasi_lemma_11_pref}
    \sum_{i = 1}^n \sum_{j = 1}^2 \bm{x}_{ij}^T \left(\lambda I + \sum_{s = 1}^{i - 1}\bm{x}_s \bm{x}_s^T \right)^{-1}\bm{x}_{ij}.
\end{equation}
This is because the expression within the matrix inverse comes from the posterior---and learning occurs with respect to the observations $\bm{x}_i$---while regret is incurred with respect to $\bm{x}_{i1}$ and $\bm{x}_{i2}$; in contrast, in the non-preference case \eqref{eqn:abbasi_lemma_11}, learning and regret both occur with respect to the same vectors $\bm{x}_i$.

To see that \eqref{eqn:abbasi_lemma_11_pref} does not necessarily have a sublinear upper bound, consider a deterministic MDP as a counterexample. For the regret to have a sublinear upper-bound, the probability of choosing the optimal policy must approach 1. In a fully deterministic MDP, this means that $P(\bm{x}_{i1} = \bm{x}_{i2}) \longrightarrow 1$ as $i \longrightarrow \infty$, and thus $P(\bm{x}_i = 0) \longrightarrow 1$ as $i \longrightarrow \infty$. Clearly, in this case, the inverted quantity in \eqref{eqn:abbasi_lemma_11_pref} acquires nonzero terms at a rate that decays in $n$, and so \eqref{eqn:abbasi_lemma_11_pref} does not have a sublinear upper bound.

Intuitively, to be able to upper-bound Equation \eqref{eqn:abbasi_lemma_11_pref}, we would need the observations $\bm{x}_{i1}, \bm{x}_{i2}$ to contribute fully toward learning the rewards, rather than the contribution coming only from their difference. Notice that in the counterexample, even when the optimal policy is selected increasingly-often, corresponding to a low regret, \eqref{eqn:abbasi_lemma_11_pref} cannot be sublinearly bounded. In contrast, \cite{russo2016information} introduces a more direct approach for quantifying the trade-off between instantaneous regret and information gained, as encapsulated by the \textit{information ratio} defined therein; our theoretical analysis is thus based upon this framework.

% !TEX root =  ../main.tex

\section{ADDITIONAL EXPERIMENTAL DETAILS}\label{sec:additional_experiments}

\begin{flushleft}
Python code for reproducing the experiments in this work is located at: https://github.com/ernovoseller/DuelingPosteriorSampling.
\end{flushleft}

Experiments were conducted in three simulated environments, described in Section \ref{sec:experiments}: RiverSwim and random MDPs \citep{osband2013more} and the simplified version of the Mountain Car problem described in \cite{wirth2017efficient}. We use a fixed episode horizon of 50 in the first two cases, while for Mountain Car, episodes have a maximum length of 500, but terminate sooner if the agent reaches the goal state. Figures \ref{fig:RiverSwim_performance_curves}, \ref{fig:RandomMDP_performance_curves}, and \ref{fig:MountainCar_performance_curves} display performance in the three environments for the five degrees of user preference noise evaluated. Experiments were run on an Ubuntu 16.04.3 machine with 32 GB of RAM and an Intel i7 processor. Some experiments were also run on an AWS server.

\begin{figure*}[ht]
  \centering
\subfloat[][$c = 0.0001$, logistic]{\includegraphics[width = 0.34\linewidth]{RiverSwim_log_noise_0,001_font_18.png}}
\subfloat[][$c = 1$, logistic]{\includegraphics[width = 0.34\linewidth]{RiverSwim_log_noise_1,0_font_18.png}}
\subfloat[][$c = 2$, logistic]{\includegraphics[width = 0.34\linewidth]{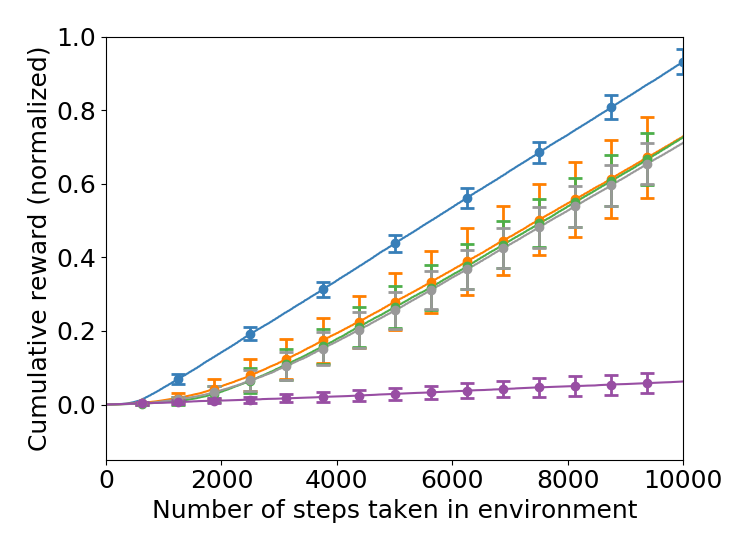}} \\
\subfloat[][$c = 10$, logistic]{\includegraphics[width = 0.34\linewidth]{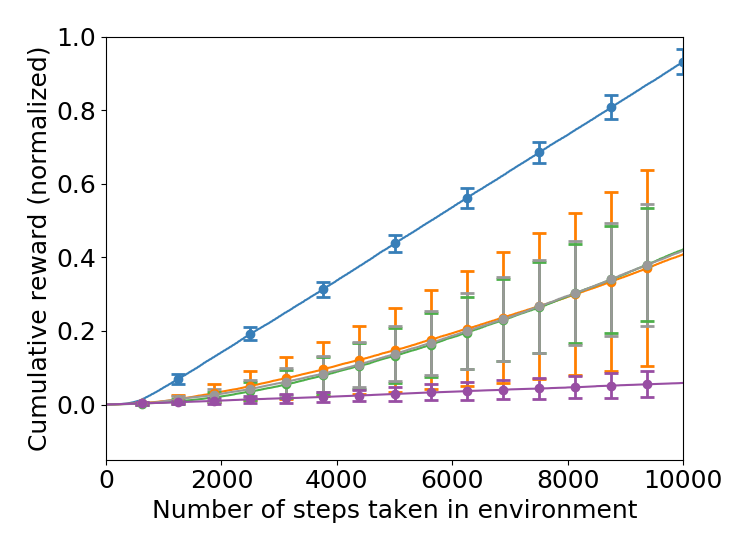}}
\subfloat[][$c = 100$, linear]{\includegraphics[width = 0.34\linewidth]{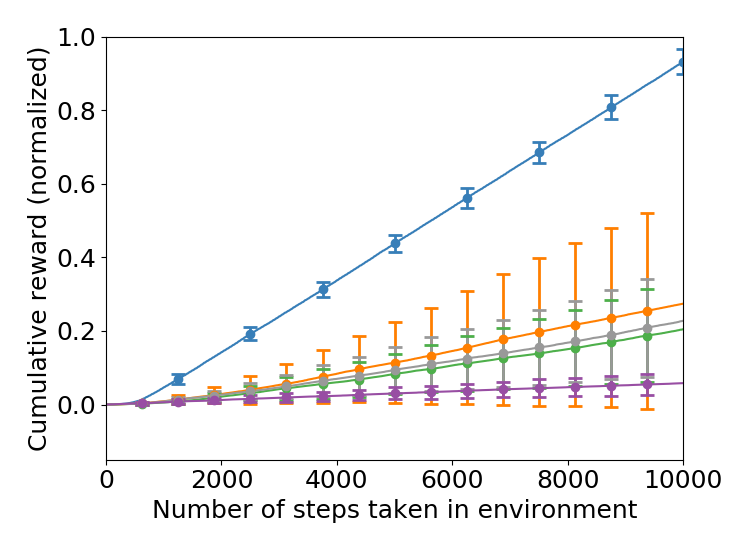}}
  \caption{Empirical performance of \algo~in the RiverSwim environment. Plots display  mean +/- one standard deviation over 100 runs of each algorithm tested. Normalization is with respect to the total reward achieved by the optimal policy. Overall, we see that \algo~performs well and is robust to the choice of credit assignment model.}
  \label{fig:RiverSwim_performance_curves}
\end{figure*}

We detail the ranges of hyperparameter values tested for the different \algo~credit assignment models, as well as the particular hyperparameters used in the displayed performance curves. Hyperparameters were tuned by considering mean performance over 30 experiment repetitions for each parameter setting considered; we only used the least-noisy preference feedback (logistic noise, $c = 0.001$) to tune the preferences; this value of $c$ is small enough that the preferences are close to deterministic, except that the preferences are uniformly-random in tie cases. For both Gaussian process-based credit assignment models, we use the squared exponential kernel:
\begin{equation*}
    \mathcal{K}(x_i, x_j) = \sigma_f^2\exp{\left(-\frac{1}{2}\left(\frac{x_i-x_j}{l}\right)^2\right)}+\sigma_n^2\delta_{ij},
\end{equation*}
where $\sigma_f$ is the signal variance, $l$ is the kernel lengthscale, $\sigma_n$ is the noise variance, and $\delta_{ij}$ is the Kronecker delta function. Please see Appendix \ref{sec:credit_assignment_models} for definitions of the other hyperparameters in the credit assignment models. Tables \ref{table:hyperparams_RiverSwim}, \ref{table:hyperparams_randomMDP}, and \ref{table:hyperparams_mountain_car} display both the tested ranges and optimized values (those appearing in the performance curves) for each case.

The dynamics model, meanwhile, has a Dirichlet prior and posterior. Not to assume domain knowledge that differentiates the state-action pairs, we set all prior parameters of the Dirichlet model to be equal; for the RiverSwim and Random MDP environments, the prior is set to 1 for each state-action, creating a uniform prior over all dynamics models. This is a reasonable choice with small numbers of states and actions, so we do not optimize over different values. For the Mountain Car problem, smaller prior values perform better because they favor sparse dynamics distributions. For this environment, we test prior parameters ranging from 0.0001 to 1, and found 0.0005 to be the best-performing value among those tested.

The EMPC algorithm \citep{wirth2013policy} has two hyperparameter values, $\alpha$ and $\eta$. We optimize both of these jointly via a grid search over values of $(0.1, 0.2, \ldots, 0.9)$, with 100 repetitions of each pair of values. The best-performing hyperparameter values (i.e. those achieving the highest total reward) are displayed in Table \ref{table:EPMC_hyperparameters}; these are the hyperparameter values depicted in the performance curve plots.

Finally, Figures \ref{fig:RiverSwim_hyperparams}, \ref{fig:RandomMDP_hyperparams}, and \ref{fig:MountainCar_hyperparams} illustrate how \algo's performance varies as the hyperparameters are modified over a set of representative values from the ranges that we tested. These plots demonstrate that \algo~is largely robust across many choices of model hyperparameters.

\begin{table}[ht]
\caption{Hyperparameters for the EPMC baseline algorithm. Each table element shows the best-performing $\alpha$/$\eta$ values for the corresponding simulation domain and noise parameter.}
\label{table:EPMC_hyperparameters}
\begin{center}
\begin{tabular}{llllll}
\multicolumn{1}{c}{\bf NOISE}  &\multicolumn{1}{c}{\bf LOGISTIC, 10} &\multicolumn{1}{c}{\bf LOGISTIC, 2}&\multicolumn{1}{c}{\bf LOGISTIC, 1}&\multicolumn{1}{c}{\bf LOGISTIC, 0.0001}&\multicolumn{1}{c}{\bf LINEAR}\\
\hline \\
RiverSwim       & 0.1/0.8 & 0.3/0.7 & 0.1/0.2 & 0.8/0.8 & 0.3/0.1 \\
Random MDPs     & 0.2/0.2 & 0.7/0.7 & 0.6/0.4 & 0.2/0.8 & 0.7/0.1\\
\hline \\
\multicolumn{1}{c}{\bf NOISE}  &\multicolumn{1}{c}{\bf LOGISTIC, 100} &\multicolumn{1}{c}{\bf LOGISTIC, 20}&\multicolumn{1}{c}{\bf LOGISTIC, 10}&\multicolumn{1}{c}{\bf LOGISTIC, 0.0001}&\multicolumn{1}{c}{\bf LINEAR}\\
\hline \\
MountainCar      & 0.1/0.8 & 0.1/0.7 & 0.1/0.6 & 0.1/0.4 & 0.2/0.5 \\
\end{tabular}
\end{center}
\end{table}

\begin{table}[ht]
\caption{Credit assignment hyperparameters for the RiverSwim Environment}
\label{table:hyperparams_RiverSwim}
\begin{center}
\begin{tabular}{llll}
\multicolumn{1}{c}{\bf MODEL}  &\multicolumn{1}{c}{\bf HYPERPARAMETER} &\multicolumn{1}{c}{\bf RANGE TESTED}  &\multicolumn{1}{c}{\bf OPTIMIZED VALUE} \\
\hline \\
Bayesian linear regression         & $\sigma$ & [0.05, 5] & 0.5 \\
               & $\lambda$ & [0.01, 10] & 0.1 \\
 & & & \\
GP regression         & $\sigma_f^2$ & [0.001, 0.5] & 0.1 \\
               & $l$ & [0, 0] ([state, action]) & 0 \\
               & $\sigma_n^2$ & [0.0001, 0.1] & 0.001 \\
 & & & \\
GP preference (special case:    & $\lambda = \sigma_f^2 + \sigma_n^2$ & [0.1, 30] & 1 \\
Bayesian logistic regression)               & $\alpha$ & [0.01, 1] & 1 \\
 & & & \\
GP preference (varying $c$)       & $c$ & [1, 13] & N/A \\ 
         & $\sigma_f^2$ & [1] &  \\
               & $l$ & [0, 0] ([state, action]) &  \\
               & $\sigma_n^2$ & [0.001] \\
               & $\alpha$ & [1] \\
\end{tabular}
\end{center}
\end{table}

\begin{table}[ht]
\caption{Credit assignment hyperparameters for the Random MDP Environment}
\label{table:hyperparams_randomMDP}
\begin{center}
\begin{tabular}{llll}
\multicolumn{1}{c}{\bf MODEL}  &\multicolumn{1}{c}{\bf HYPERPARAMETER} &\multicolumn{1}{c}{\bf RANGE TESTED}  &\multicolumn{1}{c}{\bf OPTIMIZED VALUE} \\
\hline \\
Bayesian linear regression         & $\sigma$ & [0.05, 5] & 0.1 \\
               & $\lambda$ & [0.01, 20] & 10 \\
 & & & \\
GP regression         & $\sigma_f^2$ & [0.001, 1] & 0.05 \\
               & $l$ & [0, 0] ([state, action]) & 0 \\
               & $\sigma_n^2$ & [0.0001, 0.1] & 0.0005 \\
 & & & \\
GP preference (special case:    & $\lambda = \sigma_f^2 + \sigma_n^2$ & [1, 15] & 0.1 \\
Bayesian logistic regression)               & $\alpha$ & [0.01, 1] & 0.01 \\
 & & & \\
GP preference (varying $c$)       & $c$ & [0.0001, 1000] & N/A \\ 
         & $\sigma_f^2$ & [1] &  \\
               & $l$ & [0, 0] ([state, action]) &  \\
               & $\sigma_n^2$ & [0.03] \\
               & $\alpha$ & [1] \\
\end{tabular}
\end{center}
\end{table}

\begin{figure*}[ht]
  \centering
\subfloat[][$c = 0.0001$, logistic]{\includegraphics[width = 0.34\linewidth]{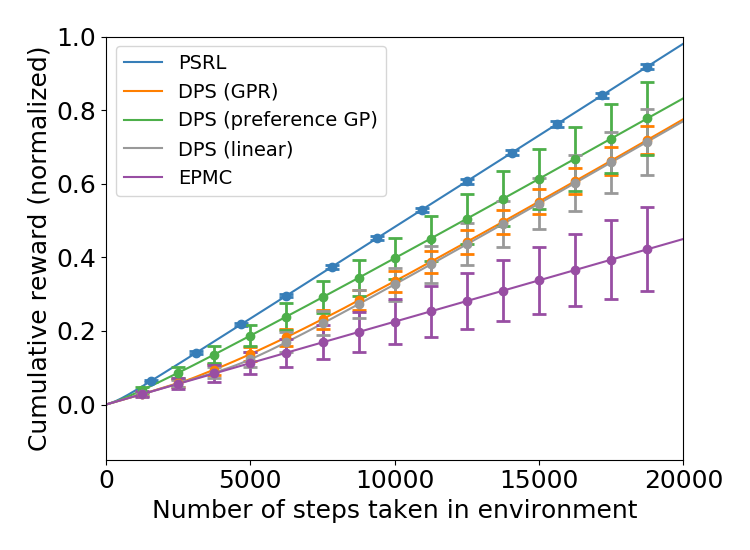}}
\subfloat[][$c = 1$, logistic]{\includegraphics[width = 0.34\linewidth]{RandomMDPs_log_noise_1,0_font_18.png}}
\subfloat[][$c = 2$, logistic]{\includegraphics[width = 0.34\linewidth]{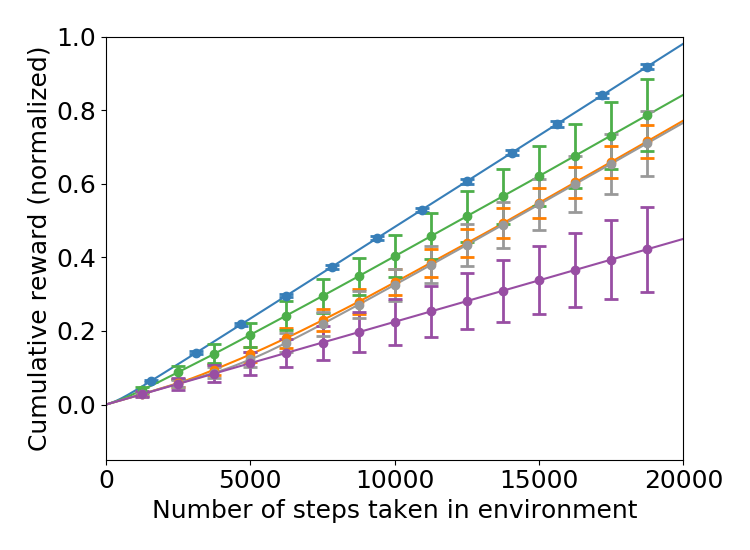}} \\
\subfloat[][$c = 10$, logistic]{\includegraphics[width = 0.34\linewidth]{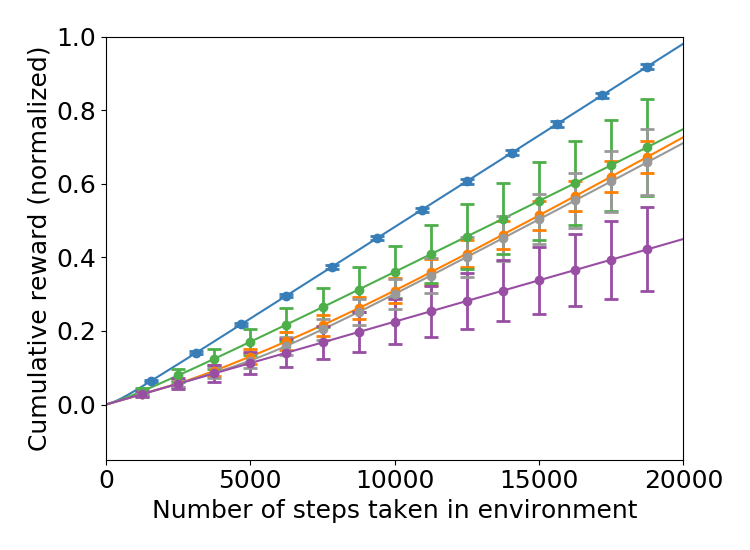}}
\subfloat[][$c$ varies, linear]{\includegraphics[width = 0.34\linewidth]{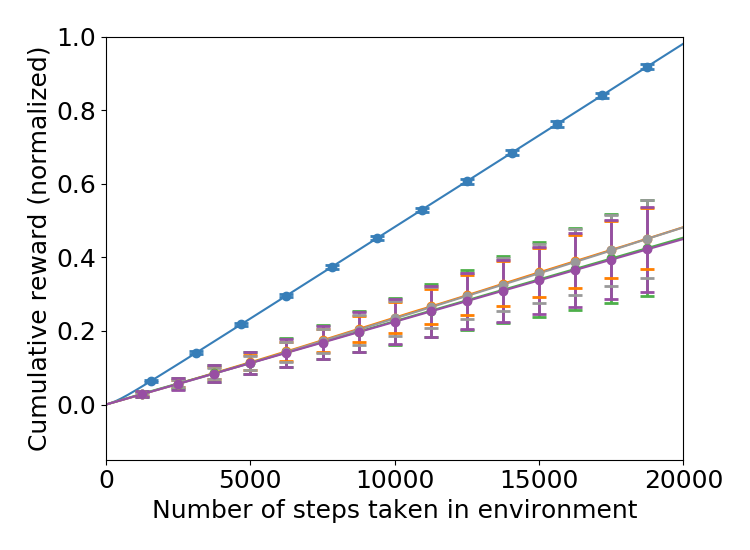}}
  \caption{Empirical performance of \algo~in the Random MDP environment. Plots display mean +/- one standard deviation over 100 runs of each algorithm tested. Normalization is with respect to the total reward achieved by the optimal policy. Overall, we see that \algo~performs well and is robust to the choice of credit assignment model.}
  \label{fig:RandomMDP_performance_curves}
\end{figure*}

\begin{table}[ht]
\caption{Credit assignment hyperparameters for the Mountain Car Environment}
\label{table:hyperparams_mountain_car}
\begin{center}
\begin{tabular}{llll}
\multicolumn{1}{c}{\bf MODEL}  &\multicolumn{1}{c}{\bf HYPERPARAMETER} &\multicolumn{1}{c}{\bf RANGE TESTED}  &\multicolumn{1}{c}{\bf OPTIMIZED VALUE} \\
\hline \\
Bayesian linear regression         & $\sigma$ & [0.001, 30] & 10 \\
               & $\lambda$ & [0.001, 10] & 1 \\
 & & & \\
GP regression         & $\sigma_f^2$ & [0.0001, 10] & 0.01 \\
               & $l$ & [$x$, $x$, 0], $x \in [1, 3]$  & $x = 2$ \\
               & & ([position, velocity, action]) & \\
               & $\sigma_n^2$ & [1e-7, 0.01] & 1e-5 \\
 & & & \\
GP preference (special case:    & $\lambda = \sigma_f^2 + \sigma_n^2$ & [0.0001, 10] & 0.0001 \\
Bayesian logistic regression)               & $\alpha$ & [0.0001, 1] & 0.01 \\
 & & & \\
GP preference (varying $c$)       & $c$ & [10, 10000] & N/A \\ 
         & $\sigma_f^2$ & [1] &  \\
               & $l$ & [2, 2, 0]  &  \\
               & & ([position, velocity, action]) & \\
               & $\sigma_n^2$ & [0.001] \\
               & $\alpha$ & [1] \\
\end{tabular}
\end{center}
\end{table}

\begin{figure*}[ht]
  \centering
\subfloat[][$c = 0.0001$, logistic]{\includegraphics[width = 0.34\linewidth]{MountainCar_log_noise_0,001_font_18.png}}
\subfloat[][$c = 10$, logistic]{\includegraphics[width = 0.34\linewidth]{MountainCar_log_noise_10,0_font_18.png}}
\subfloat[][$c = 20$, logistic]{\includegraphics[width = 0.34\linewidth]{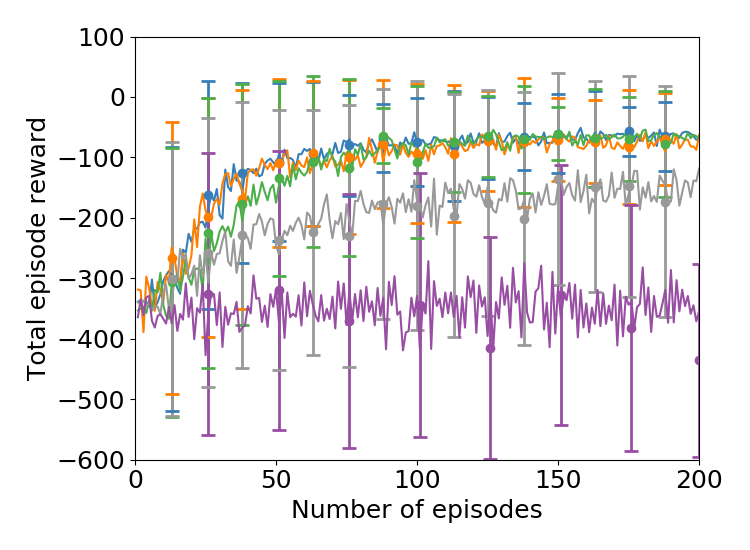}} \\
\subfloat[][$c = 100$, logistic]{\includegraphics[width = 0.34\linewidth]{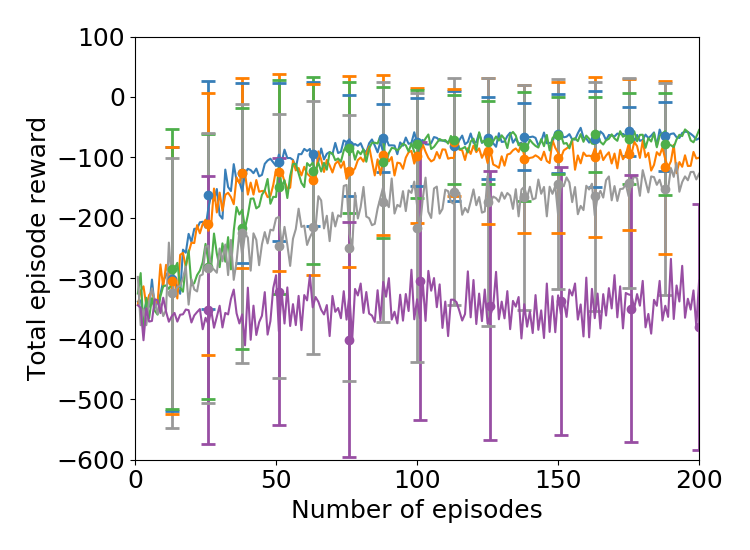}}
\subfloat[][$c = 1,000$, linear]{\includegraphics[width = 0.34\linewidth]{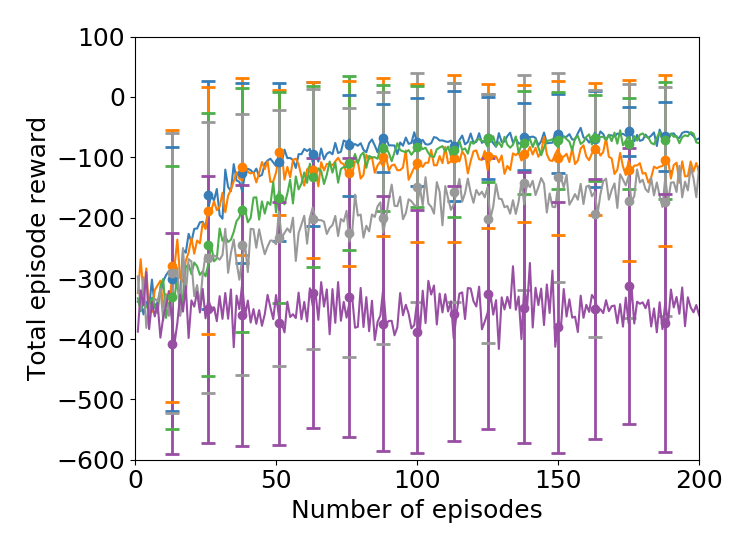}}
\hspace{5mm}\subfloat[][Legend]{\includegraphics[width = 0.25\linewidth]{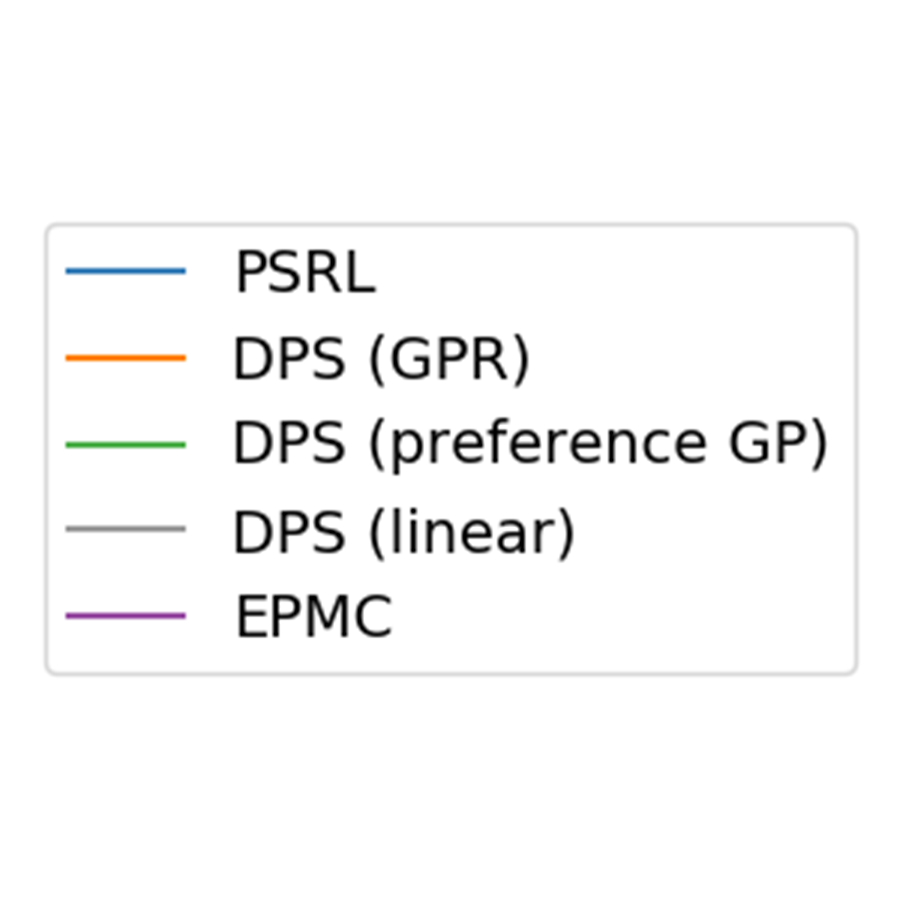}}
  \caption{Empirical performance of \algo~in the Mountain Car environment. Plots display mean +/- one standard deviation over 100 runs of each algorithm tested. Overall, we see that \algo~performs well and is robust to the choice of credit assignment model.}
  \label{fig:MountainCar_performance_curves}
\end{figure*}

\begin{figure*}[ht]
  \centering
\subfloat[][Bayesian linear regression]{\includegraphics[width = 0.4\linewidth]{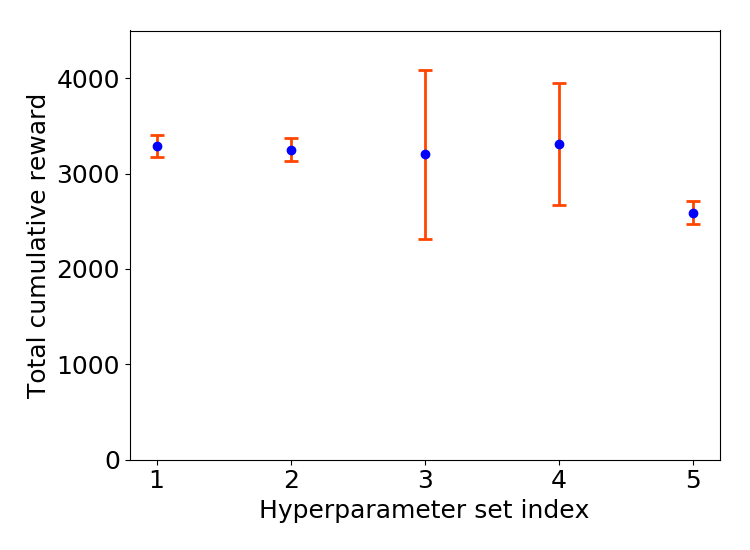}} \hspace{10mm}
\subfloat[][Gaussian process regression]{\includegraphics[width = 0.4\linewidth]{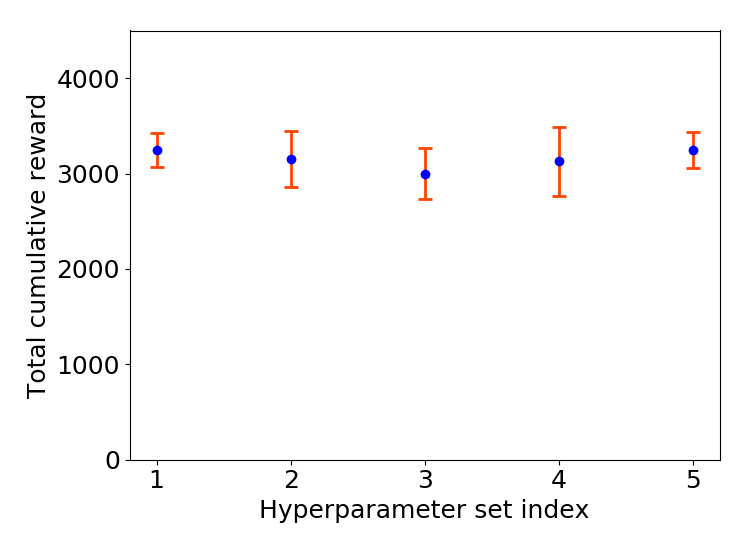}} \\
\subfloat[][Gaussian process preference model (special case: Bayesian logistic regression)]{\includegraphics[width = 0.4\linewidth]{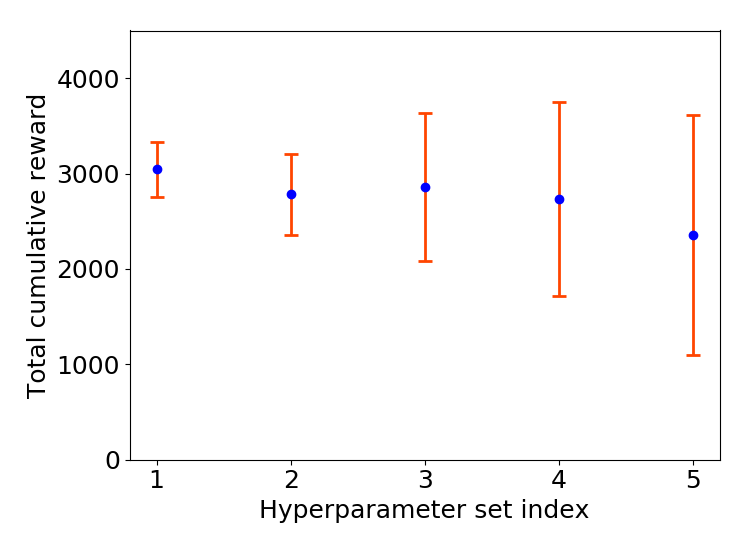}} \hspace{10mm}
\subfloat[][Gaussian process preference model (varying $c$)]{\includegraphics[width = 0.4\linewidth]{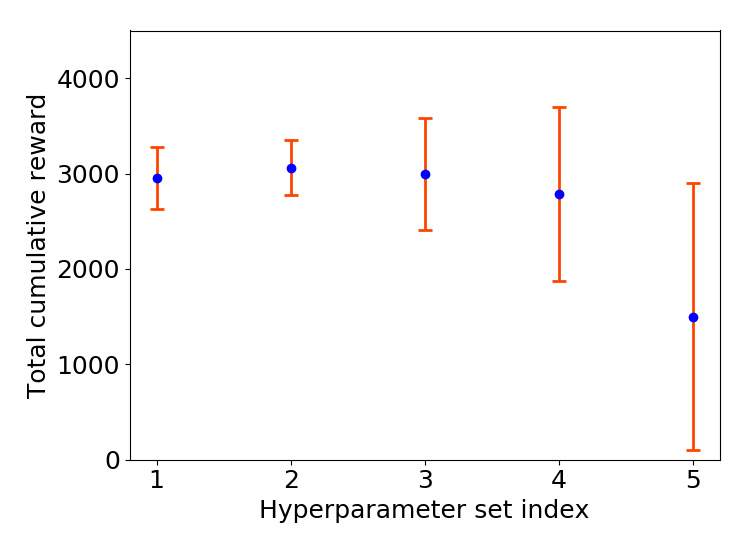}}
  \caption{Empirical performance of \algo~in the RiverSwim environment for different hyperparameter combinations. Plots display mean +/- one standard deviation over 30 runs of each algorithm tested with logistic user noise and $c = 0.001$. Overall, we see that \algo~is robust to the choice of hyperparameters. The hyperparameter values depicted in each plot are (from left to right): for Bayesian linear regression, $(\sigma, \lambda) = \{(0.5, 0.1), (0.5, 10), (0.1, 0.1), (0.1, 10), (1, 0.1)\}$; for GP regression, $(\sigma_f^2, \sigma_n^2) = \{(0.1, 0.001), (0.1, 0.1), (0.01, 0.001), (0.001, 0.0001), (0.5, 0.1)\}$; for Bayesian logistic regression (special case of the GP preference model), $(\lambda, \alpha) = \{(1, 1), (30, 1), (20, 0.5), (1, 0.5), (30, 0.1)\}$; and additionally for the GP preference model, $c \in \{0.5, 1, 2, 5, 13\}$. See Table \ref{table:hyperparams_RiverSwim} for the values of any hyperparameters not specifically mentioned here.}
  \label{fig:RiverSwim_hyperparams}
\end{figure*}

\begin{figure*}[ht]
  \centering
\subfloat[][Bayesian linear regression]{\includegraphics[width = 0.4\linewidth]{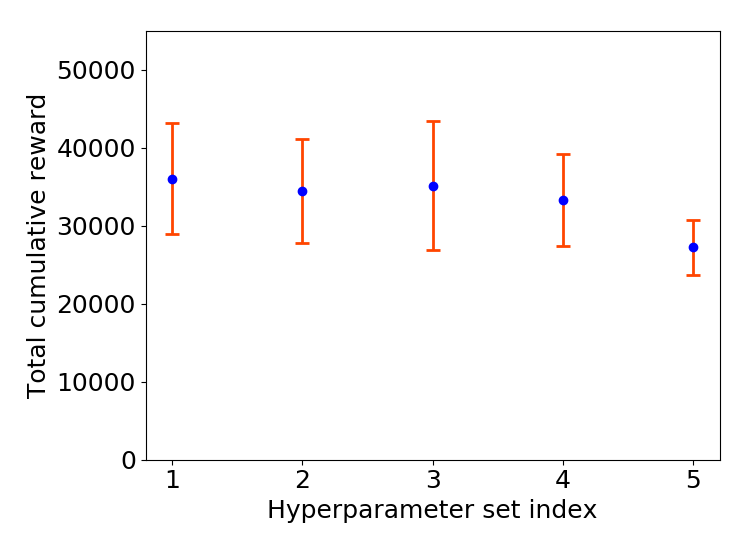}} \hspace{10mm}
\subfloat[][Gaussian process regression]{\includegraphics[width = 0.4\linewidth]{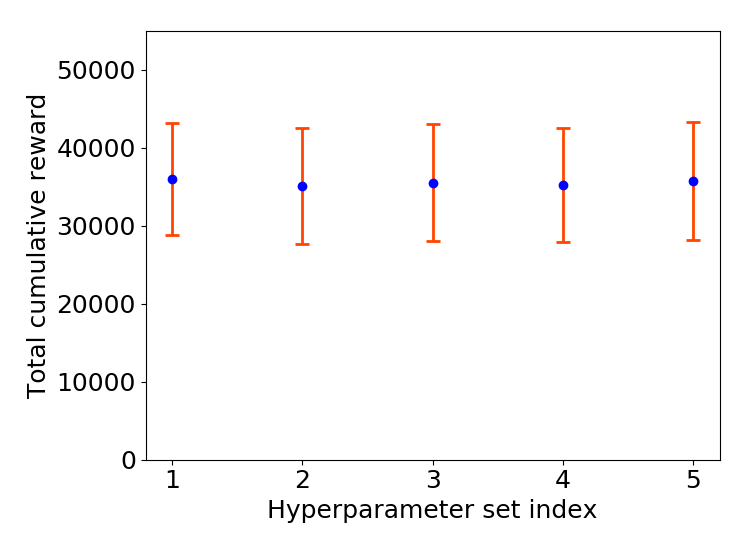}} \\
\subfloat[][Gaussian process preference model (special case: Bayesian logistic regression)]{\includegraphics[width = 0.4\linewidth]{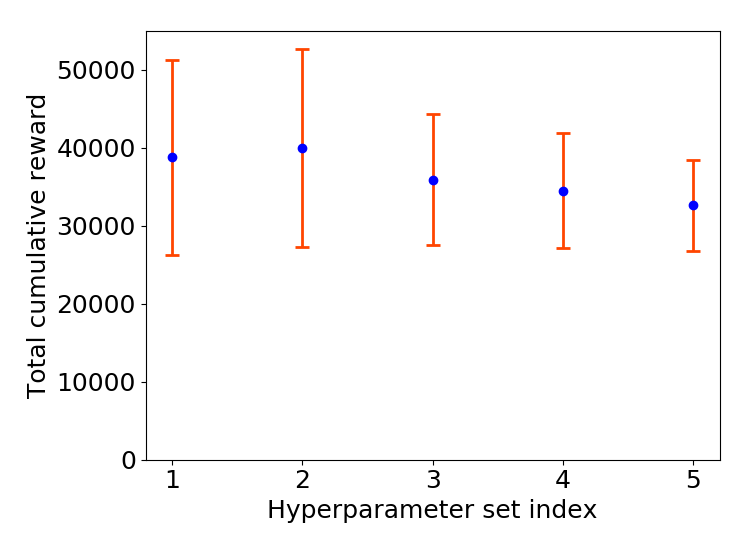}} \hspace{10mm}
\subfloat[][Gaussian process preference model (varying $c$)]{\includegraphics[width = 0.4\linewidth]{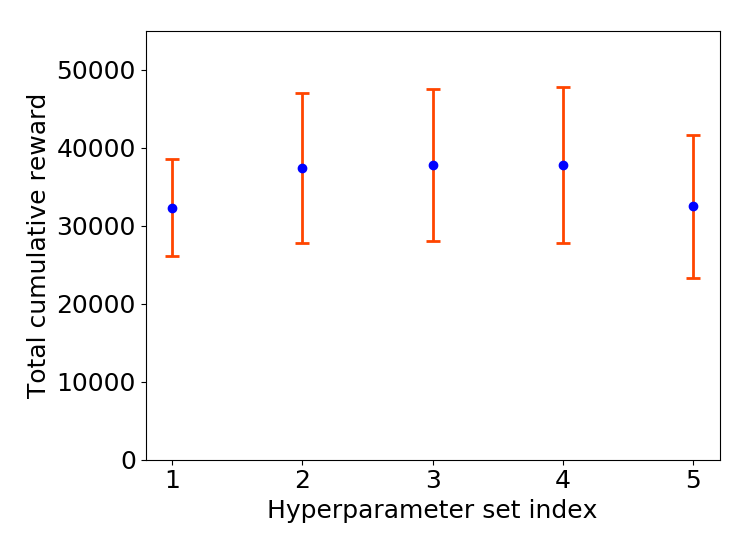}}
  \caption{Empirical performance of \algo~in the Random MDP environment for different hyperparameter combinations. Plots display mean +/- one standard deviation over 30 runs of each algorithm tested with logistic user noise and $c = 0.001$. Overall, we see that \algo~is robust to the choice of hyperparameters. The hyperparameter values depicted in each plot are (from left to right): for Bayesian linear regression, $(\sigma, \lambda) = \{(0.1, 10), (0.1, 0.1), (0.05, 0.01), (0.5, 20), (1, 10)\}$; for GP regression, $(\sigma_f^2, \sigma_n^2) = \{(0.05, 0.0005), (0.001, 0.0001), (0.05, 0.1), (0.001, 0.0005), (1, 0.1)\}$; for Bayesian logistic regression (special case of the GP preference model), $(\lambda, \alpha) = \{(0.1, 0.01), (1, 0.01), (0.1, 1), (30, 0.1), (5, 0.5)\}$; and additionally for the GP preference model, $c \in \{1, 10, 15, 19, 100\}$. See Table \ref{table:hyperparams_randomMDP} for the values of any hyperparameters not specifically mentioned here.}
  \label{fig:RandomMDP_hyperparams}
\end{figure*}

\begin{figure*}[ht]
  \centering
\subfloat[][Bayesian linear regression]{\includegraphics[width = 0.4\linewidth]{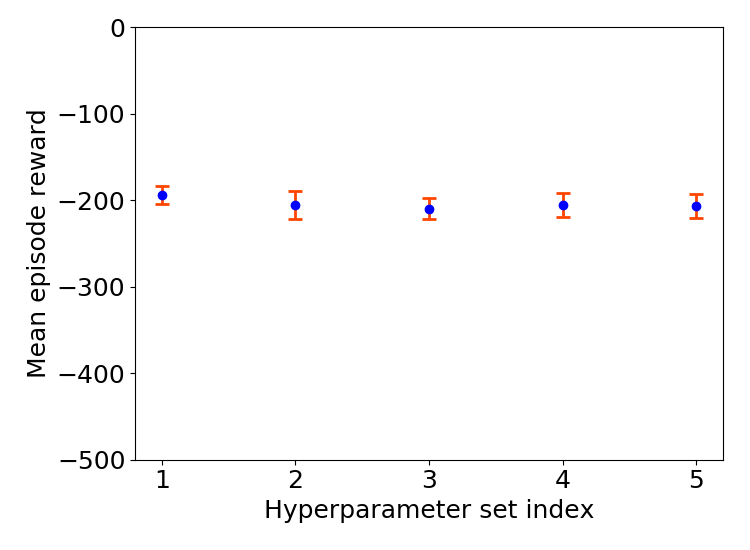}} \hspace{10mm}
\subfloat[][Gaussian process regression]{\includegraphics[width = 0.4\linewidth]{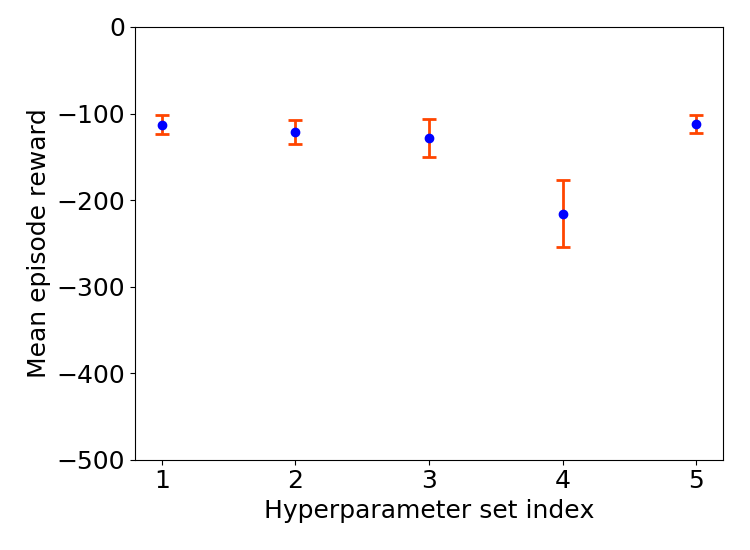}} \\
\subfloat[][Gaussian process preference model (special case: Bayesian logistic regression)]{\includegraphics[width = 0.4\linewidth]{MountainCar_hyperparams_GPR.png}} \hspace{10mm}
\subfloat[][Gaussian process preference model (varying $c$)]{\includegraphics[width = 0.4\linewidth]{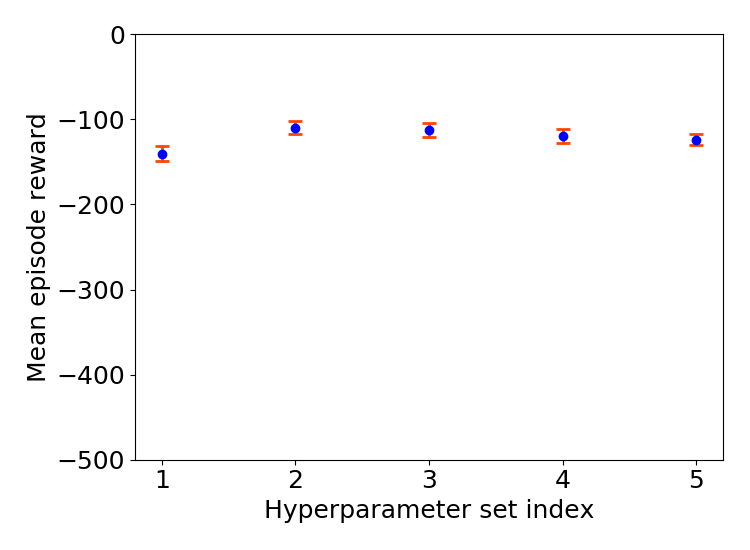}}
  \caption{Empirical performance of \algo~in the Mountain Car environment for different hyperparameter combinations. Plots display mean +/- one standard deviation over 30 runs of each algorithm tested with logistic user noise and $c = 0.001$. Overall, we see that \algo~is robust to the choice of hyperparameters. The hyperparameter values depicted in each plot are (from left to right): for Bayesian linear regression, $(\sigma, \lambda) = \{(10, 1), (10, 10), (30, 0.001), (0.001, 10), (0.1, 0.1)\}$; for GP regression, $(\sigma_f^2, l, \sigma_n^2) = \{(0.01, 2, 10^{-5}), (0.01, 1, 10^{-5}), (0.1, 2, 0.01), (1, 2, 0.001), (0.001, 3, 10^{-6})\}$; for Bayesian logistic regression (special case of the GP preference model), $(\lambda, \alpha) = \{(0.0001, 0.01), (0.1, 0.01), (0.0001, 0.0001), (0.001, 0.0001), (0.001, 0.01)\}$; and additionally for the GP preference model, $c \in \{10, 300, 400, 700, 1000\}$. See Table \ref{table:hyperparams_mountain_car} for the values of any hyperparameters not specifically mentioned here.}
  \label{fig:MountainCar_hyperparams}
\end{figure*}


\begin{thebibliography}{58}
\providecommand{\natexlab}[1]{#1}
\providecommand{\url}[1]{\texttt{#1}}
\expandafter\ifx\csname urlstyle\endcsname\relax
  \providecommand{\doi}[1]{doi: #1}\else
  \providecommand{\doi}{doi: \begingroup \urlstyle{rm}\Url}\fi

\bibitem[Abbasi-Yadkori et~al.(2011)Abbasi-Yadkori, P{\'a}l, and
  Szepesv{\'a}ri]{abbasi2011improved}
Y.~Abbasi-Yadkori, D.~P{\'a}l, and C.~Szepesv{\'a}ri.
\newblock Improved algorithms for linear stochastic bandits.
\newblock In \emph{Advances in Neural Information Processing Systems}, pages
  2312--2320, 2011.

\bibitem[Abeille and Lazaric(2017)]{abeille2017linear}
M.~Abeille and A.~Lazaric.
\newblock Linear {T}hompson sampling revisited.
\newblock \emph{Electronic Journal of Statistics}, 11\penalty0 (2):\penalty0
  5165--5197, 2017.

\bibitem[Agrawal and Goyal(2013)]{agrawal2013thompson}
S.~Agrawal and N.~Goyal.
\newblock Thompson sampling for contextual bandits with linear payoffs.
\newblock In \emph{International conference on machine learning}, pages
  127--135, 2013.

\bibitem[Agrawal and Jia(2017)]{agrawal2017optimistic}
S.~Agrawal and R.~Jia.
\newblock Optimistic posterior sampling for reinforcement learning:
  {W}orst-case regret bounds.
\newblock In \emph{Advances in Neural Information Processing Systems}, pages
  1184--1194, 2017.

\bibitem[Ailon et~al.(2014)Ailon, Karnin, and Joachims]{ailon2014reducing}
N.~Ailon, Z.~Karnin, and T.~Joachims.
\newblock Reducing dueling bandits to cardinal bandits.
\newblock In \emph{International conference on machine learning}, pages
  856--864, 2014.

\bibitem[Akrour et~al.(2012)Akrour, Schoenauer, and Sebag]{akrour2012april}
R.~Akrour, M.~Schoenauer, and M.~Sebag.
\newblock {APRIL}: Active preference learning-based reinforcement learning.
\newblock In \emph{Joint European Conference on Machine Learning and Knowledge
  Discovery in Databases}, pages 116--131. Springer, 2012.

\bibitem[Akrour et~al.(2014)Akrour, Schoenauer, Sebag, and
  Souplet]{akrour2014programming}
R.~Akrour, M.~Schoenauer, M.~Sebag, and J.-C. Souplet.
\newblock Programming by feedback.
\newblock In \emph{International conference on machine learning}, volume~32,
  pages 1503--1511, 2014.

\bibitem[Amodei et~al.(2016)Amodei, Olah, Steinhardt, Christiano, Schulman, and
  Man{\'e}]{amodei2016concrete}
D.~Amodei, C.~Olah, J.~Steinhardt, P.~Christiano, J.~Schulman, and D.~Man{\'e}.
\newblock Concrete problems in {AI} safety.
\newblock \emph{arXiv preprint arXiv:1606.06565}, 2016.

\bibitem[Argall et~al.(2009)Argall, Chernova, Veloso, and
  Browning]{argall2009survey}
B.~D. Argall, S.~Chernova, M.~Veloso, and B.~Browning.
\newblock A survey of robot learning from demonstration.
\newblock \emph{Robotics and autonomous systems}, 57\penalty0 (5):\penalty0
  469--483, 2009.

\bibitem[Basu et~al.(2017)Basu, Yang, Hungerman, Sinahal, and
  Dragan]{basu2017you}
C.~Basu, Q.~Yang, D.~Hungerman, M.~Sinahal, and A.~D. Dragan.
\newblock Do you want your autonomous car to drive like you?
\newblock In \emph{International Conference on Human-Robot Interaction}, pages
  417--425. IEEE, 2017.

\bibitem[Billingsley(1968)]{billingsley1968convergence}
P.~Billingsley.
\newblock \emph{Convergence of probability measures}.
\newblock John Wiley \& Sons, 1968.

\bibitem[Burges et~al.(2005)Burges, Shaked, Renshaw, Lazier, Deeds, Hamilton,
  and Hullender]{burges2005learning}
C.~Burges, T.~Shaked, E.~Renshaw, A.~Lazier, M.~Deeds, N.~Hamilton, and G.~N.
  Hullender.
\newblock Learning to rank using gradient descent.
\newblock In \emph{International Conference on Machine learning}, pages 89--96,
  2005.

\bibitem[Burges et~al.(2007)Burges, Ragno, and Le]{burges2007learning}
C.~J. Burges, R.~Ragno, and Q.~V. Le.
\newblock Learning to rank with nonsmooth cost functions.
\newblock In \emph{Advances in neural information processing systems}, pages
  193--200, 2007.

\bibitem[Busa-Fekete et~al.(2013)Busa-Fekete, Sz{\"o}r{\'e}nyi, Weng, Cheng,
  and H{\"u}llermeier]{busa2013preference}
R.~Busa-Fekete, B.~Sz{\"o}r{\'e}nyi, P.~Weng, W.~Cheng, and E.~H{\"u}llermeier.
\newblock Preference-based evolutionary direct policy search.
\newblock In \emph{ICRA Workshop on Autonomous Learning}, 2013.

\bibitem[Chapelle and Li(2011)]{chapelle2011empirical}
O.~Chapelle and L.~Li.
\newblock An empirical evaluation of {T}hompson sampling.
\newblock In \emph{Advances in neural information processing systems}, 2011.

\bibitem[Christiano et~al.(2017)Christiano, Leike, Brown, Martic, Legg, and
  Amodei]{christiano2017deep}
P.~F. Christiano, J.~Leike, T.~Brown, M.~Martic, S.~Legg, and D.~Amodei.
\newblock Deep reinforcement learning from human preferences.
\newblock In \emph{Advances in neural information processing systems}, pages
  4299--4307, 2017.

\bibitem[Chu and Ghahramani(2005)]{chu2005preference}
W.~Chu and Z.~Ghahramani.
\newblock Preference learning with {G}aussian processes.
\newblock In \emph{International conference on machine learning}, pages
  137--144, 2005.

\bibitem[Cover and Thomas(2012)]{cover2012elements}
T.~M. Cover and J.~A. Thomas.
\newblock \emph{Elements of information theory}.
\newblock John Wiley \& Sons, 2012.

\bibitem[Dong and Van~Roy(2018)]{dong2018information}
S.~Dong and B.~Van~Roy.
\newblock An information-theoretic analysis for {T}hompson sampling with many
  actions.
\newblock In \emph{Advances in Neural Information Processing Systems}, pages
  4157--4165, 2018.

\bibitem[Dong et~al.(2019)Dong, Ma, and Van~Roy]{dong2019performance}
S.~Dong, T.~Ma, and B.~Van~Roy.
\newblock On the performance of {T}hompson sampling on logistic bandits.
\newblock In \emph{Conference on Learning Theory}, pages 1158--1160, 2019.

\bibitem[Dud{\'\i}k et~al.(2015)Dud{\'\i}k, Hofmann, Schapire, Slivkins, and
  Zoghi]{dudik2015contextual}
M.~Dud{\'\i}k, K.~Hofmann, R.~E. Schapire, A.~Slivkins, and M.~Zoghi.
\newblock Contextual dueling bandits.
\newblock In \emph{Conference on Learning Theory}, 2015.

\bibitem[Eric et~al.(2008)Eric, Freitas, and Ghosh]{eric2008active}
B.~Eric, N.~D. Freitas, and A.~Ghosh.
\newblock Active preference learning with discrete choice data.
\newblock In \emph{Advances in neural information processing systems}, pages
  409--416, 2008.

\bibitem[F{\"u}rnkranz et~al.(2012)F{\"u}rnkranz, H{\"u}llermeier, Cheng, and
  Park]{furnkranz2012preference}
J.~F{\"u}rnkranz, E.~H{\"u}llermeier, W.~Cheng, and S.-H. Park.
\newblock Preference-based reinforcement learning: {A} formal framework and a
  policy iteration algorithm.
\newblock \emph{Machine learning}, 89\penalty0 (1-2):\penalty0 123--156, 2012.

\bibitem[Gopalan and Mannor(2015)]{gopalan2015thompson}
A.~Gopalan and S.~Mannor.
\newblock Thompson sampling for learning parameterized {M}arkov decision
  processes.
\newblock In \emph{Conference on Learning Theory}, pages 861--898, 2015.

\bibitem[Herbrich et~al.(1999)Herbrich, Graepel, and
  Obermayer]{herbrich1999support}
R.~Herbrich, T.~Graepel, and K.~Obermayer.
\newblock Support vector learning for ordinal regression.
\newblock In \emph{International Conference on Artificial Neural Networks},
  volume~1, pages 97--102. IET, 1999.

\bibitem[Houlsby et~al.(2011)Houlsby, Husz{\'a}r, Ghahramani, and
  Lengyel]{houlsby2011bayesian}
N.~Houlsby, F.~Husz{\'a}r, Z.~Ghahramani, and M.~Lengyel.
\newblock Bayesian active learning for classification and preference learning.
\newblock \emph{arXiv preprint arXiv:1112.5745}, 2011.

\bibitem[Jaksch et~al.(2010)Jaksch, Ortner, and Auer]{jaksch2010near}
T.~Jaksch, R.~Ortner, and P.~Auer.
\newblock Near-optimal regret bounds for reinforcement learning.
\newblock \emph{Journal of Machine Learning Research}, 11\penalty0
  (Apr):\penalty0 1563--1600, 2010.

\bibitem[Joachims(2005)]{joachims2005support}
T.~Joachims.
\newblock A support vector method for multivariate performance measures.
\newblock In \emph{International conference on machine learning}, pages
  377--384. ACM, 2005.

\bibitem[Kupcsik et~al.(2018)Kupcsik, Hsu, and Lee]{kupcsik2018learning}
A.~Kupcsik, D.~Hsu, and W.~S. Lee.
\newblock Learning dynamic robot-to-human object handover from human feedback.
\newblock In \emph{Robotics research}, pages 161--176. Springer, 2018.

\bibitem[Liu(2009)]{liu2009learning}
T.-Y. Liu.
\newblock Learning to rank for information retrieval.
\newblock \emph{Foundations and Trends{\textregistered} in Information
  Retrieval}, 3\penalty0 (3):\penalty0 225--331, 2009.

\bibitem[Nikolov et~al.(2018)Nikolov, Kirschner, Berkenkamp, and
  Krause]{nikolov2018information}
N.~Nikolov, J.~Kirschner, F.~Berkenkamp, and A.~Krause.
\newblock Information-directed exploration for deep reinforcement learning.
\newblock \emph{arXiv preprint arXiv:1812.07544}, 2018.

\bibitem[Osband and Van~Roy(2017)]{osband2017posterior}
I.~Osband and B.~Van~Roy.
\newblock Why is posterior sampling better than optimism for reinforcement
  learning?
\newblock In \emph{International conference on machine learning}, pages
  2701--2710, 2017.

\bibitem[Osband et~al.(2013)Osband, Russo, and Van~Roy]{osband2013more}
I.~Osband, D.~Russo, and B.~Van~Roy.
\newblock ({M}ore) efficient reinforcement learning via posterior sampling.
\newblock In \emph{Advances in Neural Information Processing Systems}, pages
  3003--3011, 2013.

\bibitem[Radlinski and Joachims(2005)]{radlinski2005query}
F.~Radlinski and T.~Joachims.
\newblock Query chains: {L}earning to rank from implicit feedback.
\newblock In \emph{SIGKDD international conference on knowledge discovery in
  data mining}, pages 239--248. ACM, 2005.

\bibitem[Ramamohan et~al.(2016)Ramamohan, Rajkumar, and
  Agarwal]{ramamohan2016dueling}
S.~Y. Ramamohan, A.~Rajkumar, and S.~Agarwal.
\newblock Dueling bandits: Beyond {C}ondorcet winners to general tournament
  solutions.
\newblock In \emph{Advances in Neural Information Processing Systems}, pages
  1253--1261, 2016.

\bibitem[Raman et~al.(2013)Raman, Joachims, Shivaswamy, and
  Schnabel]{raman2013stable}
K.~Raman, T.~Joachims, P.~Shivaswamy, and T.~Schnabel.
\newblock Stable coactive learning via perturbation.
\newblock In \emph{International conference on machine learning}, pages
  837--845, 2013.

\bibitem[Rasmussen and Williams(2006)]{rasmussen2006gaussian}
C.~E. Rasmussen and C.~K. Williams.
\newblock Gaussian processes for machine learning.
\newblock \emph{The MIT Press}, 2\penalty0 (3):\penalty0 4, 2006.

\bibitem[Russo and Van~Roy(2016)]{russo2016information}
D.~Russo and B.~Van~Roy.
\newblock An information-theoretic analysis of {T}hompson sampling.
\newblock \emph{The Journal of Machine Learning Research}, 17\penalty0
  (1):\penalty0 2442--2471, 2016.

\bibitem[Sadigh et~al.(2017)Sadigh, Dragan, Sastry, and
  Seshia]{sadigh2017active}
D.~Sadigh, A.~D. Dragan, S.~Sastry, and S.~A. Seshia.
\newblock Active preference-based learning of reward functions.
\newblock In \emph{Robotics: Science and Systems}, 2017.

\bibitem[Shivaswamy and Joachims(2012)]{shivaswamy2012online}
P.~Shivaswamy and T.~Joachims.
\newblock Online structured prediction via coactive learning.
\newblock In \emph{International conference on machine learning}, pages 59--66,
  2012.

\bibitem[Shivaswamy and Joachims(2015)]{shivaswamy2015coactive}
P.~Shivaswamy and T.~Joachims.
\newblock Coactive learning.
\newblock \emph{Journal of Artificial Intelligence Research}, 53:\penalty0
  1--40, 2015.

\bibitem[Sui et~al.(2017)Sui, Zhuang, Burdick, and Yue]{sui2017multi}
Y.~Sui, V.~Zhuang, J.~W. Burdick, and Y.~Yue.
\newblock Multi-dueling bandits with dependent arms.
\newblock In \emph{Conference on Uncertainty in Artificial Intelligence}, 2017.

\bibitem[Sui et~al.(2018{\natexlab{a}})Sui, Zhuang, Burdick, and
  Yue]{sui2018stagewise}
Y.~Sui, V.~Zhuang, J.~Burdick, and Y.~Yue.
\newblock Stagewise safe {B}ayesian optimization with {G}aussian processes.
\newblock In \emph{International conference on machine learning}, pages
  4781--4789, 2018{\natexlab{a}}.

\bibitem[Sui et~al.(2018{\natexlab{b}})Sui, Zoghi, Hofmann, and
  Yue]{sui2018advancements}
Y.~Sui, M.~Zoghi, K.~Hofmann, and Y.~Yue.
\newblock Advancements in dueling bandits.
\newblock In \emph{IJCAI}, pages 5502--5510, 2018{\natexlab{b}}.

\bibitem[Sz{\"o}r{\'e}nyi et~al.(2015)Sz{\"o}r{\'e}nyi, Busa-Fekete, Paul, and
  H{\"u}llermeier]{szorenyi2015online}
B.~Sz{\"o}r{\'e}nyi, R.~Busa-Fekete, A.~Paul, and E.~H{\"u}llermeier.
\newblock Online rank elicitation for {P}lackett-{L}uce: A dueling bandits
  approach.
\newblock In \emph{Advances in Neural Information Processing Systems}, pages
  604--612, 2015.

\bibitem[Thompson(1933)]{thompson1933likelihood}
W.~R. Thompson.
\newblock On the likelihood that one unknown probability exceeds another in
  view of the evidence of two samples.
\newblock \emph{Biometrika}, 25\penalty0 (3/4):\penalty0 285--294, 1933.

\bibitem[Wilson et~al.(2012)Wilson, Fern, and Tadepalli]{wilson2012bayesian}
A.~Wilson, A.~Fern, and P.~Tadepalli.
\newblock A {B}ayesian approach for policy learning from trajectory preference
  queries.
\newblock In \emph{Advances in neural information processing systems}, pages
  1133--1141, 2012.

\bibitem[Wirth(2017)]{wirth2017efficient}
C.~Wirth.
\newblock \emph{Efficient Preference-based Reinforcement Learning}.
\newblock PhD thesis, Technische Universit{\"a}t, 2017.

\bibitem[Wirth and F{\"u}rnkranz(2013{\natexlab{a}})]{wirth2013epmc}
C.~Wirth and J.~F{\"u}rnkranz.
\newblock {EPMC}: Every visit preference {M}onte {C}arlo for reinforcement
  learning.
\newblock In \emph{Asian Conference on Machine Learning}, pages 483--497,
  2013{\natexlab{a}}.

\bibitem[Wirth and F{\"u}rnkranz(2013{\natexlab{b}})]{wirth2013policy}
C.~Wirth and J.~F{\"u}rnkranz.
\newblock A policy iteration algorithm for learning from preference-based
  feedback.
\newblock In \emph{International Symposium on Intelligent Data Analysis}, pages
  427--437. Springer, 2013{\natexlab{b}}.

\bibitem[Wirth et~al.(2016)Wirth, F{\"u}rnkranz, and Neumann]{wirth2016model}
C.~Wirth, J.~F{\"u}rnkranz, and G.~Neumann.
\newblock Model-free preference-based reinforcement learning.
\newblock In \emph{AAAI Conference on Artificial Intelligence}, 2016.

\bibitem[Wirth et~al.(2017)Wirth, Akrour, Neumann, and
  F{\"u}rnkranz]{wirth2017survey}
C.~Wirth, R.~Akrour, G.~Neumann, and J.~F{\"u}rnkranz.
\newblock A survey of preference-based reinforcement learning methods.
\newblock \emph{The Journal of Machine Learning Research}, 18\penalty0
  (1):\penalty0 4945--4990, 2017.

\bibitem[Wu and Liu(2016)]{wu2016double}
H.~Wu and X.~Liu.
\newblock Double {T}hompson sampling for dueling bandits.
\newblock In \emph{Advances in Neural Information Processing Systems}, pages
  649--657, 2016.

\bibitem[Yue et~al.(2007)Yue, Finley, Radlinski, and Joachims]{yue2007support}
Y.~Yue, T.~Finley, F.~Radlinski, and T.~Joachims.
\newblock A support vector method for optimizing average precision.
\newblock In \emph{International ACM SIGIR conference on research and
  development in information retrieval}, pages 271--278. ACM, 2007.

\bibitem[Yue et~al.(2012)Yue, Broder, Kleinberg, and Joachims]{yue2012k}
Y.~Yue, J.~Broder, R.~Kleinberg, and T.~Joachims.
\newblock The k-armed dueling bandits problem.
\newblock \emph{Journal of Computer and System Sciences}, 78\penalty0
  (5):\penalty0 1538--1556, 2012.

\bibitem[Zanette and Sarkar(2017)]{zanette2017information}
A.~Zanette and R.~Sarkar.
\newblock Information directed reinforcement learning.
\newblock Technical report, Technical report, 2017.

\bibitem[Zoghi et~al.(2014)Zoghi, Whiteson, Munos, and
  De~Rijke]{zoghi2014relative}
M.~Zoghi, S.~Whiteson, R.~Munos, and M.~De~Rijke.
\newblock Relative upper confidence bound for the k-armed dueling bandit
  problem.
\newblock In \emph{International conference on machine learning}, 2014.

\bibitem[Zoghi et~al.(2015)Zoghi, Karnin, Whiteson, and
  De~Rijke]{zoghi2015copeland}
M.~Zoghi, Z.~S. Karnin, S.~Whiteson, and M.~De~Rijke.
\newblock Copeland dueling bandits.
\newblock In \emph{Advances in Neural Information Processing Systems}, pages
  307--315, 2015.

\end{thebibliography}
\end{document}